\def\eqref#1{equation~\ref{#1}}
\def\1{\bm{1}}
\DeclareMathAlphabet{\mathsfit}{\encodingdefault}{\sfdefault}{m}{sl}
\SetMathAlphabet{\mathsfit}{bold}{\encodingdefault}{\sfdefault}{bx}{n}
\newcommand{\abs}[1]{\left\lvert#1\right\rvert}
\newcommand{\E}{\mathbb{E}}
\DeclareMathOperator*{\argmax}{arg\,max}
\DeclareMathOperator*{\argmin}{arg\,min}
\numberwithin{equation}{section}
\theoremstyle{plain}
\newtheorem{theorem}{Theorem}[section]
\newtheorem{lemma}{Lemma}[section]
\newtheorem{corollary}{Corollary}[section]
\newtheorem{proposition}{Proposition}[section]
\newtheorem{assumption}{Assumption}[section]
\newtheorem*{claim}{Claim}
\theoremstyle{definition}
\newtheorem{definition}{Definition}[section]
\theoremstyle{remark}
\newtheorem{remark}[theorem]{Remark}
\newcommand{\norm}[1]{\|#1\|}
\renewcommand{\Pr}{\mathbf{Pr}}
\definecolor{light-gray}{gray}{0.95}
\newcommand{\code}[1]{\colorbox{light-gray}{\texttt{#1}}}
\definecolor{HHgreen}{HTML}{34C759}
\newcommand{\pmgreen}[1]{%
  {\begingroup\color{HHgreen}\scriptsize\textbf{(\(\,\pm\,\)#1)}\endgroup}%
}
\definecolor{citeblue}{rgb}{0,0,1}
\newcommand{\fakecite}[1]{\textcolor{citeblue}{[#1]}}
\newlist{thmassumptions}{enumerate}{1}
\setlist[thmassumptions,1]{label=(\alph*), ref=\thetheorem(\alph*)}
\crefname{thmassumptionsi}{Assumption}{assumptions}
\crefname{thmassumptionsi}{Assumption}{Assumptions}
\title{Online Distributionally Robust LLM Alignment via Regression to Relative Reward}
\author{ Sharan Sahu \\ \texttt{ss4329@cornell.edu} \\ Department of Statistics and Data Science \\ Cornell University \and Martin T. Wells \\ \texttt{mtw1@cornell.edu} \\ Department of Statistics and Data Science \\ Cornell University}
\begin{document}

\maketitle
\begin{abstract} 
Reinforcement Learning with Human Feedback (RLHF) has become crucial for aligning Large Language Models (LLMs) with human intent. However, existing offline RLHF approaches suffer from overoptimization, where language models degrade by overfitting inaccuracies and drifting from preferred behaviors observed during training.  Distributionally robust optimization (DRO) is a natural solution, but existing DRO-DPO methods are sample-inefficient, ignore heterogeneous preferences, and lean on brittle heuristics. We introduce \emph{DRO-REBEL}, a family of robust online REBEL updates built on type-$p$ Wasserstein, Kullback-Leibler (KL), and $\chi^2$ ambiguity sets. Strong duality reduces each update to a relative-reward regression, retaining REBEL's scalability without PPO-style clipping or value networks. Under linear rewards, log-linear policies, and a standard coverage condition, we prove $\widetilde{\mathcal{O}}(\sqrt{d/n})$ bounds on squared parameter error, with sharper constants than prior DRO-DPO analyses, and give the first parametric $\widetilde{\mathcal{O}}(d/n)$ rate for DRO-based alignment under preference shift, matching non-robust RLHF in benign regimes. Each divergence yields a tractable SGD-based algorithm: gradient regularization for Wasserstein, importance weighting for KL, and a 1-D dual solve for $\chi^2$. On Emotion Alignment, the ArmoRM multi-objective benchmark, and HH-Alignment, DRO-REBEL outperforms prior robust and non-robust baselines across unseen preference mixtures, model sizes, and dataset scales. \end{abstract}

\section{Introduction}
RLHF has emerged as one of the most important stages of aligning LLMs with human intent \cite{christiano2017deep, ziegler2019fine}. Typically, after supervised fine-tuning (SFT), an additional alignment phase is often required to refine their behavior based on human feedback. The alignment of LLMs with human values and preferences is a central objective in machine learning, enabling these models to produce outputs that are useful, safe, and aligned with human intent. In RLHF, human evaluators provide preference rankings that are subsequently utilized to train a reward model, guiding a policy optimization step to maximize learned rewards \cite{ouyang2022training}. From a statistical perspective, RLHF can be viewed as a multi-stage estimation and optimization pipeline built on noisy human preference data. Despite its success, standard RLHF methodologies are fragile mainly due to three reasons: \textit{(i) Assumption that one reward model can model diverse human preferences:} Many RLHF methodologies including popular methods such as Direct Preference Optimization (DPO) \cite{rafailov2023direct} and Proximal Policy Optimization (PPO) \cite{schulman2017proximal}  assume that a single reward function can model and accurately capture diverse human preferences.  In reality, human preferences are highly diverse, context-dependent, and distributional, making it infeasible to represent them within one single reward function. To this end, there has been work done in creating Bayesian frameworks for robust reward modeling \cite{yan2024rewardrobustrlhfllms}, modeling loss as a weighted combination of different topics and using out-of-distribution detection to reject bad behavior \cite{bai2022traininghelpfulharmlessassistant}, or formulating a mixture of reward models \cite{chakraborty2024maxmin}. \textit{(ii) Reward hacking:} Alignment depends on the quality of the human preference data collected. Unfortunately, this process is inherently noisy and prone to bias, conflicting opinions, and inconsistency that leads to a misaligned preference estimation. This issue is exacerbated by reward hacking, where instead of learning reward functions that are aligned with genuine human intent, models learn undesirable shortcuts to maximize the estimated reward function. Subsequently, these models appear to generate responses that appear aligned
but deviate from human intent. Some works directly address this, such as \cite{bukharin2024robustreinforcementlearningcorrupted}.  \textit{(iii) Distribution shift:} Standard RLHF alignment algorithms use static preference datasets for training, collected under controlled conditions. However, the preferences of real-world users can often be out-of-distribution from those of the training data, depending on several factors such as geographic location, demographics, etc. Thus, a language model facing distribution shift can experience catastrophic performance degradation due to inaccuracies of overfitting and divergence from the preferred human responses encountered in training data \cite{levine2024baselineanalysisrewardmodels, kirk2024understandingeffectsrlhfllm, casper2023openproblemsfundamentallimitations}. We focus on the problem of distribution shift, also known as \textit{overoptimization} \cite{huang2025correcting}. 

\begin{figure}[h!]
    \centering
    \includegraphics[width=0.9\textwidth]{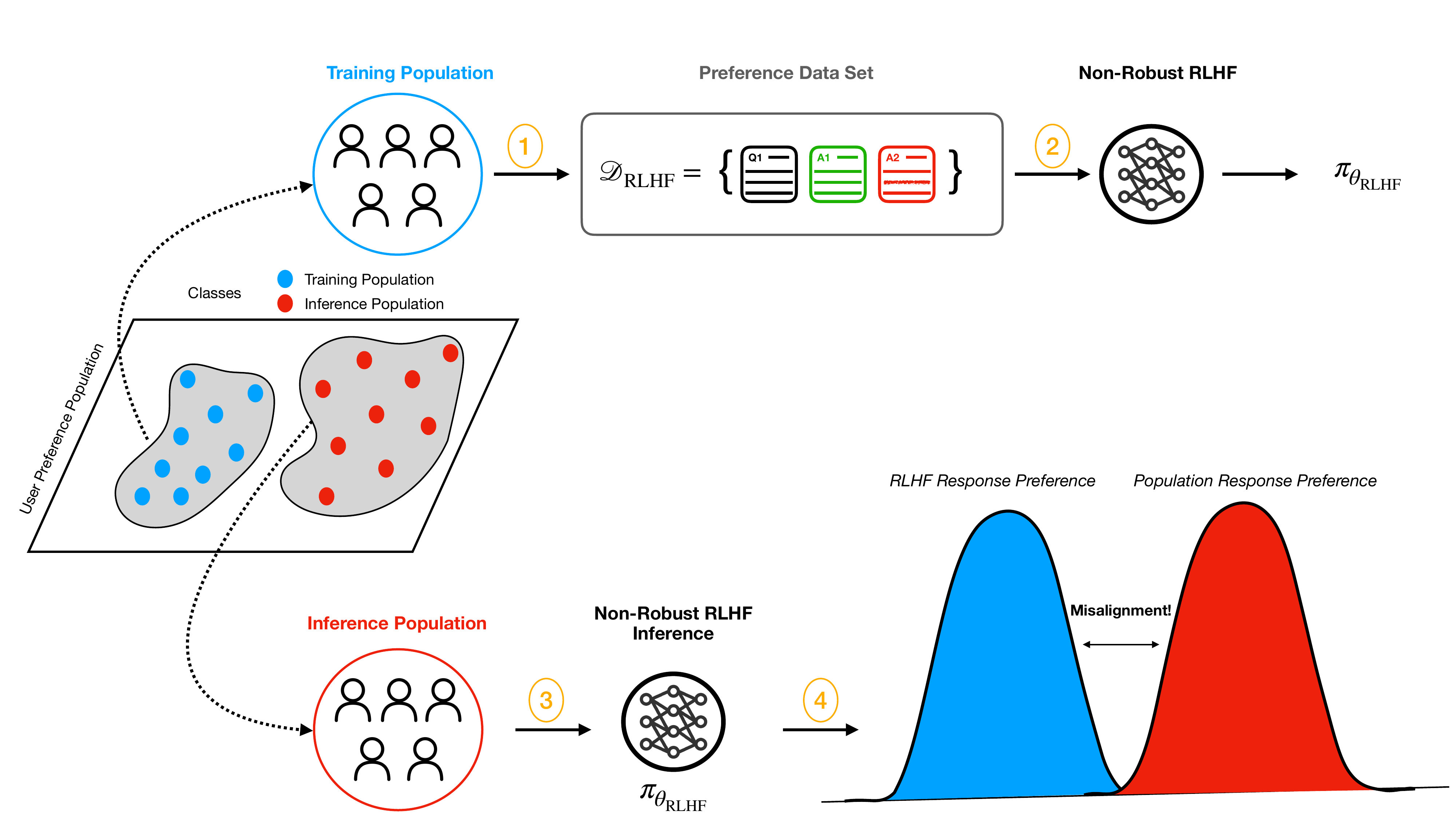}
    \caption{Non‐robust RLHF under distributional shift. Pairwise preference data $\mathcal{D}_{\rm RLHF}$ collected from a training population (blue) are used to learn a policy $\pi_{\theta_{\rm RLHF}}$ via standard RLHF. In latent preference–feature space, the inference population (red) occupies a disjoint region from the training cohort (blue), indicating a shift. When $\pi_{\theta_{\rm RLHF}}$ is deployed on these out‐of‐distribution users, its induced response‐preference distribution (blue) diverges from the true population preferences (red), resulting in systematic misalignment.}
    \label{fig:rlhf-pitfalls}
\end{figure}

Recently, distributionally robust RLHF methods have emerged to tackle robustness challenges under distributional shifts in prompts and preferences \cite{mandal2025distributionally, xu2025distributionallyrobustdirectpreference}. Specifically, \cite{mandal2025distributionally} and \cite{xu2025distributionallyrobustdirectpreference} introduced DRO variants of popular RLHF methods, namely DPO and PPO, employing uncertainty sets defined via Chi-Squared ($\chi^{2}$), type-p Wasserstein, and Kullback–Leibler (KL) divergences. Unfortunately, it is known that PPO requires multiple heuristics to enable stable convergence (e.g. value networks, clipping), and is notorious for its sensitivity to the precise implementation of these components. Offline RLHF methods such as DPO, while not explicitly using a reward model, learn an implicit reward model of the form $r_{\phi}(x, y) = \beta \log \left( \pi_{\theta}(y \mid x) / \pi_{\mathrm{SFT}}(y \mid x) \right)$ that provides higher reward to preferred responses over dispreferred responses. This can lead to a brittle solution that overfits the preference distribution seen during training. When faced with a new type of prompt (a distributional shift), the learned policy may fail because the implicit reward signal on which it is based does not generalize \cite{xu2024dposuperiorppollm}. Recently, \cite{gao2024rebel} proposed REBEL, an algorithm that cleanly reduces the problem of policy optimization to regressing the relative reward between two completions to a prompt in terms of
the policy. They find that REBEL avoids the use of "unjustified" heuristics like PPO and enjoys strong convergence and regret guarantees, similar to Natural Policy Gradient \cite{NIPS2001_4b86abe4}, while also being scalable due to not requiring inversion of the Fisher information matrix. REBEL is much more sample efficient compared to methods like DPO and PPO and is better equipped to generalize under distributional shifts. Given the fragility of PPO-style updates and the slower rates observed for robust DPOs, this motivated us to answer the following questions.

\begin{center}
\begin{minipage}{0.9\linewidth}
\raggedright\itshape
Can we obtain distributional robustness to preference shift \underline{without} sacrificing sample efficiency and stability by adopting a simpler, regression-based algorithm such as REBEL? Concretely, does combining DRO with REBEL yield a learning rule that is (i) theoretically sound, (ii) practically scalable, and (iii) empirically more \underline{generalizable} under realistic distribution shift?
\end{minipage}
\end{center}

\textbf{Our contributions}. Inspired by the strong theoretical guarantees of REBEL in terms of sample efficiency and simplicity, we introduce \emph{DRO–REBEL}, a family of robust REBEL updates for RLHF under distributional shifts, and make the following advances:
\begin{enumerate}
  \item \textbf{Sharp slow-rate guarantees.} Under standard linear-reward and log-linear policy assumptions with a data-coverage condition, we prove an $\widetilde{\mathcal{O}}(\sqrt{d/n})$ bounds on squared parameter error for all DRO–REBEL variants. By replacing logistic links with linear regression, we remove hidden exponential curvature factors and tighten constants compared to prior DRO-DPO analyses \cite{xu2025distributionallyrobustdirectpreference, mandal2025distributionally}. The strong-convexity modulus depends only on the coverage constant $\lambda$ and step size $\eta$, rather than Bradley–Terry curvature as in WDPO/KLDPO.
  \item \textbf{Minimax-optimal fast rates.} To the best of our knowledge, we provide the first proof that DRO-based LLM alignment under preference shift attains the parametric \(\widetilde{\mathcal{O}}(d/n)\) rate, matching non-robust RLHF in benign regimes via a measurable selection reduction to empirical risk minimization (ERM) and localized Rademacher complexity argument. Applying the same machinery to WDPO and KLDPO yields $\widetilde{\mathcal{O}}(d/n)$ estimation rates, improving upon the $\widetilde{\mathcal{O}}(\sqrt{d/n})$ bounds reported in prior DRO–DPO theory \cite{xu2025distributionallyrobustdirectpreference, mandal2025distributionally}.
  \item \textbf{Extensive empirical validation.} We evaluate on (i) a controlled Emotion Alignment task \cite{saravia-etal-2018-carer}, (ii) a large-scale ArmoRM multi-objective setting \cite{wang2024interpretablepreferencesmultiobjectivereward}, and (iii) a zero-shot \emph{HH–RLHF} domain-adaptivity evaluation \cite{bai2022traininghelpfulharmlessassistant}. The first two replicate the benchmarks of \cite{xu2025distributionallyrobustdirectpreference} for direct comparison, the third is out-of-domain adaptivity to a popular dataset benchmark. Empirically, DRO-REBEL demonstrates superior performance in maintaining alignment across preference shifts and generalizing to unseen objectives. 
\end{enumerate}

\subsection{Related Work}
\textbf{Robust RLHF:} There has been some recent work in this area that aims to address RLHF overoptimization. \cite{bai2022traininghelpfulharmlessassistant} propose addressing distribution shift by adjusting the weights on the combination of loss functions based
on different topics (harmless vs. helpful) for robust reward learning.  They also propose using out-of-distribution detection to filter and reject known types of bad behavior. \cite{chakraborty2024maxmin} proposes a MaxMin approach to RLHF, using mixtures of reward models to honor diverse human preference distributions through an expectation-maximization approach, and a robust policy based on these rewards via a max-min optimization. In a similar vein, \cite{padmakumar2024binarycapturingdiversepreferences} attempts to augment the human preference datasets with synthetic preference judgments to estimate the diversity of user preferences. There has also been some foundational theoretical work towards this problem. \cite{yan2024rewardrobustrlhfllms} proposed a Bayesian reward model ensemble to model the uncertainty set of the reward functions and systematically choose rewards in the uncertainty set with the tightest confidence band. Another line of work focuses on robust reward modeling as an alternative to distributionally robust optimization. For example, \cite{bukharin2024robustreinforcementlearningcorrupted} propose R3M, a method that explicitly models corrupted preference labels as sparse outliers. They formulate reward learning as an $\ell_1$ regularized maximum likelihood estimation problem, allowing robust recovery of the underlying reward function even in the presence of noisy or inconsistent human feedback. While our work focuses on embedding robustness at the policy optimization level using distributional uncertainty sets (e.g., $\chi^2$, and Wasserstein), R3M represents a complementary direction that enhances robustness by improving the reliability of the reward model itself. 

\textbf{Robust DPO:} There have been several works that approach this problem using DRO.  \cite{huang2025correcting} proposed $\chi$PO that implements the principle of pessimism in the face
of uncertainty via regularization with the $\chi^{2}$-divergence for avoiding reward hacking/overoptimization with respect to the estimated reward. \cite{wu2024robustalignmentlanguagemodels} focus on noisy preference data and categorize the types of noise in DPO, introducing Dr. DPO to improve pairwise robustness through a DRO formulation with a tunable reliability parameter. \cite{hong2024adaptivepreferencescalingreinforcement} propose an adaptive preference loss grounded in DRO that adjusts scaling weights across preference pairs to account for ambiguity in human feedback, enhancing the flexibility of reward estimation and policy performance. Separately, \cite{zhang2024overcomingrewardoveroptimizationadversarial} introduces a lightweight uncertainty-aware approach called AdvPO, combining last-layer embedding-based uncertainty estimation with a DRO formulation to address overoptimization in reward-based RLHF. There are two related works that are most similar to our approach. \cite{xu2025distributionallyrobustdirectpreference} develop Wasserstein and KL-based DRO formulations of Direct Preference Optimization (WDPO and KLDPO), providing sample complexity bounds and scalable gradient-based algorithms. Their methods achieve improved alignment performance under shifting user preference distributions. Similarly, \cite{mandal2025distributionally} propose robust variants of both reward-based and reward-free RLHF methods, incorporating DRO into the reward estimation and policy optimization phases using Total Variation and Wasserstein distances. Their algorithms retain the structure of existing RLHF pipelines while providing theoretical convergence guarantees and demonstrating robustness to out-of-distribution (OOD) tasks.

\textbf{Distributionally Robust Learning:} Distributionally robust optimization (DRO) has emerged as a general framework for learning under distribution shift, ambiguity, and tail risk. It has been studied extensively in supervised learning, where robust objectives can improve performance under covariate shift and heavy-tailed or heterogeneous data \cite{namkoong2017variance, shah2020robust}, in reinforcement learning, where ambiguity sets over transition or occupancy distributions are used to obtain policies with stronger out-of-distribution guarantees \cite{pmlr-v119-zhang20j, yang2021wdr}, and in multi-armed bandits, where robust formulations help control worst-case regret under model misspecification or adversarial perturbations \cite{gao2022distributionally, zhou2022distributionally}. At the same time, a rich statistical theory has been developed for DRO based on $f$-divergences and Wasserstein distances, including duality, generalization, asymptotic, and finite-sample guarantees \cite{duchi2016statistics, shapiro2022wasserstein}. These results make DRO especially appealing for RLHF, where training and deployment distributions may differ substantially across prompts, annotators, or user populations. In our setting, DRO provides a principled way to hedge against such preference shift while retaining a tractable learning objective.

\textbf{Statistical reinforcement learning:}
Recent work has also developed reinforcement learning from a distinctly statistical perspective, emphasizing estimation, inference, and heterogeneity rather than only algorithmic performance. This includes semiparametrically efficient off-policy evaluation from logged data through balancing-based methods \cite{wang2023projected}, estimation and inferential theory for return distributions in distributional reinforcement learning \cite{zhang2025estimation}, and individualized offline policy learning from heterogeneous data sources \cite{miao2025reinforcement}. Collectively, these works show that reinforcement learning is increasingly studied as a problem of statistical efficiency, uncertainty quantification, and robust decision-making under heterogeneous populations. 

\section{Preliminaries}

\subsection{Notations}
We will denote sets using calligraphic letters, that is, $\mathcal{S}, \mathcal{A}, \mathcal{Z}$. For a measure $P$, we refer to the empirical measure $P_{n}$ to mean drawing samples $x_{1}, \dots, x_{n} \stackrel{\mathrm{i.i.d}}{\sim} P$ with $P_{n} = 1 / n \sum_{i=1}^{n} \delta_{x_{i}}$ where $\delta$ is the Dirac measure. We denote $\ell(z; \theta)$ to be the loss incurred by sample $z$ with policy parameter $\theta$. We denote $\mathcal{M} \left(\mathcal{Z} \right)$ to be the set of Borel measures supported on set $\mathcal{Z}$. Lastly, we denote $\lambda_{\mathrm{min}}(A)$ to be the minimum eigenvalue of a symmetric matrix $A \in \mathbb{S}^{n}$. We adopt the standard big-oh notation, and write $a \lesssim b$ as shorthand for $a = \mathcal{O}(b)$. We write $a_n = \widetilde{\mathcal{O}}(b_n)$ to mean that there exists a constant $C>0$ that is independent of $n$, and a poly-logarithmic factor $\mathrm{polylog}(\Pi)$ in the relevant problem parameters $\Pi$, such that $|a_n| \le C\,b_n \cdot \mathrm{polylog}(\Pi)$. 


\subsection{Divergences}
In this section, we define the divergences that we use to define our ambiguity sets.  

\vspace{1em}

\begin{definition}[Type-p Wasserstein Distance]
The type-p ($p \in [1, \infty))$ Wasserstein distance between two probability measures $P, Q \in \mathcal{M} \left( \mathcal{Z} \right)$ is defined as

    \[
        \mathcal{W}_{p} \left(P, Q \right) = \left( \inf_{\pi \in \Pi \left( P, Q \right)} \int_{\mathcal{Z} \times \mathcal{Z}} d(\xi, \eta)^{p} \pi(d\xi, d\eta) \right)^{1/p},
    \]

    where $\pi$ is a coupling between the marginal distributions $\xi \sim P$ and $\eta \sim Q$ and $d$ is a pseudometric defined on $\mathcal{Z}$.
\end{definition}

\vspace{1em}

\begin{definition}[Kullback-Leibler (KL) Divergence]
For any two probability measures $P, Q \in \mathcal{M} \left( \mathcal{Z} \right)$, the Kullback-Liebler (KL) Divergence is defined as 
\[
D_{\mathrm{KL}}(P \, || \, Q) = \int_{\mathcal{Z}} \log \left( \frac{d P}{d Q} \right) dP.
\]
\end{definition}

\vspace{1em}

\begin{definition}[Chi-Squared Divergence]
For any two probability measures $P, Q \in \mathcal{M} \left( \mathcal{Z} \right)$ such that $P \ll Q$ i.e. $P$ is absolutely continuous with respect to $Q$, the Chi-Squared ($\mathcal{\chi}^{2}$) divergence is defined as
\[
D_{\chi^2}(P \, || \, Q) = \int_{\mathcal{Z}} \left( \frac{dP}{dQ} - 1 \right)^2 dQ.
\]
\end{definition}

Using these, we can define our ambiguity sets as follows.

\vspace{1em}

\begin{definition}[Distributional Uncertainty Sets]
Let $\varepsilon > 0$ and $P^{\circ} \in \mathcal{M} \left( \mathcal{Z} \right)$. Then, we define the ambiguity set as 

\[
    \mathcal{B}_{D} \left( P^{\circ} ; \varepsilon \right) = \left\{ P \in \mathcal{M} \left( \mathcal{Z} \right) \; : \; D \left( P || P^{\circ} \right) \leq \varepsilon \right\},
\]

where $D(\cdot || \cdot)$ is a distance metric between two probability measures, for instance, type-p Wasserstein, KL, $\chi^{2}$, etc.
\end{definition}

\subsection{Reinforcement Learning from Human Feedback (RLHF)}
\label{subsection:RLHF}
Reinforcement Learning from Human Feedback (RLHF), as introduced by \cite{christiano2017deep} and later adapted by \cite{ouyang2022training}, consists of two primary stages: (1) learning a reward model from preference data, and (2) optimizing a policy to maximize a KL-regularized value function. We assume access to a preference dataset $\mathcal{D}_{\text{src}} = \{(x, a^{1}, a^{2})\}$ where $x \in \mathcal{S}$ is a prompt and $a^{1}, a^{2} \in \mathcal{A}$ are two possible completions of the prompt $x$ generated from a reference policy $\pi_{\text{SFT}}(\cdot \mid x)$ (e.g., a supervised fine-tuned model).  $\pi_{\text{SFT}}(\cdot \mid x)$ involves fine-tuning a pre-trained LLM through supervised learning on high-quality data, curated for downstream tasks. A human annotator provides preference feedback that indicates $a^{1} \succ a^{2} \mid x$. The most common model for preference learning is the Bradley-Terry (BT) model, which assumes that
\begin{align*}
    \mathcal{P}^{*}(a^{1} \succ a^{2} \mid x) &= \sigma\left(r^\star(x, a^{1}) - r^\star(x, a^{2})\right) \\
    &= \frac{1}{1 + \exp\left(r^\star(x, a^{2}) - r^\star(x, a^{1})\right)},
\end{align*}
where $r^\star$ is the underlying (unknown) reward function used by the annotator. The first step in RLHF is to learn a parametrized reward model $r_{\phi}(s, a)$ by solving the following maximum likelihood estimation problem:
\begin{align*}
    r_{\phi} \leftarrow \arg\min_{r_{\phi}} \; -\mathbb{E}_{(x, a^{1}, a^{2}) \sim \mathcal{D}_{\text{src}}} \left[ \log \sigma\left(r_{\phi}(x, a^{1}) - r_{\phi}(x, a^{2})\right) \right].
\end{align*}

Given the learned reward model $r_{\phi}$, the second step is to solve the KL-regularized policy optimization problem:
\begin{align*}
    \pi_{\theta} \leftarrow \arg\max_{\pi_{\theta}} \; \mathbb{E}_{x \sim \mathcal{D}_{\text{src}}, a \sim \pi_{\theta}(\cdot \mid x)} \left[ r_{\phi}(x, a) - \beta \log \frac{\pi_{\theta}(a \mid x)}{\pi_{\text{SFT}}(a \mid x)} \right],
\end{align*}

where $\beta$ controls the deviation between the learned and reference policy.

\subsection{Direct Preference Optimization (DPO)}
The DPO \cite{rafailov2023direct} procedure is a form of offline RLHF which avoids training a reward model by identifying a mapping (and reparametrization) between language model policies and reward functions that enables training language models to satisfy human preferences directly. That is, DPO uses the following policy objective
\label{dpo-objective}
\begin{equation}
    \ell_{\mathrm{DPO}} \left( \pi_{\theta} ; \pi_{\mathrm{SFT}} \right) = -\mathbb{E}_{\left(x, a^{1}, a^{2} \right) \sim \mathcal{D}} \left[ \log \sigma \left( \beta \log \frac{\pi_{\theta} \left( a^{1} \mid x \right)}{\pi_{\mathrm{SFT}}\left( a^{1} \mid x \right)} - \beta \log \frac{\pi_{\theta} \left( a^{2} \mid x \right)}{\pi_{\mathrm{SFT}}\left( a^{2} \mid x \right)}  \right)   \right].
\end{equation}

One can arrive at this policy objective by observing that the objective in the RL fine-tuning phase has a closed form solution of the form

\[
    \pi_{\theta} \left( a \mid x \right) = \frac{1}{Z(x)} \pi_{\mathrm{SFT}} \left( a \mid x \right) \exp \left( \frac{1}{\beta} r_{\phi}(x, a) \right)
\]

where $Z(x) = \sum_{a \in \mathcal{A}} \pi_{\mathrm{SFT}} \left( a \mid x \right) \exp \left( \frac{1}{\beta} r_{\phi}(x, a) \right)$ is the partition function. Taking logs and moving terms around, we get

\begin{align}
\label{eq:reward-reparametrization}
    r_{\phi}(x, a) = \beta \log \frac{\pi_{\theta}(a \mid x)}{\pi_{\mathrm{SFT}} \left( a \mid x \right)} + \beta \log Z(x).
\end{align}

Using the BT model and plugging in this reparameterization, we get the DPO policy objective. One might note that this is not a proper reparametrization, since $Z(x)$ is dependent on $r$. However, it turns out that defining this reparametrization as a function from an equivalence class of reward functions to a particular policy is well defined, does not constrain the class of learned reward models, and allows for the exact recovery of the optimal policy.

\subsection{REBEL: Regression-Based Policy Optimization}

Let $(x, a)$ represent a \emph{prompt-response} pair, where $x \in \mathcal{S}$ is a context or prompt, and $a \in \mathcal{A}$ is a response (e.g., a sequence of tokens or actions). We assume access to a reward function $r(x, a)$, which may be a learned preference model \cite{christiano2017deep}. We note that this reward function is not the underlying (unknown) reward function used by the annotator and thus can be considered noisy. Let $\pi \colon \mathcal{S} \rightarrow \Delta(\mathcal{A})$ be a stochastic policy mapping prompts to distributions over responses. We denote the prompt distribution as $\rho$, and let $\pi_\theta(a \mid x)$ denote a parameterized policy with parameters $\theta$. Now instead of using the reward parametrization (\ref{eq:reward-reparametrization}) in the BT model, instead notice that we can get rid of the partition function $Z(x)$ by sampling two responses $a^{1}, a^{2} \sim \pi_{\theta_{t}} \left( \cdot \mid x \right)$ at time step $t$ and taking the difference of the reparametrized reward function

\[
    r(x, a^{1}) - r(x, a^{2}) = \frac{1}{\eta} \left( \log \frac{\pi_{\theta}(a^{1} \mid x)}{\pi_{\theta_{t}} \left( a^{1} \mid x \right)} - \log \frac{\pi_{\theta}(a^{2} \mid x)}{\pi_{\theta_{t}} \left(a^{2} \mid x \right)} \right).
\]

Then we can simply regress the difference in rewards and update the policy parameters as follows

\begin{align}
    \label{eq:rebel-regression}
    \theta_{t+1} = \arg\min_{\theta \in \Theta} \left( 
        \frac{1}{\eta} \left[
        \log \frac{\pi_\theta(a^{1} \mid x)}{\pi_{\theta_{t}}(a^{1} \mid x)} -
        \log \frac{\pi_\theta(a^{2} \mid x)}{\pi_{\theta_{t}}(a^{2} \mid x)}
        \right]
        - \left[r(x, a^{1}) - r(x, a^{2})\right]
        \right)^2.
\end{align}

The \textbf{REBEL} (REgression to RElative REward-Based RL) \cite{gao2024rebel} algorithm directly regresses to relative reward differences through KL-constrained updates. The REBEL algorithm is detailed in Algorithm~\ref{alg:rebel}.

\begin{algorithm}[H]
\caption{REBEL: REgression to RElative REward-Based RL}
\label{alg:rebel}
\begin{algorithmic}[1]
\State \textbf{Input:} Reward function $r$, policy class $\Pi = \{\pi_\theta : \theta \in \Theta\}$, prompt distribution $\rho$, learning rate $\eta$
\State Initialize policy $\pi_{\theta_0}$
\For{$t = 0$ to $T-1$}
    \State Collect dataset $\mathcal{D}_t = \{(x, a^{1}, a^{2})\}$ with $x \sim \rho$, $a^{1} \sim \pi_{\theta_{t}}(\cdot \mid x)$, $a^{2} \sim \pi_{\theta_{t}}(\cdot \mid x)$
    \State Update policy by solving:
    \begin{align*}
        \theta_{t+1} = \arg\min_{\theta \in \Theta} \sum_{(x, a^{1}, a^{2}) \in \mathcal{D}_t} \left( 
        \frac{1}{\eta} \left[
        \log \frac{\pi_\theta(a^{1} \mid x)}{\pi_{\theta_{t}}(a^{1} \mid x)} -
        \log \frac{\pi_\theta(a^{2} \mid x)}{\pi_{\theta_{t}}(a^{2} \mid x)}
        \right]
        - \left[r(x, a^{1}) - r(x, a^{2})\right]
        \right)^2
    \end{align*}
\EndFor
\end{algorithmic}
\end{algorithm}

At each iteration, REBEL approximates the solution to a KL-constrained policy optimization objective:
\begin{align*}
    \pi_{t+1} = \arg\max_{\pi \in \Pi} \; \mathbb{E}_{x \sim \rho, a \sim \pi(\cdot \mid x)} \left[ r(x, a) \right] 
    - \frac{1}{\eta} \, \mathbb{E}_{x \sim \rho} \left[ \text{KL}\left( \pi(\cdot \mid x) \, \| \, \pi_t(\cdot \mid x) \right) \right],
\end{align*}
which encourages reward maximization while regularizing the policy to remain close to the previous iterate. This objective is particularly well-suited for fine-tuning language models using learned or noisy reward signals while maintaining stability in training. 

Adapting REBEL for distributionally robust RLHF is particularly appealing because it offers distinct theoretical and practical advantages over existing methods. Actor-critic methods such as PPO rely on complex and often unstable heuristic mechanisms (e.g., clipping, value baselines) to ensure stability while offline RLHF methods such as DPO, while not explicitly using a reward model, learn an implicit reward model of the form $r_{\phi}(x, y) = \beta \log \left( \pi_{\theta}(y \mid x) / \pi_{\mathrm{SFT}}(y \mid x) \right)$ which provides higher reward to preferred responses over dispreferred responses.  This can lead to a brittle solution that overfits the preference distribution seen during training. When faced with a new type of prompt (a distributional shift), the learned policy may fail because the implicit reward signal on which it relies does not generalize \cite{xu2024dposuperiorppollm}. In contrast, REBEL takes a more direct and robust approach. It reduces policy optimization to a sequence of simple regression problems on explicit relative rewards. REBEL's regression objective learns a cardinal signal that captures how much better one response is than another, and since the reward model is trained explicitly on the task of "predicting preference differences," it is more likely to generalize to unseen prompts.

This fundamental simplicity also eliminates the need for explicit value functions or constrained optimization, translating into significantly improved stability and sample complexity. In particular, REBEL can achieve convergence guarantees comparable to or better than NPG, with a sample complexity that scales favorably due to its variance-reduced gradient structure. Empirically, REBEL has been shown to converge faster than PPO and outperform DPO in both language and image generation tasks. Building on this regression‐based perspective, our DRO–REBEL algorithms simply replace the standard squared‐error loss in each REBEL update with its robust counterpart under the chosen divergence (Wasserstein, KL, or $\chi^{2}$). As a result, DRO–REBEL inherits REBEL’s stability and low sample complexity while gaining worst‐case robustness guarantees under distributional shifts.

\section{Distributionally Robust REBEL}
In this section, we formally define the DRO variants of REBEL under type-$p$ Wasserstein, KL, and $\chi^{2}$
ambiguity sets. We begin by specifying the nominal data-generating distribution at round $t$.
Recall the sampling procedure in Section~\ref{subsection:RLHF}: a prompt $x\in\mathcal{S}$ is drawn from a
prompt distribution $\rho$, and we sample two responses $a^{1}\sim \pi_{\theta_t}(\,\cdot\mid x)$ and
$a^{2}\sim \mu_t(\,\cdot\mid x)$, where $\mu_t$ is a base distribution. The base distribution $\mu_t$ can be
a fixed offline behavior distribution (e.g., an instruction fine-tuning dataset) or the current policy
$\pi_{\theta_t}$. Throughout, we focus on the fully on-policy online setting and take $\mu_t=\pi_{\theta_t}$.
We assume access to a reward oracle $r(x,a)$ that returns a scalar score for a prompt--completion pair $(x,a)$.
This reward can be pre-defined (e.g., ROUGE against reference responses) or produced by a learned reward model
trained from offline demonstrations or preference data (e.g. the RLHF paradigm \cite{christiano2017deep, ziegler2019fine}).
We form the observed relative-reward signal $\Delta r := r(x,a^{1})-r(x,a^{2})$.

Importantly, we allow the observed reward difference $\Delta r$ to be noisy, e.g., due to reward-model error or
human inconsistency. Concretely, we assume an additive observation model $\Delta r \;=\; \Delta r^{\star}(x,a^{1},a^{2}) \;+\; \xi$ where $\Delta r^{\star}(x,a^{1},a^{2}) := r^{\star}(x,a^{1})-r^{\star}(x,a^{2})$ is the latent (true) reward difference and $\xi$ is mean-zero noise, possibly conditional on $(x,a^{1},a^{2})$. Using this sampling process, we can now define the nominal data-generating distribution.

\vspace{1em}

\begin{definition}[Joint data-generating distribution for REBEL]
\label{defn:nominal-data-generating-dist}
Let $\mathcal{Z}=\mathcal{S}\times\mathcal{A}\times\mathcal{A}\times\mathbb{R}$.
Fix an iteration $t$ and a base distribution $\mu_t(\cdot\mid x)$ (e.g.\ $\mu_t=\pi_{\theta_t}$ in the fully
on-policy online setting). Let $\Delta r^{\star}(x,a^1,a^2):=r^{\star}(x,a^1)-r^{\star}(x,a^2)$ denote the latent
(true) reward difference. Let $\nu(\cdot\mid x,a^1,a^2)\in\mathcal{P}(\mathbb{R})$ be a Markov kernel
describing the conditional law of the observation noise $\xi$ given $(x,a^1,a^2)$, and define the induced
conditional law of the observed reward difference by the shifted kernel
\[
Q_t^\circ(A\mid x,a^1,a^2)
\;:=\;
\nu(A-\Delta r^\star(x,a^1,a^2)\mid x,a^1,a^2),
\;\; \forall A\in\mathcal{B}(\mathbb{R}),
\]
where $A-c:=\{u\in\mathbb{R}:u+c\in A\}$. We define the nominal distribution $P_t^\circ$ over $z=(x,a^1,a^2,\Delta r)\in\mathcal{Z}$ by the sampling process:
\[
x\sim\rho,\;\; a^1\sim\pi_{\theta_t}(\cdot\mid x),\;\; a^2\sim\mu_t(\cdot\mid x),\;\;
\Delta r \sim Q_t^\circ(\cdot\mid x,a^1,a^2).
\]
Equivalently, for any measurable $A\subseteq\mathbb{R}$,
\[
P_t^\circ\!\bigl(x,a^1,a^2, A\bigr)
=
\rho(x)\,\pi_{\theta_t}(a^1\mid x)\,\mu_t(a^2\mid x)\,Q_t^\circ\!\bigl(A\mid x,a^1,a^2\bigr).
\]
We assume $P_t^\circ$ generates the round-$t$ batch $\mathcal{D}_t=\{z_{t,i}\}_{i=1}^{n_t}$ i.i.d., i.e.\
$z_{t,i}\sim P_t^\circ$, and write the empirical measure
$P_{n,t}^\circ \;:=\; \frac{1}{n_t}\sum_{i=1}^{n_t}\delta_{z_{t,i}}$.
\end{definition}




\begin{figure}[h!]
    \centering
    \includegraphics[width=0.9\textwidth]{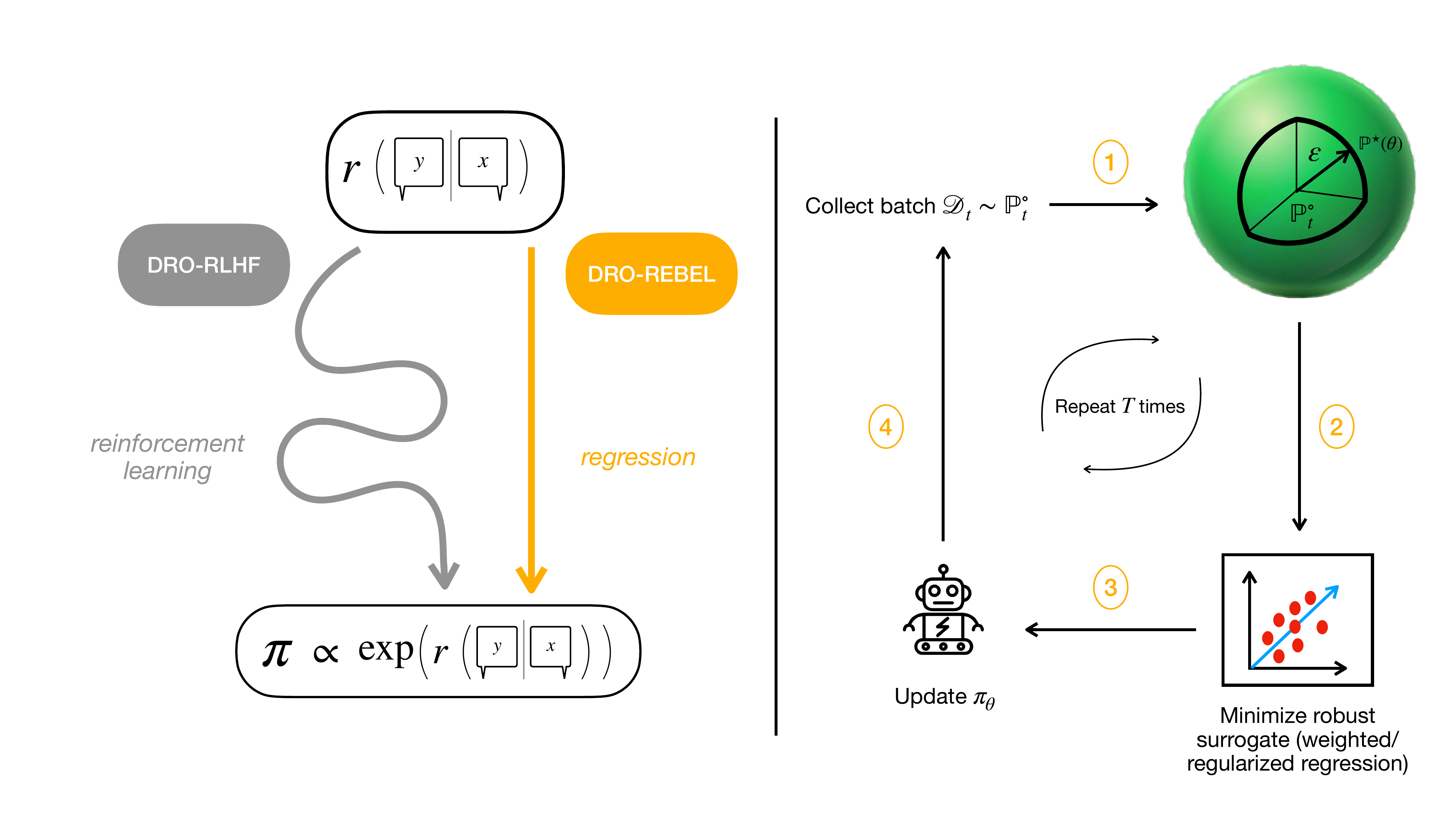}
    \caption{Illustration of the DRO–REBEL update loop. At each iteration: \textbf{(1)} a batch of preference tuples $z=(x,a^1,a^2)$ is sampled from the nominal distribution $P_{n,t}^\circ$ induced by the current policy, \textbf{(2)} the per-sample squared-error loss $\ell_{\rm REBEL}(z;\theta)$ is computed, \textbf{(3)} a distributionally robust objective is formed over an ambiguity set $\mathcal{B}_\varepsilon\!\left(P_{n,t}^\circ\right)$ (e.g., Wasserstein, KL, or $\chi^2$), aligning the predicted change in log-probabilities $\Delta \log \pi_\theta$ with the observed reward differences $\Delta r$, and \textbf{(4)} the policy parameters $\theta$ are updated via a gradient step. The left-side figure is credited to \cite{gao2024rebel}.}
    \label{fig:dro-rebel-figure}
\end{figure}

\subsection{Distributionally Robust REBEL}
From the REBEL update (Equation (\ref{eq:rebel-regression})), we define the pointwise loss as follows

\[
    \ell_{t}(z ; \theta) =  \left(\frac{1}{\eta} \left[
        \log \frac{\pi_\theta(a^{1} \mid x)}{\pi_{\theta_t}(a^{1} \mid x)} -
        \log \frac{\pi_\theta(a^{2} \mid x)}{\pi_{\theta_t}(a^{2} \mid x)}
        \right]
        - \left[r(x, a^{1}) - r(x, a^{2})\right]
        \right)^2.
\]
For $\varepsilon > 0$, define the ambiguity set as $\mathcal{B}_{\varepsilon} \left( P^{\circ} ; D \right)$ for the nominal distribution $P^{\circ}$ and the distance measure $D$. Using the DRO framework, we consider the following distributionally robust optimization problem: 

\[
    \min_{\theta} \max_{P \in \mathcal{B}_{D} \left( P_{t}^{\circ} ; 
    \varepsilon \right)} \mathbb{E}_{z \sim P} \left[ \ell_{t}(z;\theta) \right],
\]

which directly captures our objective: finding the best policy under worst-case distributional shift. Now, let us define the following \( D \)-DRO-REBEL loss function:
\[
    \mathcal{L}^{D}\left(\theta; \varepsilon\right) = \sup_{P \in \mathcal{B}_{D} \left( P_{t}^{\circ} ; 
    \varepsilon \right)} \mathbb{E}_{z \sim P} \left[ \ell_{t}(z;\theta) \right],
\]
where \( \mathcal{B}_{\varepsilon}(P_{t}^{\circ}; D) \) denotes an ambiguity set centered at the nominal distribution \( P_{t}^{\circ} \), defined using a divergence or distance \( D \). This formulation is general and allows us to instantiate a family of distributionally robust REBEL objectives by choosing different \( D \)---such as the type-\( p \) Wasserstein distance, Kullback--Leibler (KL) divergence, or chi-squared (\( \chi^2 \)) divergence. Each choice of \( D \) yields a different robustness profile and tractable dual formulation, enabling us to tailor the algorithm to specific distributional shift scenarios. When the nominal distribution $P_{t}^{\circ}$ is replaced with its empirical counterpart, i.e., $P_{t, n}^{\circ} := \frac{1}{n} \sum_{i=1}^n \delta_{z_i}$, where $z_1, \ldots, z_n$ are $n$ i.i.d. samples from $P_{t}^{\circ}$, we use $\mathcal{L}^{D}_n(\theta; \varepsilon)$ to denote the empirical $D$-REBEL loss incurred by the policy parameter $\theta$.

\section{Theoretical Results}
In this section, we will provide several sample complexity results for DRO-REBEL under the ambiguity sets previously mentioned. First, we state some assumptions we make in our analysis.

\vspace{1em}

\begin{assumption}[Linear reward class]
\label{assumption:linear-reward-class}
Let $\phi: \mathcal{S} \times \mathcal{A} \rightarrow \mathbb{R}^{d}$ be a known $d$-dimensional feature mapping with $\sup_{x, a} ||\phi(x,a)||_{2} \leq 1$ and $\omega \in \mathbb{R}^{d}$ such that $||\omega||_{2} \leq F$ for $F > 0$. We consider the following class of linear reward functions:
\[
    \mathcal{F} : = \left\{  r_{\omega} : r_{\omega}(x, a) = \phi(x, a)^{\top}\omega \right\}.
\]

\end{assumption}

\begin{remark}
These are standard assumptions in the theoretical analysis of inverse reinforcement learning (IRL) \cite{DBLP:journals/corr/AbbeelN04, DBLP:journals/corr/Ng00algorithmsfor}, imitation learning \cite{DBLP:journals/corr/HoE16}, and areas such as RLHF \cite{pmlr-v202-zhu23f} where reward functions are learned. Our analysis can be extended to neural reward classes where $\phi(x, a)^\top \omega$ is replaced by $f_\omega(x, a)$, where $f_\omega$ is a neural network satisfying twice differentiability, smoothness, and boundedness of the $f_{\omega}$ and $\nabla_\omega f_\omega(x, a)$.
\end{remark}

\vspace{1em}

\begin{assumption}[Log-linear policy class]
\label{assumption:log-linear-policy}
Let $\psi : \mathcal{S} \times \mathcal{A} \rightarrow \mathbb{R}^d$ be a known $d$-dimensional feature mapping with $\sup_{x,a} \| \psi(x, a) \|_2 \leq 1$. Assume a bounded policy parameter set $\Theta := \{ \theta \in \mathbb{R}^d : \| \theta \|_2 \leq B \}$. We consider the following class of log-linear policies:
\[
\Pi : = \left\{ \pi_\theta : \pi_\theta(a \mid x) = \frac{\exp\left( \theta^\top \psi(x, a) \right)}{\sum_{a' \in \mathcal{A}} \exp\left( \theta^\top \psi(x, a') \right)} \right\}.
\]
\end{assumption}

\begin{remark}
These are standard assumptions in the theoretical analysis of RL algorithms \cite{JMLR:v22:19-736, pmlr-v108-modi20a}, RLHF \cite{pmlr-v202-zhu23f}, and DPO \cite{nika2024rewardmodellearningvs, chowdhury2024provablyrobustdpoaligning}. Our analysis can be extended to neural policy classes where $\theta^\top \psi(s, a)$ is replaced by $f_\theta(s, a)$, where $f_\theta$ is a neural network satisfying twice differentiability and smoothness assumptions.
\end{remark}

We also make the following policy-induced data coverage assumption along the REBEL trajectory.

\vspace{1em}

\begin{assumption}[Policy-induced regularity condition (trajectory coverage)]
\label{assumption:data-coverage}
Let $\{\pi_{\theta_t}\}_{t\ge 0}$ denote the sequence of policies generated by REBEL, and let $\rho$ be the context distribution.
For each iteration $t$, define the nominal on-policy distribution $P_t^\circ$ over quadruples $(x,a^{1},a^{2},\Delta r)$ by
\[
x \sim \rho, \;\; a^{1},a^{2} \sim \pi_{\theta_t}(\cdot \mid x),
\;\; \Delta r = r(x, a^{1}) - r(x, a^{2}),
\]
For any distribution $P$ over $(x,a^{1},a^{2},\Delta r)$, define
\[
\Sigma_{P, t}
:= \mathbb{E}_{(x,a^{1},a^{2},\Delta r)\sim P}
\Big[
\big(\psi(x,a^{1})-\psi(x,a^{2})\big)\big(\psi(x,a^{1})-\psi(x,a^{2})\big)^\top
\Big].
\]
There exists a constant $\kappa >0$ (uniform in $t$) such that, for every iteration $t\ge 0$,
\[
\Sigma_{P, t} \succeq \kappa I,
\;\; \forall\, P \in \mathcal{B}_{\varepsilon}\!\left(P_t^\circ; D\right).
\]
\end{assumption}

\begin{remark}
Related coverage / persistence-of-excitation assumptions under linear function approximation are standard in the RL literature \cite{agarwal21, wang2020statisticallimitsofflinerl, jin2022pessimismprovablyefficientoffline}. 
In our on-policy setting, the relevant notion of coverage is policy-induced and must hold uniformly along the REBEL trajectory. Concretely, Assumption~\ref{assumption:data-coverage} requires that for every iteration $t$ the nominal on-policy distribution $P_t^\circ$ (induced by $x\sim\rho$ and $a^1,a^2\sim\pi_{\theta_t}(\cdot\mid x)$) yields a well-conditioned feature-difference covariance, robustly over the ambiguity set.
\end{remark}

\subsection{"Slow Rate" Estimation Errors }
Define $\theta_{t}^{\mathcal{W}_{p}} \in \mathrm{arg}\min_{\theta \in \Theta} \mathcal{L}_{t}^{\mathcal{W}_{p}}(\theta)$ be the true optimal policy estimate and the empirical estimate as $\hat{\theta}_{n, t}^{\mathcal{W}_{p}} \in \mathrm{arg}\min_{\theta \in \Theta} \mathcal{L}_{n, t}^{\mathcal{W}_{p}}(\theta)$. First, we provide a sample complexity result for convergence of robust policy estimation using REBEL. Our proof technique hinges on showing that $\mathcal{L}^{\mathcal{W}_{p}}$ is strongly convex.

\vspace{1em}

\begin{lemma}[Strong convexity of $\mathcal{L}_{t}^{\mathcal{W}_{p}}$]
    Let $\ell_{t}(z;\theta)$ be as defined in the REBEL update. The Wasserstein-DRO-REBEL loss 
    \[
        \mathcal{L}_{t}^{\mathcal{W}_{p}}\left(\theta; \varepsilon\right) = \sup_{P \in \mathcal{B}_{\mathcal{W}_{p}} \left( P_{t}^{\circ} ; 
    \varepsilon \right)} \mathbb{E}_{z \sim P} \left[ \ell_{t}(z;\theta) \right],
    \]
    is $2\lambda/\eta^{2}$-strongly convex where $\lambda$ is from the regularity condition in Assumption \ref{assumption:data-coverage} and $\eta$ is from the step size defined in the DRO update \ref{eq:rebel-regression}
\end{lemma}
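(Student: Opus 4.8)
The plan is to first exploit the log‑linear parametrization to collapse the REBEL pointwise loss into an honest convex quadratic in $\theta$, and then lift strong convexity from every member of the ambiguity set to the supremum. Under Assumption~\ref{assumption:log-linear-policy} we have $\log\pi_\theta(a\mid x)=\theta^\top\psi(x,a)-A_x(\theta)$ with $A_x(\theta)=\log\sum_{a'}\exp(\theta^\top\psi(x,a'))$, so in the REBEL double difference $\log\frac{\pi_\theta(a^1\mid x)}{\pi_t(a^1\mid x)}-\log\frac{\pi_\theta(a^2\mid x)}{\pi_t(a^2\mid x)}$ both the log‑partition terms $A_x(\theta)$ and the $\pi_t$–normalizations cancel, leaving exactly $(\theta-\theta_t)^\top\Delta\psi(z)$. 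Hence $\ell_{\mathrm{REBEL}}(z;\theta)=\big(\tfrac1\eta(\theta-\theta_t)^\top\Delta\psi(z)-\Delta r(z)\big)^2$, a convex quadratic in $\theta$ whose Hessian is the $\theta$‑independent PSD matrix $\nabla^2_\theta\ell_{\mathrm{REBEL}}(z;\theta)=\tfrac{2}{\eta^2}\Delta\psi(z)\,\Delta\psi(z)^\top$.

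Next, fix any $\mathbb{P}\in\mathcal{B}_{\varepsilon}(\mathbb{P}^{\circ};\mathcal{W}_p)$. Since $\norm{\Delta\psi(z)}_2\le 2$ and $\abs{\Delta r(z)}\le 2F$ uniformly, differentiation under the integral sign is justified, so $\nabla^2_\theta\,\mathbb{E}_{z\sim\mathbb{P}}\!\left[\ell_{\mathrm{REBEL}}(z;\theta)\right]=\tfrac{2}{\eta^2}\,\Sigma_{\mathbb{P}}$. By the regularity condition in Assumption~\ref{assumption:data-coverage}, $\Sigma_{\mathbb{P}}\succeq\lambda I$ for \emph{every} $\mathbb{P}$ in the ambiguity set, so $\theta\mapsto\mathbb{E}_{z\sim\mathbb{P}}[\ell_{\mathrm{REBEL}}(z;\theta)]$ is $2\lambda/\eta^2$‑strongly convex; equivalently $g_{\mathbb{P}}(\theta):=\mathbb{E}_{z\sim\mathbb{P}}[\ell_{\mathrm{REBEL}}(z;\theta)]-\tfrac{\lambda}{\eta^2}\norm{\theta}_2^2$ is convex.

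Finally, take the supremum over the ambiguity set: $\mathcal{L}^{\mathcal{W}_p}(\theta;\varepsilon)=\sup_{\mathbb{P}\in\mathcal{B}_{\varepsilon}(\mathbb{P}^{\circ};\mathcal{W}_p)} g_{\mathbb{P}}(\theta)+\tfrac{\lambda}{\eta^2}\norm{\theta}_2^2$. A pointwise supremum of convex functions is convex, so $\sup_{\mathbb{P}} g_{\mathbb{P}}$ is convex and $\mathcal{L}^{\mathcal{W}_p}(\cdot;\varepsilon)$ is $2\lambda/\eta^2$‑strongly convex (with the usual convention where the loss takes value $+\infty$; on its effective domain the stated modulus holds).

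\textbf{Main obstacle.} There is no deep difficulty here; the crux is the telescoping observation in the first step, which turns what would otherwise be a logistic‑type loss into a genuine quadratic and is exactly why the modulus depends only on the coverage constant $\lambda$ and the step size $\eta$, with no Bradley–Terry curvature factor. The one point deserving care is that the lift from per‑distribution strong convexity to strong convexity of the supremum uses Assumption~\ref{assumption:data-coverage} \emph{uniformly over $\mathcal{B}_{\varepsilon}$}: because the modulus $2\lambda/\eta^2$ is common to all $\mathbb{P}$, the decomposition $\mathcal{L}^{\mathcal{W}_p}-\tfrac{\lambda}{\eta^2}\norm{\cdot}_2^2=\sup_{\mathbb{P}} g_{\mathbb{P}}$ is a supremum of convex functions; without uniform coverage this step would fail.
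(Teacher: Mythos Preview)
Your proposal is correct and follows essentially the same approach as the paper: compute the $\theta$-independent Hessian $\tfrac{2}{\eta^2}\Delta\psi(z)\Delta\psi(z)^\top$, average to get $\tfrac{2}{\eta^2}\Sigma_{\mathbb P}$, invoke Assumption~\ref{assumption:data-coverage} for the uniform bound $\Sigma_{\mathbb P}\succeq\lambda I$, and then lift strong convexity through the supremum. The only cosmetic difference is in the last step: the paper works with the first-order characterization of strong convexity and bounds $\sup_{\mathbb P}\{\alpha h(\theta;\mathbb P)+(1-\alpha)h(\theta';\mathbb P)-\tfrac{\mu}{2}\alpha(1-\alpha)\|\theta-\theta'\|_{\Sigma_{\mathbb P}}^2\}$ via $\sup(f+g)\le\sup f+\sup g$ and $\inf_{\mathbb P}\lambda_{\min}(\Sigma_{\mathbb P})\ge\lambda$, whereas you subtract the common quadratic $\tfrac{\lambda}{\eta^2}\|\theta\|_2^2$ and use ``supremum of convex is convex''---a slightly cleaner packaging of the same idea.
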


We now present our "slow rate" results on the sample complexity for the convergence of the robust policy parameter.

\vspace{1em}

\begin{theorem} ["Slow" Estimation error of $\theta^{\mathcal{W}_{p}}$]
\label{theorem:Wasserstein-DRO-REBEL}
Let $\delta \in (0,1)$. Then for all $t \in [T]$, with probability at least $1-\delta$,
\[
    ||\theta_{t}^{\mathcal{W}_{p}} - \hat{\theta}_{n, t}^{\mathcal{W}_{p}}||_{2}^{2} \lesssim \frac{\eta^{2}K_{g}^{2}}{\kappa}  \sqrt{\frac{d \log n_{t} + \log(\overline{\Delta}) + \log\left(T / \delta \right)}{n_{t}} },
\]
where $\kappa$ is from the regularity condition in Assumption \ref{assumption:data-coverage} and $K_{g} = 8B/\eta + 2F$ where $B$ is from the assumption that the policy parameter set is bounded in Assumption \ref{assumption:log-linear-policy}, $F$ is from Assumption \ref{assumption:linear-reward-class}, and $T$ is the number of iterations we run REBEL.
\end{theorem}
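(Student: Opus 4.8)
The plan is to turn the estimation-error bound into a uniform concentration bound and then close it with the $2\lambda/\eta^{2}$-strong convexity of $\mathcal{L}^{\mathcal{W}_{p}}$ established just above. A preliminary observation is that $\ell_{\mathrm{REBEL}}$ is uniformly bounded on $\Theta$: since the log-partition function $\log\sum_{a'}\exp(\theta^{\top}\psi(x,a'))$ has gradient $\mathbb{E}_{a'\sim\pi_{\theta}}\psi(x,a')$ of norm $\le1$ and $\|\psi\|_{2}\le1$, one gets $|\log\pi_{\theta}(a\mid x)-\log\pi_{\theta_{t}}(a\mid x)|\le 2\|\theta-\theta_{t}\|_{2}\le 4B$ for $\theta,\theta_{t}\in\Theta$, while $|\Delta r(z)|\le 2F$; hence the residual inside the REBEL square is bounded in absolute value by $K_{g}=8B/\eta+2F$ and $0\le\ell_{\mathrm{REBEL}}(z;\theta)\le K_{g}^{2}$ for every $z\in\mathcal{Z}$ and $\theta\in\Theta$ (the same quadratic-in-$\theta$ structure, after telescoping the partition functions, is what underlies the strong-convexity lemma via $\nabla^{2}_{\theta}\mathbb{E}_{\mathbb{P}}[\ell_{\mathrm{REBEL}}]=\tfrac{2}{\eta^{2}}\Sigma_{\mathbb{P}}\succeq\tfrac{2\lambda}{\eta^{2}}I$).

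Next comes the standard ``basic inequality.'' Since $\hat{\theta}_{n}^{\mathcal{W}_{p}}$ minimizes $\mathcal{L}_{n}^{\mathcal{W}_{p}}$ and $\theta^{\mathcal{W}_{p}}$ minimizes $\mathcal{L}^{\mathcal{W}_{p}}$, adding and subtracting $\mathcal{L}_{n}^{\mathcal{W}_{p}}$ at the two points and using $\mathcal{L}_{n}^{\mathcal{W}_{p}}(\hat{\theta}_{n}^{\mathcal{W}_{p}})\le\mathcal{L}_{n}^{\mathcal{W}_{p}}(\theta^{\mathcal{W}_{p}})$ yields
\[
  \mathcal{L}^{\mathcal{W}_{p}}(\hat{\theta}_{n}^{\mathcal{W}_{p}})-\mathcal{L}^{\mathcal{W}_{p}}(\theta^{\mathcal{W}_{p}})\;\le\;2\sup_{\theta\in\Theta}\bigl|\mathcal{L}^{\mathcal{W}_{p}}(\theta)-\mathcal{L}_{n}^{\mathcal{W}_{p}}(\theta)\bigr|,
\]
while strong convexity together with first-order optimality of $\theta^{\mathcal{W}_{p}}$ over the convex set $\Theta$ gives $\mathcal{L}^{\mathcal{W}_{p}}(\hat{\theta}_{n}^{\mathcal{W}_{p}})-\mathcal{L}^{\mathcal{W}_{p}}(\theta^{\mathcal{W}_{p}})\ge\tfrac{\lambda}{\eta^{2}}\|\theta^{\mathcal{W}_{p}}-\hat{\theta}_{n}^{\mathcal{W}_{p}}\|_{2}^{2}$. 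Combining, $\|\theta^{\mathcal{W}_{p}}-\hat{\theta}_{n}^{\mathcal{W}_{p}}\|_{2}^{2}\le\tfrac{2\eta^{2}}{\lambda}\sup_{\theta}|\mathcal{L}^{\mathcal{W}_{p}}(\theta)-\mathcal{L}_{n}^{\mathcal{W}_{p}}(\theta)|$, so it suffices to show $\sup_{\theta}|\mathcal{L}^{\mathcal{W}_{p}}(\theta)-\mathcal{L}_{n}^{\mathcal{W}_{p}}(\theta)|\le\tfrac{K_{g}^{2}}{2}\sqrt{2\log(2/\delta)/n}$ with probability $1-\delta$.

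For the uniform deviation I would invoke Wasserstein strong duality (Gao--Kleywegt / Blanchet--Murthy): $\mathcal{L}^{\mathcal{W}_{p}}(\theta)=\inf_{\gamma\ge0}\{\gamma\varepsilon^{p}+\mathbb{E}_{z\sim\mathbb{P}^{\circ}}\ell_{\gamma}(z;\theta)\}$ with $\ell_{\gamma}(z;\theta)=\sup_{z'\in\mathcal{Z}}(\ell_{\mathrm{REBEL}}(z';\theta)-\gamma\,d(z,z')^{p})$, and the same identity for $\mathcal{L}_{n}^{\mathcal{W}_{p}}$ with $\mathbb{P}^{\circ}_{n}$. Because $|\inf_{\gamma}f-\inf_{\gamma}g|\le\sup_{\gamma}|f-g|$, $|\mathcal{L}^{\mathcal{W}_{p}}(\theta)-\mathcal{L}_{n}^{\mathcal{W}_{p}}(\theta)|\le\sup_{\gamma\ge0}|(\mathbb{E}_{\mathbb{P}^{\circ}}-\mathbb{E}_{\mathbb{P}^{\circ}_{n}})\ell_{\gamma}(\cdot;\theta)|$. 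The ground metric $d$ was designed so that $\ell_{\mathrm{REBEL}}(\cdot;\theta)$ is Lipschitz in $z$ with the smallest possible constant $L_{\ell,z}$; this caps the optimal multiplier at a finite $\bar\gamma=\bar\gamma(L_{\ell,z},\varepsilon,p)$, so the supremum may be restricted to $\gamma\in[0,\bar\gamma]$, and $0\le\ell_{\gamma}(\cdot;\theta)\le\sup_{z'}\ell_{\mathrm{REBEL}}(z';\theta)\le K_{g}^{2}$. The scalar $G_{n}:=\sup_{\theta\in\Theta,\,\gamma\in[0,\bar\gamma]}|(\mathbb{E}_{\mathbb{P}^{\circ}}-\mathbb{E}_{\mathbb{P}^{\circ}_{n}})\ell_{\gamma}(\cdot;\theta)|$ changes by at most $K_{g}^{2}/n$ when one sample is resampled, so McDiarmid gives $G_{n}\le\mathbb{E}[G_{n}]+\tfrac{K_{g}^{2}}{2}\sqrt{2\log(2/\delta)/n}$ with probability $1-\delta$; since $\{\ell_{\gamma}(\cdot;\theta)\}$ is a fixed Lipschitz image of the low-dimensional parameter $(\theta,\gamma)$, a symmetrization/Rademacher estimate bounds $\mathbb{E}[G_{n}]$ by a constant times $K_{g}^{2}/\sqrt{n}$, absorbed into the stated constant. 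Chaining back through the previous paragraph produces $\|\theta^{\mathcal{W}_{p}}-\hat{\theta}_{n}^{\mathcal{W}_{p}}\|_{2}^{2}\le\tfrac{\eta^{2}K_{g}^{2}}{\lambda}\sqrt{2\log(2/\delta)/n}$.

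The step I expect to be the main obstacle is this last uniform control. One cannot concentrate pointwise at the data-dependent $\hat{\theta}_{n}^{\mathcal{W}_{p}}$, so a genuine uniform bound over $\Theta$ and over the dual variable is unavoidable, which requires (i) verifying that Wasserstein strong duality applies on the (possibly non-compact) space $(\mathcal{Z},d)$ with this composite cost, (ii) extracting the finite ceiling $\bar\gamma$ on the dual multiplier from the Lipschitz constant of $\ell_{\mathrm{REBEL}}$, and (iii) showing $\mathbb{E}[G_{n}]=O(n^{-1/2})$ with a controlled constant so that it genuinely disappears into the bound; the quadratic-in-$\theta$ structure of $\ell_{\mathrm{REBEL}}$ makes (iii) routine, but it is the one place where the final constant could otherwise inflate.
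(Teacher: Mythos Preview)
Your overall architecture---Wasserstein strong duality to reduce $|\mathcal{L}^{\mathcal{W}_{p}}-\mathcal{L}_{n}^{\mathcal{W}_{p}}|$ to concentration of the Moreau envelope $\ell_{\gamma}$, followed by the three-term decomposition and $2\lambda/\eta^{2}$-strong convexity---matches the paper's proof exactly. The uniform bound $0\le\ell_{\gamma}\le K_{g}^{2}$ (equivalently $\ell_{\Delta}\in[-K_{\ell},0]$ in the paper's sign convention) is also the same.

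The one genuine divergence is in how uniformity over $(\theta,\gamma)$ is obtained. You propose McDiarmid on the supremum $G_{n}$ plus a symmetrization/Rademacher bound on $\mathbb{E}[G_{n}]$, together with a finite ceiling $\bar\gamma$ on the dual multiplier coming from the Lipschitz constant of $\ell_{\mathrm{REBEL}}$ in $z$. The paper does none of this: it applies pointwise Hoeffding to $\ell_{\Delta}$ and then asserts that, because the range $[-K_{\ell},0]$ does not depend on $(\theta,\Delta)$, the resulting bound is automatically uniform over all $\theta\in\Theta$ and $\Delta\ge0$. No covering argument, no $\bar\gamma$, no $\mathbb{E}[G_{n}]$ term appears; the paper's exact constant $\eta^{2}K_{g}^{2}/\lambda$ is obtained directly from Hoeffding without any additive Rademacher contribution. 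Your caution about concentrating at the data-dependent $\hat{\theta}_{n}^{\mathcal{W}_{p}}$ is legitimate---pointwise Hoeffding plus ``uniform range'' is not by itself a uniform deviation bound---and your route is the more rigorous one. But it will not reproduce the paper's constant exactly, since the symmetrization term you plan to ``absorb'' is of the same order and carries its own prefactor. If you want to match the paper, drop steps (ii)--(iii) and simply invoke Hoeffding with the observation that the range is uniform; if you want a fully rigorous proof, your McDiarmid/Rademacher plan is the right fix, at the cost of a larger (but still $O(K_{g}^{2})$) constant.
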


\begin{proof}[Proof sketch]
We first prove that $\ell_{t}(z;\theta)$ is uniformly bounded and is $4K_{g}/\eta$-Lipschitz in $\theta$ where $K_{g} = 4B / \eta + 2F$ (Appendix \ref{appendix:uniform-and-lipschitz-bound}). Using this, we can prove that $\mathbb{E}_{z \sim P} \left[ \ell_{t}(z;\theta) \right]$ is $2/\eta$-strongly convex in $||\cdot||_{\Sigma_{P, t}}$. Intuitively, taking the supremum over $P$ should preserve the convex combination and the negative quadratic term, and doing this analysis formally allows us to show that $\mathcal{L}_{t}^{\mathcal{W}_{p}}\left(\theta; \varepsilon\right)$ is $2\kappa/\eta^{2}$-strongly convex in $||\cdot||_{2}$. The detailed proof for strong convexity can be found in Lemma \ref{appendix:strong-convexity-of-wasserstein-loss}. 

The strong duality of Wasserstein DRO \cite{gao2022distributionallyrobuststochasticoptimization} (Corollary \ref{appendix:gao-strong-duality-wasserstein}) allows us to reduce the difference $\lvert \mathcal{L}_{t}^{\mathcal{W}_{p}}\left(\theta; \varepsilon\right) - \mathcal{L}_{n, t}^{\mathcal{W}_{p}}\left(\theta; \varepsilon\right) \rvert$ to the concentration $\left \lvert \mathbb{E}_{z \sim P_{n, t}^{\circ}} \left[ \ell_{\Delta, t}(z; \theta) \right] - \mathbb{E}_{z \sim P_{t}^{\circ}} \left[ \ell_{\Delta, t}(z; \theta) \right] \right \rvert$ where $\ell_{\Delta, t}(z; \theta)$ is the Moreau envelope of $-\ell_{t}$. We then use conditional Hoeffding’s inequality and a covering argument to obtain a concentration result that is uniform on $\theta \in \Theta$ and $\Delta$. We can now do a "three-term" decomposition of $\mathcal{L}^{\mathcal{W}_{p}}\left(\theta_{t}^{\mathcal{W}_{p}}; \varepsilon\right)- \mathcal{L}^{\mathcal{W}_{p}}\left(\hat{\theta}_{n, t}^{\mathcal{W}_{p}}; \varepsilon\right)$ into $\mathcal{L}^{\mathcal{W}_{p}}\left(\theta_{t}^{\mathcal{W}_{p}}; \varepsilon\right) - \mathcal{L}_{n, t}^{\mathcal{W}_{p}}\left(\theta_{t}^{\mathcal{W}_{p}}; \varepsilon\right)$, $\mathcal{L}_{n, t}^{\mathcal{W}_{p}}\left(\theta_{t}^{\mathcal{W}_{p}}; \varepsilon\right) - \mathcal{L}_{n, t}^{\mathcal{W}_{p}}\left(\hat{\theta}_{n, t}^{\mathcal{W}_{p}}; \varepsilon\right)$, and $\mathcal{L}_{n, t}^{\mathcal{W}_{p}}\left(\hat{\theta}_{n, t}^{\mathcal{W}_{p}}; \varepsilon\right) - \mathcal{L}^{\mathcal{W}_{p}}\left(\hat{\theta}_{n, t}^{\mathcal{W}_{p}}; \varepsilon\right)$ and bound the first and last term by Hoeffding and the second term by 0. Using strong convexity of $\mathcal{L}^{\mathcal{W}_{p}}$, we can get the estimation error. The detailed proof for the "slow rate" estimation error can be found in Appendix \ref{appendix:proof-slow-rate-wasserstein}.
\end{proof}

We prove similar results for KL and $\chi^{2}$ ambiguity sets using the same ideas used in the Wasserstein ambiguity set setting. We state the "slow rate" estimation rates below and defer the proofs to Appendix \ref{appendix:Slow-KL-DRO-REBEL} and Appendix \ref{appendix:Slow-chisquared-DRO-REBEL}.

\vspace{1em}

\begin{theorem} ["Slow" Estimation error of $\theta^{\mathrm{KL}}$]
\label{theorem:KL-DRO-REBEL}
Let $\delta \in (0,1)$. Then for all $t \in [T]$, with probability at least $1-\delta$,
\[
    \lVert \theta_{t}^{\mathrm{KL}} - \hat{\theta}_{n, t}^{\mathrm{KL}} \rVert_{2}^{2} \lesssim \frac{\eta^2 \bar{\lambda} \exp(K_g^2 / \underline{\lambda})}{\kappa} \sqrt{\frac{d \log n_{t} + \log(\overline{\lambda}) + \log\left(T / \delta \right)}{n_{t}} },
\]
where $\kappa$ is from the regularity condition in Assumption \ref{assumption:data-coverage}, $K_{g} = 8B/\eta + 2F$, $\bar{\lambda}, \underline{\lambda}$ is from Assumption \ref{Assumption:KL-DRO-Assumption}, $\eta$ is the stepsize defined in the DRO update \ref{eq:rebel-regression}, and $T$ is the number of iterations we run REBEL.
\end{theorem}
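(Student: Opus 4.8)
The plan is to follow the exact three-part skeleton used for the Wasserstein case (Theorem~\ref{theorem:Wasserstein-DRO-REBEL}): (i) show the robust objective $\mathcal L^{\mathrm{KL}}(\cdot;\varepsilon)$ is strongly convex with modulus controlled only by $\lambda$ and $\eta$; (ii) use the KL strong-duality reformulation to reduce the population-vs-empirical gap $|\mathcal L^{\mathrm{KL}}(\theta;\varepsilon)-\mathcal L^{\mathrm{KL}}_n(\theta;\varepsilon)|$ to a Hoeffding-type concentration for a tilted integrand, uniformly over $\theta$; and (iii) run the standard three-term decomposition to pass from a value-gap bound to a parameter-gap bound. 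The only genuinely new ingredient relative to Wasserstein is that KL duality replaces the bounded Moreau-envelope remainder by an exponential-tilting (Donsker--Varadhan) remainder, and it is precisely this tilt that injects the $\exp(K_g^2/\underline\lambda)$ factor, with $\bar\lambda$ entering as the prefactor $\alpha$.

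\textbf{Step 1 (strong convexity).} First I would use that, for log-linear policies, the normalizers cancel in the log-ratio difference, so $\ell_{\mathrm{REBEL}}(z;\theta)=\bigl(\tfrac1\eta(\theta-\theta_t)^\top\Delta\psi(z)-\Delta r(z)\bigr)^2$ is an exact quadratic in $\theta$ with Hessian $\tfrac{2}{\eta^2}\Delta\psi(z)\Delta\psi(z)^\top$. Hence for every $\PP$, $\theta\mapsto\E_{z\sim\PP}[\ell_{\mathrm{REBEL}}(z;\theta)]$ has Hessian $\tfrac{2}{\eta^2}\Sigma_\PP\succeq\tfrac{2\lambda}{\eta^2}I$ by Assumption~\ref{assumption:data-coverage}, so it is $2\lambda/\eta^2$-strongly convex; a pointwise supremum of $c$-strongly convex functions is $c$-strongly convex, so $\mathcal L^{\mathrm{KL}}(\cdot;\varepsilon)$ is $2\lambda/\eta^2$-strongly convex on $\Theta$. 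I would also record from Appendix~\ref{appendix:uniform-and-lipschitz-bound} that $0\le\ell_{\mathrm{REBEL}}\le K_g^2$ and that $\ell_{\mathrm{REBEL}}$ is Lipschitz in $\theta$; the former is what I need for exponential moments, the latter for the uniform-in-$\theta$ upgrade in Step~2.

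\textbf{Step 2 (KL duality and concentration).} Next I would invoke the standard KL-DRO dual, $\mathcal L^{\mathrm{KL}}(\theta;\varepsilon)=\inf_{\alpha\ge0}\{\alpha\varepsilon+\alpha\log\E_{z\sim\PP^\circ}[e^{\ell_{\mathrm{REBEL}}(z;\theta)/\alpha}]\}$, and identically with $\PP^\circ_n$ in place of $\PP^\circ$. Since the $\alpha\varepsilon$ term is common and $\inf$ is $1$-Lipschitz, $|\mathcal L^{\mathrm{KL}}(\theta;\varepsilon)-\mathcal L^{\mathrm{KL}}_n(\theta;\varepsilon)|\le\sup_\alpha\alpha\,|\log\E_{\PP^\circ}[e^{\ell_{\mathrm{REBEL}}/\alpha}]-\log\E_{\PP^\circ_n}[e^{\ell_{\mathrm{REBEL}}/\alpha}]|$. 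Using Assumption~\ref{Assumption:KL-DRO-Assumption} (which supplies the bounds $\bar\lambda,\underline\lambda$ controlling the KL dual variable) to confine the dual minimizer to $\alpha\in[\underline\lambda,\bar\lambda]$, the tilted integrand $g_{\alpha,\theta}(z):=e^{\ell_{\mathrm{REBEL}}(z;\theta)/\alpha}$ lies in $[1,e^{K_g^2/\underline\lambda}]$; Hoeffding plus $1$-Lipschitzness of $\log$ on $[1,\infty)$ then gives, for fixed $(\alpha,\theta)$ and with probability at least $1-\delta$, $\alpha\,|\log\E_{\PP^\circ}[g_{\alpha,\theta}]-\log\E_{\PP^\circ_n}[g_{\alpha,\theta}]|\le\bar\lambda e^{K_g^2/\underline\lambda}\sqrt{\log(2/\delta)/(2n)}$. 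I would upgrade this to hold uniformly over $\theta\in\Theta$ and $\alpha\in[\underline\lambda,\bar\lambda]$ exactly as in the Wasserstein argument: $\theta\mapsto\ell_{\mathrm{REBEL}}(z;\theta)$ and $\alpha\mapsto1/\alpha$ are Lipschitz on the relevant compacts and $g_{\alpha,\theta}$ is uniformly bounded, so a covering/union-bound step (or the same Hoeffding-with-uniformity lemma the paper already uses) yields $\sup_{\theta\in\Theta}|\mathcal L^{\mathrm{KL}}(\theta;\varepsilon)-\mathcal L^{\mathrm{KL}}_n(\theta;\varepsilon)|\le\bar\lambda e^{K_g^2/\underline\lambda}\sqrt{\log(2/\delta)/(2n)}=:\mathrm{err}_n$ on a single event of probability $\ge1-\delta$.

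\textbf{Step 3 (value gap to parameter gap) and main obstacle.} Finally I would run the usual decomposition: with $\theta^{\mathrm{KL}}=\argmin_\Theta\mathcal L^{\mathrm{KL}}$ and $\hat\theta^{\mathrm{KL}}_n=\argmin_\Theta\mathcal L^{\mathrm{KL}}_n$,
\[
\mathcal L^{\mathrm{KL}}(\hat\theta^{\mathrm{KL}}_n)-\mathcal L^{\mathrm{KL}}(\theta^{\mathrm{KL}})
=\underbrace{\bigl[\mathcal L^{\mathrm{KL}}(\hat\theta^{\mathrm{KL}}_n)-\mathcal L^{\mathrm{KL}}_n(\hat\theta^{\mathrm{KL}}_n)\bigr]}_{\le\,\mathrm{err}_n}
+\underbrace{\bigl[\mathcal L^{\mathrm{KL}}_n(\hat\theta^{\mathrm{KL}}_n)-\mathcal L^{\mathrm{KL}}_n(\theta^{\mathrm{KL}})\bigr]}_{\le\,0}
+\underbrace{\bigl[\mathcal L^{\mathrm{KL}}_n(\theta^{\mathrm{KL}})-\mathcal L^{\mathrm{KL}}(\theta^{\mathrm{KL}})\bigr]}_{\le\,\mathrm{err}_n},
\]
so the value gap is at most $2\,\mathrm{err}_n$; combining with $2\lambda/\eta^2$-strong convexity and optimality of $\theta^{\mathrm{KL}}$ over the convex set $\Theta$ gives $\tfrac{\lambda}{\eta^2}\|\hat\theta^{\mathrm{KL}}_n-\theta^{\mathrm{KL}}\|_2^2\le2\,\mathrm{err}_n$, which rearranges to the claimed $\|\hat\theta^{\mathrm{KL}}_n-\theta^{\mathrm{KL}}\|_2^2\le\tfrac{\eta^2\bar\lambda\exp(K_g^2/\underline\lambda)}{\lambda}\sqrt{2\log(2/\delta)/n}$. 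The hard part will be Step~2: unlike Wasserstein duality, whose remainder is a bounded Moreau envelope with immediate concentration, here one must (a) justify that the KL dual minimizer genuinely stays in a compact $\alpha$-range (the role of Assumption~\ref{Assumption:KL-DRO-Assumption}, and the reason $\underline\lambda$ sits in the exponent and $\bar\lambda$ out front), and (b) make the concentration uniform over the extra dual variable $\alpha$ as well as over $\theta$, while checking that the $e^{K_g^2/\underline\lambda}$ blow-up enters exactly once — through the range of $g_{\alpha,\theta}$ — and does not compound when passing through $\log$, the supremum over $\alpha$, and the covering argument.
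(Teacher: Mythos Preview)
Your proposal is correct and matches the paper's proof essentially step for step: the same strong-convexity argument via the exact quadratic Hessian $\tfrac{2}{\eta^2}\Sigma_{\mathbb P}$, the same KL dual reformulation $\inf_{\alpha\in[\underline\lambda,\bar\lambda]}\{\alpha\varepsilon+\alpha\log\E_{\PP^\circ}[e^{\ell/\alpha}]\}$ with the tilted integrand bounded in $[1,e^{K_g^2/\underline\lambda}]$ so that Hoeffding applies, and the same three-term decomposition to finish. The only cosmetic differences are that the paper bounds the log-gap via $|\log(a/b)|\le|a-b|/b\le|a-b|$ (using $\E[j]\ge1$) rather than citing Lipschitzness of $\log$, and it asserts uniformity in $(\alpha,\theta)$ directly from the parameter-free range of $j$ rather than invoking a covering argument---so your Step~2 is, if anything, slightly more careful than what the paper writes.
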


\vspace{1em}

\begin{theorem} ["Slow" Estimation error of $\theta^{\chi^2}$]
\label{theorem:chi-squared-DRO-REBEL}
Let $\delta \in (0,1)$. Then for all $t \in [T]$, with probability at least $1-\delta$,
\[
\|\theta_{t}^{\chi^2}-\hat\theta_{n, t}^{\chi^2}\|_{2}^{2}
\;\lesssim\;
\frac{\eta^2}{\kappa}
\Big(K_\ell+\frac{K_\ell^2}{\underline{\lambda}}\Big)
\sqrt{\frac{d\log n_t+\log(T/\delta)}{n_t}},
\]
where $\kappa$ is from the regularity condition in Assumption \ref{assumption:data-coverage} and $K_{g} = 8B / \eta + 2F$ where $B$ is from the assumption that the policy parameter set is bounded in Assumption \ref{assumption:log-linear-policy}, $F$ is from Assumption \ref{assumption:linear-reward-class}, $\eta$ is from the step size defined in the DRO update \ref{eq:rebel-regression}, and $T$ is the number of iterations we run REBEL.
\end{theorem}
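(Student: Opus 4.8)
The plan is to mirror the architecture of the Wasserstein proof (Theorem~\ref{theorem:Wasserstein-DRO-REBEL}) and the KL proof (Theorem~\ref{theorem:KL-DRO-REBEL}): combine (i)~strong convexity of the population robust objective, (ii)~strong duality for the $\chi^2$ ambiguity set to linearize the inner supremum, and (iii)~a uniform concentration bound between the population and empirical robust losses, then close with a three-term comparison. First I would recall from Appendix~\ref{appendix:uniform-and-lipschitz-bound} that, because the log-partition terms cancel in the difference of log-ratios, $\ell_{\mathrm{REBEL}}(z;\theta)=\bigl(\tfrac1\eta(\theta-\theta_t)^{\top}\Delta\psi(z)-\Delta r(z)\bigr)^{2}$ is a \emph{quadratic} in $\theta$ with $0\le\ell_{\mathrm{REBEL}}(z;\theta)\le K_g^{2}$ on $\Theta$, that it is $(4K_g/\eta)$-Lipschitz in $\theta$, and that $\nabla_\theta^{2}\,\E_{\mathbb{P}}[\ell_{\mathrm{REBEL}}(\cdot;\theta)]=\tfrac{2}{\eta^{2}}\Sigma_{\mathbb{P}}$. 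By Assumption~\ref{assumption:data-coverage}, $\Sigma_{\mathbb{P}}\succeq\lambda I$ for every $\mathbb{P}\in\mathcal{B}_{\varepsilon}(\mathbb{P}^{\circ};\chi^{2})$, so each $\theta\mapsto\E_{\mathbb{P}}[\ell_{\mathrm{REBEL}}(\cdot;\theta)]$ is $\tfrac{2\lambda}{\eta^{2}}$-strongly convex; the pointwise supremum over $\mathbb{P}$ preserves this, hence $\mathcal{L}^{\chi^{2}}(\cdot;\varepsilon)$ is $\tfrac{2\lambda}{\eta^{2}}$-strongly convex, exactly as in the Wasserstein strong-convexity lemma.

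Next I would invoke the Fenchel dual for $\chi^{2}$-DRO (the $\chi^{2}$ analogue of Corollary~\ref{appendix:gao-strong-duality-wasserstein}): for bounded losses,
\[
  \mathcal{L}^{\chi^{2}}(\theta;\varepsilon)=\inf_{t\in\R}\Bigl\{\,t+\sqrt{(1+\varepsilon)\,\E_{\mathbb{P}^{\circ}}\bigl[(\ell_{\mathrm{REBEL}}(z;\theta)-t)_{+}^{2}\bigr]}\,\Bigr\},
\]
and identically for $\mathcal{L}_n^{\chi^{2}}$ with $\mathbb{P}^{\circ}$ replaced by $\mathbb{P}_n^{\circ}$; equivalently one may keep the Lagrange multiplier $\alpha\ge0$ of the radius constraint explicit, writing $\mathcal{L}^{\chi^{2}}(\theta;\varepsilon)=\inf_{\alpha\ge0,\,t}\bigl\{\alpha(1+\varepsilon)+t+\tfrac{1}{4\alpha}m(\theta,t)\bigr\}$ with $m(\theta,t):=\E_{\mathbb{P}^{\circ}}[(\ell_{\mathrm{REBEL}}(z;\theta)-t)_{+}^{2}]\in[0,K_g^{4}]$. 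Since $0\le\ell_{\mathrm{REBEL}}\le K_g^{2}$, the dual optimizer $(t^{\star},\alpha^{\star})$ lies in a compact set whose size is controlled by $\varepsilon$ and $K_g$, and the only data-dependent object is the positive-part second moment $m(\theta,t)$. Using $\lvert\inf\nolimits_t\phi-\inf\nolimits_t\phi_n\rvert\le\sup_t\lvert\phi-\phi_n\rvert$, it suffices to control $\sup_{\theta\in\Theta,\,t}\lvert m_n(\theta,t)-m(\theta,t)\rvert$ and pass it through the dual. The integrand $(\theta,t)\mapsto(\ell_{\mathrm{REBEL}}(z;\theta)-t)_{+}^{2}$ is bounded in $[0,K_g^{4}]$ and Lipschitz in $(\theta,t)$ uniformly in $z$, so Hoeffding's inequality made uniform over $\theta\in\Theta$ and $t$ by a covering argument and a union bound (contributing only logarithmic corrections) gives $\sup_{\theta,t}\lvert m_n-m\rvert\lesssim K_g^{4}\sqrt{\log(4/\delta)/n}$ w.p.\ $\ge1-\delta$. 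The remaining point is that $\sqrt{\cdot}$ is not globally Lipschitz: I would bound $\lvert\sqrt a-\sqrt b\rvert\le\lvert a-b\rvert/(\sqrt a+\sqrt b)$ near the optimizer, where the denominator is kept away from $0$ by the $\chi^{2}$-regularity assumption (the analogue of Assumption~\ref{Assumption:KL-DRO-Assumption} furnishing $\underline\lambda$ as a floor on the optimal dual multiplier $\alpha^{\star}$, equivalently on $m(\theta,t^{\star})$ up to a constant), while the ``mean part'' $\lvert\E_{\mathbb{P}_n^{\circ}}[\ell_{\mathrm{REBEL}}]-\E_{\mathbb{P}^{\circ}}[\ell_{\mathrm{REBEL}}]\rvert$ is handled by a separate range-$K_g^{2}$ Hoeffding bound; splitting the confidence budget across these two pieces produces the $\log(4/\delta)$ and yields $\Delta_n:=\sup_{\theta\in\Theta}\lvert\mathcal{L}^{\chi^{2}}(\theta;\varepsilon)-\mathcal{L}_n^{\chi^{2}}(\theta;\varepsilon)\rvert\le\tfrac{K_g^{2}}{2}\bigl(1+\tfrac{K_g^{2}}{4\underline\lambda}\bigr)\sqrt{2\log(4/\delta)/n}$.

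To conclude I would run the three-term decomposition as in the Wasserstein case: writing $\mathcal{L}^{\chi^{2}}(\hat\theta_n^{\chi^{2}})-\mathcal{L}^{\chi^{2}}(\theta^{\chi^{2}})$ as the sum of $[\mathcal{L}^{\chi^{2}}(\hat\theta_n^{\chi^{2}})-\mathcal{L}_n^{\chi^{2}}(\hat\theta_n^{\chi^{2}})]$, $[\mathcal{L}_n^{\chi^{2}}(\hat\theta_n^{\chi^{2}})-\mathcal{L}_n^{\chi^{2}}(\theta^{\chi^{2}})]$ and $[\mathcal{L}_n^{\chi^{2}}(\theta^{\chi^{2}})-\mathcal{L}^{\chi^{2}}(\theta^{\chi^{2}})]$, the middle bracket is $\le0$ by optimality of $\hat\theta_n^{\chi^{2}}$ for $\mathcal{L}_n^{\chi^{2}}$ and the outer two are each $\le\Delta_n$, so $\mathcal{L}^{\chi^{2}}(\hat\theta_n^{\chi^{2}})-\mathcal{L}^{\chi^{2}}(\theta^{\chi^{2}})\le2\Delta_n$. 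By $\tfrac{2\lambda}{\eta^{2}}$-strong convexity of $\mathcal{L}^{\chi^{2}}$ together with the first-order optimality of $\theta^{\chi^{2}}$ over the convex set $\Theta$, $\tfrac{\lambda}{\eta^{2}}\,\norm{\theta^{\chi^{2}}-\hat\theta_n^{\chi^{2}}}_2^{2}\le\mathcal{L}^{\chi^{2}}(\hat\theta_n^{\chi^{2}})-\mathcal{L}^{\chi^{2}}(\theta^{\chi^{2}})\le2\Delta_n$, and substituting the bound on $\Delta_n$ gives the stated inequality; the full computation is deferred to Appendix~\ref{appendix:Slow-chisquared-DRO-REBEL}.

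The step I expect to be the main obstacle is exactly the square root in the $\chi^{2}$ dual: unlike the linear dual remainder in the Wasserstein case, $\sqrt{(1+\varepsilon)\,m(\theta,t)}$ has a local Lipschitz constant in $m$ that blows up as the positive-part second moment (equivalently the Lagrange multiplier $\alpha^{\star}$) tends to $0$, and the dual multiplier is a priori unbounded above as well. Turning the $O(K_g^{4}n^{-1/2})$ control of $m_n-m$ into an $O(K_g^{2}n^{-1/2})$-scale control of the robust loss therefore requires keeping the optimizer bounded away from that degenerate regime, which is precisely what $\underline\lambda$ supplies and what produces the $\bigl(1+K_g^{2}/(4\underline\lambda)\bigr)$ inflation factor — the $\chi^{2}$ counterpart of the $\exp(K_g^{2}/\underline\lambda)$ factor in Theorem~\ref{theorem:KL-DRO-REBEL}. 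A secondary, routine point is making the concentration uniform over the extra dual variable $t$ (and, if one is careful, over $\alpha$): since these live in compact sets and the integrand is Lipschitz in them, an $\epsilon$-net handles it without affecting the $n^{-1/2}$ rate up to logarithmic factors.
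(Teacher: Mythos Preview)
Your proposal is correct in outline and would yield the stated bound, but it takes a harder route than the paper. The paper does not use the CVaR-type dual $\inf_{t}\{t+\sqrt{(1+\varepsilon)\,\E[(\ell-t)_{+}^{2}]}\}$; instead it invokes a mean--variance reformulation (Lemma~\ref{appendix:strong-duality-chi2}),
\[
\mathcal{L}^{\chi^{2}}(\theta;\varepsilon)=\mu+\inf_{\lambda\in[\underline\lambda,\bar\lambda]}\Bigl\{(\varepsilon-1)\lambda+\frac{\sigma^{2}}{4\lambda}\Bigr\},
\qquad \mu=\E_{\mathbb{P}^{\circ}}[\ell],\quad\sigma^{2}=\Var_{\mathbb{P}^{\circ}}(\ell),
\]
which is \emph{linear} in the two unknown moments. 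This kills exactly the two obstacles you flagged: there is no square root to linearize, and the only dual variable $\lambda$ ranges over the fixed interval $[\underline\lambda,\bar\lambda]$, so no covering over an auxiliary $t$ is needed. The concentration step then reduces to two separate Hoeffding bounds---one on $|\mu-\mu_n|$ with range $K_\ell=K_g^{2}$ and one on $|\sigma^{2}-\sigma_n^{2}|$ with range $K_\ell^{2}$---combined by a union bound (hence the $\log(4/\delta)$), and $\sup_{\lambda}|g(\lambda)-g_n(\lambda)|=|\sigma^{2}-\sigma_n^{2}|/(4\underline\lambda)$ delivers the $(1+K_g^{2}/(4\underline\lambda))$ factor directly. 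The three-term decomposition and the $2\lambda/\eta^{2}$-strong-convexity conversion to $\|\theta^{\chi^2}-\hat\theta_n^{\chi^2}\|_2^{2}$ match your proposal exactly. Your approach has the merit of being closer to the implementable dual of Proposition~\ref{prop:chi2-worst-case} and of being more honest about uniformity in $\theta$ (the paper, as in the Wasserstein case, asserts uniformity somewhat loosely from the fact that the range bounds do not depend on $\theta$); but for the purpose of this theorem, switching to the mean--variance dual is the cleaner shortcut.
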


\vspace{1em}

\begin{remark}[Uniformity in prior analyses] A subtle but essential point in any DRO finite-sample analysis is that both the primal minimizer $\hat\theta_{n,t}\in\arg\min_{\theta\in\Theta}\mathcal{L}_{n,t}(\theta)$ and the dual optimizer—e.g., $\hat\Delta_{n,t}$ for Wasserstein or $\hat\lambda_{n,t}$ for KL—are measurable functions of the dataset $\mathcal{D}t$, hence random. Controlling quantities such as $|\mathcal{L}_{t}(\hat\theta_{n,t})-\mathcal{L}_{n,t}(\hat\theta_{n,t})|$ therefore cannot rely on concentration inequalities proved at a fixed $\theta$ or a fixed dual variable. One must either (i) upgrade pointwise concentration to a uniform statement over $\Theta$ and the dual domain via covering numbers and Lipschitzness, or (ii) substitute an alternative proof techniques such as algorithmic stability. We take route (i), building $\varepsilon$-nets jointly in the primal and dual variables. Prior WDPO/KLDPO analyses, including \cite{xu2025distributionallyrobustdirectpreference}, establish their loss-convergence lemmas pointwise and then apply them at data-dependent minimizers---such as $\hat\theta_{n,t}$, $\hat\Delta_{n,t}$, and $\hat\lambda_{n,t}$---without an explicit uniformity argument. Because these quantities all depend on $\mathcal{D}_t$, this step requires a covering or stability justification that the original arguments omit. A related gap concerns dimensional dependence: prior bounds are often written in terms of curvature or coverage parameters, which can make the resulting rates appear dimension-free. This appearance is misleading, since under standard feature normalization such coverage quantities typically deteriorate with dimension, rendering the dependence implicit rather than absent. Our proofs address both issues simultaneously: we supply the missing uniformity argument that licenses applying pointwise convergence at data-dependent minimizers, and we make the hidden dimensional dependence explicit throughout.
\end{remark}

With the above analysis, building on the techniques of \cite{xu2025distributionallyrobustdirectpreference}, we recover the same \(\widetilde{\mathcal{O}}(\sqrt{d/n})\) squared parameter estimation‐error rate but with substantially sharper constants. {\cite{xu2025distributionallyrobustdirectpreference}} \ observe that WDPO’s squared paramter estimation error decays at $\widetilde{\mathcal{O}}(\sqrt{d/n})$, while non‐robust DPO already achieves $\widetilde{\mathcal{O}}(d/n)$. This slowdown arises because, in the non-robust setting, one can compute the closed-form expression for $\nabla_\theta(1/n) \sum_{i=1}^n l(z_i; \theta)$. This allows us to write $\|\nabla_{\theta}(1/n) \sum_{i=1}^n l(z_i; \theta^*)\|_{(\Sigma_D +\lambda I)^{-1}}$ in quadratic form and then obtain a concentration using Bernstein's inequality. Coupled with a Taylor linear approximation argument and the strong convexity of the empirical DPO loss, we get the claimed $\widetilde{\mathcal{O}}(d/n)$. However, for the robust setting, one cannot exchange the supremum over distributions with the gradient operator on the empirical robust loss. That is, $\nabla_\theta\mathcal{L}_{n}^{\mathcal{W}_{p}} (\theta^{\mathcal{W}_{p}})\neq \sup_{P\in\mathcal{B}_{\varepsilon} \left( P^{\circ} ; \mathcal{W}_{p} \right)} \nabla_\theta\mathbb{E}_{z\sim P}[l(z; \theta^{\mathcal{W}_{p}})]$. Thus, the approach taken in the non-robust setting will not work for the robust setting. As a result, the proof in the robust setting relies on the strong convexity of the robust DPO loss itself and a concentration bound on the loss function's convergence (obtained via Hoeffding's inequality for bounded random variables in this case). This indirect approach of bounding the loss and then translating it to a parameter bound through strong convexity results in a slower $\widetilde{\mathcal{O}}(\sqrt{d/n})$ convergence rate for the policy parameter. Closing this gap and restoring the optimal inverse square root rate for robust DPO remains an open problem. To close this gap, we develop a \emph{localized Rademacher complexity} analysis for DRO-REBEL which recovers the optimal $\widetilde{\mathcal{O}}(d/n)$ convergence rate even under various ambiguity sets.

\section{"Fast Rate" Estimation Errors}
We now present the fast estimation error rates achieved by distributionally robust relative reward regression. We first present the result for ambiguity sets measured under type-p Wasserstein distance.

\vspace{1em}

\begin{theorem}["Fast" estimation error of $\theta^{\mathcal{W}_p}$]
\label{thm:fast-wasserstein-dro-rebel}
Fix $\delta \in (0, 1)$. Assume Assumption~\ref{assumption:data-coverage} (data coverage) and the
Wasserstein dual regularity and selection conditions in
Appendix~\ref{appendix:proofs-fast-wasserstein}, namely that $\mathcal{Z}$ is a proper Polish space,
the loss is regular, measurable maximizer and minimizer selectors exist, and the dual domain satisfies
$\Delta^\star \in [0, \overline{\Delta}_{p}]$. Suppose moreover that the $\Delta$-indexed gradient class satisfies
the H\"{o}lder entropy condition in
Assumption~\ref{appendix:assumption-holder-delta-entropy} with parameters $(\alpha, L_\Delta)$.
If the batch size satisfies $n_t \gtrsim \log(dT/\delta) / \kappa$, then with probability at least
$1 - \delta$, simultaneously for all $t \in [T]$,
\[
    \bigl\| \theta_t^{\mathcal{W}_p} - \hat{\theta}_{n,t}^{\mathcal{W}_p} \bigr\|_2^2
    \;\lesssim\;
    \frac{\eta^2 K_g^2 }{\kappa^2}
    \cdot
    \frac{d \Bigl( \log(2dT/\delta)
    + \frac{1}{\alpha_{p}} \log\bigl(1 + L_{\Delta, p} \overline{\Delta}_{p}^{\alpha_{p}} \eta / K_{g}\bigr) \Bigr)}{n_t}.
\]
Here $\kappa$ is the coverage constant from Assumption~\ref{assumption:data-coverage}, and
$K_g = 8B/\eta + 2F$ where $B$ is from Assumption~\ref{assumption:log-linear-policy} and $F$ is from
Assumption~\ref{assumption:linear-reward-class}.
\end{theorem}

\begin{proof}[Proof sketch]
By strong convexity of $\theta \mapsto \mathcal{L}_t^{\mathcal{W}_p}(\theta; \varepsilon)$ with modulus
$2\kappa/\eta^2$ (Lemma~\ref{appendix:strong-convexity-of-wasserstein-loss}), the parameter error
$\|\theta_t^{\mathcal{W}_p} - \hat{\theta}_{n,t}^{\mathcal{W}_p}\|_2^2$ reduces to a gradient deviation
at the population minimizer (Lemma~\ref{appendix:reduction-to-gradient-concentration-wasserstein}).
The measurable selection conditions in Appendix~\ref{appendix:proofs-fast-wasserstein} then allow us
to represent a valid subgradient of $\mathcal{L}_t^{\mathcal{W}_p}$ as an empirical process indexed
only by $\Delta \in [0, \bar{\Delta}]$. The uniform concentration on this one-dimensional dual index,
combined with the H\"{o}lder entropy condition in
Assumption~\ref{appendix:assumption-holder-delta-entropy}, yields the stated rate
(Lemma~\ref{appendix:uniform-conditional-concentration}). The detailed proof can be found in
Appendix~\ref{appendix:proofs-fast-wasserstein}.
\end{proof}

We now present the result for KL ambiguity sets.

\vspace{1em}

\begin{theorem}["Fast" estimation error of $\theta^{\mathrm{KL}}$]
\label{thm:fast-kl-dro-rebel}
Fix $\delta \in (0,1)$. Assume Assumption~\ref{Assumption:KL-DRO-Assumption} (bounded KL-dual domain
and strong duality), Assumption~\ref{assumption:data-coverage} (coverage constant $\kappa$), and
Assumption~\ref{assumption:noisy-reward-function} (noise level $\tau_\xi^2 > 0$). With probability at least $1 - \delta$, simultaneously for all $t \in [T]$,
\[
    \bigl\| \hat{\theta}^{\mathrm{KL}}_{n,t} - \theta^{\mathrm{KL}}_t \bigr\|_2^2
    \;\lesssim\;
    \frac{\eta^2}{\kappa}
    \cdot
    \frac{C_{\mathrm{KL}} \left( d \log n_t + \log(T/\delta) \right)}{n_t}.
\]
Here $K_g = 8B/\eta + 2F$ (with $B$ from Assumption~\ref{assumption:log-linear-policy} and $F$ from
Assumption~\ref{assumption:linear-reward-class}), $\mathcal{J}_{\max} := \exp(K_\ell /
\underline{\lambda})$, and the constant $C_{\mathrm{KL}}$ corresponds to a constant as follows:
\[
    C_{\mathrm{KL}}
    :=
    \underbrace{
        \mathcal{J}_{\max}^2 \Bigl( 1 + \frac{K_\ell^2}{\underline{\lambda}} \Bigr)
    }_{\text{empirical-process terms}}
    +
    \underbrace{
        \overline{\lambda}\,(\mathcal{J}_{\max} - 1)
        \Biggl( 1 +
        \frac{\overline{\lambda}^3 \mathcal{J}_{\max}}{\tau_\xi^2}
        \Bigl( \frac{K_g}{\eta} \Bigr)^2
        \Bigl( \frac{2}{\underline{\lambda}} + \frac{K_\ell}{2\underline{\lambda}^2} \Bigr)^2
        \Biggr)
    }_{\lambda\text{-mismatch via strong convexity in }\lambda}.
\]
\end{theorem}

\begin{proof}[Proof sketch]
We decompose the excess robust risk $\mathcal{L}^{\mathrm{KL}}_t(\theta_t^{\mathrm{KL}}; \varepsilon)
- \mathcal{L}^{\mathrm{KL}}_t(\hat{\theta}_{n,t}^{\mathrm{KL}}; \varepsilon)$ into two terms:
(i) $\theta$-suboptimality at the random empirical dual optimizer $\hat{\lambda}$, and
(ii) $\lambda$-mismatch. For term (i), we apply a Talagrand--Bousquet inequality to the centered, scaled exponential-tilt
class $\gamma_t(\cdot, \lambda; \theta)$, yielding an $\mathcal{O}(d \log n_t / n_t)$ excess-risk
bound. For term (ii), Assumption~\ref{assumption:noisy-reward-function} implies that $\lambda \mapsto
\Psi_t(\theta, \lambda)$ is $m_\lambda$-strongly convex uniformly in $\theta$ via a variance lower
bound under exponential tilting, so the $\lambda$-mismatch
reduces to a squared deviation of $\nabla_\lambda \Psi_t$. This squared deviation is controlled by
(a) a standard empirical-process concentration bound on $\nabla_\lambda \Psi_{n,t} - \nabla_\lambda
\Psi_t$, and (b) a sharp $\theta$-Lipschitz bound for $\nabla_\lambda \Psi_{n,t}(\theta, \lambda)$
(Lemma~\ref{lem:theta_lip_partial_lambda_Psi}). Finally, strong convexity of $\theta \mapsto
\mathcal{L}_t^{\mathrm{KL}}(\theta; \varepsilon)$ with modulus $\kappa/\eta^2$
(Lemma~\ref{appendix:strong-convexity-of-KL-loss}) converts the fast $\mathcal{O}(d \log n_t / n_t)$
excess-risk rate into the stated parameter estimation rate. The detailed proofs can be found in Appendix \ref{appendix:proofs-fast-kl}.
\end{proof}

\paragraph{Discussion.} Theorems~\ref{thm:fast-wasserstein-dro-rebel}--\ref{thm:fast-kl-dro-rebel} show that DRO-REBEL attains the parametric rate $\|\hat\theta_{n,t}-\theta_t\|_2^2=\widetilde{\mathcal{O}}(d/n_t)$ under each ambiguity set, uniformly along the online trajectory. Prior robust preference-optimization analyses (WDPO/KLDPO) report only $\widetilde{\mathcal{O}}(\sqrt{d/n_{t}})$ parameter rates, a consequence of the mismatch between the distributional supremum and differentiation. REBEL sidesteps this: the per-round loss is quadratic in the log-policy increment, so trajectory coverage gives strong convexity of the robust population objective and parameter error reduces to controlling an empirical subgradient. The bound cleanly separates a statistical term $\widetilde{\mathcal{O}}(d/n_t)$ from a robustness term that enters through dual geometry (Wasserstein) or exponential tilting (KL). Distributional robustness in REBEL therefore carries no statistical efficiency cost.

The technical challenges differ across ambiguity sets. For Wasserstein, measurability and selection issues arise from the dual supremum over perturbed samples; the fast rate relies on representing the subgradient as an empirical process indexed by the one-dimensional dual variable $\Delta$ and establishing uniform concentration over $\Delta$. For KL, the finite-dimensional smooth dual admits a localized Rademacher/Talagrand--Bousquet argument for the $\widetilde{\mathcal{O}}(d/n_t)$ excess-risk bound, with strong convexity of the dual variable controlled via a variance lower bound under exponential tilting.

We conjecture the fast rate extends to $\chi^2$ and, more broadly, to general $f$-divergence ambiguity sets as the dual reformulation and localized complexity tools from the KL proof should carry over with some modification. The same analysis should also go through for DPO-based objectives, provided one can verify strong convexity and the covering conditions under the DPO loss.

\section{Approximate Tractable Algorithms for Robust LLM Alignment}
While our Distributionally Robust REBEL (DRO-REBEL) formulations benefit from finite-sample guarantees which are minimax optimal, directly solving the minimax objective using stochastic gradient descent methods can be computationally challenging. As \cite{xu2025distributionallyrobustdirectpreference} also points out in the context of robust DPO, this challenge arises because we do not have direct control over the data distribution $P \in \mathcal{B}_{\varepsilon} \left( P_{t}^{\circ} ; D \right)$ within the uncertainty set, as it is not parameterized in a straightforward manner. Furthermore, preference data are generated according to the nominal distribution $P^{\circ}$, which means that we lack samples from any other distribution within the uncertainty set $\mathcal{B}_{\varepsilon} \left( P_{t}^{\circ} ; D \right)$. To overcome this, we introduce principled tractable algorithms that approximate the solution to our DRO-REBEL objectives. Our algorithms for solving Wasserstein-DRO-REBEL and KL-DRO-REBEL are largely the same as those of \cite{xu2025distributionallyrobustdirectpreference}'s WDPO and KLDPO. However, we will derive and propose an algorithm for $\chi^{2}$-DRO-REBEL that can be efficiently solved using stochastic gradient descent methods. 

\subsection{Tractable Wasserstein DRO-REBEL (WD-REBEL)}
The connection between Wasserstein distributionally robust optimization (DRO) and regularization has been established previously in the literature, see \cite{shafieezadehabadeh2019regularizationmasstransportation} for example. We take advantage of recent progress in the Wasserstein theory in connecting the Wasserstein DRO to regularization. For $p$-Wasserstein DRO, $p \in (1, \infty]$, \cite{gao2022distributionallyrobuststochasticoptimization} (Theorem 1) shows that for a broad class of loss functions (potentially non-convex and non-smooth), with high probability, Wasserstein DRO is asymptotically equivalent to a variation regularization. In particular, an immediate consequence is that, when $p = 2$:
\[
\min_{\theta \in \Theta} \sup_{P \in \mathcal{B}_{\varepsilon_{n}} \left( P_{n, t}^{\circ} ; \mathcal{W}_{p} \right)} \mathbb{E}_{z \sim P} \left[ \ell(z;\theta) \right] = \min_{\theta \in \Theta} \left\{\mathbb{E}_{z \sim P_{n, t}^{\circ}} [\ell(z; \theta)] + \varepsilon_{n} \sqrt{(1/n) \sum_{i=1}^n\|\nabla_z \ell(z_i; \theta)\|_2^2}\right\} + O_P(1/n),
\]
where $\varepsilon_n = \mathcal{O}(1/\sqrt{n})$. This indicates that one can approximate the Wasserstein DRO objective by adding a gradient regularization term to the empirical loss of risk minimization (ERM), $\mathbb{E}_{z \sim P_{n, t}^{\circ}}[\ell(z; \theta)]$. Based on this, we propose a tractable WD-REBEL algorithm in Algorithm \ref{alg:wd-rebel}.

\begin{algorithm}[H]
\caption{Online WD-REBEL (Wasserstein-DRO REBEL)}
\label{alg:wd-rebel}
\begin{algorithmic}[1]
\State \textbf{Input:} Reward function $r$, policy class $\Pi=\{\pi_\theta:\theta\in\Theta\}$, context distribution $\rho$, robustness radius $\rho_0$, step size $\tilde{\eta}$, inner steps $K$
\State Initialize policy parameters $\theta_0$
\For{$t = 0$ to $T-1$}
    \State Collect batch $\mathcal{D}_t=\{z_i\}_{i=1}^{n_t}$ where $z_i=(x_i,a_i^{1},a_i^{2})$,
    \Statex \hspace{1.5em} $x_i \sim \rho$, $a_i^{1}\sim\pi_{\theta_t}(\cdot\mid x_i)$, $a_i^{2}\sim\pi_{\theta_t}(\cdot\mid x_i)$
    \State Set reference policy $\pi_{\mathrm{ref}} \leftarrow \pi_{\theta_t}$
    \State Initialize inner iterate $\vartheta^{(0)} \leftarrow \theta_t$
    \For{$k = 0$ to $K-1$}
        \State For each $z_i\in\mathcal{D}_t$, compute the (non-robust) REBEL pointwise loss
        \Statex \hspace{1.5em}
        $\ell_t(z_i;\vartheta^{(k)}) \;=\;
        \Big(
        \frac{1}{\eta}\Big[
        \log\frac{\pi_{\vartheta^{(k)}}(a_i^{1}\mid x_i)}{\pi_{\mathrm{ref}}(a_i^{1}\mid x_i)}
        -
        \log\frac{\pi_{\vartheta^{(k)}}(a_i^{2}\mid x_i)}{\pi_{\mathrm{ref}}(a_i^{2}\mid x_i)}
        \Big]
        -
        (r(x_i,a_i^{1})-r(x_i,a_i^{2}))
        \Big)^2.$
        \State Compute empirical REBEL loss
        \Statex \hspace{1.5em}
        $L_{\mathrm{REBEL},t}(\vartheta^{(k)})=\frac{1}{n_t}\sum_{i=1}^{n_t}\ell_t(z_i;\vartheta^{(k)})$
        \State Compute gradient regularizer (WD approximation)
        \Statex \hspace{1.5em}
        $R_t(\vartheta^{(k)})=\rho_0\Big(\frac{1}{n_t}\sum_{i=1}^{n_t}\|\nabla_{z_i}\ell_t(z_i;\vartheta^{(k)})\|_2^2\Big)^{1/2}$
        \State Form WD-robust surrogate loss
        \Statex \hspace{1.5em}
        $L_{W,t}(\vartheta^{(k)},\rho_0)=L_{\mathrm{REBEL},t}(\vartheta^{(k)})+R_t(\vartheta^{(k)})$
        \State Gradient step:
        $\vartheta^{(k+1)} \leftarrow \vartheta^{(k)} - \tilde{\eta}\,\nabla_\theta L_{W,t}(\vartheta^{(k)},\rho_0)$
    \EndFor
    \State Update policy parameters: $\theta_{t+1}\leftarrow \vartheta^{(K)}$
\EndFor
\State \Return $\pi_{\theta_T}$
\end{algorithmic}
\end{algorithm}

\subsection{Tractable KL-DRO-REBEL (KL-REBEL)}
We utilize the following proposition established by \cite{xu2025distributionallyrobustdirectpreference} to show that
we can approximate the worst-case probability distribution in a KL uncertainty set with respect to a given loss function.

\vspace{1em}

\begin{proposition}[Worst-case distribution]
\label{prop:kl-worst-case}
Let $P \in \mathbb{R}^n$ be the worst-case distribution with respect to a loss function $\ell$ and KL uncertainty around the empirical distribution $P_{n, t}^{\circ}$, defined as $P = \sup_{P : D_{\mathrm{KL}}(P \| P_{n, t}^{\circ}) \le \rho} \mathbb{E}_{z \sim P}[\ell(z; \theta)]$. The worst-case distribution $P$ is related to $P_{n, t}^{\circ}$ through
\[
P(i) \propto P_{n, t}(i) \cdot \exp\left(\frac{1}{\tau} (\ell(z_i; \theta) - \sum_{j=1}^n P_{n,t}(j)\ell(z_j; \theta))\right),
\]
where $\tau > 0$ is a constant.
\end{proposition}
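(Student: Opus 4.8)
The plan is to treat this as a finite-dimensional constrained convex program and read the optimizer off the KKT conditions. Write $q_i := \mathbb{P}_n^\circ(z_i)$ and $\ell_i := \ell(z_i;\theta)$, and maximize the linear objective $p\mapsto \sum_{i=1}^n p_i\ell_i$ over the convex set $\{p\in\Delta_{n-1} : \sum_i p_i\log(p_i/q_i)\le\rho\}$, where $\Delta_{n-1}$ is the probability simplex. First I would verify Slater's condition: since $\rho>0$, the nominal vector $p=q$ is strictly feasible because $D_{\mathrm{KL}}(q\|q)=0<\rho$, so strong duality holds and a primal--dual optimal pair exists. (Boundedness of $\ell$ on the finite support makes every dual quantity below finite.)

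Next I would form the Lagrangian
\[
\mathcal{L}(p,\tau,\nu)\;=\;\sum_{i=1}^n p_i\ell_i\;-\;\tau\Bigl(\sum_{i=1}^n p_i\log\tfrac{p_i}{q_i}-\rho\Bigr)\;-\;\nu\Bigl(\sum_{i=1}^n p_i-1\Bigr),
\]
with $\tau\ge 0$ the multiplier for the KL constraint and $\nu\in\mathbb{R}$ for the normalization; the nonnegativity constraints $p_i\ge 0$ can be dropped at this stage because the entropy term forces $p_i>0$ at any candidate with finite KL. Setting $\partial\mathcal{L}/\partial p_i=0$ gives the stationarity condition $\ell_i-\tau(\log(p_i/q_i)+1)-\nu=0$, i.e.\ $\log(p_i/q_i)=(\ell_i-\nu)/\tau-1$, so $p_i\propto q_i\exp(\ell_i/\tau)$. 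Imposing $\sum_i p_i=1$ fixes the proportionality constant and yields $p_i = q_i e^{\ell_i/\tau}\big/\sum_j q_j e^{\ell_j/\tau}$. Because subtracting the constant $\bar\ell:=\sum_j q_j\ell_j$ inside the exponent multiplies numerator and denominator by the same factor $e^{-\bar\ell/\tau}$, this is exactly the claimed form $\mathbb{P}(i)\propto \mathbb{P}_n(i)\exp\bigl(\tfrac1\tau(\ell_i-\bar\ell)\bigr)$: the centering is cosmetic (it improves numerical conditioning and makes the tilt mean-zero) and does not change the normalized distribution.

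Finally I would pin down $\tau$ via complementary slackness. If $\ell$ is not $q$-a.s.\ constant, the unconstrained maximizer of the linear objective over the simplex is a point mass on $\argmax_i\ell_i$, whose KL divergence from the fully supported $q$ is infinite; hence for any finite $\rho$ the KL constraint is active at the optimum, forcing $\tau>0$ and $D_{\mathrm{KL}}(\mathbb{P}\|\mathbb{P}_n^\circ)=\rho$, the single scalar equation that determines $\tau=\tau(\rho)$ (equivalently, $\tau$ minimizes the dual $\tau\mapsto\tau\rho+\tau\log\sum_i q_i e^{\ell_i/\tau}$). The main obstacle I anticipate is purely the bookkeeping around constraint qualification and degenerate cases: confirming the nonnegativity multipliers are inactive, that $\tau$ is strictly positive whenever $\ell$ is nonconstant, and --- if one wants the characterization to be a genuine argmax rather than a stationary point --- invoking concavity of the objective with convexity of the feasible set so the KKT point is the unique global maximizer. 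Everything else is the routine exponential-tilting computation above.
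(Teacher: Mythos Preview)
Your proposal is correct and follows the canonical route: form the Lagrangian, read off the exponential-tilting stationarity condition, normalize, and use complementary slackness to pin down $\tau$ from $D_{\mathrm{KL}}(\mathbb{P}\|\mathbb{P}_n^\circ)=\rho$. The paper itself does not prove this proposition; it simply imports the result from \citet{xu2025distributionallyrobustdirectpreference} (their Appendix~D) without reproducing the argument, so there is no ``paper's own proof'' to compare against here. That said, the KKT derivation you outline is exactly the standard one underlying such exponential-tilting characterizations (it is essentially the finite-support Gibbs/Donsker--Varadhan computation), and it is almost certainly what the cited reference does as well.
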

A proof of this proposition can be found in Appendix D of \cite{xu2025distributionallyrobustdirectpreference}. Based on Proposition \ref{prop:kl-worst-case}, we propose a tractable KL-REBEL algorithm in Algorithm \ref{alg:kl-rebel}.

\begin{algorithm}[H]
\caption{Online KL-REBEL (KL-DRO REBEL)}
\label{alg:kl-rebel}
\begin{algorithmic}[1]
\State \textbf{Input:} Reward function $r$, policy class $\Pi=\{\pi_\theta:\theta\in\Theta\}$, context distribution $\rho$, robustness temperature $\tau$, step size $\tilde{\eta}$, REBEL scaling $\eta$, inner steps $K$
\State Initialize policy parameters $\theta_0$
\For{$t = 0$ to $T-1$}
    \State Collect batch $\mathcal{D}_t=\{z_i\}_{i=1}^{n_t}$ where $z_i=(x_i,a_i^{1},a_i^{2})$,
    \Statex \hspace{1.5em} $x_i \sim \rho$, $a_i^{1}\sim\pi_{\theta_t}(\cdot\mid x_i)$, $a_i^{2}\sim\pi_{\theta_t}(\cdot\mid x_i)$
    \State Set reference policy $\pi_{\mathrm{ref}} \leftarrow \pi_{\theta_t}$
    \State Initialize inner iterate $\vartheta^{(0)} \leftarrow \theta_t$
    \For{$k = 0$ to $K-1$}
        \State Compute pointwise REBEL losses on $\mathcal{D}_t$:
        \Statex \hspace{1.5em}
        $\ell_t(z_i;\vartheta^{(k)}) \;=\;
        \Big(
        \frac{1}{\eta}\Big[
        \log\frac{\pi_{\vartheta^{(k)}}(a_i^{1}\mid x_i)}{\pi_{\mathrm{ref}}(a_i^{1}\mid x_i)}
        -
        \log\frac{\pi_{\vartheta^{(k)}}(a_i^{2}\mid x_i)}{\pi_{\mathrm{ref}}(a_i^{2}\mid x_i)}
        \Big]
        -
        (r(x_i,a_i^{1})-r(x_i,a_i^{2}))
        \Big)^2.$
        \State Compute empirical mean loss:
        $\bar{\ell}_t(\vartheta^{(k)})=\frac{1}{n_t}\sum_{j=1}^{n_t}\ell_t(z_j;\vartheta^{(k)})$
        \State Compute unnormalized exponential (worst-case) weights:
        \Statex \hspace{1.5em}
        $\tilde{P}_t^{(k)}(i)=\exp\!\Big(\frac{1}{\tau}\big(\ell_t(z_i;\vartheta^{(k)})-\bar{\ell}_t(\vartheta^{(k)})\big)\Big)$
        \State Normalize weights:
        \Statex \hspace{1.5em}
        $P_t^{(k)}(i)=\frac{\tilde{P}_t^{(k)}(i)}{\sum_{m=1}^{n_t}\tilde{P}_t^{(k)}(m)}$
        \State Form KL-robust surrogate loss on the current batch:
        \Statex \hspace{1.5em}
        $L_{\mathrm{KL},t}(\vartheta^{(k)};\tau)=\sum_{i=1}^{n_t}P_t^{(k)}(i)\,\ell_t(z_i;\vartheta^{(k)})$
        \State Gradient step:
        $\vartheta^{(k+1)} \leftarrow \vartheta^{(k)} - \tilde{\eta}\,\nabla_\theta L_{\mathrm{KL},t}(\vartheta^{(k)};\tau)$
    \EndFor
    \State Update policy parameters: $\theta_{t+1}\leftarrow \vartheta^{(K)}$
\EndFor
\State \Return $\pi_{\theta_T}$
\end{algorithmic}
\end{algorithm}

\subsection{Tractable $\chi^2$-DRO-REBEL ($\chi^2$-REBEL)}
We exploit the dual formulation of the $\chi^2$‐DRO objective (e.g.\ \cite{NIPS2017_5a142a55}) to obtain a one‐dimensional inner solve and closed‐form worst‐case weights.

\vspace{1em}

\begin{proposition}[Dual form \& worst‐case weights]
\label{prop:chi2-worst-case}
Let $\ell_i = \ell(z_i;\theta)$ for $i=1,\dots,n$ and set $\mathcal{L}^{\chi^2}_n(\theta;\varepsilon_{n})=\sup_{P: D_{\chi^2}(P \|P_{n, t}^{\circ})\le\varepsilon_{n}}\mathbb{E}_{z\sim P}[\ell(z;\theta)]$.  Then
\[
\mathcal{L}^{\chi^2}_n(\theta;\varepsilon_{n})
=\inf_{\eta\in\mathbb{R}}
\left\{\,
\eta \;+\;\sqrt{\frac{2\varepsilon_{n}}{n}\sum_{i=1}^n(\ell_i-\eta)_+^2}
\right\},
\]
\end{proposition}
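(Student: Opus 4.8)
The plan is to treat the inner supremum as a finite-dimensional convex program and dualize it, following the template of \citet{NIPS2017_5a142a55}. Since $D_{\chi^2}(\mathbb{P}\|\mathbb{P}^{\circ}_n)<\infty$ forces $\mathbb{P}\ll\mathbb{P}^{\circ}_n$, every competitor is supported on $\{z_1,\dots,z_n\}$ and is described by a weight vector $p\in\Delta_{n-1}$; with $\mathbb{P}^{\circ}_n$ uniform one has $D_{\chi^2}(\mathbb{P}\|\mathbb{P}^{\circ}_n)=\tfrac1n\sum_{i=1}^n(np_i-1)^2$, so
\[
\mathcal{L}^{\chi^2}_n(\theta;\varepsilon_n)=\sup\Bigl\{\,\textstyle\sum_{i=1}^n p_i\ell_i \;:\; p\in\Delta_{n-1},\ \tfrac1n\sum_{i=1}^n(np_i-1)^2\le\varepsilon_n\,\Bigr\},
\]
a concave maximization over the compact convex set obtained by intersecting the simplex with a Euclidean ball; $p=\mathbf 1/n$ is strictly feasible whenever $\varepsilon_n>0$, so Slater's condition holds and strong duality is available.

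First I would dualize only the $\chi^2$ constraint, with a multiplier $\lambda\ge0$. Using $\sum_i(np_i-1)^2=n^2\|p\|_2^2-n$ on the simplex, this gives
\[
\mathcal{L}^{\chi^2}_n(\theta;\varepsilon_n)=\inf_{\lambda\ge0}\Bigl\{\,\lambda(1+\varepsilon_n)+\sup_{p\in\Delta_{n-1}}\bigl(\langle p,\ell\rangle-\lambda n\|p\|_2^2\bigr)\,\Bigr\},
\]
so the additive $+1$ is exactly the residue of the normalization $\mathbf 1^{\!\top}p=1$. The remaining inner problem is a scaled projection of $\ell$ onto the simplex; dualizing its equality constraint with $\eta\in\mathbb{R}$ and solving the resulting separable nonnegative quadratic coordinatewise yields $\sup_{p\in\Delta_{n-1}}(\langle p,\ell\rangle-\lambda n\|p\|_2^2)=\inf_{\eta}\{\eta+\tfrac1{4\lambda n}\sum_i(\ell_i-\eta)_+^2\}$ with coordinatewise maximizer $p_i^\star=\tfrac1{2\lambda n}(\ell_i-\eta)_+$; the positive parts are the footprint of the constraints $p_i\ge0$.

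Finally I would eliminate $\lambda$: after interchanging the two infima, it remains to minimize $\lambda(1+\varepsilon_n)+\tfrac1{4\lambda n}\sum_i(\ell_i-\eta)_+^2$ over $\lambda>0$, an AM--GM balance with value $\sqrt{\tfrac{1+\varepsilon_n}{n}\sum_i(\ell_i-\eta)_+^2}$, leaving the one-dimensional dual $\inf_{\eta}\{\eta+\sqrt{\tfrac{1+\varepsilon_n}{n}\sum_i(\ell_i-\eta)_+^2}\}$ --- the stated identity, up to the precise constant multiplying $\sum_i(\ell_i-\eta)_+^2$ inside the root. The worst-case weights are then read off as $p_i^\star=(\ell_i-\eta^\star)_+/\sum_{j}(\ell_j-\eta^\star)_+$ (using $\sum_j(\ell_j-\eta)_+=2\lambda^\star n$ from the normalization), which is exactly the reweighting implemented by $\chi^2$-REBEL.

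The routine parts are the coordinatewise quadratic solves and the $\lambda$-minimization; the points that need care are (i) the boundary case $\lambda=0$, which occurs when $\mathbb{P}^{\circ}_n$ is interior to the ambiguity set and gives $\mathcal{L}^{\chi^2}_n=\max_i\ell_i$ --- handled by invoking Sion's minimax theorem together with compactness of the feasible set, so that attainment of both the primal sup and the dual inf holds regardless; (ii) checking that the outer infimum over $\eta$ is attained at a finite $\eta^\star$ (for the radii of interest one finds $\eta^\star\le\min_i\ell_i$, so every exceedance is active, which certifies the dual is an exact identity rather than a leading-order expansion and is consistent with the familiar small-radius behaviour $\mathcal{L}^{\chi^2}_n\approx\mathbb{E}_{\mathbb{P}^{\circ}_n}[\ell]+\sqrt{\varepsilon_n\,\mathrm{Var}_{\mathbb{P}^{\circ}_n}(\ell)}$); and, the main obstacle, (iii) the constant bookkeeping: since the $+1$ in $1+\varepsilon_n$ comes from $\mathbf 1^{\!\top}p=1$ and not from robustification, matching the displayed coefficient $\tfrac{2\varepsilon_n}{n}$ requires pinning down the $\chi^2$-ball normalization (e.g.\ whether the constraint is $D_{\chi^2}\le\varepsilon_n$ or $\tfrac12 D_{\chi^2}\le\varepsilon_n$) and being explicit about whether an additive or a rescaled radius is intended --- this is the one place a careless computation goes astray.
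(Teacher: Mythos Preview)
Your approach --- reduce to the finite-dimensional program on the simplex, dualize the $\chi^2$ constraint with $\lambda\ge0$, dualize the simplex equality with $\eta\in\mathbb{R}$, solve the separable nonnegative quadratic coordinatewise to pick up the $(\,\cdot\,)_+$, then eliminate $\lambda$ by AM--GM --- is essentially the route the paper takes. The paper frames it through the general $f$-divergence dual of \cite{duchi2020learningmodelsuniformperformance} with $f(t)=\tfrac12(t-1)^2$ and $f^*(s)=\tfrac12 s^2+s$, then jumps to the intermediate form $\inf_{\eta}\bigl\{\eta+\inf_{\lambda>0}\{\lambda\rho+\tfrac1{2\lambda n}\sum_i(\ell_i-\eta)_+^2\}\bigr\}$ by citing ``a known result,'' and finishes with the same $\lambda$-minimization you carry out.

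Your point (iii) is correct and worth stating plainly: the constant in the proposition as displayed does not survive a careful derivation. With $D_{\chi^2}(\mathbb{P}\|\mathbb{Q})=\int(\tfrac{d\mathbb{P}}{d\mathbb{Q}}-1)^2\,d\mathbb{Q}$ (the paper's own definition) and constraint $D_{\chi^2}(\mathbb{P}\|\mathbb{P}^{\circ}_n)\le\varepsilon_n$, the coefficient inside the square root is $(1+\varepsilon_n)/n$, not $2\varepsilon_n/n$ --- exactly what your computation produces, and consistent with the $(1+2\rho)$ that appears in \cite{NIPS2017_5a142a55} under their $f(t)=\tfrac12(t-1)^2$ normalization. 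A sanity check settles it: at $\varepsilon_n=0$ the ambiguity set is $\{\mathbb{P}^{\circ}_n\}$ and the value must equal the empirical mean $\bar\ell$; your formula recovers this limit, whereas the displayed $2\varepsilon_n$ version collapses to $\inf_\eta\eta=-\infty$. The paper's proof inherits the discrepancy from the unproved intermediate step it quotes; your bookkeeping is the correct one.
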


We defer the proof to Appendix \ref{appendix:tractable-chi-squared-algo}. Based on Proposition \ref{prop:chi2-worst-case}, we propose a tractable $\chi^2$-REBEL Algorithm in Algorithm \ref{alg:chi2-rebel}.

\begin{algorithm}[H]
\caption{Online $\chi^2$-REBEL ($\chi^2$-DRO REBEL)}
\label{alg:chi2-rebel}
\begin{algorithmic}[1]
\State \textbf{Input:} Reward function $r$, policy class $\Pi=\{\pi_\theta:\theta\in\Theta\}$, context distribution $\rho$, $\chi^2$-radius $\rho_0$, step size $\alpha$, REBEL scaling $\eta$, inner steps $K$
\State Initialize policy parameters $\theta_0$
\For{$t=0$ to $T-1$}
  \State Collect batch $\mathcal{D}_t=\{z_i\}_{i=1}^{n_t}$ where $z_i=(x_i,a_i^{1},a_i^{2})$,
  \Statex \hspace{1.5em} $x_i\sim\rho$, $a_i^{1}\sim\pi_{\theta_t}(\cdot\mid x_i)$, $a_i^{2}\sim\pi_{\theta_t}(\cdot\mid x_i)$
  \State Set reference policy $\pi_{\mathrm{ref}} \leftarrow \pi_{\theta_t}$
  \State Initialize inner iterate $\vartheta^{(0)} \leftarrow \theta_t$
  \For{$k=0$ to $K-1$}
    \State Compute pointwise REBEL losses on $\mathcal{D}_t$:
    \Statex \hspace{1.5em}
    $\ell_t(z_i;\vartheta^{(k)}) \;=\;
    \Big(
    \frac{1}{\eta}\Big[
    \log\frac{\pi_{\vartheta^{(k)}}(a_i^{1}\mid x_i)}{\pi_{\mathrm{ref}}(a_i^{1}\mid x_i)}
    -
    \log\frac{\pi_{\vartheta^{(k)}}(a_i^{2}\mid x_i)}{\pi_{\mathrm{ref}}(a_i^{2}\mid x_i)}
    \Big]
    -
    (r(x_i,a_i^{1})-r(x_i,a_i^{2}))
    \Big)^2.$
    \State \emph{(Inner 1-D solve on current batch)} find
    \[
      \eta_t^{*}(\vartheta^{(k)}) \;=\;\arg\min_{u\in\mathbb{R}}\left\{u + 
      \sqrt{\frac{2\rho_0}{n_t}\sum_{i=1}^{n_t}\big(\ell_t(z_i;\vartheta^{(k)})-u\big)_+^2}\right\}
    \]
    \Statex \hspace{1.5em} via sorting $\{\ell_t(z_i;\vartheta^{(k)})\}_{i=1}^{n_t}$ and binary search (or a 1-D convex solver)
    \State Form $\chi^2$-robust surrogate loss on the current batch:
    \[
      L_{\chi^2,t}(\vartheta^{(k)};\rho_0)
      \;=\;
      \eta_t^{*}(\vartheta^{(k)})
      +
      \sqrt{\frac{2\rho_0}{n_t}\sum_{i=1}^{n_t}\big(\ell_t(z_i;\vartheta^{(k)})-\eta_t^{*}(\vartheta^{(k)})\big)_+^2}
    \]
    \State Gradient step:
    $\vartheta^{(k+1)} \leftarrow \vartheta^{(k)} - \alpha\,\nabla_\theta L_{\chi^2,t}(\vartheta^{(k)};\rho_0)$
  \EndFor
  \State Update policy parameters: $\theta_{t+1}\leftarrow \vartheta^{(K)}$
\EndFor
\State \Return $\pi_{\theta_T}$
\end{algorithmic}
\end{algorithm}
Each outer iteration $t$ requires (i)~evaluating $\{\ell_t(z_i;\vartheta^{(k)})\}_{i=1}^{n_t}$ and their gradients, and (ii)~solving a one-dimensional convex problem for $\eta_t^{\star}$. A single inner update therefore costs $\mathcal{O}(n_t\log n_t + n_t\,c_{\text{grad}})$, where $c_{\text{grad}}$ is the per-sample cost of computing $\ell_t$ and $\nabla_\theta \ell_t$. Under the log-linear model with constant-size action normalization, treating log-probability and gradient evaluation as $\mathcal{O}(d)$ operations gives $c_{\text{grad}}=\Theta(d)$, so the per-update cost reduces to $\mathcal{O}(n_t\log n_t + n_t d)$, and the per-iteration cost with $K$ inner updates is $\mathcal{O}(K(n_t\log n_t + n_t d))$. A detailed derivation appears in Appendix~\ref{appendix:tractable-chi-squared-algo}; the algorithm itself was originally proposed by \cite{namkoong2017variance}.

Since $\mathcal{L}^{\chi^2}_n(\theta;\rho_0)$ is $\mu$-strongly convex over the compact set $\Theta$, it admits a unique minimizer $\theta^{\star}$. Moreover, $\mathcal{L}^{\chi^2}_n$ is convex and Lipschitz (Appendix~\ref{appendix:uniform-and-lipschitz-bound}), so projected first-order methods converge to $\theta^{\star}$ under standard step-size conditions---projected subgradient descent in the nonsmooth case, or projected gradient descent with linear convergence when $\nabla \mathcal{L}^{\chi^2}_n$ is Lipschitz. Analogous guarantees extend to the KL and Wasserstein robust objectives via subgradient-based updates.

\section{Experiments}
We perform a comprehensive experimental evaluation of DRO-REBEL, comparing its performance against non-robust DPO and REBEL baselines, as well as established Distributionally Robust Optimization (DRO) variants, namely WDPO and KLDPO \cite{xu2025distributionallyrobustdirectpreference}. Our empirical evaluations spans two distinct alignment tasks, designed to investigate model performance under varying dataset scales, model sizes, the complexity of the reward function, and degrees of distributional shift: (i) an Emotion Alignment task with controlled, synthetic shifts, (ii) an ArmoRM Multi-objective Alignment \cite{wang2024interpretablepreferencesmultiobjectivereward} task featuring large-scale, real-world shifts, and (iii) an HH-RLHF pairwise preference benchmark \cite{bai2022traininghelpfulharmlessassistant}, where we train Llama-1B and Llama-8B on HelpSteer2 (ArmoRM-scored) and assessed on HH–RLHF only at test time, with held-out Win/Lose rates computed via a DPO-style margin. The Emotion and ArmoRM settings mirror those in \cite{xu2025distributionallyrobustdirectpreference} and indeed we will use these as a baseline to compare the performance of DRO variants of REBEL and DPO in the face of distribution shift. The HH-RLHF study complements these experiments with a widely used dataset in alignment literature to see if DRO variants of REBEL outperform DRO variants of DPO in terms of win-rates. Code and full hyperparameters can be found at \url{https://github.com/sharansahu/distributionally_robust_rebel}.

\subsection{Experimental Setup}

\subsubsection{Emotion Alignment}
For the Emotion Alignment task, our experimental setup is designed to precisely control and simulate distributional shifts in user preferences. We begin by training a reward model based on a \code{GPT-2} architecture \cite{Radford2019LanguageMA} augmented with a classification head. This model is fine-tuned on the Emotion dataset \cite{saravia-etal-2018-carer} to perform multi-label classification across five distinct emotions: sadness, joy, love, anger, and fear. The sigmoid outputs from this classification head serve as our multi-objective reward signals throughout the experiment. In parallel, a base policy model, also a \code{GPT-2} instance, undergoes supervised fine-tuning (SFT) on the same Emotion dataset, establishing our initial policy for subsequent preference alignment.

To generate preference data and systematically introduce distributional shifts, we define two distinct reward mixing functions using two chosen reward objectives, $r_1$ (anger) and $r_2$ (fear):
\begin{enumerate}
    \item \textbf{Convex Mixing:} $r^*_{\text{convex}}(\alpha) := \alpha \cdot r_1 + (1 - \alpha) \cdot r_2$,
    \item \textbf{Geometric Mixing:} $r^*_{\text{geometric}}(\alpha) := r_1^{\alpha} \cdot r_2^{1-\alpha}$.
\end{enumerate}
For training all alignment methods, preference pairs are constructed by generating two completions per prompt. Preference labels are then assigned using the Bradley-Terry (BT) model, parameterized by the mixed reward function $r^*(\alpha_0)$ at a fixed nominal mixing coefficient $\alpha_0 = 0.1$. This $\alpha_0$ represents the training-time preference distribution. To evaluate robustness, we introduce a controlled distributional shift by sweeping the mixing coefficient $\alpha$ across the entire range $[0, 1]$ at test time. Performance is measured by the average mixture reward $r^*(\alpha)$ obtained over 64 held-out prompts. More comprehensive details about these experiments are given in Appendix \ref{appendix:experiment-training-details}.

\subsubsection{ArmoRM Multi-objective Alignment}
To assess performance in a more complex, real-world setting, we conduct experiments on a multi-objective alignment task utilizing the Absolute-Rating Multi-Objective Reward Model (ArmoRM) \cite{wang2024interpretablepreferencesmultiobjectivereward}. ArmoRM provides 19 distinct first-stage objective outputs, allowing for a rich and diverse set of preferences. Our base policy for this task is Meta LLaMA-3.2-1B-Instruct.

For training, we generate two completions per prompt from the HelpSteer2 dataset \cite{wang2024helpsteer2opensourcedatasettraining}. Reward preferences are derived by selecting pairs of equally weighted objectives (e.g., honesty, verbosity, safety) from ArmoRM's 19-dimensional output space and combining them via convex mixing. Models are then trained on these preferences, again at a nominal mixing coefficient of $\alpha_0 = 0.1$. We measure the performance of all aligned policies on five individual ArmoRM objectives. Crucially, three of these five objectives are \emph{unseen} during the training process, specifically designed to simulate real-world scenarios where models encounter new or unweighted preferences. The evaluation is conducted over 128 test prompts. All fine-tuning runs across both the Emotion Alignment and ArmoRM tasks adhere to identical hyperparameters, as comprehensively detailed in Appendix \ref{appendix:experiment-training-details}.

\subsubsection{HH-RLHF Alignment}
We use \textsc{HH\texorpdfstring{-}{-}RLHF}~\cite{bai2022traininghelpfulharmlessassistant} for evaluation in order to test domain adaptivity. Our base policies are Meta LLaMA-3.2-1B-Instruct and LLaMA-3-8B-Instruct.  Rather than fitting on HH pairs, we align the policies on a different source domain: HelpSteer2 prompts scored by the second-stage ArmoRM reward model. 
For each HelpSteer2 prompt, we sample \(k=10\) on-policy candidate completions using nucleus decoding (temperature \(0.7\), top-\(p=1.0\)). We then deduplicate candidates, filter out responses longer than \(1024\) tokens, score the remaining completions with ArmoRM, and construct a single preference pair by selecting the highest- and lowest-scoring completions \((a^+, a^-)\). All variants optimize response-token likelihood conditioned on the prompt under identical context length and training budget. 

We intentionally do not fine-tune on HH-RLHF so it serves as a held-out target domain, allowing us to measure true domain adaptivity under preference shift without any target leakage. Improvements on HH-RLHF then reflect robustness of the optimization method rather than memorization or dataset-specific tuning. We evaluate zero-shot on a held-out split of \textsc{HH\texorpdfstring{-}{-}RLHF} using the standard pairwise criterion: \emph{Win} is the fraction of pairs where the policy increases the chosen–rejected log-probability margin relative to the reference
and \emph{Lose} is the fraction with a negative margin (ties ignored). More comprehensive details about these experiments are given in   Appendix~\ref{appendix:experiment-training-details}.

\subsection{Results}

\subsubsection{Emotion Alignment}
\begin{figure}[htbp!]
    \centering
    \includegraphics[width=0.9\linewidth]{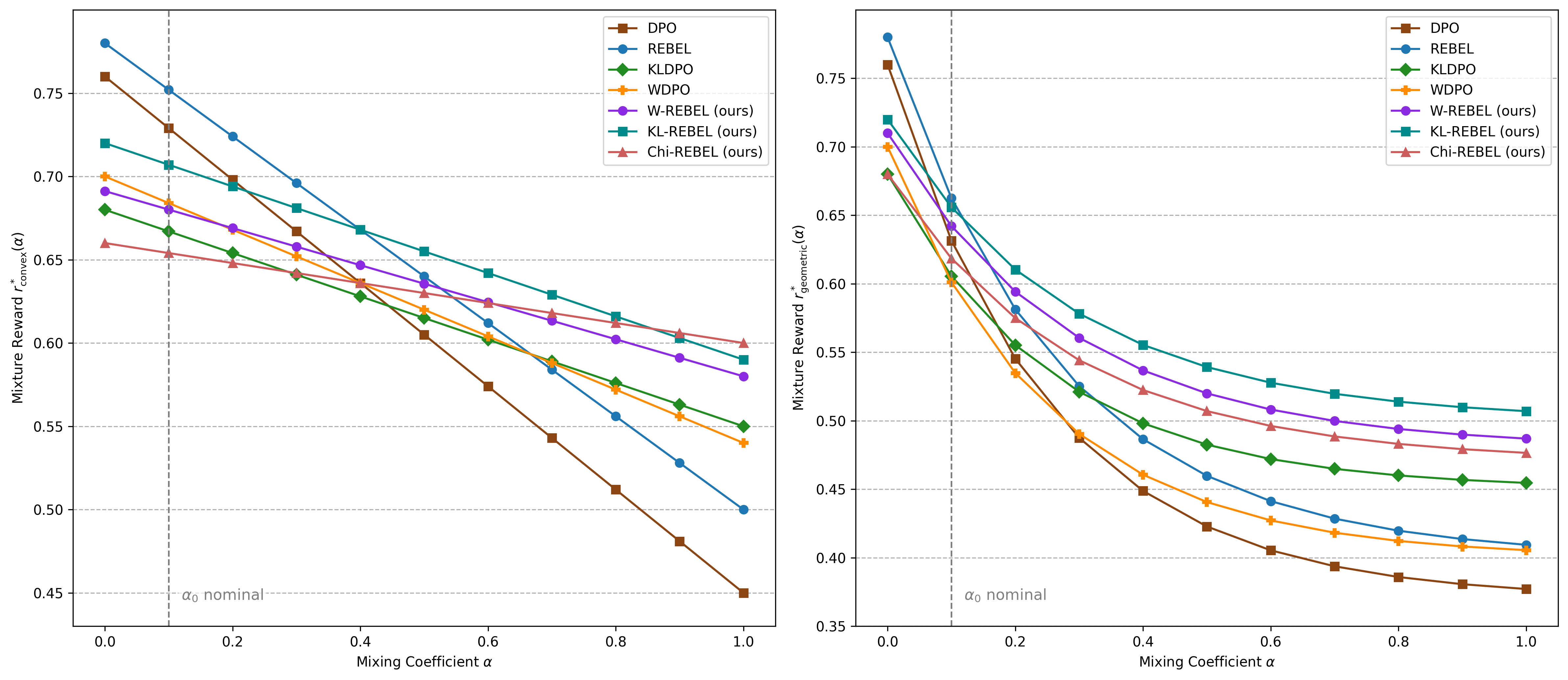}
    \caption{Emotion alignment performance under convex (left) and geometric (right) reward mixing. Models are trained at a nominal mixing coefficient $\alpha_0=0.1$, and evaluated across a range of $\alpha \in [0,1]$ to simulate preference shift. This figure compares the mixture reward for DPO, REBEL, and various DRO variants (KLDPO, WDPO, KL-REBEL, W-REBEL, and $\chi^2$-REBEL).}
    \label{fig:emotion-alignment}
\end{figure}
The results for Emotion Alignment, presented in Figure \ref{fig:emotion-alignment}, clearly illustrate the robustness of various methods under varying degrees of preference shift for both convex and geometric reward mixing. Non-robust baselines, DPO and REBEL, achieve high rewards at the training-time nominal $\alpha_0=0.1$. However, their performance degrades significantly linearly as $\alpha$ deviates from $\alpha_0$. The same can be seen for geometric mixing, and this implies there is a susceptibility to overoptimization on the training distribution. Prior DRO variants applied to DPO, namely WDPO and KLDPO, offer modest improvements in robustness compared to plain DPO. Our robust REBEL variants, W-REBEL and KL-REBEL, demonstrate even more substantial gains in mixture reward beyond the DPO and REBEL baselines. Among all the methods tested, $\chi^2$ -REBEL emerges as the most stable, consistently retaining a high reward across the entire range of $\alpha$.

\subsubsection{ArmoRM Multi-objective Alignment}
\begin{figure}[h!]
    \centering
    \includegraphics[width=.8\linewidth]{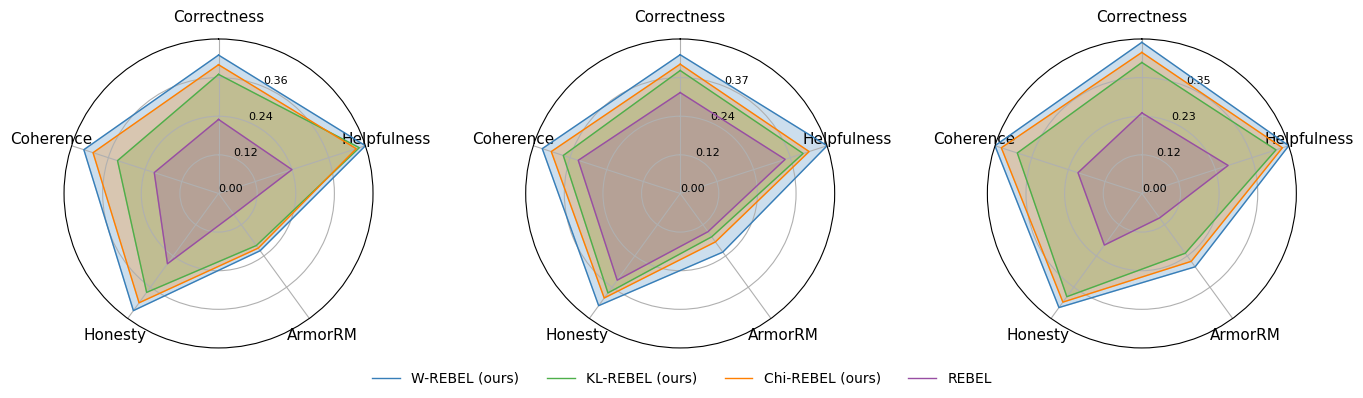}
    \includegraphics[width=.8\linewidth]{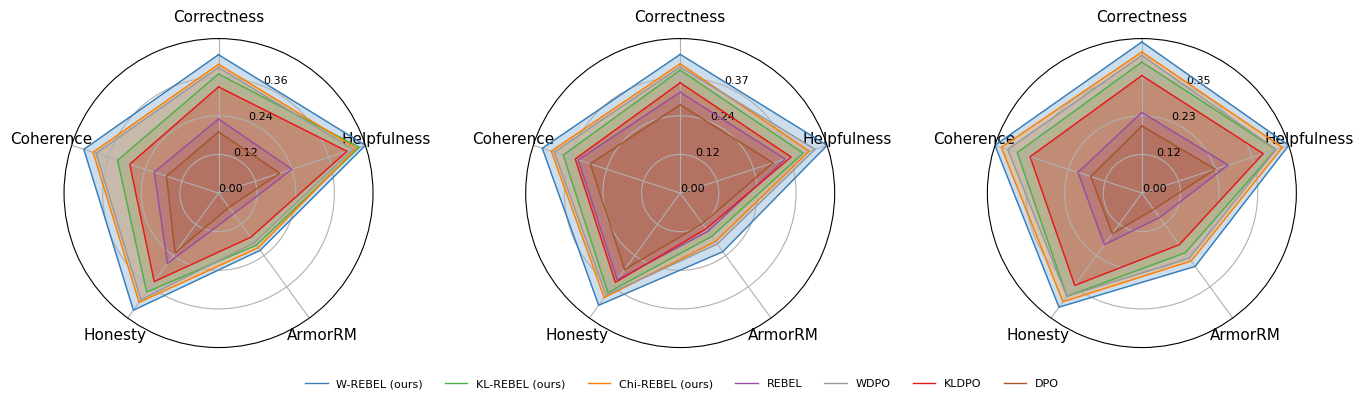}
    \caption{Performance on five ArmoRM objectives (Correctness, Helpfulness, Honesty, ArmoRM, Coherence), including three (Honesty, ArmoRM, Coherence) that were unseen during training. The radar charts illustrate the alignment across different objectives, where larger shaded areas indicate stronger robustness and better overall performance under preference shift across diverse objectives.}
    \label{fig:armorm-alignment}
\end{figure}
Figure \ref{fig:armorm-alignment} illustrates the multi-objective alignment performance on the ArmoRM task. The size of the shaded area directly correlates with overall alignment and robustness. DPO and REBEL baselines exhibit noticeable performance drops on objectives unseen during training, demonstrating a clear tendency towards overoptimization on the specific objective combinations present in the training data, leading to poor generalization. WDPO and KLDPO, as prior DRO variants of DPO, show the capacity to recover performance on these unseen objectives. Our REBEL-based robust variants, W-REBEL, KL-REBEL, and $\chi^2$-REBEL, achieve further significant gains on the unseen objectives compared to their DPO counterparts. These results also support and are fully consistent with the theoretical and empirical insights of \cite{huang2025correcting} as they show that $\chi^{2}$ DRO in RLHF can achieve broad generalization and maintain robustness across diverse objectives, including those not explicitly encountered or optimized during training.

\subsubsection{HH-RLHF Alignment}
From the results in Table \ref{tab:hh-rlhf}, we see that the non-robust baselines (DPO and REBEL) perform reasonably. Introducing distributional robustness consistently improves performance as we observe all robust methods outperform their non-robust counterparts, confirming the value of robustness in preference learning. Prior DRO variants applied to DPO (WDPO and KL-DPO) yield modest gains over vanilla DPO. However, robust \textsc{REBEL} variants provide substantially larger improvements. These results illustrate the broader story from our preference-shift experiments: inducing robustness with alignment algorithms provides the strongest protection against over-optimization to the training pair distribution while preserving nominal performance.

\begin{table}[!h]
\centering
\setlength{\tabcolsep}{10pt}
\begin{tabular}{lcccc}
\toprule
\multirow{2}{*}{\textbf{Baselines}} &
\multicolumn{2}{c}{\textbf{Llama-1B}} &
\multicolumn{2}{c}{\textbf{Llama-8B}} \\
\cmidrule(lr){2-3}\cmidrule(lr){4-5}
& \textbf{Win(↑)} & \textbf{Lose(↓)} & \textbf{Win(↑)} & \textbf{Lose(↓)} \\
\midrule
DPO \fakecite{RSM\textsuperscript{+}24}    & 72.1 \pmgreen{1.62} & 25.6 \pmgreen{1.49} & 72.9 \pmgreen{1.40} & 23.7 \pmgreen{1.65} \\
WDPO \fakecite{XVP\textsuperscript{+}25}     & 73.5 \pmgreen{1.58} & 24.1 \pmgreen{1.33} & 74.2 \pmgreen{1.53} & 22.9 \pmgreen{1.28} \\
KL\mbox{-}DPO \fakecite{XVP\textsuperscript{+}25} & 74.1 \pmgreen{1.54} & 23.7 \pmgreen{1.29} & 75.4 \pmgreen{1.41} & 21.9 \pmgreen{1.21} \\
REBEL \fakecite{GCZ\textsuperscript{+}24}           & 73.8 \pmgreen{1.52} & 24.0 \pmgreen{1.27} & 75.0 \pmgreen{1.37} & 22.2 \pmgreen{1.22} \\
W\mbox{-}REBEL (Ours)               & 76.1 \pmgreen{1.36} & 21.9 \pmgreen{1.18} & 76.9 \pmgreen{1.18} & 20.9 \pmgreen{1.08} \\
KL\mbox{-}REBEL (Ours)              & 76.4 \pmgreen{1.31} & 21.6 \pmgreen{1.14} & 77.3 \pmgreen{1.14} & 20.5 \pmgreen{1.03} \\
\rowcolor{black!6}\textbf{CHI\mbox{-}REBEL} (Ours) &
\textbf{77.3} \pmgreen{1.24} & \textbf{20.8} \pmgreen{1.09} &
\textbf{78.0} \pmgreen{1.08} & \textbf{19.7} \pmgreen{1.05} \\
\bottomrule
\end{tabular}
\vspace{6pt}
\caption{Comparison with pairwise robust baselines on HH-RLHF \cite{bai2022traininghelpfulharmlessassistant}. Win(↑) is win-rate on held-out pairs and Lose(↓) is negative-margin rate. All REBEL variants outperform all DPO variants in terms of win and lose rates.}
\label{tab:hh-rlhf}
\end{table}

\section{Conclusion} We introduced \emph{DRO-REBEL}, a family of distributionally robust REBEL updates for online RLHF built on type-$p$ Wasserstein, KL, and $\chi^2$ ambiguity sets. Strong duality reduces each robust update to a scalable relative-reward regression, preserving REBEL's simplicity without PPO-style clipping or value networks. On the theoretical side, we first recover the $\widetilde{\mathcal{O}}(\sqrt{d/n})$ slow rate of prior DRO-DPO analyses with substantially sharper constants, and then, via a localized Rademacher complexity argument, establish the first parametric $\widetilde{\mathcal{O}}(d/n)$ fast rate for robust preference optimization under Wasserstein and KL ambiguity. Distributional robustness in REBEL therefore carries no statistical efficiency cost relative to the non-robust setting. Empirically, DRO-REBEL maintains alignment across preference shifts and generalizes to unseen objectives on Emotion Alignment, ArmoRM, and HH-RLHF, with $\chi^2$-REBEL the most consistent performer, suggesting a favorable bias–variance trade-off under moderate shift.

There are several interesting direction for future work. In this work, we consider the KL divergence as the primary $f$-divergence ambiguity set of interest and derived $\widetilde{\mathcal{O}}(d/n)$ fast rates. Given the strong empirical success of $\chi^2$-REBEL, it would be interesting to extend the fast rate analysis to $\chi^2$ and, more broadly, to general $f$-divergence ambiguity sets using a similar analysis. We also considered REBEL updates in this work and we conjecture that a similar analysis can be done DPO-based objectives, provided one can verify strong convexity and the covering conditions under the DPO loss. Beyond these, our guarantees rely on the trajectory coverage condition (Assumption~\ref{assumption:data-coverage}), which provides strong convexity of the robust population loss. Relaxing this condition or replacing it with a partial-coverage or a pessimism-based alternative while retaining fast rates is an interesting and important direction.

\bibliographystyle{alpha}
\bibliography{refs}

\clearpage
\appendix
\crefalias{section}{appendix}
\crefalias{subsection}{appendix}

\part*{Appendix} 
\addcontentsline{toc}{part}{Appendix} 

\begingroup
\etocsettocstyle{%
  \section*{Table of Contents}%
  \vspace{-0.25em}%
  \noindent\rule{\linewidth}{0.4pt}\par
  \vspace{0.75em}%
}{}
\localtableofcontents
\endgroup
\clearpage

\section{Auxiliary Technical Tools}

\subsection{Wasserstein Theory}
\begin{lemma}[\cite{gao2022distributionallyrobuststochasticoptimization}, Theorem 1; Strong Duality for DRO with Wasserstein Distance]
\label{appendix:gao-strong-duality-wasserstein}
Consider any $p\in[1,\infty)$, any $\nu\in\mathcal P(\Xi)$, any $\rho>0$, and any $\Psi\in L^1(\nu)$ such that the growth rate $\kappa$ of $\Psi$ satisfies
\begin{equation}\label{eq:kappa}\tag{13}
\kappa :=
\inf\bigl\{\eta\ge 0 :
\int_{\Xi}\Phi(\eta,\zeta)\,\nu(d\zeta)>-\infty
\bigr\}<\infty,
\end{equation}
where
\[
\Phi(\eta,\zeta)
:=
\inf_{\xi\in\Xi}\bigl\{\eta\,d^p(\xi,\zeta)-\Psi(\xi)\bigr\}.
\]
Then strong duality holds with finite optimal value $v_p=v_D\le\infty$, where the primal and dual problems are
\begin{align}
v_p &= \sup_{\mu\in\mathcal P(\Xi)}
\Bigl\{\!\int_{\Xi}\Psi(\xi)\,\mu(d\xi)\colon
W_p(\mu,\nu)\le\rho\Bigr\},\quad\text{(Primal)}\\
v_D &= \inf_{\eta\ge0}
\Bigl\{\eta\,\rho^p \;-\;\int_{\Xi}
\inf_{\xi\in\Xi}\bigl[\eta\,d^p(\xi,\zeta)-\Psi(\xi)\bigr]\,
\nu(d\zeta)\Bigr\}.\quad\text{(Dual)}
\end{align}
\end{lemma}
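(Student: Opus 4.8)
This statement is Theorem~1 of \citet{gao2022distributionallyrobuststochasticoptimization}; I outline the argument I would follow. The plan is to lift the primal to an optimization over transport plans, relax the Wasserstein budget with a single scalar Lagrange multiplier, and then argue that the resulting min--max has no gap. Since $\Psi$ depends only on the first coordinate and $W_p(\mu,\nu)\le\rho$ is equivalent to the existence of a coupling $\pi$ with second marginal $\nu$ and $\int d^p\,d\pi\le\rho^p$, the primal value equals $\sup\{\int\Psi(\xi)\,\pi(d\xi,d\zeta):\pi_2=\nu,\ \int d^p(\xi,\zeta)\,\pi(d\xi,d\zeta)\le\rho^p\}$ with the first marginal free. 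I would then dualize the transport-budget inequality with a multiplier $\eta\ge0$ and disintegrate $\pi$ along $\nu$; the inner supremum over conditional laws separates across $\zeta$ and collapses to $-\int\Phi(\eta,\zeta)\,\nu(d\zeta)$, with the exchange of supremum and integral justified by the Rockafellar--Wets interchangeability principle for normal integrands. Taking $\inf_{\eta\ge0}$ of $\eta\rho^p-\int\Phi(\eta,\zeta)\,\nu(d\zeta)$ yields weak duality $v_p\le v_D$.

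The substantive step is the reverse inequality $v_p\ge v_D$, and this is where the growth condition $\kappa<\infty$ is essential. For $\eta>\kappa$ the map $\zeta\mapsto\Phi(\eta,\zeta)$ is $\nu$-integrable, so the dual objective is finite and proper on $(\kappa,\infty)$ and coercive enough for its infimum to be attained at some $\eta^\star$. From a near-optimal $\eta^\star$ I would invoke a measurable-selection theorem to extract, for $\nu$-a.e.\ $\zeta$, an $\varepsilon$-maximizer $\xi_\varepsilon(\zeta)$ of $\xi\mapsto\Psi(\xi)-\eta^\star d^p(\xi,\zeta)$, push $\nu$ forward through $\zeta\mapsto\xi_\varepsilon(\zeta)$ to obtain a candidate measure $\mu_\varepsilon$, and verify by complementary slackness that $W_p(\mu_\varepsilon,\nu)\le\rho+O(\varepsilon)$ while $\int\Psi\,d\mu_\varepsilon\ge v_D-O(\varepsilon)$; sending $\varepsilon\downarrow0$ then closes the gap. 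An equivalent route is a Sion-type minimax theorem on the weakly compact set of couplings with transport cost at most $\rho^p$, again using $\kappa<\infty$ to rule out escape of mass to infinity.

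I expect this reverse-inequality step to be the main obstacle: without $\kappa<\infty$ the dual can be identically $+\infty$ or its infimum unattained, and mass can leak to infinity so that the min--max interchange fails, so the technical heart is establishing that $\kappa<\infty$ is exactly what legitimizes the swap, and making the measurable-selection and complementary-slackness bookkeeping airtight. The interchangeability principle underlying the weak-duality step is the other load-bearing measure-theoretic fact that would need careful statement.
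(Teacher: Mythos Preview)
The paper does not prove this lemma at all: it is listed in the ``Auxiliary Technical Tools'' appendix with the citation to \citet{gao2022distributionallyrobuststochasticoptimization} in its title and is used as a black-box input (together with the companion Lemma~2(ii) and the bounded-loss corollary that follows it). There is therefore no in-paper proof to compare your attempt against.

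That said, your sketch is a faithful high-level outline of the Gao--Kleywegt argument: the lift to couplings with free first marginal, Lagrangian relaxation of the transport budget, the interchange of $\sup$ and $\int$ via a normal-integrand/interchangeability principle for weak duality, and measurable selection plus complementary slackness for the reverse inequality are exactly the load-bearing pieces. Two places where your sketch is slightly optimistic relative to the actual proof: (i) the near-optimal primal measure in Gao--Kleywegt is in general constructed from a \emph{randomized} conditional law (a two-point mixture between the identity and a measurable $\varepsilon$-maximizer), not a deterministic pushforward, because a single selection may overshoot the budget; and (ii) the Sion-type route you mention as an alternative does not go through directly, since the feasible set of couplings is not weakly compact without additional moment assumptions---this is precisely why Gao--Kleywegt work via the constructive selection instead. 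Neither point invalidates your plan, but both would need attention in a full write-up.
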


\begin{lemma}[\cite{gao2022distributionallyrobuststochasticoptimization}, Lemma 2(ii); Properties of the growth $\kappa$]
\label{appendix:gao-strong-duality-wasserstein-growth}
Suppose that $\nu\in\mathcal P_p(\Xi)$. Then the growth rate $\kappa$ in \eqref{eq:kappa} is finite if and only if there exists $\zeta_0\in\Xi$ and constants $L,M>0$ such that
\begin{equation}\label{eq:growth}\tag{14}
\Psi(\xi)-\Psi(\zeta_0)\;\le\;L\,d^p(\xi,\zeta_0)+M,
\quad\forall\,\xi\in\Xi.
\end{equation}
\end{lemma}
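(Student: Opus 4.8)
The plan is to establish the two implications of the equivalence separately, using only three ingredients: monotonicity of $\eta\mapsto\Phi(\eta,\zeta)$ in $\eta$ (immediate since $d^p\ge 0$), the elementary inequality $(a+b)^p\le 2^{p-1}(a^p+b^p)$ valid for $p\ge 1$, and the standing hypothesis $\nu\in\mathcal P_p(\Xi)$, i.e.\ $\int_\Xi d^p(\zeta,\zeta_0)\,\nu(d\zeta)<\infty$ for one (hence every) $\zeta_0\in\Xi$. The ``if'' direction is the quicker one, so I would carry it out first.

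Assume the growth bound $\Psi(\xi)-\Psi(\zeta_0)\le L\,d^p(\xi,\zeta_0)+M$. Fixing an arbitrary $\zeta\in\Xi$ and splitting $d^p(\xi,\zeta_0)\le 2^{p-1}\bigl(d^p(\xi,\zeta)+d^p(\zeta,\zeta_0)\bigr)$ gives, after substitution, $\Psi(\xi)\le \Psi(\zeta_0)+M+L2^{p-1}d^p(\xi,\zeta)+L2^{p-1}d^p(\zeta,\zeta_0)$ for all $\xi$. Hence for any $\eta\ge L2^{p-1}$ the quantity $\eta\,d^p(\xi,\zeta)-\Psi(\xi)$ is bounded below by $-\Psi(\zeta_0)-M-L2^{p-1}d^p(\zeta,\zeta_0)$ uniformly in $\xi$, so $\Phi(\eta,\zeta)\ge -\Psi(\zeta_0)-M-L2^{p-1}d^p(\zeta,\zeta_0)$. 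Integrating this lower bound against $\nu$ and invoking $\nu\in\mathcal P_p(\Xi)$ (and $\Psi(\zeta_0)$ finite, which we may assume) shows $\int_\Xi\Phi(\eta,\zeta)\,\nu(d\zeta)>-\infty$, whence $\kappa\le L2^{p-1}<\infty$.

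For the converse, assume $\kappa<\infty$. Because $\Phi(\eta,\zeta)$ is nondecreasing in $\eta$, the set $\{\eta\ge 0:\int_\Xi\Phi(\eta,\zeta)\,\nu(d\zeta)>-\infty\}$ is an interval unbounded above, so I can select a finite $\eta_0>\kappa$ with $\int_\Xi\Phi(\eta_0,\zeta)\,\nu(d\zeta)>-\infty$; in particular $\Phi(\eta_0,\zeta)>-\infty$ for $\nu$-a.e.\ $\zeta$, and since $\Psi\in L^1(\nu)$ the value $\Psi(\zeta)$ is finite for $\nu$-a.e.\ $\zeta$. Fixing one $\zeta_0$ at which both hold and setting $C:=-\Phi(\eta_0,\zeta_0)<\infty$, the definition of $\Phi$ yields $\eta_0\,d^p(\xi,\zeta_0)-\Psi(\xi)\ge -C$ for every $\xi$, i.e.\ $\Psi(\xi)-\Psi(\zeta_0)\le \eta_0\,d^p(\xi,\zeta_0)+\bigl(C-\Psi(\zeta_0)\bigr)$. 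Evaluating at $\xi=\zeta_0$ shows $C-\Psi(\zeta_0)\ge 0$, so taking $L=\eta_0>0$ (note $\eta_0>\kappa\ge 0$) and $M=\bigl(C-\Psi(\zeta_0)\bigr)\vee 1>0$ delivers exactly the claimed inequality.

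The only delicate point is the measure-theoretic bookkeeping in the converse: one must know that $\zeta\mapsto\Phi(\eta,\zeta)$ is $\nu$-measurable in order to integrate it and to pass to ``$\nu$-a.e.\ finite.'' This is standard, since $\Phi(\eta,\cdot)$ is the pointwise infimum over a Polish space of the jointly measurable (indeed lower semicontinuous in $\xi$) map $(\xi,\zeta)\mapsto\eta\,d^p(\xi,\zeta)-\Psi(\xi)$, so I would invoke the measurable-projection/selection argument already present in \citet{gao2022distributionallyrobuststochasticoptimization} rather than reprove it. With that granted, the whole lemma reduces to the two elementary estimates above.
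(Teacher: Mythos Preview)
The paper does not supply its own proof of this lemma; it is simply quoted from \cite{gao2022distributionallyrobuststochasticoptimization} as an auxiliary tool and then applied (via the immediate corollary that a bounded loss on a bounded space satisfies the growth condition). So there is no in-paper argument to compare against.

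Your proposal is correct and is essentially the canonical argument. Both implications are handled cleanly: the ``if'' direction uses the triangle-inequality splitting $d^p(\xi,\zeta_0)\le 2^{p-1}\bigl(d^p(\xi,\zeta)+d^p(\zeta,\zeta_0)\bigr)$ to produce a $\zeta$-uniform lower bound on $\Phi(\eta,\zeta)$ that is $\nu$-integrable by the $\mathcal P_p$ moment assumption, yielding $\kappa\le L\,2^{p-1}$; the ``only if'' direction selects $\eta_0>\kappa$, uses $\int\Phi(\eta_0,\cdot)\,d\nu>-\infty$ to find a $\zeta_0$ with $\Phi(\eta_0,\zeta_0)$ finite, and reads off the growth bound directly from the definition of $\Phi$. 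The small bookkeeping steps---ensuring $\Psi(\zeta_0)$ is finite (via $\Psi\in L^1(\nu)$), ensuring $L,M>0$ strictly, and flagging measurability of $\zeta\mapsto\Phi(\eta,\zeta)$---are all handled appropriately. Nothing is missing.
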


\begin{corollary}
Consider any bounded loss function $\ell$ over a bounded space $\Xi$.  Then the duality in Lemma 2 holds.
\end{corollary}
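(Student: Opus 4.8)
The plan is to show that the sole hypothesis of the strong-duality result (Lemma~\ref{appendix:gao-strong-duality-wasserstein})—finiteness of the growth rate $\kappa$ defined in \eqref{eq:kappa}—is automatic once both $\ell$ and the ground space are bounded, and then simply invoke that lemma. The technical device is the equivalence in Lemma~\ref{appendix:gao-strong-duality-wasserstein-growth}, which reduces finiteness of $\kappa$ to the existence of a one-point growth bound on $\Psi$.

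First I would fix notation: set $\Psi:=\ell$, let $C:=\sup_{\xi\in\Xi}|\Psi(\xi)|<\infty$ by boundedness of the loss, and $D:=\sup_{\xi,\zeta\in\Xi}d(\xi,\zeta)<\infty$ since $(\Xi,d)$ is bounded. Two side conditions are then immediate. (i) For any $\nu\in\mathcal P(\Xi)$ and any reference point $\zeta_0\in\Xi$ we have $\int_{\Xi}d^{p}(\xi,\zeta_0)\,\nu(d\xi)\le D^{p}<\infty$, so in fact $\nu\in\mathcal P_p(\Xi)$, as required by Lemma~\ref{appendix:gao-strong-duality-wasserstein-growth}. (ii) $\int_{\Xi}|\Psi|\,d\nu\le C<\infty$, so $\Psi\in L^1(\nu)$, as required by Lemma~\ref{appendix:gao-strong-duality-wasserstein}.

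Next I would verify the growth condition \eqref{eq:growth}. Choose any $\zeta_0\in\Xi$ and take $L:=1$ and $M:=2C$. Since $d^{p}(\xi,\zeta_0)\ge 0$, for every $\xi\in\Xi$
\[
\Psi(\xi)-\Psi(\zeta_0)\;\le\;2C\;=\;L\cdot 0+M\;\le\;L\,d^{p}(\xi,\zeta_0)+M ,
\]
so \eqref{eq:growth} holds and Lemma~\ref{appendix:gao-strong-duality-wasserstein-growth} forces $\kappa<\infty$. (Alternatively one can argue directly: $\Phi(\eta,\zeta)=\inf_{\xi\in\Xi}\{\eta\,d^{p}(\xi,\zeta)-\Psi(\xi)\}\ge-C$ for every $\eta\ge 0$, whence $\int\Phi(\eta,\cdot)\,d\nu\ge-C>-\infty$ and $\kappa=0$.) Applying Lemma~\ref{appendix:gao-strong-duality-wasserstein} with this $\nu$, the fixed radius $\rho>0$, and $\Psi=\ell$ then gives strong duality with finite common value $v_p=v_D$, which is exactly the claim.

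There is essentially no obstacle here; the statement is a short verification. The only point worth flagging is the reading of ``bounded space'': it must mean finite diameter in the ground metric $d$ that defines $\mathcal W_p$, and it is precisely this boundedness—together with boundedness of $\ell$—that simultaneously yields $\nu\in\mathcal P_p(\Xi)$, $\Psi\in L^1(\nu)$, and the growth bound \eqref{eq:growth}, so every hypothesis of the two cited lemmas is met with no additional assumptions.
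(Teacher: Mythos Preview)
Your proof is correct and follows essentially the same approach as the paper: verify the growth condition \eqref{eq:growth} of Lemma~\ref{appendix:gao-strong-duality-wasserstein-growth} with explicit constants, conclude $\kappa<\infty$, and then invoke Lemma~\ref{appendix:gao-strong-duality-wasserstein}. The paper's one-line proof chooses $L=\mathrm{diam}(\Xi)^p$ and $M=\sup_{\xi}|\Psi(\xi)|$, whereas you take $L=1$ and $M=2C$; your version is arguably cleaner (and you also explicitly check the integrability hypotheses $\nu\in\mathcal P_p(\Xi)$ and $\Psi\in L^1(\nu)$, which the paper leaves implicit).
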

\begin{proof}
Immediate from Lemma \ref{appendix:gao-strong-duality-wasserstein-growth} by choosing $L=\mathrm{diam}(\Xi)^p$ and $M=\sup_{\xi\in\Xi}|\Psi(\xi)|$.
\end{proof}

\subsection{Optimization}

\begin{lemma}[\cite{beck2023introduction}, Theorem 1.24; Linear Approximation Theorem]
\label{appendix:beck-linear-approx}
Let $f\colon U\to\mathbb R$ be twice continuously differentiable on an open set $U\subseteq\mathbb R^n$, and let $x,y\in U$ satisfy $[x,y]\subset U$.  Then there exists $\xi\in[x,y]$ such that
\[
f(y) \;=\; f(x) \;+\;\nabla f(x)^\top(y-x)
\;+\;\frac12\,(y-x)^\top\nabla^2f(\xi)\,(y-x).
\]
\end{lemma}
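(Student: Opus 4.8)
\section*{Proof proposal for Lemma~\ref{appendix:beck-linear-approx}}

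The plan is to reduce this multivariate identity to the classical one-dimensional Taylor expansion with Lagrange remainder by restricting $f$ to the segment joining $x$ and $y$. First I would introduce the auxiliary scalar function $g(t) := f\bigl(x + t(y-x)\bigr)$. Because $U$ is open and the closed segment $[x,y]$ is contained in $U$, a compactness argument produces $\epsilon>0$ such that $x+t(y-x)\in U$ for every $t\in(-\epsilon,1+\epsilon)$, so $g$ is defined on an open interval containing $[0,1]$; applying the chain rule to the $C^2$ map $f$, the function $g$ is twice continuously differentiable there, with
\[
g'(t) = \nabla f\bigl(x + t(y-x)\bigr)^{\top} (y-x), \qquad g''(t) = (y-x)^{\top} \nabla^2 f\bigl(x + t(y-x)\bigr)(y-x).
\]

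Next I would apply the one-dimensional Taylor expansion of $g$ about $t=0$, evaluated at $t=1$, in Lagrange form: there exists $c\in(0,1)$ with $g(1) = g(0) + g'(0) + \tfrac12 g''(c)$. For a fully self-contained derivation this follows from Rolle's theorem: set $h(t) := g(1) - g(t) - (1-t)g'(t) - R(1-t)^2$ with the constant $R$ fixed so that $h(0)=0$; then $h(0)=h(1)=0$, so some $c\in(0,1)$ satisfies $h'(c)=0$, and since $h'(t) = (1-t)\bigl(2R - g''(t)\bigr)$ this forces $2R = g''(c)$, i.e. $R=\tfrac12 g''(c)$, whence $g(1) = g(0)+g'(0)+\tfrac12 g''(c)$.

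Finally I would translate the one-dimensional identity back to $f$: substituting $g(1)=f(y)$, $g(0)=f(x)$, $g'(0)=\nabla f(x)^{\top}(y-x)$, and setting $\xi := x + c(y-x)$, which lies in $[x,y]$ because $c\in(0,1)$, gives $g''(c) = (y-x)^{\top}\nabla^2 f(\xi)(y-x)$ and hence the claimed expansion. I do not anticipate a genuine obstacle here; the only points that require care are using the openness of $U$ to guarantee that $[0,1]$ sits inside the domain of a $C^2$ version of $g$, and the bookkeeping in the chain-rule computation of $g''$, where one differentiates the vector field $t\mapsto\nabla f(x+t(y-x))$ and contracts the resulting Hessian against $y-x$ on both sides.
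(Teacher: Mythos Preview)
Your proof is correct and is the standard argument for this classical result. Note, however, that the paper does not give its own proof of this lemma: it is stated as an auxiliary tool and attributed directly to \cite{beck2023introduction}, Theorem~1.24, so there is no in-paper proof to compare against. Your reduction to the one-dimensional Taylor expansion with Lagrange remainder via $g(t)=f(x+t(y-x))$ is exactly the textbook derivation one would expect in the cited reference.
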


\begin{lemma}[\cite{beck2017first}, Theorem 5.24; First-order characterizations of strong convexity]
\label{appendix:beck-strong-convexity}
Let $f\colon E\to(-\infty,\infty]$ be a proper, closed, convex function, and let $\sigma>0$.  The following are equivalent:
\begin{enumerate}
  \item For all $x,y\in\mathrm{dom}(f)$ and $\lambda\in[0,1]$,
  \[
    f\bigl(\lambda x+(1-\lambda)y\bigr)
    \le
    \lambda f(x)+(1-\lambda)f(y)
    -\frac\sigma2\,\lambda(1-\lambda)\,\|x-y\|^2.
  \]
  \item For all $x\in\mathrm{dom}(\partial f)$, $y\in\mathrm{dom}(f)$ and $g\in\partial f(x)$,
  \[
    f(y)\;\ge\;f(x)\;+\;\langle g,y-x\rangle\;+\;\frac\sigma2\,\|y-x\|^2.
  \]
\end{enumerate}
\end{lemma}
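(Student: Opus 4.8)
The plan is to reduce the equivalence to the already-standard first-order (subgradient) characterization of ordinary convexity by a quadratic shift. Set $h := f - \tfrac{\sigma}{2}\|\cdot\|^2$. Since $f$ is proper and closed and $\tfrac{\sigma}{2}\|\cdot\|^2$ is finite and continuous, $h$ is proper and closed with $\mathrm{dom}(h)=\mathrm{dom}(f)$; the whole argument then amounts to showing that each of (1) and (2) is equivalent to convexity of $h$.

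First I would record the polarization identity $\|\lambda x+(1-\lambda)y\|^2=\lambda\|x\|^2+(1-\lambda)\|y\|^2-\lambda(1-\lambda)\|x-y\|^2$. Substituting it into the Jensen inequality $h(\lambda x+(1-\lambda)y)\le\lambda h(x)+(1-\lambda)h(y)$ and cancelling the $\tfrac{\sigma}{2}\bigl(\lambda\|x\|^2+(1-\lambda)\|y\|^2\bigr)$ terms shows that this inequality holds for all $x,y\in\mathrm{dom}(f)$, $\lambda\in[0,1]$ if and only if (1) holds; hence (1) $\Leftrightarrow$ $h$ convex. Next, since $\tfrac{\sigma}{2}\|\cdot\|^2$ is everywhere differentiable with gradient $\sigma x$, the subdifferential sum rule gives $\partial h(x)=\partial f(x)-\sigma x$ with no constraint qualification needed, so $\mathrm{dom}(\partial h)=\mathrm{dom}(\partial f)$. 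Applying the textbook subgradient characterization of convexity to $h$ — $h$ convex $\Leftrightarrow$ $h(y)\ge h(x)+\langle p,y-x\rangle$ for all $x\in\mathrm{dom}(\partial h)$, $y\in\mathrm{dom}(h)$, $p\in\partial h(x)$ — and writing $p=g-\sigma x$ with $g\in\partial f(x)$, the inequality rearranges, using $\tfrac{\sigma}{2}\|y\|^2-\tfrac{\sigma}{2}\|x\|^2-\sigma\langle x,y-x\rangle=\tfrac{\sigma}{2}\|y-x\|^2$, to exactly (2). Chaining the two equivalences through ``$h$ convex'' finishes the proof.

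I expect the only genuine obstacle to be bookkeeping around subdifferential domains: the nontrivial implication in the first-order characterization of convexity requires $\mathrm{dom}(\partial h)$ to be rich enough to reconstruct the global inequality, which is where closedness of $f$ enters — via $\mathrm{ri}(\mathrm{dom}\,h)\subseteq\mathrm{dom}(\partial h)$ for proper convex $h$ together with a lower-semicontinuity limiting argument at relative-boundary points — though in every application in this paper $\mathrm{dom}(f)=\mathbb{R}^d$, so the point is vacuous. As a self-contained alternative avoiding the shift, one can argue the two implications directly: for (1) $\Rightarrow$ (2), rearrange (1) to $\lambda^{-1}\bigl(f(x+\lambda(y-x))-f(x)\bigr)\le f(y)-f(x)-\tfrac{\sigma}{2}(1-\lambda)\|y-x\|^2$ and let $\lambda\downarrow0$, using $\langle g,y-x\rangle\le f'(x;y-x)$ for convex $f$; for (2) $\Rightarrow$ (1), apply (2) at $z=\lambda x+(1-\lambda)y$ against $x$ and against $y$, form the $\lambda$-weighted sum, cancel the linear terms, and simplify with $\lambda\|x-z\|^2+(1-\lambda)\|y-z\|^2=\lambda(1-\lambda)\|x-y\|^2$, which follows from $x-z=(1-\lambda)(x-y)$ and $y-z=\lambda(y-x)$.
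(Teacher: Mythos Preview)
The paper does not prove this lemma at all: it is listed in the ``Auxiliary Technical Tools'' appendix as a direct citation of Theorem~5.24 in Beck (2017), with no argument given. So there is nothing in the paper to compare your proposal against.

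That said, your proof is correct and is essentially the standard one found in the cited reference. The quadratic-shift reduction $h:=f-\tfrac{\sigma}{2}\|\cdot\|^2$ is exactly how Beck proves Theorem~5.24: strong convexity of $f$ is shown to be equivalent to ordinary convexity of $h$, and then both the Jensen form (your item~(1)) and the subgradient form (your item~(2)) are read off from the corresponding characterizations of convexity for $h$. Your polarization computation and the $\partial h=\partial f-\sigma(\cdot)$ sum rule are the right ingredients. You are also right to flag the only real subtlety, namely that the direction $(2)\Rightarrow(1)$ (equivalently, subgradient inequality $\Rightarrow$ convexity for $h$) needs $\mathrm{dom}(\partial f)$ dense enough in $\mathrm{dom}(f)$, which is where properness/closedness enter via $\mathrm{ri}(\mathrm{dom}\,f)\subseteq\mathrm{dom}(\partial f)$ and a limiting argument; your remark that this is vacuous for the full-domain applications in the paper is accurate. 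Your alternative direct argument for $(2)\Rightarrow(1)$---applying~(2) at $z=\lambda x+(1-\lambda)y$ toward $x$ and $y$ and combining---is also a standard route and works under the same domain caveat.
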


\begin{lemma}[\cite{beck2017first}, Theorem 5.25; Existence and uniqueness of minimizer]
\label{appendix:beck-strong-convexity-unique-min}
Let $f\colon E\to(-\infty,\infty]$ be proper, closed, and $\sigma$-strongly convex with $\sigma>0$.  Then:
\begin{enumerate}
  \item $f$ has a unique minimizer $x^*$.
  \item For all $x\in\mathrm{dom}(f)$,
  \[
    f(x)-f(x^*)\;\ge\;\frac\sigma2\,\|x-x^*\|^2.
  \]
\end{enumerate}
\end{lemma}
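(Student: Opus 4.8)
The plan is to prove the three assertions — existence, uniqueness, and the quadratic growth bound — in that order, using only the first-order characterizations of strong convexity already recorded in Lemma~\ref{appendix:beck-strong-convexity}.

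\textbf{Existence.} First I would show that $f$ is coercive. Pick a point $x_0\in\mathrm{dom}(\partial f)$ (nonempty since $f$ is proper, closed, and strongly convex; e.g.\ any point of $\mathrm{ri}(\mathrm{dom}\,f)$ works) and a subgradient $g_0\in\partial f(x_0)$. Part~2 of Lemma~\ref{appendix:beck-strong-convexity} gives, for every $y\in\mathrm{dom}(f)$,
\[
f(y)\;\ge\;f(x_0)+\langle g_0,\,y-x_0\rangle+\tfrac{\sigma}{2}\|y-x_0\|^2\;\ge\;f(x_0)-\|g_0\|\,\|y-x_0\|+\tfrac{\sigma}{2}\|y-x_0\|^2,
\]
and the right-hand side diverges as $\|y\|\to\infty$. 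Hence every sublevel set $\{y:f(y)\le\alpha\}$ is bounded; it is also closed since $f$ is closed (lower semicontinuous), hence compact. Choosing $\alpha$ so the sublevel set is nonempty, a lower semicontinuous function attains its infimum over a nonempty compact set, which produces a minimizer $x^*$.

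\textbf{Uniqueness.} Suppose $x_1,x_2$ are both minimizers with common value $f^\star$ and $x_1\ne x_2$. Applying part~1 of Lemma~\ref{appendix:beck-strong-convexity} with $\lambda=\tfrac12$,
\[
f\!\Bigl(\tfrac{x_1+x_2}{2}\Bigr)\;\le\;\tfrac12 f(x_1)+\tfrac12 f(x_2)-\tfrac{\sigma}{8}\|x_1-x_2\|^2\;=\;f^\star-\tfrac{\sigma}{8}\|x_1-x_2\|^2\;<\;f^\star,
\]
contradicting the minimality of $f^\star$; so the minimizer is unique.

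\textbf{Quadratic growth.} Since $x^*$ minimizes the proper closed convex function $f$, Fermat's rule gives $0\in\partial f(x^*)$, so $x^*\in\mathrm{dom}(\partial f)$. Invoking part~2 of Lemma~\ref{appendix:beck-strong-convexity} with $x=x^*$, $g=0$, and arbitrary $y\in\mathrm{dom}(f)$,
\[
f(y)\;\ge\;f(x^*)+\langle 0,\,y-x^*\rangle+\tfrac{\sigma}{2}\|y-x^*\|^2\;=\;f(x^*)+\tfrac{\sigma}{2}\|y-x^*\|^2,
\]
which is exactly the claimed bound (the inequality is trivial when $y\notin\mathrm{dom}(f)$, as the right side is finite). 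The main obstacle — really the only subtle point — is the coercivity step, where one must know a priori that $\mathrm{dom}(\partial f)\ne\emptyset$ so that a valid affine-plus-quadratic minorant exists; I would justify this by the standard fact that the subdifferential of a proper closed convex function is nonempty throughout the relative interior of its domain. Everything else is bookkeeping with the two inequalities of Lemma~\ref{appendix:beck-strong-convexity}.
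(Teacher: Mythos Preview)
Your proof is correct and follows the standard textbook route (coercivity $\Rightarrow$ existence via compact sublevel sets, uniqueness via the strict midpoint inequality, quadratic growth via Fermat's rule plus the subgradient inequality). The paper does not actually prove this lemma; it is quoted verbatim as Theorem~5.25 of \cite{beck2017first} in the auxiliary-tools appendix and used as a black box, so there is no ``paper's own proof'' to compare against---your argument is essentially the one Beck gives.
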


\subsection{Distributionally Robust Optimization}

The $f$-divergence between the distributions $P$ and $P_0$ in $\mathcal X$ is
\begin{equation}\label{eq:fdiv}\tag{15}
D_f(P\Vert P_0)
\;=\;
\int_{\mathcal X}
f\!\left(\frac{dP}{dP_0}\right)\,dP_0,
\end{equation}
where $f$ is a convex function (e.g., $f(t)=t\log t$ gives KL divergence).  For a loss $\ell\colon\mathcal X\to\mathbb R$ the following holds.

\vspace{1em}

\begin{lemma}[\cite{duchi2020learningmodelsuniformperformance}, Proposition 1]
\label{appendix:duchi-strong-duality-KL}
Let $D_f$ be as in \eqref{eq:fdiv}.  Then
\begin{equation}\label{eq:dual-fdiv}\tag{16}
\sup_{P\colon D_f(P\Vert P_0)\le\rho}
\mathbb E_P[\ell(X)]
\;=\;
\inf_{\substack{\lambda\ge0\\\eta\in\mathbb R}}
\Bigl\{
\lambda\,f^*\!\left(\frac{\ell(X)-\eta}{\lambda}\right)
\;+\;\lambda\,\rho\;+\;\eta
\Bigr\},
\end{equation}
where $f^*(s)=\sup_{t\ge0}\{st-f(t)\}$ is the Fenchel conjugate of $f$.
\end{lemma}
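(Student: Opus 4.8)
The plan is to reparametrize the inner supremum by the likelihood ratio and then carry out a Lagrangian duality computation. Write $L:=d\mathbb{P}/d\mathbb{P}_0$, so that the feasible set $\{\mathbb{P}:D_f(\mathbb{P}\Vert\mathbb{P}_0)\le\rho\}$ corresponds to $\{L\ge 0:\ \mathbb{E}_{\mathbb{P}_0}[L]=1,\ \mathbb{E}_{\mathbb{P}_0}[f(L)]\le\rho\}$ and the objective becomes $\mathbb{E}_{\mathbb{P}_0}[\ell\,L]$. Introduce a multiplier $\lambda\ge 0$ for the divergence constraint and an unconstrained multiplier $\eta\in\mathbb{R}$ for the normalization constraint, giving the Lagrangian
\[
\mathcal{G}(L,\lambda,\eta)=\mathbb{E}_{\mathbb{P}_0}\bigl[(\ell-\eta)L\bigr]-\lambda\,\mathbb{E}_{\mathbb{P}_0}\bigl[f(L)\bigr]+\lambda\rho+\eta ,
\]
so that the primal value equals $\sup_{L\ge 0}\inf_{\lambda\ge 0,\eta\in\mathbb{R}}\mathcal{G}(L,\lambda,\eta)$.

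Next I would exchange the supremum and the infimum and then evaluate the inner supremum pointwise. For each fixed $(\lambda,\eta)$ with $\lambda>0$, the map $L\mapsto\mathcal{G}$ is concave and the constraint set is decomposable, so the supremum passes through the expectation and reduces to a one-dimensional problem at $\mathbb{P}_0$-a.e.\ point; by the positive-homogeneity identity $\sup_{t\ge 0}\{st-\lambda f(t)\}=\lambda f^{*}(s/\lambda)$ we obtain
\[
\sup_{L\ge 0}\ \mathbb{E}_{\mathbb{P}_0}\bigl[(\ell-\eta)L-\lambda f(L)\bigr]
=\mathbb{E}_{\mathbb{P}_0}\Bigl[\lambda f^{*}\!\Bigl(\tfrac{\ell(X)-\eta}{\lambda}\Bigr)\Bigr],
\]
where interchanging the pointwise supremum with the integral is justified because the integrand is jointly measurable and bounded below (test $t=1$ to get the lower bound $-\lambda f(1)-|\ell|-|\eta|$). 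Adding $\lambda\rho+\eta$ and minimizing over $(\lambda,\eta)$ reproduces the right-hand side of \eqref{eq:dual-fdiv}. The boundary case $\lambda=0$ is harmless: letting $\lambda\downarrow 0$ forces $\eta\ge\esssup_{\mathbb{P}_0}\ell$ (else the term diverges), so the $\lambda=0$ slice of the dual equals $\esssup_{\mathbb{P}_0}\ell$, which is always at least the primal value, hence including it does not change the infimum.

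The crux of the argument is strong duality, i.e.\ that the sup–inf exchange is lossless. Here I would invoke Slater's condition: since $\rho>0$, the choice $L\equiv 1$ satisfies $\mathbb{E}_{\mathbb{P}_0}[f(L)]=f(1)=0<\rho$ together with $\mathbb{E}_{\mathbb{P}_0}[L]=1$, so the perturbation function $v(s)=\sup\{\mathbb{E}_{\mathbb{P}_0}[\ell L]:\ \mathbb{E}_{\mathbb{P}_0}[f(L)]\le\rho+s,\ \mathbb{E}_{\mathbb{P}_0}[L]=1\}$ is finite and continuous at $s=0$; standard convex Lagrangian duality for a program with a single inequality constraint admitting a Slater point then gives zero duality gap and attainment of the dual infimum. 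Beyond convexity of $f$ with $f(1)=0$, the only extra hypothesis is mild integrability of $\ell$—in particular $\ell$ bounded, which is the regime in every application of this lemma in the paper—ensuring both sides are finite and validating the Fubini/interchange steps. I expect the main obstacle to be the fully rigorous justification of the minimax exchange in the infinite-dimensional space $L^1(\mathbb{P}_0)$: since there is no compactness, one must argue via convexity and lower semicontinuity of the perturbation function $v$ rather than a direct application of Sion's theorem, and one must track the effective domain of $f^{*}$ and the $\lambda=0$ bookkeeping carefully; the remaining calculations are routine.
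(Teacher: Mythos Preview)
The paper does not prove this lemma; it is stated without proof as an auxiliary tool cited from \citet{duchi2020learningmodelsuniformperformance}, and when the paper later invokes it (Appendix~\ref{appendix:tractable-chi-squared-algo}) it silently corrects the stated formula by inserting the missing expectation $\mathbb{E}_{P_0}[\cdot]$ around the $f^*$ term. Your proposal is the standard Lagrangian duality argument that underlies the cited result: reparametrize by the likelihood ratio, dualize the divergence and normalization constraints, push the pointwise supremum through the integral via the perspective identity $\sup_{t\ge 0}\{st-\lambda f(t)\}=\lambda f^*(s/\lambda)$, and close the duality gap with Slater's condition at $L\equiv 1$. This is correct and matches the approach in the referenced work; your handling of the $\lambda\downarrow 0$ boundary and your caveat about the infinite-dimensional minimax exchange (via lower semicontinuity of the perturbation function rather than compactness) are both appropriate.
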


\vspace{1em}

\begin{lemma}[\cite{vanhandelprob}, Lemma 4.10; Gibbs Variational Principle]
\label{appendix:gibbs-variational}
Let $\mu, \nu \in \mathcal{P} \left( \Xi \right)$ be Borel probability measures supported on $\Xi$. Then
\[
    \log \mathbb{E}_{\mu} \left[ e^{f}\right] = \sup_{\nu} \left\{  \mathbb{E}_{\nu} \left[ f \right] - D_{\mathrm{KL}}\left(\mu \mid \mid \nu \right)\right\}.
\]
\end{lemma}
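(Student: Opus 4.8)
The plan is to prove the identity by exhibiting the maximizing distribution explicitly — the Gibbs tilt of $\mu$ — and then reading off both the upper bound and its attainment from the nonnegativity of relative entropy. I will work under the convention that makes the right-hand side well posed: the divergence appearing is $D_{\mathrm{KL}}(\nu\,\|\,\mu)$, and (as is automatic in all our applications, where $f$ is a bounded loss) I assume $Z:=\mathbb{E}_{\mu}[e^{f}]\in(0,\infty)$, the edge cases $Z\in\{0,\infty\}$ following by truncating $f$ and invoking monotone convergence. Define the Gibbs measure $\mu^{f}$ by $\frac{d\mu^{f}}{d\mu}:=e^{f}/Z$; since $e^{f}>0$ this is a probability measure mutually absolutely continuous with $\mu$, so $\log\frac{d\mu}{d\mu^{f}}=\log Z - f$ is finite $\mu$-a.e.

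The first key step is a one-line change-of-measure identity. For any $\nu\ll\mu$, factor $\frac{d\nu}{d\mu}=\frac{d\nu}{d\mu^{f}}\cdot\frac{d\mu^{f}}{d\mu}$, take logarithms, and integrate against $\nu$, to get
\[
   D_{\mathrm{KL}}(\nu\,\|\,\mu)\;=\;D_{\mathrm{KL}}(\nu\,\|\,\mu^{f})\;+\;\mathbb{E}_{\nu}[f]\;-\;\log Z ,
\]
equivalently $\mathbb{E}_{\nu}[f]-D_{\mathrm{KL}}(\nu\,\|\,\mu)=\log Z - D_{\mathrm{KL}}(\nu\,\|\,\mu^{f})$. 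The second key step is Gibbs' inequality $D_{\mathrm{KL}}(\nu\,\|\,\mu^{f})\ge 0$, obtained by applying Jensen's inequality to the convex function $t\mapsto t\log t$ with the density $d\nu/d\mu^{f}$ under $\mu^{f}$. Combining the two gives $\mathbb{E}_{\nu}[f]-D_{\mathrm{KL}}(\nu\,\|\,\mu)\le\log Z$ for every admissible $\nu$; any $\nu$ not absolutely continuous with respect to $\mu$ has $D_{\mathrm{KL}}(\nu\,\|\,\mu)=+\infty$ and so cannot contribute to the supremum. Hence $\sup_{\nu}\bigl\{\mathbb{E}_{\nu}[f]-D_{\mathrm{KL}}(\nu\,\|\,\mu)\bigr\}\le\log Z=\log\mathbb{E}_{\mu}[e^{f}]$.

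For the reverse inequality I will simply substitute $\nu=\mu^{f}$: then $D_{\mathrm{KL}}(\mu^{f}\,\|\,\mu^{f})=0$, so the displayed identity yields $\mathbb{E}_{\mu^{f}}[f]-D_{\mathrm{KL}}(\mu^{f}\,\|\,\mu)=\log Z$, proving the supremum is at least $\log Z$; the needed finiteness of $\mathbb{E}_{\mu^{f}}[|f|]$ and of $D_{\mathrm{KL}}(\mu^{f}\,\|\,\mu)=\mathbb{E}_{\mu}\bigl[(e^{f}/Z)\log(e^{f}/Z)\bigr]$ is immediate for bounded $f$ (and follows in general from $Z<\infty$ together with $xe^{x}\ge -e^{-1}$). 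Together with the previous paragraph this gives the claimed equality. I expect the only genuinely delicate point to be the measure-theoretic bookkeeping in the unbounded-$f$ case — ensuring $\mathbb{E}_{\nu}[f]-D_{\mathrm{KL}}(\nu\,\|\,\mu)$ is never an indeterminate $\infty-\infty$ along the supremizing sequence and that the chain-rule factorization of Radon–Nikodym derivatives is valid off the relevant null sets — but the entire analytic content is carried by the nonnegativity of $D_{\mathrm{KL}}(\cdot\,\|\,\mu^{f})$ and the chain-rule identity, so for the bounded losses used throughout the paper the argument is completely elementary.
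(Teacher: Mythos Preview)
Your proof is correct and is the standard Gibbs--tilt argument: define $\mu^f$ with $d\mu^f/d\mu=e^f/Z$, use the chain rule $D_{\mathrm{KL}}(\nu\|\mu)=D_{\mathrm{KL}}(\nu\|\mu^f)+\mathbb{E}_\nu[f]-\log Z$, apply nonnegativity of KL for the upper bound, and plug in $\nu=\mu^f$ for attainment. You also correctly flag that the statement as written should read $D_{\mathrm{KL}}(\nu\|\mu)$ rather than $D_{\mathrm{KL}}(\mu\|\nu)$.

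There is nothing to compare against here: the paper does not supply its own proof of this lemma. It is listed under ``Auxiliary Technical Tools'' with a citation to van Handel's lecture notes (Lemma~4.10) and is simply invoked as a black box in the KL dual-remainder bound (Appendix~\ref{appendix:fast-rate-kl}). Your argument is exactly the one that appears in that reference.
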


\subsection{Empirical Process Theory}
\begin{lemma}[\cite{van1996weak}, Lemma 2.3.1; Symmetrization]
\label{appendix:symmetrization}
For every nondecreasing, convex $\Phi:\mathbb{R}\to\mathbb{R}$ and class of measurable functions $\mathcal{F}$,
\[
  \mathbb{E}^*\!\Bigl[\Phi\bigl(\|P_n - P\|_{\mathcal{F}}\bigr)\Bigr]
  \;\le\;
  \mathbb{E}^*\!\Bigl[\Phi\bigl(2\,\|P_n^{\circ}\|_{\mathcal{F}}\bigr)\Bigr],
\]
where the outer expectations $\mathbb{E}^*$ are taken over the data generating distribution and Rademacher random variables and $P_n^{\circ}$ is the symmetrized process.
\end{lemma}

\vspace{1em}

\begin{corollary}[\cite{boucheron2013concentration}, Corollary 13.2; Dudley's Entropy Integral]
\label{cor:dudley}
Let $\mathcal T$ be a finite pseudometric space and let $(X_t)_{t\in\mathcal T}$ be a collection of random variables such that, for all $t,t'\in\mathcal T$ and all $\lambda>0$,
\[
  \log \mathbb{E}\bigl[e^{\lambda\,(X_t - X_{t'})}\bigr]
  \;\le\;
  \frac{\lambda^2\,d^2(t,t')}{2}\,.
\]
Then for any fixed $t_0\in\mathcal T$, if we set $\delta \;=\;\sup_{t\in\mathcal T} d(t,t_0)$, it holds that
\[
  \mathbb{E}\Bigl[\sup_{t\in\mathcal T}\bigl(X_t - X_{t_0}\bigr)\Bigr]
  \;\le\;
  12 \int_{0}^{\delta/2} \sqrt{\log \mathcal{N}_{\epsilon}(\mathcal{T}; d\bigr)}\;\mathrm{d}u\,,
\]
where $\mathcal{N}_{\epsilon}(\mathcal{T}; d\bigr)$ is the covering number for $\mathcal{T}$ at scale $\epsilon$ under metric $d$. Consequently for a b-uniformly bounded class of functions $\mathcal{F}$,
\[
    \mathbb{E}_{\varepsilon} \left[ \sup_{f \in \mathcal{F}} \abs{\frac{1}{n} \sum_{i=1}^{n} \varepsilon_{i}f(x_{i})} \right] \leq \frac{12}{\sqrt{n}} \int_{0}^{2b} \sqrt{\log \mathcal{N}_{\epsilon}(\mathcal{T}; d\bigr)} du,
\]
where $\varepsilon_i$ are i.i.d Rademacher random variables.
\end{corollary}

\vspace{1em}


\begin{theorem}[\cite{shalev2014understanding}; Theorem 26.5]
\label{theorem:uniform-rademacher-deviation}
Let \( \mathcal{F} \) be a class of real-valued functions \( f : \mathcal{X} \to \mathbb{R} \), and let \( X_1, \ldots, X_n \) be independent samples from a distribution \( P \). Define the empirical measure
\[
P_n f \;:=\; \frac{1}{n} \sum_{i=1}^n f(X_i).
\]
Let $\varepsilon_{i} \stackrel{\mathrm{i.i.d}}{\sim} \mathrm{Unif}\left( \left\{ -1, 1 \right\} \right)$ be independent Rademacher random variables. The empirical Rademacher complexity of \( \mathcal{F} \) is defined by
\[
\mathfrak{R}_n(\mathcal{F})
\;:=\;
\mathbb{E}_{\varepsilon}\!\left[
\sup_{f \in \mathcal{F}}
\frac{1}{n}\sum_{i=1}^n \varepsilon_i f(X_i)
\right],
\]
where the expectation is taken conditional on \( X_1,\dots,X_n \). Then with probability at least \( 1-\delta \), the following holds uniformly for all \( f \in \mathcal{F} \):
\[
\sup_{f \in \mathcal{F}} \bigl( P f - P_n f \bigr)
\;\le\;
2 \mathfrak{R}_n(\mathcal{F})
\;+\;
4 \sup_{f \in \mathcal{F}} \|f\|_{\infty}
\sqrt{\frac{2\ln(2/\delta)}{n}}.
\]
\end{theorem}

\subsection{Concentration Inequalities}

\begin{lemma}[\cite{boucheron2013concentration}, Lemma 2.2; Hoeffding's Lemma]
\label{appendix:hoeffding-lemma}
Let $Y$ be a random variable with $\mathbb{E}[Y]=0$ and almost surely $Y\in[a,b]$. Define $\psi_Y(\lambda) = \log \mathbb{E}\bigl[e^{\lambda Y}\bigr]$. Then for all $\lambda\in\mathbb{R}$, $\psi_Y''(\lambda)\le\frac{(b - a)^2}{4}$ and consequently $Y$ is sub-Gaussian with proxy variance $(b-a)^2/4$, i.e. $Y \sim \mathcal{SG} \; \!\left(\frac{b-a}{2}\right)$.
\end{lemma}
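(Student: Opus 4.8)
The plan is the classical tilted-measure argument. First I would record that since $Y\in[a,b]$ almost surely, the moment generating function $M(\lambda):=\mathbb{E}[e^{\lambda Y}]$ is finite for every $\lambda\in\mathbb{R}$ and is smooth: $e^{\lambda Y}$ and all products $Y^k e^{\lambda Y}$ are bounded on compact $\lambda$-intervals, so dominated convergence licenses differentiating under the expectation twice, giving $M'(\lambda)=\mathbb{E}[Ye^{\lambda Y}]$ and $M''(\lambda)=\mathbb{E}[Y^2 e^{\lambda Y}]$. For fixed $\lambda$ introduce the exponentially tilted probability measure $d\mathbb{P}_\lambda=\frac{e^{\lambda Y}}{M(\lambda)}\,d\mathbb{P}$, which is well-defined because $M(\lambda)>0$.

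Next I would compute the derivatives of $\psi_Y=\log M$. A direct calculation yields
\[
\psi_Y'(\lambda)=\frac{M'(\lambda)}{M(\lambda)}=\mathbb{E}_{\mathbb{P}_\lambda}[Y],
\qquad
\psi_Y''(\lambda)=\frac{M''(\lambda)}{M(\lambda)}-\Bigl(\frac{M'(\lambda)}{M(\lambda)}\Bigr)^2=\mathbb{E}_{\mathbb{P}_\lambda}[Y^2]-\bigl(\mathbb{E}_{\mathbb{P}_\lambda}[Y]\bigr)^2=\Var_{\mathbb{P}_\lambda}(Y).
\]
Since $\mathbb{P}_\lambda$ is absolutely continuous with respect to $\mathbb{P}$, we still have $Y\in[a,b]$ $\mathbb{P}_\lambda$-a.s. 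For any random variable supported in $[a,b]$ and any constant $c$, $\Var(Y)\le\mathbb{E}[(Y-c)^2]$; taking $c=(a+b)/2$ and using $|Y-(a+b)/2|\le(b-a)/2$ a.s.\ gives $\Var_{\mathbb{P}_\lambda}(Y)\le(b-a)^2/4$. Hence $\psi_Y''(\lambda)\le(b-a)^2/4$ for all $\lambda$, which is the first claim.

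For the sub-Gaussian conclusion I would use the hypothesis $\mathbb{E}[Y]=0$, which gives $\psi_Y(0)=\log 1=0$ and $\psi_Y'(0)=\mathbb{E}_{\mathbb{P}_0}[Y]=\mathbb{E}[Y]=0$. A second-order Taylor expansion with Lagrange remainder then yields, for some $\xi$ between $0$ and $\lambda$,
\[
\psi_Y(\lambda)=\psi_Y(0)+\lambda\,\psi_Y'(0)+\tfrac{\lambda^2}{2}\,\psi_Y''(\xi)\le\frac{\lambda^2}{2}\cdot\frac{(b-a)^2}{4},
\]
so $\mathbb{E}[e^{\lambda Y}]\le\exp\!\bigl(\lambda^2\sigma^2/2\bigr)$ with $\sigma^2=(b-a)^2/4$, i.e.\ $Y\sim\mathcal{SG}((b-a)/2)$. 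There is no genuine obstacle here; the only point requiring a little care is the interchange of differentiation and expectation, which is immediate from boundedness of $Y$, and the identification of $\psi_Y''(\lambda)$ as a variance under the tilted law — the rest is the elementary $[a,b]$-variance bound and a Taylor step.
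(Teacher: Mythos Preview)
Your argument is correct and is precisely the standard tilted-measure proof of Hoeffding's lemma. Note, however, that the paper does not actually supply a proof of this statement: it is listed among the auxiliary technical tools with a citation to \cite{boucheron2013concentration}, Lemma~2.2, and is used as a black box. Your write-up essentially reproduces the argument in that reference, so there is nothing to compare against in the paper itself.
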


Using Hoeffding's lemma, one can prove Hoeffding's inequality using a standard Chernoff bound argument.

\vspace{1em}

\begin{lemma}[Hoeffding’s inequality]
\label{appendix:hoeffding}
Let $X_1,\dots,X_n$ be independent with $X_i\in[a_i,b_i]$ almost surely, and define
\[
S=\sum_{i=1}^n (X_i - \mathbb E[X_i]).
\]
Then for every $t>0$,
\[
\Pr \left(S \geq t \right) \leq \exp \left(- \frac{2t^2}{ \sum_{i=1}^n(b_i-a_i)^2} \right).
\]
In particular, if $X_1,\dots,X_n$ are i.i.d.\ with mean $\mu$ and support $[a,b]$, then for all $t>0$
\[
\Pr \left(S \geq t \right) \leq \exp \left(- \frac{2t^2}{ \sum_{i=1}^n(b_i-a_i)^2} \right).
\left(\abs{\frac1n\sum_{i=1}^n X_i \;-\;\mathbb{E}\left[X \right]} \right)\ge t\bigr)
\;\le\;
2\exp\left(-\,\frac{2nt^2}{(b-a)^2}\right).
\]
\end{lemma}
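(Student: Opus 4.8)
The plan is to prove the one-sided bound by the standard Chernoff-bound argument and then derive the i.i.d.\ two-sided corollary as a direct consequence. First I would fix $\lambda>0$ and apply Markov's inequality to the nonnegative random variable $e^{\lambda S}$, obtaining $\mathbb P(S\ge t)=\mathbb P(e^{\lambda S}\ge e^{\lambda t})\le e^{-\lambda t}\,\mathbb E[e^{\lambda S}]$. Writing $S=\sum_{i=1}^n Y_i$ with $Y_i:=X_i-\mathbb E[X_i]$, independence of the $X_i$ gives $\mathbb E[e^{\lambda S}]=\prod_{i=1}^n\mathbb E[e^{\lambda Y_i}]$, so the problem reduces to bounding each moment-generating factor.

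Next I would invoke Hoeffding's Lemma (Lemma~\ref{appendix:hoeffding-lemma}): since $\mathbb E[Y_i]=0$ and $Y_i\in[a_i-\mathbb E X_i,\;b_i-\mathbb E X_i]$, an interval of length $b_i-a_i$, the cumulant function $\psi_{Y_i}(\lambda)=\log\mathbb E[e^{\lambda Y_i}]$ satisfies $\psi_{Y_i}(0)=\psi_{Y_i}'(0)=0$ and $\psi_{Y_i}''(\lambda)\le (b_i-a_i)^2/4$; a second-order Taylor expansion with this curvature bound yields $\mathbb E[e^{\lambda Y_i}]\le\exp\!\bigl(\lambda^2(b_i-a_i)^2/8\bigr)$. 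Multiplying over $i$ and combining with the Chernoff step gives, for every $\lambda>0$,
\[
\mathbb P(S\ge t)\;\le\;\exp\!\Bigl(-\lambda t+\tfrac{\lambda^2}{8}\textstyle\sum_{i=1}^n(b_i-a_i)^2\Bigr).
\]
Optimizing the exponent over $\lambda>0$ — a one-dimensional quadratic minimization with minimizer $\lambda^\star=4t/\sum_{i=1}^n(b_i-a_i)^2$ — produces the claimed bound $\mathbb P(S\ge t)\le\exp\!\bigl(-2t^2/\sum_{i=1}^n(b_i-a_i)^2\bigr)$.

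For the i.i.d.\ corollary I would set $a_i=a$, $b_i=b$, $\mathbb E[X_i]=\mu$, and apply the one-sided bound to $S=\sum_{i=1}^n(X_i-\mu)$ with threshold $nt$ in place of $t$; since $\sum_{i=1}^n(b_i-a_i)^2=n(b-a)^2$, this gives $\mathbb P\bigl(\frac1n\sum X_i-\mu\ge t\bigr)\le\exp\!\bigl(-2nt^2/(b-a)^2\bigr)$. Applying the identical argument to $-X_1,\dots,-X_n$ (which have support $[-b,-a]$ of the same length) bounds the lower tail by the same quantity, and a union bound over the two events yields the factor-$2$ two-sided inequality. There is no real obstacle here — this is a classical textbook result — so the write-up is mostly bookkeeping; the only points deserving care are (i) correctly translating the proxy-variance bound $(b-a)^2/4$ from Hoeffding's Lemma into the MGF bound with the $1/8$ constant, and (ii) tracking the $n$-scaling when passing from $S$ to the sample mean in the corollary.
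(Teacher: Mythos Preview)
Your proposal is correct and matches the paper's approach exactly: the paper does not spell out a proof but simply notes that Hoeffding's inequality follows from Hoeffding's Lemma (Lemma~\ref{appendix:hoeffding-lemma}) via a standard Chernoff-bound argument, which is precisely what you have outlined.
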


\vspace{1em}

\begin{theorem}[\cite{Tropp2015AnIT}, Theorem 5.1.1; Matrix Chernoff]
    \label{appendix:matrix-chernoff}
    Consider a finite sequence $\{X_k\}$ of independent, random, Hermitian matrices with common dimension $d$. Assume that
\[
0 \le \lambda_{\min}(X_k)
\quad\text{and}\quad
\lambda_{\max}(X_k) \le L
\;\;\text{for each index }k.
\]
Introduce the random matrix
\[
Y := \sum_k X_k .
\]
Define the minimum eigenvalue $\mu_{\min}$ and maximum eigenvalue $\mu_{\max}$ of the expectation $\mathbb{E}Y$ by
\begin{align}
\mu_{\min}
&:= \lambda_{\min}(\mathbb{E}Y)
= \lambda_{\min}\!\Bigl(\sum_k \mathbb{E}X_k\Bigr),
\label{eq:chernoff-mumin}\\
\mu_{\max}
&:= \lambda_{\max}(\mathbb{E}Y)
= \lambda_{\max}\!\Bigl(\sum_k \mathbb{E}X_k\Bigr).
\label{eq:chernoff-mumax}
\end{align}
Then, for $\theta>0$,
\begin{align}
\mathbb{E}\,\lambda_{\min}(Y)
&\ge \frac{1-e^{-\theta}}{\theta}\,\mu_{\min}
\;-\;\frac{1}{\theta}\,L\log d,
\label{eq:chernoff-exp-min}\\
\mathbb{E}\,\lambda_{\max}(Y)
&\le \frac{e^{\theta}-1}{\theta}\,\mu_{\max}
\;+\;\frac{1}{\theta}\,L\log d.
\label{eq:chernoff-exp-max}
\end{align}
Furthermore,
\begin{align}
\Pr\!\left(\lambda_{\min}(Y)\le (1-\varepsilon)\mu_{\min}\right)
&\le
d\left[\frac{e^{-\varepsilon}}{(1-\varepsilon)^{\,1-\varepsilon}}\right]^{\mu_{\min}/L},
\;\; \varepsilon\in[0,1),
\label{eq:chernoff-tail-min}\\
\Pr\!\left(\lambda_{\max}(Y)\ge (1+\varepsilon)\mu_{\max}\right)
&\le
d\left[\frac{e^{\varepsilon}}{(1+\varepsilon)^{\,1+\varepsilon}}\right]^{\mu_{\max}/L},
\;\; \varepsilon\ge 0.
\label{eq:chernoff-tail-max}
\end{align}
\end{theorem}

\subsection{Real and Convex Analysis}

\begin{theorem}[\cite{rudin1953principles}, Theorem 4.16; Weierstrass Extreme Value Theorem]
    \label{appendix:extreme-value-theorem}
    Suppose $f$ is continuous real function on a compact metric space $X$, and 
    \[
        M = \sup_{p \in X} f(p), \quad m = \inf_{p \in X} f(p).
    \]
    Then there exists points $p, q \in X$ such that $f(p) = M$ and $f(q) = m$.
\end{theorem}

\vspace{1em}

\begin{theorem}[\cite{aliprantis2007infinite}, Theorem 17.31; Berge Maximum Theorem]
    \label{appendix:berge-maximum-theorem}
    Let $\varphi : X \rightrightarrows Y$ be a continuous correspondence between topological spaces with nonempty compact values, and suppose $f : \operatorname{Gr}(\varphi) \to \mathbb{R}$ is continuous. Define the value function $m : X \to \mathbb{R}$ by
\[
m(x) = \max_{y \in \varphi(x)} f(x,y),
\]
and the correspondence of maximizers $\mu : X \rightrightarrows Y$ by
\[
\mu(x) = \{\, y \in \varphi(x) : f(x,y) = m(x) \,\}.
\]
Then:
\begin{enumerate}
\item The value function $m$ is continuous.
\item The correspondence $\mu$ has nonempty compact values.
\item If either $f$ has a continuous extension to all of $X \times Y$ or $Y$ is Hausdorff, then $\mu$ is upper hemicontinuous.
\end{enumerate}
\end{theorem}

\vspace{1em}

\begin{theorem}[\cite{aliprantis2007infinite}, Theorem 18.13; Kuratowski–Ryll-Nardzewski Selection Theorem]
    \label{appendix:krn-selection-theorem}
    Let $X$ be a Polish space, let ${\mathcal B}(X)$ denote the Borel $\sigma$-algebra on $X$, let $(\Omega,{\mathcal F})$ be a measurable space, and let $\psi$ be a multifunction on $\Omega$ taking values in the collection of nonempty closed subsets of $X$. Suppose that $\psi$ is ${\mathcal F}$-weakly measurable, that is, for every open subset $U \subseteq X$,
\[
\{\omega \in \Omega : \psi(\omega)\cap U \neq \emptyset\} \in {\mathcal F}.
\]
Then $\psi$ admits a selection $s:\Omega \to X$ such that $s(\omega)\in \psi(\omega)$ for all $\omega\in\Omega$, and $s$ is ${\mathcal F}$--${\mathcal B}(X)$ measurable.
\end{theorem}

\vspace{1em}

\begin{theorem}[\cite{rockafellar2009variational}, Exercise 8.31; Subgradients is the convex hull of active gradients]
    \label{appendix:subgradient-convex-hull}
    Suppose
\[
f(x) = \sup_{\alpha \in \mathcal{A}} f_\alpha(x)
\]
for a finite collection of smooth upper semi-continuous functions $\alpha \mapsto f_{\alpha}(x)$ for each $x$. Then
\[
\partial f(\bar x) = \operatorname{co}\bigl\{\, \nabla f_\alpha(\bar x) \;:\; f_{\alpha}(x) = f(x) \,\bigr\},
\;\;
\partial^\infty f(\bar x) = \{0\},
\]
and the directional derivative satisfies, for every $w\in\mathbb{R}^n$,
\[
df(\bar x)(w)
= \sup_{\alpha \in \mathcal{A}} \bigl\langle \nabla f_\alpha(\bar x), w \bigr\rangle
= \sup_{\alpha \in \mathcal{A}} df_\alpha(\bar x)(w).
\]

\end{theorem}

\section{Proofs of Uniform Boundedness and Lipschitzness of $\ell_{t}(z; \theta)$}

\subsection{Uniform Boundedness of $\ell_{t}(z;\theta)$}
\label{appendix:uniform-and-lipschitz-bound}
We first prove that $\ell_{t}(z; \theta)$ is uniformly bounded.
\vspace{1em}
\begin{lemma}[Uniform bound on $\ell_{t}(z;\theta)$]
\label{appendix:uniform-bound-l}
Let $K_{g} = \sup_{z, \theta} |g_{t}(z; \theta)| \leq 8B / \eta + 2F$ where $\ell_{t}(z;\theta) = g_{t}(z; \theta)^{2}$ with $z = (x, a^{1}, a^{2}) \sim P^{\circ}$. Then $\sup_{z, \theta}|\ell_{t}(z;\theta)| = K_{\ell} = K_{g}^{2}$.
\end{lemma}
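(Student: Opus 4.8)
The plan is to decompose the REBEL residual $g(z;\theta)$ into its two natural pieces — the rescaled policy log-ratio term and the reward-difference term — bound each uniformly over $z\in\mathcal Z$ and $\theta\in\Theta$, combine by the triangle inequality, and then pass from $g$ to $\ell=g^2$ trivially. Concretely, $g(z;\theta)=\frac1\eta\big[\log\frac{\pi_\theta(a^1\mid x)}{\pi_{\theta_t}(a^1\mid x)}-\log\frac{\pi_\theta(a^2\mid x)}{\pi_{\theta_t}(a^2\mid x)}\big]-[r(x,a^1)-r(x,a^2)]$, so it suffices to bound the bracketed log-ratio difference and the reward difference separately.

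For the reward piece I would invoke Assumption~\ref{assumption:linear-reward-class}: since $r_\omega(x,a)=\phi(x,a)^\top\omega$ with $\|\phi(x,a)\|_2\le 1$ and $\|\omega\|_2\le F$, Cauchy--Schwarz gives $|r(x,a)|\le F$, hence $|r(x,a^1)-r(x,a^2)|\le 2F$ for every $z$. For the policy piece I would use the log-linear form of Assumption~\ref{assumption:log-linear-policy}: $\log\pi_\theta(a\mid x)=\theta^\top\psi(x,a)-\log Z_\theta(x)$ with $Z_\theta(x)=\sum_{a'}e^{\theta^\top\psi(x,a')}$, so that $\log\frac{\pi_\theta(a\mid x)}{\pi_{\theta_t}(a\mid x)}=(\theta-\theta_t)^\top\psi(x,a)-\log\frac{Z_\theta(x)}{Z_{\theta_t}(x)}$. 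The key observation is that $\frac{Z_\theta(x)}{Z_{\theta_t}(x)}=\mathbb E_{a'\sim\pi_{\theta_t}(\cdot\mid x)}\big[e^{(\theta-\theta_t)^\top\psi(x,a')}\big]\in[e^{-2B},e^{2B}]$, because $|(\theta-\theta_t)^\top\psi(x,a')|\le\|\theta-\theta_t\|_2\|\psi(x,a')\|_2\le 2B$; together with the same bound on $(\theta-\theta_t)^\top\psi(x,a)$ this yields $\big|\log\frac{\pi_\theta(a\mid x)}{\pi_{\theta_t}(a\mid x)}\big|\le 4B$ for each action, hence $\frac1\eta$ times the log-ratio difference is at most $8B/\eta$. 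Adding the reward bound gives $|g(z;\theta)|\le 8B/\eta+2F$ for all $z\in\mathcal Z,\ \theta\in\Theta$, i.e. $K_g:=\sup_{z,\theta}|g(z;\theta)|\le 8B/\eta+2F$. Finally, since $\ell(z;\theta)=g(z;\theta)^2\ge 0$, one has $\sup_{z,\theta}|\ell(z;\theta)|=\sup_{z,\theta}g(z;\theta)^2=\big(\sup_{z,\theta}|g(z;\theta)|\big)^2=K_g^2=:K_\ell$, which is the claim.

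The only step requiring any care is the uniform control of the log-partition ratio: bounding $\log\pi_\theta(a\mid x)$ on its own would introduce a spurious $\log|\mathcal A|$ factor, so the argument must stay at the level of ratios and differences, where the partition functions either cancel outright (the double difference in fact equals $\frac1\eta(\theta-\theta_t)^\top(\psi(x,a^1)-\psi(x,a^2))$, which is $\le 4B/\eta$ since $\|\psi(x,a^1)-\psi(x,a^2)\|_2\le 2$) or collapse into the bounded expectation $\mathbb E_{a'\sim\pi_{\theta_t}}[e^{(\theta-\theta_t)^\top\psi(x,a')}]$; either route establishes the stated bound, and everything else is just Cauchy--Schwarz and the triangle inequality.
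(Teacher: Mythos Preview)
Your proof is correct and follows essentially the same decomposition as the paper: bound each single log-ratio $\bigl|\log\frac{\pi_\theta(a\mid x)}{\pi_{\theta_t}(a\mid x)}\bigr|\le 4B$ via the log-linear form (the paper brackets each log-partition in $[\log|\mathcal A|-B,\log|\mathcal A|+B]$, while your expectation representation $Z_\theta/Z_{\theta_t}=\mathbb E_{\pi_{\theta_t}}[e^{(\theta-\theta_t)^\top\psi}]\in[e^{-2B},e^{2B}]$ is a cleaner variant of the same step), bound the reward difference by $2F$, and combine. Your parenthetical observation that the partition functions cancel exactly in the double difference, yielding $4B/\eta$ rather than $8B/\eta$, is in fact sharper than what the paper records, but as you note either route suffices for the stated bound.
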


\begin{proof}[Proof of Lemma~\ref{appendix:uniform-bound-l}]
    Since we have that $\pi_{\theta}, \pi_{\theta_{t}} \in \Pi$, notice that
    \begin{align*}
        \log \left( \frac{\pi_{\theta}(a \mid x)}{\pi_{\theta_{t}}(a \mid x)}\right) &= \log \pi_{\theta}(a \mid x)  - \log \pi_{\theta_{t}}(a \mid x) \\
        &= \log \left( \frac{\exp \left(\theta^{\top} \psi(x, a) \right)}{\sum_{a^{\prime} \in \mathcal{A}}\exp \left(\theta^{\top} \psi(x, a^{\prime}) \right)}\right) - \log \left( \frac{\exp \left(\theta_{t}^{\top} \psi(x, a) \right)}{\sum_{a^{\prime} \in \mathcal{A}}\exp \left(\theta_{t}^{\top} \psi(x, a^{\prime}) \right)}\right) \\
        &= \log \left( \exp \left( \left(\theta - \theta_{t} \right)^{\top} \psi(x, a) \right) \right) + \log \left( \sum_{a^{\prime} \in \mathcal{A}} \exp \left( \theta_{t}^{\top}\psi(x, a^{\prime}) \right) \right) - \log \left( \sum_{a^{\prime} \in \mathcal{A}} \exp \left( \theta^{\top}\psi(x, a^{\prime}) \right) \right) \\
        &= \left(\theta - \theta_{t} \right)^{\top} \psi(x, a) + \log \left( \sum_{a^{\prime} \in \mathcal{A}} \exp \left( \theta_{t}^{\top}\psi(x, a^{\prime}) \right) \right) - \log \left( \sum_{a^{\prime} \in \mathcal{A}} \exp \left( \theta^{\top}\psi(x, a^{\prime}) \right) \right).
    \end{align*}
Now since $\theta, \theta_{t} \in \Theta$ and $\max_{x, a} \norm{\psi(x, a)}_{2} \leq 1$, it follows that $\log \left( \sum_{a^{\prime} \in \mathcal{A}} \exp \left( \theta^{\top}\psi(x, a^{\prime}) \right) \right) \in [\log(\abs{\mathcal{A}}) - B, \log(\abs{\mathcal{A}}) + B]$ by Cauchy-Schwartz. Note that we are assuming finite action spaces i.e. $\abs{\mathcal{A}} < \infty$. We can easily generalize this to countably infinite action spaces by assuming the policy $\pi_{\theta}(\cdot \mid x)$ is an exponential tilt of the base distribution $\mu(\cdot \mid x)$. That is, assume
\[
    \pi_{\theta}(a \mid x) = \frac{\exp(\theta^{\top}\psi(x, a))}{\int_{a^{\prime}} \exp(\theta^{\top}\psi(x, a^{\prime})) d\mu(a^{\prime} \mid x)}.
\]
The argument above will still hold similarly. We continue assuming finite action spaces. Using this, we find

\begin{align*}
    \log \left( \frac{\pi_{\theta}(a \mid x)}{\pi_{\theta_{t}}(a \mid x)}\right) &= \left(\theta - \theta_{t} \right)^{\top} \psi(x, a) + \log \left( \sum_{a^{\prime} \in \mathcal{A}} \exp \left( \theta_{t}^{\top}\psi(x, a^{\prime}) \right) \right) - \log \left( \sum_{a^{\prime} \in \mathcal{A}} \exp \left( \theta^{\top}\psi(x, a^{\prime}) \right) \right) \\
    &\leq \max_{x, a} \norm{\psi(x, a)}_{2} \left( \norm{\theta}_{2} + \norm{\theta_{t}}_{2} \right) + \log(\abs{\mathcal{A}}) + B - (\log(\abs{\mathcal{A}}) - B) \\
    &\leq 4B,
\end{align*}

where the first inequality holds from the CS and triangle inequality. Now, we also have that $r \in \mathcal{F}$. Thus,
\begin{align*}
    r(x, a) - r(x, a^{\prime}) &= \phi(x, a)^{\top}\omega - \phi(x, a^{\prime})^{\top}\omega \\
    &\leq \left( ||\phi(x, a)||_{2} + ||\phi(x, a^{\prime})||_{2} \right) ||\omega||_{2} \\
    &\leq 2F,
\end{align*}
where the first inequality holds from Cauchy-Schwartz and Triangle inequality. Now recall the REBEL update \ref{eq:rebel-regression}. Using these facts we have that $|g_{t}(z; \theta)| \leq 8B / \eta + 2F$ so $K_{g} = \sup_{z, \theta} |g_{t}(z; \theta)| \leq 8B / \eta + 2F$. Since $\ell_{t}(z;\theta) = g_{t}(z; \theta)^{2}$, $K_{\ell} = K_{g}^{2}$.
\end{proof}

\subsection{Lipschitz bound on $\ell(z;\theta)$}
Now we prove that $\ell_{t}(z; \theta)$ is $4K_{g} / \eta$-Lipschitz in $\theta$. 
\vspace{1em}
\begin{lemma}[Lipschitz bound on $\ell(z;\theta)$]
\label{appendix:lipschitz-bound-l}
$\ell_{t}(z; \theta)$ is $\frac{4K_{g}}{\eta}$-Lipschitz in $\theta$. 
\end{lemma}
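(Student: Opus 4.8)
The plan is to bound $\|\nabla_\theta\ell(z;\theta)\|_2$ uniformly and then invoke the mean value inequality on the convex set $\Theta$. Write $\ell(z;\theta)=g(z;\theta)^2$ with
\[
g(z;\theta)=\frac{1}{\eta}\Bigl[\log\tfrac{\pi_\theta(a^{1}\mid x)}{\pi_{\theta_t}(a^{1}\mid x)}-\log\tfrac{\pi_\theta(a^{2}\mid x)}{\pi_{\theta_t}(a^{2}\mid x)}\Bigr]-\bigl[r(x,a^{1})-r(x,a^{2})\bigr].
\]
By the chain rule $\nabla_\theta\ell(z;\theta)=2\,g(z;\theta)\,\nabla_\theta g(z;\theta)$, so $\|\nabla_\theta\ell(z;\theta)\|_2\le 2\,|g(z;\theta)|\,\|\nabla_\theta g(z;\theta)\|_2\le 2K_g\,\|\nabla_\theta g(z;\theta)\|_2$, using the uniform bound $\sup_{z,\theta}|g(z;\theta)|\le K_g$ from Lemma~\ref{appendix:uniform-bound-l}.

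The key step is to compute $\nabla_\theta g$. The reward difference $r(x,a^{1})-r(x,a^{2})$ is $\theta$-free, and for the log-linear class $\log\pi_\theta(a\mid x)=\theta^{\top}\psi(x,a)-\log\sum_{a'}\exp(\theta^{\top}\psi(x,a'))$, whence $\nabla_\theta\log\pi_\theta(a\mid x)=\psi(x,a)-\mathbb{E}_{a'\sim\pi_\theta(\cdot\mid x)}[\psi(x,a')]$. When we take the difference of the two action terms in $g$, the softmax-mean term $\mathbb{E}_{a'\sim\pi_\theta(\cdot\mid x)}[\psi(x,a')]$ is common to both and cancels, giving the clean identity
\[
\nabla_\theta g(z;\theta)=\frac{1}{\eta}\bigl(\psi(x,a^{1})-\psi(x,a^{2})\bigr).
\]
Hence $\|\nabla_\theta g(z;\theta)\|_2\le \tfrac{1}{\eta}\bigl(\|\psi(x,a^{1})\|_2+\|\psi(x,a^{2})\|_2\bigr)\le \tfrac{2}{\eta}$ by the triangle inequality and Assumption~\ref{assumption:log-linear-policy}.

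Combining the two bounds yields $\sup_{z,\theta}\|\nabla_\theta\ell(z;\theta)\|_2\le 2K_g\cdot\tfrac{2}{\eta}=\tfrac{4K_g}{\eta}$. Since $\Theta$ is convex and $\ell(z;\cdot)$ is continuously differentiable, the mean value inequality gives $|\ell(z;\theta)-\ell(z;\theta')|\le\tfrac{4K_g}{\eta}\|\theta-\theta'\|_2$ for all $\theta,\theta'\in\Theta$, which is the claim. I expect no genuine obstacle here; the only point requiring care is recognizing the cancellation of the log-partition gradient in the action-difference, so that the Lipschitz constant of $g$ does not pick up any dependence on $|\mathcal{A}|$ or $B$.
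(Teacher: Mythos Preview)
Your proof is correct and follows essentially the same approach as the paper: both compute $\nabla_\theta g(z;\theta)=\eta^{-1}(\psi(x,a^1)-\psi(x,a^2))$ by exploiting the cancellation of the log-partition term in the action difference, bound its norm by $2/\eta$, and use the chain rule $\nabla_\theta\ell=2g\,\nabla_\theta g$ together with $|g|\le K_g$ to obtain the $4K_g/\eta$ Lipschitz constant. Your explicit invocation of the mean value inequality on the convex set $\Theta$ is a nice touch that the paper leaves implicit.
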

\begin{proof}[Proof of Lemma~\ref{appendix:lipschitz-bound-l}]
First, we compute the gradient $\nabla_{\theta} g_{t}(z;\theta)$. Since we are looking at updates with respect to $\theta$, notice that we have the following:
\[
    \nabla_{\theta} g(z;\theta) = \nabla_{\theta} \left( \frac{1}{\eta} \left[ \log \pi_{\theta} (a \mid x) - \log \pi_{\theta} (a^{\prime} \mid x) \right] \right).
\]
Now notice that
\begin{align*}
    \log \pi_{\theta} (a \mid x) - \log \pi_{\theta} (a^{\prime} \mid x) &= \log \left( \exp \left( \theta^{\top} \psi(x, a) \right) \right) - \log \left( \exp \left( \theta^{\top} \psi(x, a^{\prime}) \right) \right)  \\
    &= \theta^{\top} \left( \psi(x, a) - \psi(x, a^{\prime}) \right).
\end{align*}
Thus we find that 
\[
    \nabla_{\theta} g(z;\theta) = \frac{1}{\eta} \left( \psi(x, a) - \psi(x, a^{\prime}) \right).
\]
Thus by triangle inequality, $\sup_{x, a} ||\nabla_{\theta} g(z;\theta)||_{2} \leq 2/\eta$. Now since $\ell(z;\theta) = g(z; \theta)^{2}$, we have that $\nabla_{\theta} \ell(z;\theta) = 2 g(z;\theta) \nabla_{\theta}g(z;\theta)$. From Lemma \ref{appendix:uniform-bound-l}, we know that $K_{g} = \sup_{z, \theta} |g(z; \theta)|$ so we see that $\sup_{x, a} ||\nabla_{\theta} \ell(z;\theta)||_{2} \leq 4K_{g} / \eta$. Thus we can conclude that $\ell_{t}(z;\theta)$ is $4K_g/\eta$-Lipschitz in $\theta$.
\end{proof}

\section{Proof of "Slow Rate" Wasserstein-DRO-REBEL}
\label{appendix:Slow-Wasserstein-DRO-REBEL}
First we prove that $h_{t}(\theta;P) = \mathbb{E}_{z \sim P} \left[ \ell_{t}(z;\theta) \right]$ is strongly convex for any $P$. 

\subsection{Proof of Strong  Convexity of WDRO-REBEL}
\begin{lemma}[Strong convexity of $h$]
\label{appendix:strong-convexity-of-h}
    Let $\ell_{t}(z;\theta)$ be the REBEL loss function. Assume that Assumption \ref{assumption:data-coverage} holds. Then $h_{t}(\theta;P) = \mathbb{E}_{z \sim P} \left[ \ell_{t}(z;\theta) \right]$ is $2/\eta^{2}$-strongly convex with respect to norm $||\cdot||_{\Sigma_{P}}$ where $$\Sigma_{P, t} := \mathbb{E}_{(x,a^{1},a^{2}, \Delta r) \sim P} \left[ \left( \psi(x, a^{1}) - \psi(x, a^{2}) \right) \left( \psi(x, a^{1}) - \psi(x, a^{2}) \right)^\top \right].$$
\end{lemma}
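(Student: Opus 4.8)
The plan is to exploit the special structure of the log-linear policy class (Assumption~\ref{assumption:log-linear-policy}): for such policies the inner bracket of the REBEL residual collapses to a function that is \emph{affine} in $\theta$, so that $\ell(z;\theta)$ becomes an exact quadratic in $\theta$ whose Hessian is a fixed positive-semidefinite matrix not depending on $\theta$. Strong convexity then follows from a direct Hessian computation rather than any empirical-process or duality argument.

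First I would reuse the identity already established in the proof of Lemma~\ref{appendix:uniform-bound-l}: expanding $\log\bigl(\pi_\theta(a\mid x)/\pi_t(a\mid x)\bigr)$ for the log-linear parametrization, the two log-partition terms are independent of the action, hence cancel when one subtracts the $a^1$ and $a^2$ versions. This leaves $\log\frac{\pi_\theta(a^1\mid x)}{\pi_t(a^1\mid x)}-\log\frac{\pi_\theta(a^2\mid x)}{\pi_t(a^2\mid x)}=(\theta-\theta_t)^\top\Delta\psi(z)$, so the residual $g(z;\theta)=\tfrac1\eta(\theta-\theta_t)^\top\Delta\psi(z)-\Delta r(z)$ is affine in $\theta$ with linear part $\tfrac1\eta\Delta\psi(z)$.

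Next I would differentiate twice. Since $g$ is affine and $\ell=g^2$, we get $\nabla^2_\theta\ell(z;\theta)=\tfrac{2}{\eta^2}\Delta\psi(z)\Delta\psi(z)^\top$, a positive-semidefinite rank-one matrix independent of $\theta$. Uniform boundedness of $g$ on $\Theta$ (Lemma~\ref{appendix:uniform-bound-l}) together with $\|\Delta\psi\|_2\le 2$ justifies interchanging $\nabla^2_\theta$ with $\mathbb E_{z\sim\mathbb P}$, giving $\nabla^2_\theta h(\theta;\mathbb P)=\tfrac{2}{\eta^2}\Sigma_{\mathbb P}$ for every $\theta$. Because $h(\cdot;\mathbb P)$ is a quadratic with constant Hessian $\tfrac{2}{\eta^2}\Sigma_{\mathbb P}$, the exact second-order Taylor identity (Lemma~\ref{appendix:beck-linear-approx}) gives $h(\theta';\mathbb P)=h(\theta;\mathbb P)+\langle\nabla h(\theta;\mathbb P),\theta'-\theta\rangle+\tfrac{1}{\eta^2}\|\theta'-\theta\|_{\Sigma_{\mathbb P}}^2$, which is exactly the $\tfrac{2}{\eta^2}$-strong-convexity inequality in the norm $\|v\|_{\Sigma_{\mathbb P}}=\sqrt{v^\top\Sigma_{\mathbb P}v}$. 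Assumption~\ref{assumption:data-coverage} ($\Sigma_{\mathbb P}\succeq\lambda I\succ 0$) ensures this is a genuine norm rather than a seminorm.

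There is no serious obstacle here: the only points needing care are (i) verifying that the log-partition terms genuinely cancel — they do, because they depend only on $x$ and not on the action — and (ii) justifying differentiation under the expectation, which is immediate from the uniform bounds of Appendix~\ref{appendix:uniform-and-lipschitz-bound}. The real leverage was obtained by committing to the log-linear policy class, which turns REBEL's squared-error regression loss into an \emph{exact} quadratic; everything else reduces to a one-line Hessian identity, and this is precisely why the strong-convexity modulus depends only on $\eta$ (and, after taking the supremum over $\mathbb P$ in the robust loss, on $\lambda$), with no Bradley--Terry curvature factor.
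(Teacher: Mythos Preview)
Your proposal is correct and follows essentially the same approach as the paper: both compute the pointwise Hessian $\nabla_\theta^2\ell(z;\theta)=\tfrac{2}{\eta^2}\Delta\psi(z)\Delta\psi(z)^\top$ by exploiting that $g$ is affine in $\theta$ under the log-linear policy class, then pass to the expectation and invoke the second-order Taylor identity (Lemma~\ref{appendix:beck-linear-approx}) together with Assumption~\ref{assumption:data-coverage}. The only cosmetic difference is that the paper first applies Taylor to $\ell(z;\cdot)$ pointwise and then takes expectations of the resulting strong-convexity inequality, whereas you interchange expectation and Hessian first and then apply Taylor to $h$ directly; the content is identical.
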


\begin{proof}[Proof of Lemma \ref{appendix:strong-convexity-of-h}]
We begin by computing the Hessian of $\ell(z;\theta)$ with respect to $\theta$. From Lemma \ref{appendix:lipschitz-bound-l}, we know that $\nabla_{\theta} \ell(z;\theta) = 2 g(z;\theta) \nabla_{\theta}g(z;\theta)$.
Differentiating again with respect to $\theta$ using the product rule, we get:
\[
\nabla_{\theta}^{2} \ell(z;\theta) = 2 \nabla_{\theta}g(z;\theta) \nabla_{\theta}g(z;\theta)^{\top} + 2 g(z;\theta) \nabla_{\theta}^{2}g(z;\theta).
\]
From Lemma \ref{appendix:lipschitz-bound-l}, we know that $\nabla_{\theta}g(z;\theta) = \frac{1}{\eta} \left[ \psi(x, a) - \psi(x, a^{\prime}) \right]$.
Crucially, this gradient does not depend on $\theta$. Therefore, $\nabla_{\theta}^{2}g(z;\theta) = \mathbf{0}$. Substituting this into the Hessian expression, the second term vanishes:

\begin{align*}
    \nabla_{\theta}^{2} \ell(z;\theta) = \frac{2}{\eta^{2}} \left( \psi(x, a) - \psi(x, a^{\prime}) \right) \left( \psi(x, a) - \psi(x, a^{\prime}) \right)^{\top} 
    = \frac{2}{\eta^{2}}\Sigma_z,
\end{align*}

where $\Sigma_z := \left( \psi(x, a) - \psi(x, a^{\prime}) \right) \left( \psi(x, a) - \psi(x, a^{\prime}) \right)^{\top}$. Note that $\Sigma_z$ is a positive semi-definite matrix. Let $\theta, \theta^{\prime} \in \Theta$. By the linear approximation theorem (Lemma \ref{appendix:beck-linear-approx}), there exists $\alpha \in [0, 1]$ and $\tilde{\theta} = \alpha \theta + (1-\alpha)\theta^{\prime}$ such that
\[
    \ell_{t}(z;\theta^{\prime}) - \ell_{t}(z;\theta) - \langle \nabla_{\theta} \ell_{t}(z;\theta), \Delta \rangle = \frac{1}{2}\Delta^{\top} \nabla_{\theta}^{2} \ell_{t}(z;\widetilde{\theta})\Delta = \frac{\mu}{2} ||\Delta||_{\Sigma_{P, t}},
\]

where $\mu = 2/\eta^{2}$. By Lemma \ref{appendix:beck-strong-convexity}, since $\ell_{t}(z;\theta)$ is a convex function of $\theta$ (as its Hessian is positive semi-definite): 

\begin{align*}
    h_{t}(\alpha\theta + (1 - \alpha)\theta^{\prime}) &= \mathbb{E}_{z \sim P} \left[ \ell_{t}\left(z; \alpha\theta + (1 - \alpha)\theta^{\prime}\right)  \right] \\
    &\leq \mathbb{E}_{z \sim P} \left[ \alpha \ell_{t}\left(z; \theta \right) + (1 - \alpha) \ell_{t}\left(z; \theta^{\prime} \right) - \frac{\mu}{2}\alpha(1-\alpha)\norm{\theta - \theta^{\prime}}_{\Sigma_{z}}^{2} \right] \\
    &= \alpha h_{t}(\theta) + (1 - \alpha)h_{t}(\theta^{\prime}) - \frac{\mu}{2}\alpha(1-\alpha) \left( \theta - \theta^{\prime} \right)^{\top} \mathbb{E}_{z \sim P} \left[ \Sigma_{z} \right] \left( \theta - \theta^{\prime} \right) \\
    &= \alpha h_{t}(\theta) + (1 - \alpha)h_{t}(\theta^{\prime}) - \frac{\mu}{2}\alpha(1-\alpha) \norm{\theta - \theta^{\prime}}_{\Sigma_{P, t}}^{2}.
\end{align*}

By Assumption \ref{assumption:data-coverage}, we have that $\Sigma_{P, t}$ is positive definite so $\norm{\cdot}_{\Sigma_{P, t}}$ is a norm. This implies $h$ is $\mu$-strongly convex in the $||\cdot||_{\Sigma_{P, t}}$ norm.
\end{proof}

We now establish strong convexity of $\mathcal{L}_{t}^{\mathcal{W}_{p}}\left(\theta; \varepsilon\right) = \sup_{P \in \mathcal{B}_{\mathcal{W}_{p}} \left( P^{\circ} ; \varepsilon \right)} \mathbb{E}_{z \sim P} \left[ \ell(z;\theta) \right]$.

\begin{lemma}[Strong convexity of $\mathcal{L}_{t}^{\mathcal{W}_{p}}$]
\label{appendix:strong-convexity-of-wasserstein-loss}
Let $\ell_{t}(z;\theta)$ be the REBEL loss function. Then $$\mathcal{L}^{\mathcal{W}_{p}}\left(\theta; \varepsilon\right) = \sup_{P \in \mathcal{B}_{\mathcal{W}_{p}} \left( P^{\circ} ; \varepsilon \right)} \mathbb{E}_{z \sim P} \left[ \ell_{t}(z;\theta) \right],$$ is $2\kappa / \eta^{2}$-strongly convex with respect to Euclidean norm $||\cdot||_{2}$ where $\kappa$ is the regularity parameter from Assumption \ref{assumption:data-coverage}.
\end{lemma}

\begin{proof}[Proof of Lemma \ref{appendix:strong-convexity-of-wasserstein-loss}]
In Lemma \ref{appendix:strong-convexity-of-h}, we proved strong convexity of $h$. By Lemma \ref{appendix:beck-strong-convexity}, for $\theta, \theta^{\prime} \in \Theta$ and $\alpha \in [0, 1]$, this is equivalent to

\[
    h_{t}(\alpha\theta + (1-\alpha)\theta^{\prime};P) \leq \alpha h_{t}(\theta;P) + (1-\alpha)h_{t}(\theta^{\prime};P) - \frac{\mu}{2} \alpha(1-\alpha)||\theta - \theta^{\prime}||_{\Sigma_{P, t}}^{2}.
\]

Taking the supremum over $P$ preserves the convex combination and the negative quadratic term so we get 

\begin{align*}
    \mathcal{L}_{t}^{\mathcal{W}_{p}}\left(\alpha\theta + (1-\alpha)\theta^{\prime}; \varepsilon\right) &= \sup_{P \in \mathcal{B}_{\varepsilon} \left( P^{\circ} ; \mathcal{W}_{p} \right)} h_{t}(\alpha\theta + (1-\alpha)\theta^{\prime};P) \\
    &\leq \sup_{P \in \mathcal{B}_{\varepsilon} \left( P^{\circ} ; \mathcal{W}_{p} \right)} \left[  \alpha h_{t}(\theta;P) + (1-\alpha)h_{t}(\theta^{\prime};P) - \frac{\mu}{2} \alpha(1-\alpha)||\theta - \theta^{\prime}||_{\Sigma_{P}}^{2} \right] \\
    &\leq \alpha \mathcal{L}_{t}^{\mathcal{W}_{p}} \left( \theta; \varepsilon \right) + (1-\alpha) \mathcal{L}_{t}^{\mathcal{W}_{p}} \left( \theta^{\prime}; \varepsilon \right) - \frac{\mu}{2} \alpha(1-\alpha) \inf_{P \in \mathcal{B}_{\mathcal{W}_{p}} \left( P^{\circ} ; \varepsilon \right)}||\theta - \theta^{\prime}||_{\Sigma_{P, t}}^{2} \\
    &\leq \alpha \mathcal{L}_{t}^{\mathcal{W}_{p}} \left( \theta; \varepsilon \right) + (1-\alpha) \mathcal{L}_{t}^{\mathcal{W}_{p}} \left( \theta^{\prime}; \varepsilon \right) - \frac{\mu}{2} \alpha(1-\alpha) \inf_{P \in \mathcal{B}_{\mathcal{W}_{p}} \left( P^{\circ} ; \varepsilon \right)} \lambda_{\mathrm{min}} \left( \Sigma_{P, t} \right) ||\theta - \theta^{\prime}||_{2}^{2} \\
    &\leq \alpha \mathcal{L}_{t}^{\mathcal{W}_{p}} \left( \theta; \varepsilon \right) + (1-\alpha) \mathcal{L}_{t}^{\mathcal{W}_{p}} \left( \theta^{\prime}; \varepsilon \right) - \frac{\mu \kappa}{2} \alpha(1-\alpha)  ||\theta - \theta^{\prime}||_{2}^{2},
\end{align*}

where the second inequality holds from $\sup_{x} \left( f(x) + g(x) \right) \leq \sup_{x} f(x) + \sup_{x}g(x)$, the third inequality holds by the fact that $\Sigma_{P, t} \succeq \lambda_{\mathrm{min}}\left( \Sigma_{P, t} \right)I$, and the last inequality holds from Assumption \ref{assumption:data-coverage}. Thus we conclude that $\mathcal{L}_{t}^{\mathcal{W}_{p}}$ is $\mu\kappa$-strongly convex in the $||\cdot||_{2}$ norm.
\end{proof}

\subsection{Proof of Slow Parameter Estimation Rate of WDRO-REBEL}
We are now ready to prove the "slow rate" estimation error of Wasserstein-DRO-REBEL. Before we prove this, we must make an assumption about the dual variable set we are optimizing over and the metric space over which $(\mathcal{Z}, d)$ is defined
\vspace{1em}
\begin{assumption}
    \label{appendix:assumption-compact-optima-bounded-diameter}
    Assume that the optimal $\Delta^{*} \in [0, \overline{\Delta}]$ and the metric space $(\mathcal{Z}, d)$ has bounded diameter $\mathrm{diam}(\mathcal{Z}) = \sup_{u, v \in \mathcal{Z}} d(u, v) \leq D$. 
\end{assumption}

\begin{proof}[Proof of Theorem \ref{theorem:Wasserstein-DRO-REBEL}]
\label{appendix:proof-slow-rate-wasserstein}
    Let $\theta_{t}^{\mathcal{W}_{p}}$ denote the true population minimizer $\mathrm{arg}\min_{\theta \in \Theta} \mathcal{L}_{t}^{\mathcal{W}_{p}}\left(\theta; \varepsilon\right)$ and $\hat{\theta}_{n, t}^{\mathcal{W}_{p}}$ denote the empirical minimizer $\mathrm{arg}\min_{\theta \in \Theta} \mathcal{L}_{n, t}^{\mathcal{W}_{p}}\left(\theta; \varepsilon\right)$. Let $t \in \left\{ 0, \dots, T-1 \right\}$ and for each $t$, we collect a dataset $\mathcal{D}_{t} = \left\{ (x_{t, i}, y_{t, i}, y_{t, i}^{\prime}) \right\}_{i=1}^{n_{t}}$ with $x_{t, i} \sim \rho$, $y_{t, i}, y_{t, i}^{\prime} \stackrel{\mathrm{i.i.d}}{\sim} \pi_{\theta_{t-1}}(\cdot \mid x_{t, i})$. Define $\mathcal{F}_{t} = \sigma(\theta_{0}, \mathcal{D}_{0}, \dots, \theta_{t-1}, \mathcal{D}_{t-1})$ be the sigma-field containing everything revealed up to the start of iteration $t$
    and $\overline{\mathcal{F}}_{t} = \sigma(\mathcal{F}_{t}, \mathcal{D}_{t})$. In particular, $\theta_{t}$ is $\mathcal{F}_{t}$ measurable. By strong duality for Wasserstein DRO [\ref{appendix:gao-strong-duality-wasserstein}] and Assumption 
    \ref{appendix:assumption-compact-optima-bounded-diameter}, for fixed $\theta$ we have 

    \[
        \mathcal{L}_{t}^{\mathcal{W}_{p}}\left(\theta; \varepsilon\right) = \sup_{\mathbb{P} \in \mathcal{B}_{\mathcal{W}_{p} } \left( P_{t}^{\circ} ; \varepsilon \right) } \mathbb{E}_{z \sim \mathbb{P}} \left[ \ell(z;\theta) \right] = \inf_{\Delta \in [0, \overline{\Delta}]} \left\{ \delta \varepsilon^{p} - \mathbb{E}_{z \sim P_{t}^{\circ}} \left[ \ell_{\Delta, t}(z; \theta) \right]\right\},
    \]

    where $\ell_{\Delta}(z; \theta) = \inf_{z^{\prime} \in \mathcal{Z}} \left\{ \Delta d^{p}(z, z^{\prime}) - \ell(z^{\prime};\theta)\right\}$ where $d$ is the metric used to define the type-p Wasserstein distance. Consider the difference between the population and empirical Wasserstein DRO losses: 

    \begin{align*}
        \lvert \mathcal{L}^{\mathcal{W}_{p}}\left(\theta; \varepsilon\right) - \mathcal{L}_{n}^{\mathcal{W}_{p}}\left(\theta; \varepsilon\right) \rvert &= \left\lvert \sup_{\mathbb{P} \in \mathcal{B}_{\varepsilon} \left( \mathbb{P}^{\circ} ; \mathcal{W}_{p} \right)} \mathbb{E}_{z \sim \mathbb{P}} \left[ \ell(z;\theta) \right] - \sup_{\mathbb{P} \in \mathcal{B}_{\varepsilon} \left( P_{n, t}^{\circ} ; \mathcal{W}_{p} \right)} \mathbb{E}_{z \sim \mathbb{P}} \left[ \ell(z;\theta) \right] \right\rvert \\
        &= \left\lvert  \inf_{\Delta \in [0, \overline{\Delta}]} \left\{ \Delta \varepsilon^{p} - \mathbb{E}_{z \sim P_{t}^{\circ}} \left[ \ell_{\Delta, t}(z; \theta) \right]\right\} - \inf_{\Delta \in [0, \overline{\Delta}]} \left\{ \Delta \varepsilon^{p} - \mathbb{E}_{z \sim P_{n, t}^{\circ}} \left[ \ell_{\Delta, t}(z; \theta) \right]\right\}  \right\rvert \\
        &\leq \sup_{\Delta \in [0, \overline{\Delta}]} \left \lvert \mathbb{E}_{z \sim P_{n, t}^{\circ}} \left[ \ell_{\Delta, t}(z; \theta) \right] - \mathbb{E}_{z \sim P_{t}^{\circ}} \left[ \ell_{\Delta, t}(z; \theta) \right] \right \rvert,
    \end{align*}
where the first equality holds from strong duality and the last inequality holds from $\inf_{x}f(x) - \inf_{x}g(x) \leq \sup_{x} |f(x) - g(x)|$. Let $F(\Delta) = \mathbb{E}_{z \sim P_{n, t}^{\circ}} \left[ \ell_{\Delta, t}(z; \theta) \right] - \mathbb{E}_{z \sim P_{t}^{\circ}} \left[ \ell_{\Delta, t}(z; \theta) \right]$. First notice that
\begin{align*}
    \abs{F(\Delta) - F(\Delta^{\prime})} &\leq \mathbb{E}_{z \sim P_{n, t}^{\circ}} \left[ \abs{\ell_{\Delta, t}(z; \theta) - \ell_{\Delta^{\prime}, t}(z; \theta)} \right] + \mathbb{E}_{z \sim P_{t}^{\circ}} \left[ \abs{\ell_{\Delta^{\prime}, t}(z; \theta) -  \ell_{\Delta, t}(z; \theta)}\right] \\
    & \leq \mathbb{E}_{z \sim P_{n, t}^{\circ}} \left[ \abs{\inf_{z^{\prime} \in \mathcal{Z}} \left\{ \Delta d^{p}(z, z^{\prime}) - \ell_{t}(z^{\prime};\theta)\right\} - \inf_{z^{\prime} \in \mathcal{Z}} \left\{ \Delta^{\prime} d^{p}(z, z^{\prime}) - \ell_{t}(z^{\prime};\theta)\right\}} \right]\\
    &+ \mathbb{E}_{z \sim P_{t}^{\circ}} \left[ \abs{\inf_{z^{\prime} \in \mathcal{Z}} \left\{ \Delta d^{p}(z, z^{\prime}) - \ell_{t}(z^{\prime};\theta)\right\} - \inf_{z^{\prime} \in \mathcal{Z}} \left\{ \Delta^{\prime} d^{p}(z, z^{\prime}) - \ell_{t}(z^{\prime};\theta)\right\}} \right] \\
    &\leq \mathbb{E}_{z \sim P_{t}^{\circ}} \left[ \sup_{z^{\prime} \in \mathcal{Z}} \abs{d^p(z, z^{\prime})(\Delta - \Delta^{\prime})} \right] + \mathbb{E}_{z \sim P_{n, t}^{\circ}} \left[ \sup_{z^{\prime} \in \mathcal{Z}} \abs{d^p(z, z^{\prime})(\Delta - \Delta^{\prime})} \right] \\
    &\leq 2D^{p} \abs{\Delta - \Delta^{\prime}}.
\end{align*}
Note that since $\Delta^{*}$, the optimal dual variable for the dual problem, is a function of the data drawn from $P_{t}^{\circ}$, it is not $\mathcal{F}_{t}$ measurable. Thus we need to use a covering argument. Take a $\alpha$-net $\mathcal{N}_{\alpha}([0, \overline{\Delta}] ; \abs{\cdot}) = 1 + \overline{\Delta} / \varepsilon$. Then take a representative member $\Delta^{\sharp} \in \mathcal{N}_{\alpha}$. Then notice that 
\begin{align*}
    \sup_{\delta \in [0, \overline{\Delta}]} \abs{F(\Delta)} &\leq \max_{\Delta^{\sharp} \in \mathcal{N}_{\alpha}} \abs{F(\Delta^{\sharp})} + \abs{F(\Delta) - F(\Delta^{\sharp})}
    \leq \max_{\Delta^{\sharp} \in \mathcal{N}_{\alpha}} \abs{F(\Delta^{\sharp})} + 2D^{p}\alpha.
\end{align*}
From Lemma \ref{appendix:uniform-bound-l}, we showed that $\ell_{t}(z;\theta) \in [0, K_{l}]$. Now notice that
\begin{align*}
    &\ell_{\Delta, t}(z; \theta) = \inf_{z^{\prime} \in \mathcal{Z}} \left\{ \Delta d^{p}(z, z^{\prime}) - \ell_{t}(z^{\prime};\theta)\right\} \leq \inf_{z^{\prime} \in \mathcal{Z}} \left\{ \Delta d^{p}(z, z^{\prime}) \right\} = 0 \quad (\text{since } \ell_{t}(z;\theta) \ge 0) \\
    &\ell_{\Delta, t}(z; \theta) = \inf_{z^{\prime} \in \mathcal{Z}} \left\{ \Delta d^{p}(z, z^{\prime}) - \ell_{t}(z^{\prime};\theta)\right\} \geq \inf_{z^{\prime} \in \mathcal{Z}} \left\{ \Delta d^{p}(z, z^{\prime}) - K_{\ell} \right\} \geq -K_{\ell} \quad (\text{since } d^p \ge 0).
\end{align*}
Thus, $\ell_{\Delta, t} \in [-K_{\ell}, 0]$. Since $\ell_{\Delta}$ is bounded and $z \stackrel{\mathrm{i.i.d}}{\sim} P_{n, t}^{\circ}$, we can apply conditional Hoeffding's inequality (Lemma \ref{appendix:hoeffding}) and union bound over $\mathcal{N}_{\alpha}$. For any $\epsilon > 0$ and fixed $t \in [T]$ and $\theta \in \Theta$:

\[
    \Pr\left( \max_{\Delta^{\sharp} \in \mathcal{N}_{\alpha}}\left \lvert \mathbb{E}_{z \sim P_{n, t}^{\circ}} \left[ \ell_{\Delta, t}(z; \theta) \right] - \mathbb{E}_{z \sim P_{t}^{\circ}} \left[ \ell_{\Delta, t}(z; \theta) \right] \right \rvert \geq \epsilon \mid \mathcal{F}_{t} \right) \leq 2\mathcal{N}_{\alpha}([0, \overline{\Delta}]; \abs{\cdot})\exp \left( -\frac{2n_{t}\epsilon^{2}}{K_{\ell}^{2}} \right).
\]
By setting the right-hand side to $\delta_{t}$ and solving for $\epsilon$, we find that with probability at least $1-\delta_{t}$, for fixed $t \in [T]$ and $\theta \in \Theta$:

\[
    \lvert \mathcal{L}_{t}^{\mathcal{W}_{p}}\left(\theta; \varepsilon\right) - \mathcal{L}_{n, t}^{\mathcal{W}_{p}}\left(\theta; \varepsilon\right) \rvert \leq K_{\ell} \sqrt{\frac{\log (2 \mathcal{N}_{\alpha}([0, \overline{\Delta}] ; \abs{\cdot}) / \delta_{t})}{n_{t}}} + 2D^{p}\alpha.
\]
Choosing $\alpha = K_{\ell}/2D^{p}\sqrt{n_{t}}$, we find for some constant depending on $D, K_{l}$, we have 
\[
    \lvert \mathcal{L}_{t}^{\mathcal{W}_{p}}\left(\theta; \varepsilon\right) - \mathcal{L}_{n, t}^{\mathcal{W}_{p}}\left(\theta; \varepsilon\right) \rvert \lesssim K_{\ell} \sqrt{\frac{\log(n_{t}) + \log(\overline{\Delta}) + \log (1 / \delta_{t})}{n_{t}}}.
\]
Now note that $\ell_{\Delta}(z;\theta)$ is itself Lipschitz in $\theta$. Let $\theta, \theta^{\prime} \in \Theta$. Then
\begin{align*}
    \abs{\ell_{\Delta, t}(z; \theta) - \ell_{\Delta, t}(z; \theta^{\prime})} &= \abs{\inf_{z^{\prime} \in \mathcal{Z}} \left\{ \Delta d^{p}(z, z^{\prime}) - \ell_{t}(z^{\prime};\theta)\right\} - \inf_{z^{\prime} \in \mathcal{Z}} \left\{ \Delta d^{p}(z, z^{\prime}) - \ell_{t}(z^{\prime};\theta^{\prime})\right\}} \\
    &\leq \sup_{z \in \mathcal{Z}} \abs{\ell_{t}(z^{\prime};\theta)- \ell_{t}(z^{\prime};\theta^{\prime})} \\
    &\leq L_{K_{g}, \eta} \norm{\theta - \theta^{\prime}}_{2},
\end{align*}

where the first inequality holds from $\inf_{x}f(x) - \inf_{x}g(x) \leq \sup_{x} |f(x) - g(x)|$ an the last inequality holds from Lemma \ref{appendix:lipschitz-bound-l}. Now let $\mathcal{N}_{\overline{\alpha}}$ be an $\overline{\alpha}$-net of $\Theta = \left\{ \theta \in \mathbb{R}^{d} : \norm{\theta}_{2} \leq B \right\}$ with covering number $\mathcal{N}_{\alpha}(\Theta ; \norm{\cdot}_{2}) \leq \left(3B / \overline{\alpha} \right)^{d}$. Fix $t$. Then choosing failure probability $\delta_{t} / \abs{\mathcal{N}_{\overline{\alpha}}}$ and union bounding over $\theta \in \mathcal{N}_{\overline{\alpha}}$, we find

\[
    \sup_{\theta \in \mathcal{N}_{\alpha}} \abs{\mathcal{L}_{t}^{\mathcal{W}_{p}}\left(\theta; \varepsilon\right) - \mathcal{L}_{n, t}^{\mathcal{W}_{p}}\left(\theta; \varepsilon\right)} \lesssim K_{\ell} \sqrt{\frac{\log(n_{t}) + \log(\overline{\Delta}) + \log (\abs{\mathcal{N}_{\overline{\alpha}}(\Theta ; \norm{\cdot}_{2})} / \delta_{t})}{n_{t}}}.
\]
Choose $\theta^{\sharp} \in \mathcal{N}_{\alpha}$ such for some $\theta \in \Theta$, we have $\norm{\theta - \theta^{\sharp}} \leq \alpha$. Then notice that we have
\begin{align*}
    \mathcal{L}_{t}^{\mathcal{W}_{p}}\left(\theta; \varepsilon\right) - \mathcal{L}_{n, t}^{\mathcal{W}_{p}}\left(\theta; \varepsilon\right) &\leq \mathcal{L}_{t}^{\mathcal{W}_{p}}\left(\theta; \varepsilon\right) - \mathcal{L}_{t}^{\mathcal{W}_{p}}\left(\theta^{\sharp}; \varepsilon\right) + \mathcal{L}_{t}^{\mathcal{W}_{p}}\left(\theta^{\sharp}; \varepsilon\right) - \mathcal{L}_{n, t}^{\mathcal{W}_{p}}\left(\theta^{\sharp}; \varepsilon\right) + \mathcal{L}_{n, t}^{\mathcal{W}_{p}}\left(\theta^{\sharp}; \varepsilon\right) - \mathcal{L}_{n, t}^{\mathcal{W}_{p}}\left(\theta; \varepsilon\right) \\
    &\leq \sup_{\theta \in \mathcal{N}_{\alpha}} \abs{\mathcal{L}_{t}^{\mathcal{W}_{p}}\left(\theta^{\sharp}; \varepsilon\right) - \mathcal{L}_{n, t}^{\mathcal{W}_{p}}\left(\theta^{\sharp}; \varepsilon\right)} + 2L_{K_{g}, \eta}\norm{\theta - \theta^{\prime}}_{2} \\
    &\lesssim K_{\ell} \sqrt{\frac{\log(n_{t}) + \log(\overline{\Delta}) + \log (\abs{\mathcal{N}_{\overline{\alpha}}(\Theta ; \norm{\cdot}_{2})} / \delta_{t})}{n_{t}}} + 2L_{K_{g}, \eta}\alpha,
\end{align*}
where the second inequality holds from us showing $\ell_{\Delta, t}$ is Lipschitz. Choosing $\alpha = K_{l} / L_{K_{g}, \eta} \sqrt{n_{t}}$, we find
\[
    \sup_{\theta \in \Theta} \abs{\mathcal{L}_{t}^{\mathcal{W}_{p}}\left(\theta; \varepsilon\right) - \mathcal{L}_{n, t}^{\mathcal{W}_{p}}\left(\theta; \varepsilon\right)} \lesssim K_{\ell} \sqrt{\frac{\log(n_{t}) + \log(\overline{\Delta}) + \log (\abs{\mathcal{N}_{\overline{\alpha}}(\Theta ; \norm{\cdot}_{2})} / \delta_{t})}{n_{t}}}.
\]
Setting $\delta_{t} = \delta/T$ and using the covering number for $\mathcal{N}_{\alpha}$, we have that for all $t \in [T]$, $\theta \in \Theta$
\begin{align}
    \label{eq:wasserstein-uniform-bound}
        \sup_{\theta \in \Theta} \abs{\mathcal{L}_{t}^{\mathcal{W}_{p}}\left(\theta; \varepsilon\right) - \mathcal{L}_{n, t}^{\mathcal{W}_{p}}\left(\theta; \varepsilon\right)} \lesssim K_{\ell}\sqrt{\frac{d\log n_{t} + \log(\overline{\Delta}) + \log (T / \delta)}  {n_{t}}}.
\end{align}

On the event in \ref{eq:wasserstein-uniform-bound}, for each fixed $t$ we have the standard decomposition

\begin{align*}
    \mathcal{L}^{\mathcal{W}_{p}}_{t}(\hat{\theta}^{\mathcal{W}_{p}}_{n,t};\varepsilon)-\mathcal{L}^{\mathcal{W}_{p}}_{t}(\theta_t^{\mathcal{W}_{p}};\varepsilon)
    &\leq
    \underbrace{\mathcal{L}^{\mathcal{W}_{p}}_{t}(\hat{\theta}^{\mathcal{W}_{p}}_{n,t};\varepsilon)-\mathcal{L}^{\mathcal{W}_{p}}_{n,t}(\hat{\theta}^{\mathcal{W}_{p}}_{n,t};\varepsilon)}_{\le \sup_{\theta}|\mathcal{L}_t-\mathcal{L}_{n,t}|}
    +
    \underbrace{\mathcal{L}^{\mathrm{KL}}_{n,t}(\theta_t^{\mathcal{W}_{p}};\varepsilon)-\mathcal{L}^{\mathcal{W}_{p}}_{t}(\theta_t^{\mathcal{W}_{p}};\varepsilon)}_{\le \sup_{\theta}|\mathcal{L}_t-\mathcal{L}_{n,t}|}\\
    &\le 2\sup_{\theta\in\Theta}\big|\mathcal{L}^{\mathcal{W}_{p}}_{t}(\theta;\varepsilon)-\mathcal{L}^{\mathcal{W}_{p}}_{n,t}(\theta;\varepsilon)\big|.
\end{align*}
Invoking Lemma \ref{appendix:strong-convexity-of-wasserstein-loss}, we have for all $\theta\in\Theta$,
\[
\mathcal{L}^{\mathcal{W}_{p}}_{t}(\theta;\varepsilon)-\mathcal{L}^{\mathcal{W}_{p}}_{t}(\theta_t^{\mathcal{W}_{p}};\varepsilon)
\ge
\frac{\kappa}{\eta^2}\|\theta-\theta_t^{\mathcal{W}_{p}}\|_2^2.
\]
Conditional on $\mathcal{F}_{t}$, applying this inequality at $\theta=\hat{\theta}^{\mathcal{W}_{p}}_{n,t}$ yields, with probability at least $1-\delta$, for all $t\in[T]$,
\[
    \lVert \theta_{t}^{\mathcal{W}_{p}} - \hat{\theta}_{n, t}^{\mathcal{W}_{p}} \rVert^{2} \leq \frac{\eta^{2}K_{g}^{2}}{\kappa} \sqrt{\frac{d\log n_{t} + \log(\overline{\Delta}) + \log (T / \delta)}  {n_{t}}}.
\]
\end{proof}

\section{Proofs for "Fast" Wasserstein-DRO-REBEL}
\label{appendix:proofs-fast-wasserstein}
Throughout this setup, we will need to make a few technical assumptions to work with the Wasserstein dual properly. That is, we will need to make assumptions that guarantee that the dual admits an infimum i.e. the set of minimizers is non-empty and that the minimizers themselves are indeed Borel measurable. We will use these assumptions to reduce the dual DRO objective to a well-defined measurable object that we can treat like an ordinary empirical risk object.

\subsection{Technical results relating to the existence and measurability of maximum and minimum weak correspondence selectors}

The first assumption we make is that the nominal data-generating distribution $P_{t}^{\circ}$ is defined on a measurable space $(\mathcal{Z}, \mathcal{B}(\mathcal{Z}))$ where $\mathcal{Z}$ is a proper Polish space. Note that this is already implicitly when bounding the slow rate as Lemma \ref{appendix:gao-strong-duality-wasserstein} assumes that $\Xi$ is a Polish space (refer to \cite{gao2022distributionallyrobuststochasticoptimization} for details).

\vspace{1em}

\begin{assumption}[Proper Polish Metric space]
\label{appendix:proper-polish-space-wasserstein}
Assume $(\mathcal{Z}, d)$ is a proper Polish metric space: $\mathcal{Z}$ is complete and separable and every closed $d$-ball is compact.
\end{assumption}

We also assume mild conditions on the regularity of $\ell_{t}$: 

\vspace{1em}

\begin{assumption}[Loss regularity]
    \label{appendix:loss-regularity-wasserstein}
    For each $t$, $\ell_{t}: \mathcal{Z} \times \Theta \rightarrow \mathbb{R}$ is bounded and Borel measurable. Moreover, for each $\theta \in \Theta$, the map $z \mapsto \ell_{t}(z; \theta)$ is upper semicontinuous on $\mathcal{Z}$ i.e. for every sequence $z_{n} \rightarrow z$, we have $\limsup_{n \rightarrow \infty} \ell_{t}(z_n ; \theta) \leq \ell_{t}(z; \theta)$.
\end{assumption}

Under Assumption \ref{appendix:proper-polish-space-wasserstein} and \ref{appendix:loss-regularity-wasserstein}, we can show the following:

\vspace{1em}

\begin{lemma}[Existence and compactness of maximizers]
    \label{appendix:existence-and-compactness-of-maximizers-wasserstein}
    Assume Assumption \ref{appendix:proper-polish-space-wasserstein} holds. Fix $(\Delta, \theta, z)$ with $\Delta > 0$. Then the argmax set
    \[
        \mathcal{M}(\Delta, \theta, z) := \argmax_{z^{\prime} \in \mathcal{Z}} \left\{ \ell_{t}(z^{\prime} ; \theta) - \Delta d(z, z^{\prime})^{p} \right\},
    \]
    is nonempty and compact.
\end{lemma}

\begin{proof}
    Suppose $K_{\ell} := \sup_{z, \theta} \abs{\ell(z; \theta)}$ (this holds true by Lemma \ref{appendix:uniform-bound-l}). Then for any $z^{\prime} \in \mathcal{Z}$, we have
    \[
        \ell_{t}(z^{\prime} ; \theta) - \Delta d(z, z^{\prime})^{p} \leq K_{\ell} - \Delta d(z, z^{\prime})^{p}.
    \]
    Now pick a radius $R = (2K_{\ell} / \Delta)^{1/p}$. If $d(z, z^{\prime}) > R$, then 
    \[
        \ell_{t}(z^{\prime} ; \theta) - \Delta d(z, z^{\prime})^{p} < -K_{\ell}.
    \]
    Meanwhile at a specific point $z^{\prime} = z$, we have
    \[
        \ell_{t}(z; \theta) - \Delta d(z, z^{\prime})^{p}  \geq -K_{\ell}.
    \]
    Thus anything outside the closed ball $\overline{B}(z, R)$ has a strictly smaller value, hence it cannot be a maximizer. Therefore we have $\mathcal{M}(\Delta, \theta, z) \subseteq \overline{B}(z, R)$. By Assumption \ref{appendix:proper-polish-space-wasserstein}, we have that every $d$-ball is compact so $\overline{B}(z, R)$ is compact. By Theorem \ref{appendix:extreme-value-theorem} (Weierstrass Extreme Value Theorem)  and Assumption \ref{appendix:loss-regularity-wasserstein}, we conclude that on $\overline{B}(z, R)$, $\ell_{t}(z^{\prime} ; \theta) - \Delta d(z, z^{\prime})^{p}$ attains its maximum. Since $\mathcal{M}(\Delta, \theta, z) \subseteq \overline{B}(z, R)$ and the fact that if a function is upper semi-continuous, the set of maximizers are closed, we can conclude $\mathcal{M}(\Delta, \theta, z)$ is compact since a closed subset of a compact set is indeed compact.
\end{proof}

Using this along with the assumption of the underlying metric space being Polish, we can prove that selectors themselves are Borel measurable.

\vspace{1em}

\begin{lemma}[Measurable argmax selection]
\label{appendix:measurable-argmax-selection}
Let $(\mathcal{Z}, d)$ be proper Polish and $\Theta \subset \mathbb{R}^{d}$ compact. Assume that $K_{\ell} := \sup_{z, \theta} \abs{\ell(z; \theta)}$, $(z, \theta) \rightarrow \ell_{t}(z ; \theta)$ is Borel measurable on $\mathcal{Z} \times \Theta$ and upper semicontinous in $z$. Then there exists a Borel measurable map $z^{\sharp}: (0, \infty) \times \Theta \times \mathcal{Z} \rightarrow \mathcal{Z}$ such that $z^{\sharp}(\Delta, \theta, z) \in \mathcal{M}(\Delta, \theta, z)$ for all $\Delta > 0$, $\theta \in \Theta$, and $z \in \mathcal{Z}$.
\end{lemma}

\begin{proof}
First note that by Lemma \ref{appendix:uniform-bound-l}, Assumption \ref{appendix:proper-polish-space-wasserstein}, and Assumption \ref{appendix:loss-regularity-wasserstein}, the assumptions hold. First let $X := (0, \infty) \times \Theta \times \mathcal{Z}$. Since $(0, \infty)$ is an open subset of $\mathbb{R}$ which under the absolute distance metric is Polish, we have $(0, \infty)$ is Polish. Likewise since $\Theta$ is a compact metric space (by Assumption \ref{assumption:log-linear-policy}) and every compact metric space is complete and separable, we have $\Theta$ is Polish. Thus $X$ is Polish and $(X, \mathcal{B}(X))$ is the standard Borel measurable-space. For each fixed $(\Delta, \theta, z)$, by Lemma \ref{appendix:existence-and-compactness-of-maximizers-wasserstein}, the argmax set $\mathcal{M}(\Delta, \theta, z)$ is nonempty compact. Now let $R(\Delta)= (2K_{\ell} / \Delta)^{1/p}$ and define a correspondence $K: X \rightleftarrows \mathcal{Z}$ by $K(\Delta, \theta, z) = \overline{B}(z, R(\Delta))$. Its graph
\[
    \mathrm{Gr}(K) = \left\{ (\Delta, \theta, z, z^{\prime}) \in X \times \mathcal{Z} : d(z, z^{\prime}) \leq R(\Delta) \right\},
\]

is Borel and closed since the maps $(z, z^{\prime}) \mapsto d(z, z^{\prime})$ and $\Delta \mapsto R(\Delta)$ are continuous. Define the value function
\[
    v(\Delta, \theta, z) = \max_{z^{\prime} \in K(\Delta, z)}  \left\{ \ell_{t}(z^{\prime} ; \theta) - \Delta d(z, z^{\prime})^{p} \right\}.
\]
By Theorem \ref{appendix:berge-maximum-theorem} (Berge Maximum Theorem), we conclude that $v: X \rightarrow \mathbb{R}$ is Borel measurable. Now we can write the graph of $\mathcal{M}$ using $v$ as follows:
\[
    \mathrm{Gr}(\mathcal{M}) = \mathrm{Gr}(K) \cap \left\{ (x, z^{\prime}) : z^{\prime} \in K(x) \; \text{and} \; \ell_{t}(z^{\prime} ; \theta) - \Delta d(z, z^{\prime})^{p} - v(\Delta, \theta, z) = 0\right\}.
\]
Since $\mathrm{Gr}(K)$ is Borel and every map is Borel within the set, we have that $\mathrm{Gr}(\mathcal{M})$ is Borel in $X \times \mathcal{Z}$. Since we have that $X$ is Borel, $\mathcal{Z}$ is Polish by assumption, $\mathrm{Gr}(\mathcal{M})$ is Borel, and $\mathcal{M}$ is nonempty and compact, we have that $\mathcal{M}$ is a weakly measurable correspondence with nonempty compact values. By Theorem \ref{appendix:krn-selection-theorem} (Kuratowski–Ryll-Nardzewski measurable selection theorem), there exists a Borel measurable selector $z^{\sharp}: X \rightarrow \mathcal{Z}$ such that $z^{\sharp}(\Delta, \theta, z) \in \mathcal{M}(\Delta, \theta, z)$ for all $\Delta > 0$, $\theta \in \Theta$, and $z \in \mathcal{Z}$.
\end{proof}

Let us fix $p \geq 1$. For $\Delta \geq 0$, we define the c-transform of $\ell_{t}(\cdot; \theta)$ as follows:
\[
    \Phi(\Delta, \theta; z) := -\ell_{\Delta, t}(z; \theta) = \sup_{z^{\prime} \in \mathcal{Z}} \left\{ \ell_{t}(z^{\prime} ; \theta) - \Delta d(z, z^{\prime})^{p} \right\}.
\]

Note that by a standard fact in convex analysis (Theorem \ref{appendix:subgradient-convex-hull}), since $\mathcal{M}(\Delta, \theta, z)$ is compact (by Lemma \ref{appendix:existence-and-compactness-of-maximizers-wasserstein}) and $\ell_{t}(z^{\prime} ; \theta) - \Delta d(z, z^{\prime})^{p}$ is upper semi-continous (by Assumption \ref{appendix:loss-regularity-wasserstein}), we have that 
\[
    \partial_{\theta} \Phi(\Delta, \theta; z) = \mathrm{conv} \left\{ \nabla_{\theta}\ell_{t}(z^{\prime}; \theta) : z^{\prime} \in   \mathcal{M}(\Delta, \theta, z)\right\},
\]
where $\mathrm{conv}(A)$ denotes the convex hull of a set $A$. Using this fact, if we denote $g_{\Delta}(\theta ; z) := \nabla_{\theta}\ell_{t}(z^{\sharp}(\Delta, \theta, z); \theta)$, then $g_{\Delta}$ is Borel measurable as it is a composition of measurable maps and $g_{\Delta}(\theta ; z) \in \partial_{\theta} \Phi(\Delta, \theta; z)$. Another thing to note is that if for a probability measure $P$ on $\mathcal{Z}$, we define
\[
    \Phi_{P}(\Delta, \theta) = \mathbb{E}_{Z \sim P}[\Phi(\Delta, \theta; z)].
\]
Then since $g_{\Delta}(\theta ; z) \in \partial_{\theta} \Phi(\Delta, \theta; z)$, by definition of the subdifferential:
\[
    \Phi(\Delta, \theta^{\prime}; z) \geq \Phi(\Delta, \theta; z) + \langle g_{\Delta}(\theta ; z), \theta^{\prime} - \theta \rangle.
\]
Taking expectations over $Z \sim P$ and using the fact that $g_{\Delta}$ is integrable gives us that $\mathbb{E}_{Z \sim P}[g_{\Delta}(\theta ; Z)] \in \partial_{\theta}\Phi_{P}(\Delta, \theta)$. These facts will be useful to prove some important properties we will require. First define the robust objective at distribution $P$ as follows:
\[
    \mathcal{L}_{t, P}^{\mathcal{W}_{p}}(\theta ; \varepsilon) = \inf_{\Delta \geq 0} \left\{ \Delta \varepsilon^{p} + \Phi_{P}(\Delta, \theta) \right\}.
\]
We will need to make one more mild assumption before we proceed about domain of the dual.

\vspace{1em}

\begin{assumption}[Compact dual domain and existence of a optimal dual]
    \label{appendix:compact-dual-domain-and-existence-wasserstein}
    Assume that the optimal $\Delta^{*} \in [0, \overline{\Delta}]$ for $\overline{\Delta} < \infty$. Assume $(\Delta, \theta) \mapsto \Delta \varepsilon^{p} + \Phi_{P}(\Delta, \theta)$ is lower semicontinuous in $\Delta$ for each $\theta$. Then by compactness of $[0, \overline{\Delta}]$, we have that the $\argmin_{\Delta \in [0, \overline{\Delta}]} \left\{ \Delta \varepsilon^{p} + \Phi_{P}(\Delta, \theta) \right\}$ is nonempty.
\end{assumption}

Under this assumption, we have measurable selection of $\Delta^{*}$ for the empirical objective

\vspace{1em}

\begin{lemma}[Measurable selection of $\Delta^*$]
    \label{appendix:measurable-selection-of-delta-wasserstein}
    Fix $\theta$. For the empirical objective, define
    \[
        \widehat{\Phi}(\Delta, \theta) = \frac{1}{n} \sum_{i=1}^{n} \Phi(\Delta, \theta; Z_{i}).
    \]
    Assume Assumption \ref{appendix:compact-dual-domain-and-existence-wasserstein} holds. Then the selection function
    \[
        \widehat{\mathcal{D}}(\theta) = \argmin_{\Delta \in [0, \overline{\Delta}]} \left\{ \Delta \varepsilon^{p} + \widehat{\Phi}(\Delta, \theta) \right\},
    \]
    admits a measurable selection $\widehat{\Delta}(\theta)$ as a function of the sample $(Z_{1}, \dots, Z_{n})$. 
\end{lemma}

\begin{proof}[Proof Sketch]
This result follows the same measurable-selection argument as Lemma~\ref{appendix:measurable-argmax-selection}, with two key changes. The first change is that in this case,  the decision variable is the scalar $\Delta\in [0,\overline\Delta]$, rather than $z'\in\mathcal Z$. Since this is clearly compact, we do not require the coercivity/properness truncation argument used earlier to restrict $z'$ to a compact ball. The other difference is applying a measurable minimum theorem in place of a maximum theorem. Define the objective (for fixed $\theta$)
\[
F\bigl((z_1,\dots,z_n),\Delta\bigr)\ :=\ \Delta\varepsilon^{p}+\frac1n\sum_{i=1}^n \Phi(\Delta,\theta;z_i),
\;\; (z_1,\dots,z_n)\in\mathcal Z^n,\ \Delta\in K.
\]
By the assumed measurability of $\Phi$ and the continuity in $\Delta$, $F$ is Borel in the sample and continuous in $\Delta$ (hence lower semicontinuous) on the compact set $K$. Therefore, by Weierstrass' theorem, $\widehat D(\theta)$ is nonempty and compact-valued pathwise. To apply Kuratowski--Ryll-Nardzewski, it remains to check that the graph of $\widehat D(\theta)$ is Borel. This is obtained exactly as in Lemma~\ref{appendix:measurable-argmax-selection}: the value function
\[
v(z_1,\dots,z_n)\ :=\ \min_{\Delta\in K} F\bigl((z_1,\dots,z_n),\Delta\bigr),
\]
is Borel measurable by the measurable minimum theorem (equivalently, by the standard Carath\'eodory/maximum theorem on Polish spaces). Consequently,
\[
\mathrm{Gr}(\widehat D(\theta))
=\Bigl\{((z_1,\dots,z_n),\Delta):\ F((z_1,\dots,z_n),\Delta)=v(z_1,\dots,z_n)\Bigr\},
\]
is a Borel subset of $\mathcal Z^n\times K$. Since $\mathcal Z^n$ is standard Borel and $K$ is Polish, Kuratowski--Ryll-Nardzewski yields a Borel measurable selector $\widehat\Delta(\theta)$ with $\widehat\Delta(\theta)\in\widehat D(\theta)$.
\end{proof}

We will now prove an important result that will finally allow us to do our analysis.

\vspace{1em}

\begin{lemma}[Subgradient rule for a marginal (inf-projection) function]
\label{appendix:marginal-subgradient-rule}
Let $K=[0,\overline{\Delta}]\subset\mathbb{R}$ be nonempty, compact, and convex, and let $\Theta\subset\mathbb{R}^d$ be convex.
Suppose $H:K\times\Theta\to\mathbb{R}$ is jointly convex and lower semicontinuous.\footnote{Lower semicontinuity is taken
with respect to the product topology on $K\times\Theta$.}
Define the marginal function
\[
V(\theta)\ :=\ \inf_{\Delta\in K} H(\Delta,\theta),\;\; \theta\in\Theta,
\]
and fix $\theta\in\Theta$. Assume the minimizer set is nonempty, and pick any
\[
\Delta^\star\in\arg\min_{\Delta\in K} H(\Delta,\theta).
\]
Assume moreover that there exists a joint subgradient $(u^\star,g^\star)\in\partial H(\Delta^\star,\theta)$ such that
\begin{equation}
\label{appendix:normal-cone-condition}
u^\star \in N_K(\Delta^\star),
\end{equation}
where $N_K(\Delta^\star)$ denotes the (convex-analytic) normal cone of $K$ at $\Delta^\star$, i.e.
\[
N_K(\Delta^\star)\ :=\ \left\{u\in\mathbb{R}:\ \langle u,\Delta-\Delta^\star\rangle \le 0\ \ \forall\,\Delta\in K\right\}.
\]
Then $g^\star\in\partial V(\theta)$. In particular, if $H(\Delta,\cdot)$ is differentiable at $\theta$, and if there exists $u^\star$ such that
$(u^\star,\nabla_\theta H(\Delta^\star,\theta))\in\partial H(\Delta^\star,\theta)$ and $u^\star\in N_K(\Delta^\star)$,
then
\[
\nabla_\theta H(\Delta^\star,\theta)\ \in\ \partial V(\theta).
\]
\end{lemma}

\begin{proof}
Fix $\theta\in\Theta$ and $\Delta^\star\in\arg\min_{\Delta\in K}H(\Delta,\theta)$. Let $(u^\star,g^\star)\in\partial H(\Delta^\star,\theta)$.
By the definition of the (convex) subdifferential on the product space $K\times\Theta$, for every $\Delta\in K$ and every
$\theta'\in\Theta$ we have the subgradient inequality
\begin{equation}
\label{appendix:joint-subgrad-ineq}
H(\Delta,\theta')\ \ge\ H(\Delta^\star,\theta)\ +\ \langle u^\star,\Delta-\Delta^\star\rangle\ +\ \langle g^\star,\theta'-\theta\rangle.
\end{equation}
Taking the infimum over $\Delta\in K$ on both sides of \eqref{appendix:joint-subgrad-ineq} yields
\[
\inf_{\Delta\in K}H(\Delta,\theta')\ \ge\ H(\Delta^\star,\theta)\ +\ \langle g^\star,\theta'-\theta\rangle\ +\ \inf_{\Delta\in K}\langle u^\star,\Delta-\Delta^\star\rangle.
\]
Since $u^\star\in N_K(\Delta^\star)$ by \eqref{appendix:normal-cone-condition}, we have $\langle u^\star,\Delta-\Delta^\star\rangle\le 0$ for all
$\Delta\in K$, and therefore
\[
\inf_{\Delta\in K}\langle u^\star,\Delta-\Delta^\star\rangle\ =\ 0,
\]
with equality achieved at $\Delta=\Delta^\star$. Using also $V(\theta)=\inf_{\Delta\in K}H(\Delta,\theta)=H(\Delta^\star,\theta)$ (since $\Delta^\star$
is a minimizer at $\theta$), we conclude that for every $\theta'\in\Theta$,
\[
V(\theta')\ \ge\ V(\theta)\ +\ \langle g^\star,\theta'-\theta\rangle,
\]
which is exactly the subgradient inequality $g^\star\in\partial V(\theta)$. For the final claim, if $H(\Delta,\cdot)$ is differentiable at $\theta$ and there exists $u^\star$ such that
$(u^\star,\nabla_\theta H(\Delta^\star,\theta))\in\partial H(\Delta^\star,\theta)$ with $u^\star\in N_K(\Delta^\star)$, then the argument above applies
with $g^\star=\nabla_\theta H(\Delta^\star,\theta)$, giving $\nabla_\theta H(\Delta^\star,\theta)\in\partial V(\theta)$.
\end{proof}

\begin{remark}
\label{appendix:normal-cone-condition-remark}
Condition \eqref{appendix:normal-cone-condition} is precisely the first-order optimality condition of $\Delta^\star$ for the constrained minimization
$\min_{\Delta\in K} H(\Delta,\theta)$, written in subdifferential form. Equivalently,
\[
0\ \in\ \partial_\Delta H(\Delta^\star,\theta)\ +\ N_K(\Delta^\star).
\]
In particular, if $\Delta^\star$ lies in the interior of $K$, then $N_K(\Delta^\star)=\{0\}$ and one may take $u^\star=0$.
If $\Delta^\star$ lies on the boundary, $N_K(\Delta^\star)$ is nontrivial and the term $u^\star\in N_K(\Delta^\star)$ is essential.
\end{remark}

\subsection{Technical results relating to the strong convexity of the empirical robust risk}

We will now prove a result that establishes that with enough samples, one can show that the empirical covariance denoted
\[
    \widehat{\Sigma}_{t} = \frac{1}{n} \sum_{i=1}^{n} \left( \psi(x_{t ,i}, a_{t, i}^{1}) -  \psi(x_{t ,i}, a_{t, i}^{2})  \right) \left( \psi(x_{t ,i}, a_{t, i}^{1}) -  \psi(x_{t ,i}, a_{t, i}^{2}) \right)^{\top},
\]
has a minimum eigenvalue of $\lambda/2$. With this, by defining the empirical (sample) robust REBEL loss as 
\[
    \ell_{\mathcal{D}_{t}}(\theta) = \frac{1}{n_t} \sum_{i=1}^{n_{t}} \ell_{t}(z_{i}; \theta) = \frac{1}{n_t} \sum_{i=1}^{n_{t}} \left( 
        \frac{1}{\eta} \left[
        \log \frac{\pi_\theta(a_{i, t}^{1} \mid x_{i, t})}{\pi_{\theta_{t}}(a_{i, t}^{1} \mid x)} -
        \log \frac{\pi_\theta(a_{i, t}^{2} \mid x_{i, t})}{\pi_{\theta_{t}}(a_{i, t}^{2} \mid x_{i, t})}
        \right]
        - \left[r(x_{i, t}, a_{i, t}^{1}) - r(x_{i, t}, a_{i, t}^{2})\right]
        \right)^2.
\]
We can show that $\sup_{P \in \mathcal{B}_{\mathcal{W}_{p}} \left( P_{n, t}^{\circ} ; \varepsilon \right)} \mathbb{E}_{z \sim P}[\ell_{\mathcal{D}_{t}}(\theta)]$ is $\kappa / \eta^2$-strongly convex with probability atleast $1 - \delta$ for any $\delta > 0$.

\vspace{1em}

\begin{lemma}[Empirical covariance matrix satisfies data coverage]
    \label{appendix:empirical-covariance-matrix-data-coverage}
    Define the empirical covariance matrix as 
    \[
    \widehat{\Sigma}_{t} = \frac{1}{n_{t}} \sum_{i=1}^{n_{t}} \left( \psi(x_{t ,i}, a_{t, i}^{1}) -  \psi(x_{t ,i}, a_{t, i}^{2})  \right) \left( \psi(x_{t ,i}, a_{t, i}^{1}) -  \psi(x_{t ,i}, a_{t, i}^{2}) \right)^{\top},
    \]
    where $x_{t, i} \sim \rho(x_{t, i})$ and $a_{t, i}^{1}, a_{t, i}^{2} \sim \pi_{\theta_{t}}(\cdot \mid x_{t, i})$. Then if $n_{t} \geq \mathcal{O}(\log(dT / \delta) / \kappa)$ samples where $\kappa > 0$ comes from Assumption \ref{assumption:data-coverage}, we have $\lambda_{\mathrm{min}}(\widehat{\Sigma}_{t}) \geq \kappa / 2$.
\end{lemma}

\begin{proof}
    The proof is a rather straightforward application of a matrix Chernoff concentration inequality (Theorem \ref{appendix:matrix-chernoff}). First notice that by Assumption \ref{assumption:log-linear-policy}, we have
    \begin{align*}
        \norm{( \psi(x_{t ,i}, a_{t, i}^{1}) -  \psi(x_{t ,i}, a_{t, i}^{2}) ) ( \psi(x_{t ,i}, a_{t, i}^{1}) -  \psi(x_{t ,i}, a_{t, i}^{2}) )^{\top}}_{\mathrm{op}} &= \norm{\psi(x_{t ,i}, a_{t, i}^{1}) -  \psi(x_{t ,i}, a_{t, i}^{2})}_{2}^{2} \leq 4.
    \end{align*}
    Thus if we let $\Psi_{t, i} = \left( \psi(x_{t ,i}, a_{t, i}^{1}) -  \psi(x_{t ,i}, a_{t, i}^{2})  \right) \left( \psi(x_{t ,i}, a_{t, i}^{1}) -  \psi(x_{t ,i}, a_{t, i}^{2}) \right)^{\top}$, we have $\lambda_{\mathrm{max}}(\Psi_{t, i}) \leq 4$. Additionally, notice that 
    \begin{align*}
        \mu_{\mathrm{min}} = \lambda_{\mathrm{min}} \left( \sum_{i=1}^{n_{t}} \mathbb{E}_{z \sim P}[\Psi_{t, i}] \right) \geq n_{t}\kappa,
    \end{align*}
    for any $P \in \mathcal{B}_{D}(P_{t}^{\circ} ; \varepsilon)$. Thus by matrix Chernoff, we have
    \begin{align*}
        \Pr \left( \lambda_{\mathrm{min}} \left( \sum_{i=1}^{n_{t}} \Psi_{t, i} \right) \leq \frac{\kappa n_{t}}{2} \right) &\leq d \left( \frac{e^{-1/2}}{(1/2)^{1/2}} \right)^{\kappa n_{t}/4} \\
        &\leq d\exp \left( -\frac{\kappa n_{t}}{8} \right).
    \end{align*}
    Setting the left right side to $\delta_{t}$ and solving for $n_{t}$ yields $n_{t} \geq \mathcal{O}(\log(d / \delta_t) / \kappa)$. By union bound over $t \in [T]$, we have for all $t \in [T]$, with probability atleast $1 - \delta$, if we collect more than $n_{t} \geq \mathcal{O}(\log(dT/ \delta) / \kappa)$ samples, $\lambda_{\mathrm{min}}(\widehat{\Sigma}_{t}) \geq \kappa / 2$.
\end{proof}

We can now prove $\widehat{h}_{t}(\theta;P) = \mathbb{E}_{z \sim P} \left[ \ell_{\mathcal{D}_{t}}(\theta) \right]$ is strongly convex for any $P$. Denote $\mathcal{E}_{t}$ denote the event that we collect sufficient samples for Lemma \ref{appendix:empirical-covariance-matrix-data-coverage} holds. Then conditional on $\mathcal{E}_{t}$, we have the following:

\vspace{1em}

\begin{lemma}[Strong convexity of $\widehat{h}$]
\label{appendix:strong-convexity-of-empirical-h}
    Let $\ell_{\mathcal{D}_{t}}(\theta)$ be the empirical (sample) robust REBEL loss. Assume that Assumption \ref{assumption:data-coverage} holds. Then conditional on $\mathcal{E}_{t}$, $\widehat{h}_{t}(\theta;P) = \mathbb{E}_{z \sim P} \left[ \ell_{\mathcal{D}_{t}}(\theta) \right]$ is $2/\eta^{2}$-strongly convex with respect to norm $||\cdot||_{\hat{\Sigma}_{P, t}}$ where $$\widehat{\Sigma}_{P, t} = \frac{1}{n_{t}} \sum_{i=1}^{n_{t}} \left( \psi(x_{t ,i}, a_{t, i}^{1}) -  \psi(x_{t ,i}, a_{t, i}^{2})  \right) \left( \psi(x_{t ,i}, a_{t, i}^{1}) -  \psi(x_{t ,i}, a_{t, i}^{2}) \right)^{\top}.$$
\end{lemma}

\begin{proof}[Proof Sketch]
    This proof is exactly the same as Lemma \ref{appendix:strong-convexity-of-h} except instead of using linear approximation theorem (Lemma \ref{appendix:beck-linear-approx}) on $\ell_{t}$, we use it on $\ell_{\mathcal{D}_{t}}(\theta)$. For brevity, we omit the details.
\end{proof}

Likewise we can prove strong convexity of $\sup_{P \in \mathcal{B}_{\mathcal{W}_{p}} \left( P_{n, t}^{\circ} ; \varepsilon \right)} \mathbb{E}_{z \sim P}[\ell_{\mathcal{D}_{t}}(\theta)]$:

\vspace{1em}

\begin{lemma}[Strong convexity of empirical robust risk]
    \label{appendix:strong-convexity-of-empirical-risk}
    Let $\ell_{\mathcal{D}_{t}}(\theta)$ be the empirical (sample) robust REBEL loss. Then $$\widehat{\mathcal{L}}^{\mathcal{W}_{p}}\left(\theta; \varepsilon\right) = \sup_{P \in \mathcal{B}_{\mathcal{W}_{p}} \left( P_{n, t}^{\circ} ; \varepsilon \right)} \mathbb{E}_{z \sim P}[\ell_{\mathcal{D}_{t}}(\theta)],$$ is $\kappa / \eta^{2}$-strongly convex with respect to Euclidean norm $||\cdot||_{2}$ where $\kappa$ is the regularity 
\end{lemma}

\begin{proof}[Proof Sketch]
    This proof is exactly the same as Lemma \ref{appendix:strong-convexity-of-wasserstein-loss}. For brevity, we omit the details.
\end{proof}

We now prove the key inequality that will allow to us to recover $\widetilde{\mathcal{O}}_{P}(d/n)$ rates. 

\vspace{1em}

\begin{lemma}[Reduction to gradient concentration]
    \label{appendix:reduction-to-gradient-concentration-wasserstein}
    If $\hat{\mathcal{L}}_{t}$ is $\hat{\mu}_{t}$-strongly convex in $\norm{\cdot}_{2}$ and $\widehat{\theta}_{t} \in \argmin_{\theta \in \Theta} \hat{\mathcal{L}}_{t}(\theta)$ and $\theta^{*}_{t} \in \argmin_{\theta \in \Theta} \mathcal{L}_{t}(\theta)$, then for $\widehat{g} \in \nabla_{\theta}\widehat{\mathcal{L}}_{t}(\theta_{t}^{*})$
    \[
        \norm{\widehat{\theta}_{t} - \theta_{t}^{*}}_{2} \leq \frac{1}{\widehat{\mu}_{t}} \norm{\widehat{g}}_{2}.
    \]
\end{lemma}

\begin{proof} 
    Conditional on $\mathcal{E}_{t}$, we have by Lemma \ref{appendix:strong-convexity-of-empirical-risk} that for $\widehat{g} \in \nabla_{\theta}\widehat{\mathcal{L}}_{t}(\theta_{t}^{*})$
    \begin{align*}
        \widehat{\mathcal{L}}_{t}(\widehat{\theta}_{t}) &\geq \widehat{\mathcal{L}}_{t}(\theta_{t}^{*}) + \langle \widehat{g},  \widehat{\theta}_{t} - \theta_{t}^{*} \rangle + \frac{\widehat{\mu}_{t}}{2} \norm{\widehat{\theta}_{t} - \theta_{t}^{*}}_{2}^2 \\
        &= \widehat{\mathcal{L}}_{t}(\theta_{t}^{*}) - \widehat{\mathcal{L}}_{t}(\widehat{\theta}_{t}) + \widehat{\mathcal{L}}_{t}(\widehat{\theta}_{t}) + \langle \widehat{g},  \widehat{\theta}_{t} - \theta_{t}^{*} \rangle + \frac{\widehat{\mu}_{t}}{2} \norm{\widehat{\theta}_{t} - \theta_{t}^{*}}_{2}^2 \\
        &\geq \widehat{\mathcal{L}}_{t}(\widehat{\theta}_{t}) + \langle \widehat{g},  \widehat{\theta}_{t} - \theta_{t}^{*} \rangle + \frac{\widehat{\mu}_{t}}{2} \norm{\widehat{\theta}_{t} - \theta_{t}^{*}}_{2}^2,
    \end{align*}
    where the second inequality holds by definition that $\widehat{\mathcal{L}}_{t}(\widehat{\theta}_{t}) \leq \widehat{\mathcal{L}}_{t}(\theta_{t}^{*})$. Rearranging terms and using Cauchy-Schwarz on the inner-product, we conclude.
\end{proof}

We can finally establish the $\widetilde{\mathcal{O}}_{P}(d/n)$. Consider the dual of the empirical robust type-p Wasserstein risk
\[
    \widehat{\mathcal{L}}_{t}^{\mathcal{W}_{p}}(\theta ; \varepsilon) = \inf_{\Delta \in [0, \overline{
    \Delta}]} \left\{ \Delta \varepsilon^{p} + \frac{1}{n_{t}} \sum_{i=1}^{n_{t}} \sup_{z^{\prime} \in \mathcal{Z}} (\ell_{t}(z^{\prime} ; \theta) - \Delta d(z, z^{\prime})^{p})  \right\}.
\]
Applying Lemma \ref{appendix:measurable-argmax-selection} gives us a measurable minimizer $z^{\sharp}(\Delta, \theta, z) \in \mathcal{M}(\Delta, \theta, z)$. Likewise, applying Lemma \ref{appendix:measurable-selection-of-delta-wasserstein} gives us a measurable minimizer $\widehat{\Delta}(\theta) \in \argmin_{\Delta \in K}H(\Delta, \theta)$. We can now apply Lemma \ref{appendix:marginal-subgradient-rule} with $$H(\Delta, \theta) = \Delta \varepsilon^{p} + \frac{1}{n_{t}} \sum_{i=1}^{n_{t}} \sup_{z^{\prime} \in \mathcal{Z}} (\ell_{t}(z^{\prime} ; \theta) - \Delta d(z, z^{\prime})^{p}).$$ 
For fixed $z$, $(\Delta, \theta) \mapsto \ell_{t}(z^{\prime}; \theta) - \Delta d^{p}(z, z^{\prime})^{p}$ is jointly convex. Choosing the probability measure $P_{n, t}^{\circ}$ and using the fact that if $g_{\Delta}(\theta ; z) \in \partial_{\theta} \Phi(\Delta, \theta; z)$, then $\mathbb{E}_{Z \sim P_{n, t}^{\circ}}[g_{\Delta}(\theta ; Z)] \in \partial_{\theta}\Phi_{P}(\Delta, \theta)$, we have that 
\[
    \widehat{g}(\theta) = \frac{1}{n_{t}} \sum_{i=1}^{n_{t}} \nabla_{\theta}\ell_{t}(z^{\sharp}(\widehat{\Delta}(\theta), \theta; Z_{i}); \theta) \in \partial_{\theta} \widehat{\mathcal{L}}_{t}^{\mathcal{W}_{p}}(\theta ; \varepsilon).
\]
The reason we go through these arguments is to rigorously ensure that we can turn the dual DRO objective into an well-defined measurable object we can treat like an ordinary empirical risk. Thus it is suffices for us to bound $\frac{1}{n} \sum_{i=1}^{n} \nabla_{\theta}\ell_{t}(z^{\sharp}(\widehat{\Delta}(\theta), \theta; Z_{i}); \theta)$ at $\theta = \theta_{t}^{\mathcal{W}_{p}}$ and conclude with Lemma \ref{appendix:reduction-to-gradient-concentration-wasserstein}.

\subsection{Proving conditional uniform concentration over $[0, \overline{\Delta}]$ and the main claim}

One thing that we must note is the measurability of $\widehat{\Delta}(\theta_{t}^{\mathcal{W}_{p}})$. Recall our definitions as follows: 
Let $\theta_{t}^{\mathcal{W}_{p}}$ denote the true population minimizer $\mathrm{arg}\min_{\theta \in \Theta} \mathcal{L}_{t}^{\mathcal{W}_{p}}\left(\theta; \varepsilon\right)$ and $\hat{\theta}_{n, t}^{\mathcal{W}_{p}}$ denote the empirical minimizer $\mathrm{arg}\min_{\theta \in \Theta} \mathcal{L}_{n, t}^{\mathcal{W}_{p}}\left(\theta; \varepsilon\right)$. Let $t \in \left\{ 0, \dots, T-1 \right\}$ and for each $t$, we collect a dataset $\mathcal{D}_{t} = \left\{ (x_{t, i}, y_{t, i}, y_{t, i}^{\prime}) \right\}_{i=1}^{n_{t}}$ with $x_{t, i} \sim \rho$, $y_{t, i}, y_{t, i}^{\prime} \stackrel{\mathrm{i.i.d}}{\sim} \pi_{\theta_{t-1}}(\cdot \mid x_{t, i})$. Define $\mathcal{F}_{t} = \sigma(\theta_{0}, \mathcal{D}_{0}, \dots, \theta_{t-1}, \mathcal{D}_{t-1})$ be the sigma-field containing everything revealed up to the start of iteration $t$
and $\overline{\mathcal{F}}_{t} = \sigma(\mathcal{F}_{t}, \mathcal{D}_{t})$. In particular, $\theta_{t}$ is $\mathcal{F}_{t}$ measurable. However note that $\widehat{\Delta}(\theta_{t}^{\mathcal{W}_{p}})$ is not $\mathcal{F}_{t}$ measurable as it depends on the entire batch of data from $\mathcal{D}_{t}$. Rather it is $\overline{\mathcal{F}}_{t}$ measurable. Therefore even though we avoid needing to do a covering argument over $\Theta$, we must still use a covering argument over $[0, \overline{\Delta}]$. The final result we will establish is a conditional uniform concentration bound over $[0, \overline{\Delta}]$ for $\nabla_{\theta} \widehat{\mathcal{L}}_{t}^{\mathcal{W}_{p}}(\theta ; \varepsilon)$. 

Before we do this, we must make one more assumption that controls the complexity of the gradients we are dealing with in $\Delta$.

\vspace{1em}

\begin{assumption}[H\"older continuity in $\Delta$ and induced $L_2$ entropy for fixed $p$]
\label{appendix:assumption-holder-delta-entropy}
Fix $p\ge 1$, an iteration $t$, and a parameter $\theta\in\Theta$. For each coordinate $j\in[d]$, define the
$\Delta$-indexed scalar function class
\[
\mathcal{G}_{t,j}^{(p)}(\theta)
:=\Big\{\, z\mapsto (g_{\Delta,t}^{(p)}(\theta;z))_j \ :\ \Delta\in[0,\overline\Delta_p] \Big\}
\subset L_2(Q),
\]
where $Q$ is any probability measure on $(\mathcal Z,\mathcal B(\mathcal Z))$.
Assume there exist constants $\alpha_p\in(0,1]$ and $L_{\Delta,p}<\infty$ such that for every $Q$, every
$j\in[d]$, and all $\Delta,\Delta'\in[0,\overline\Delta_p]$,
\begin{equation}
\label{appendix:holder-modulus-delta}
\big\|(g_{\Delta,t}^{(p)}(\theta;\cdot))_j-(g_{\Delta',t}^{(p)}(\theta;\cdot))_j\big\|_{L_2(Q)}
\ \le\ L_{\Delta,p}\,|\Delta-\Delta'|^{\alpha_p}.
\end{equation}
Moreover, assume an envelope bound: for some $G_p>0$,
\[
\sup_{\Delta\in[0,\overline\Delta_p]}\big\|(g_{\Delta,t}^{(p)}(\theta;\cdot))_j\big\|_{L_2(Q)}\le G_p,
\qquad \forall Q,\ \forall j\in[d].
\]
Then for every $\epsilon\in(0,1]$, every probability measure $Q$, and every $j\in[d]$,
\begin{equation}
\label{appendix:entropy-from-holder}
\log \mathcal N_{\epsilon G_p}\!\big(\mathcal G_{t,j}^{(p)}(\theta);L_2(Q)\big)
\ \le\
\frac{1}{\alpha_p}\log \left( 1 + \frac{L_{\Delta,p}\,\overline{\Delta}_p^{\alpha_p}}{\epsilon G_p} \right).
\end{equation}
\end{assumption}

\begin{remark} Although the prompt and completion spaces are large, each round operates on a finite logged batch $\mathcal{D}_t$, so the empirical nominal measure $\mathbb{P}^{\circ}_{n,t}$ is supported on $n_t$ atoms. Standard robustifications in off-policy learning therefore reduce the inner adversary to a finite-dimensional problem over reweightings or couplings on the logged samples—the same discretization used throughout contextual-bandit DRO theory, whether for Wasserstein policy evaluation under discrete contexts and exploration conditions \cite{shen2024wasserstein_opl} or for duality-based offline formulations around a logged law \cite{si2020dro_offline_cb,kallus2022dr_dro_opl}. Under this finite support, the $\Delta$-indexed class ${\ell_{\Delta,t}(\cdot;\theta)}_{\Delta\in[0,\bar\Delta]}$ has metric entropy logarithmic in $1/\varepsilon$ once a modulus of continuity holds, so uniform concentration over $\Delta$ follows without requiring the measurable selector $z^\star(z,\Delta,\theta)$ to be single-valued or Lipschitz. Covering conditions of this kind are standard in contextual-bandit learning theory beyond finite policy classes \cite{bibaut2020nonparametric_cb}. \end{remark}

We now proceed with completing the argument.

\vspace{1em}

\begin{lemma}[Uniform conditional concentration over $ \lbrack 0, \overline{\Delta} \rbrack$]
\label{appendix:uniform-conditional-concentration}
    Fix $\theta \in \Theta$. Assume $\norm{g_{\Delta, t}(\theta)}_{2} \leq G_{p}$. Conditional on $\mathcal{F}_{t}$, for every $t \in [T]$, $\delta \in (0, 1)$, with probability atleast $1 - \delta$, we have
    \[
\sup_{\Delta \in [0, \overline{\Delta}_p]}
\norm{(P_{n, t}^{\circ}  - P_{t}^{\circ})g_{\Delta, t}^{(p)}(\theta)}_{2}
\le
C G_p
\sqrt{
\frac{
d \left(
\log \frac{2dT}{\delta}
+
\frac{1}{\alpha_p}\log\!\left(1 + \frac{L_{\Delta,p}\overline{\Delta}_p^{\alpha_p}}{G_p}\right)
\right)
}{
n_t
}
},
\]
    for some universal constant $C > 0$.
\end{lemma}

\begin{proof}
    First note that for any vector $u \in \mathbb{R}^{d}$, we have $\norm{u}_{2} \leq \sqrt{d} \max_{j \in [d]} \abs{u_{j}}$. Therefore we have
    \begin{align*}
        \sup_{\Delta \in [0, \overline{\Delta}]} \norm{(P_{n, t}^{\circ}  - P_{t}^{\circ})g_{\Delta, t}(\theta)}_{2} \leq \sup_{\Delta \in [0, \overline{\Delta}]} \max_{j \in [d]} \sqrt{d} \abs{(P_{n, t}^{\circ}  - P_{t}^{\circ})g_{\Delta, t}(\theta)}.
    \end{align*}
    Thus it suffices to bound the empirical process. Fix a coordinate $j$. Then by Theorem \ref{theorem:uniform-rademacher-deviation} (Uniform concentration via Rademacher complexity), Corollary \ref{cor:dudley} (Dudley's entropy integral), and Assumption \ref{appendix:assumption-holder-delta-entropy} ($\Delta$ entropy condition), with probability atleast $1 - \delta$ for $\delta > 0$
    \begin{align*}
        \sup_{f \in \mathcal{G}_{t, j}} \abs{(P_{n, t}^{\circ}  - P_{t}^{\circ})f} &\leq 2\mathfrak{R}_{n, t}(\mathcal{G}_{t, j}^{(p)}) + 4G_{p} \sqrt{\frac{2 \log(2/\delta)}{n_{t}}} \\
        &\leq \frac{24}{\sqrt{n_{t}}} \int_{0}^{2G} \sqrt{\log \mathcal{N}_{u}(\mathcal{G}_{t, j}^{(p)} ; L_{2}(P_{n, t}^{\circ}))}du  + 4G_{p} \sqrt{\frac{2 \log(2/\delta)}{n_{t}}} \\
        &\leq CG_{p} \sqrt{\frac{\frac{1}{\alpha_{p}} \log(1 + L_{\Delta, p}\overline{\Delta}_{p}^{\alpha_{p}} / G_{p} )}{n_{t}}} + 4G_{p} \sqrt{\frac{2 \log(2/\delta)}{n_{t}}}.
    \end{align*}
    Now we can simply union bound over $j \in [d]$ and $t \in [T]$ with $\delta = \delta / dT$ (abusing notation), we conclude by using the inequality $\sqrt{a} + \sqrt{b} \lesssim \sqrt{a + b}$
    \[
        \sup_{\Delta \in [0, \overline{\Delta}]} \norm{(P_{n, t}^{\circ}  - P_{t}^{\circ})g_{\Delta, t}(\theta)}_{2} \leq CG_{p} \sqrt{\frac{d \left(\log (2dT/\delta) + \frac{1}{\alpha_{p}} \log(1 + L_{\Delta}\overline{\Delta}_{p}^{\alpha_{p}} / G_{p} \right)}{n_{t}}}.
    \]
\end{proof}

We can now conclude the proof.

\vspace{1em}

\begin{theorem}["Fast-Rate" Wasserstein Parameter Convergence]
    Let $\delta \in (0,1)$. Then for all $t \in [T]$, provided that $n_{t} \gtrsim \mathcal{O}(\log(dT/\delta)/\kappa)$, with probability at least $1-\delta$, 
\[
    ||\theta_{t}^{\mathcal{W}_{p}} - \hat{\theta}_{n, t}^{\mathcal{W}_{p}}||_{2} \lesssim \frac{K_{g}\eta}{\kappa}\sqrt{\frac{d \left(\log (2dT/\delta) + \frac{1}{\alpha_{p}} \log(1 + L_{\Delta, p}\overline{\Delta}_{p}^{\alpha_{p}} / G_{p} \right)}{n_{t}}}.
\]
where $\kappa$ is from the regularity condition in Assumption \ref{assumption:data-coverage} and $K_{g} = 8B/\eta + 2F$ where $B$ is from the assumption that the policy parameter set is bounded in Assumption \ref{assumption:log-linear-policy}, $F$ is from Assumption \ref{assumption:linear-reward-class}, $T$ is the number of iterations we run REBEL, and $G, L_{\Delta}$ are constants from Assumption \ref{appendix:assumption-holder-delta-entropy}.
\end{theorem}

\begin{proof}
    Suppose $n_{t} \gtrsim \mathcal{O}(\log(dT/\delta)/\kappa)$. Then by Lemma \ref{appendix:empirical-covariance-matrix-data-coverage} and Lemma \ref{appendix:reduction-to-gradient-concentration-wasserstein}, we have for $\widehat{g} \in \partial_{\theta}\widehat{\mathcal{L}}_{t}(\theta_{t}^{\mathcal{W}_{p}})$, $g \in \partial_{\theta}\mathcal{L}_{t}(\theta_{t}^{\mathcal{W}_{p}})$
    \begin{align*}
        ||\theta_{t}^{\mathcal{W}_{p}} - \hat{\theta}_{n, t}^{\mathcal{W}_{p}}||_{2} &\leq \frac{\eta^2}{\kappa} \norm{\widehat{g}}_{2} \leq \frac{\eta^2}{\kappa} \sup_{\Delta \in [0, \overline{\Delta}]} \norm{\widehat{g} - g}_{2},
    \end{align*}
    where the last inequality holds since $\theta_{t}^{\mathcal{W}_{p}} \in \mathrm{arg}\min_{\theta \in \Theta} \mathcal{L}_{t}^{\mathcal{W}_{p}}\left(\theta; \varepsilon\right)$ implies $0 \in \partial_{\theta}\mathcal{L}_{t}(\theta_{t}^{\mathcal{W}_{p}})$. Applying Lemma \ref{appendix:uniform-conditional-concentration} and using the fact that $G$ is upper bounded by Lemma \ref{appendix:lipschitz-bound-l}, we can conclude.
\end{proof}

\section{Proof of "Slow Rate" KL-DRO-REBEL}
\label{appendix:Slow-KL-DRO-REBEL}
Before we prove the necessary results to obtain the "slow rate" for KL-DRO-REBEL, we need to make an assumption on the loss functions $\ell(\cdot; \theta)$, $\theta \in \Theta$. Note that this assumption is only used to prove the dual reformulation of the KL-DRO-REBEL objective.

\vspace{1em}

\begin{assumption}
\label{Assumption:KL-DRO-Assumption}
We assume that $\ell(z; \theta) \leq L$ for all $\theta \in \Theta$. That is, the loss function is upper bounded by $L$. In addition, we also assume that $\Theta$ permits a uniform upper bound on $\lambda_\theta$. That is, we assume that
\[
\sup_{\theta \in \Theta} \lambda_\theta < \bar{\lambda}.
\]
\end{assumption}

We state the following dual reformulation result. The proof of this reformulation can be found in \cite{xu2025distributionallyrobustdirectpreference}, Appendix C.

\vspace{1em}

\begin{lemma}[Dual reformulation of KL-DRO-REBEL]
\label{appendix:strong-duality-KL}
Let $\ell(z; \theta)$ be the REBEL loss. The KL-DRO-REBEL loss function admits the following dual reformulation:
\[
\mathcal{L}^{\mathrm{KL}}\left(\theta; \varepsilon\right) = \sup_{\mathbb{P} \in \mathcal{B}_{\varepsilon} \left( \mathbb{P}^{\circ} ; \mathrm{KL} \right)} \mathbb{E}_{z \sim \mathbb{P}} \left[ \ell(z;\theta) \right]
= \inf_{\lambda \in [\underline{\lambda}, \bar{\lambda}]} 
\left\{ 
\lambda \varepsilon + \lambda \log \left( \mathbb{E}_{z \sim \mathbb{P}^\circ} \left[ \exp\left( \frac{\ell(z; \theta)}{\lambda} \right) \right] \right)
\right\},
\]
where $0 < \underline{\lambda} < \bar{\lambda} < \infty$ are constants.
\end{lemma}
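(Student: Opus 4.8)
This is the standard Donsker--Varadhan duality for a KL ball, specialized to the bounded REBEL loss, so I would proceed along either of two equivalent routes. Fix $\theta$ and write $\ell(z):=\ell(z;\theta)$, which by Lemma~\ref{appendix:uniform-bound-l} obeys $0\le\ell(z)\le K_\ell$ and, under Assumption~\ref{Assumption:KL-DRO-Assumption}, $\ell(z)\le L$. The primal $\sup\{\mathbb{E}_{\mathbb{P}}[\ell]:D_{\mathrm{KL}}(\mathbb{P}\|\mathbb{P}^\circ)\le\varepsilon\}$ is a convex program in $\mathbb{P}$ (linear objective, jointly convex constraint), so the first route attaches a multiplier $\lambda\ge0$ to the KL constraint and passes to $\inf_{\lambda\ge0}\{\lambda\varepsilon+\sup_{\mathbb{P}}[\mathbb{E}_{\mathbb{P}}[\ell]-\lambda D_{\mathrm{KL}}(\mathbb{P}\|\mathbb{P}^\circ)]\}$, which is the weak-duality bound. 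The second, more self-contained route simply invokes the $f$-divergence duality of Lemma~\ref{appendix:duchi-strong-duality-KL} with $f(t)=t\log t$, $f^*(s)=e^{s-1}$, giving $\inf_{\lambda\ge0,\,\eta\in\mathbb{R}}\{\lambda\,\mathbb{E}_{\mathbb{P}^\circ}[e^{(\ell-\eta)/\lambda-1}]+\lambda\varepsilon+\eta\}$, and then eliminating $\eta$ in closed form.

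For the first route, the key step is evaluating the inner supremum: for $\lambda>0$, rescaling by $\lambda$ and applying the Gibbs variational principle (Lemma~\ref{appendix:gibbs-variational}) with base measure $\mathbb{P}^\circ$ and test function $\ell/\lambda$ yields $\sup_{\mathbb{P}}\{\mathbb{E}_{\mathbb{P}}[\ell]-\lambda D_{\mathrm{KL}}(\mathbb{P}\|\mathbb{P}^\circ)\}=\lambda\log\mathbb{E}_{\mathbb{P}^\circ}[\exp(\ell/\lambda)]$, attained by the exponential tilt $d\mathbb{P}^\star_\lambda/d\mathbb{P}^\circ\propto\exp(\ell/\lambda)$; the degenerate case $\lambda=0$ contributes $\esssup\ell$, which the infimum does not select. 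For the second route, the stationarity condition in $\eta$ gives $e^{\eta/\lambda}=e^{-1}\mathbb{E}_{\mathbb{P}^\circ}[e^{\ell/\lambda}]$, i.e.\ $\eta^\star=\lambda\log\mathbb{E}_{\mathbb{P}^\circ}[e^{\ell/\lambda}]-\lambda$, and substituting collapses the two-variable dual to $\inf_{\lambda\ge0}\{\lambda\varepsilon+\lambda\log\mathbb{E}_{\mathbb{P}^\circ}[e^{\ell/\lambda}]\}$. Either way one lands on $g(\lambda):=\lambda\varepsilon+\lambda\log\mathbb{E}_{\mathbb{P}^\circ}[\exp(\ell/\lambda)]$.

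It then remains to (i) remove the duality gap and (ii) restrict the domain to $[\underline\lambda,\bar\lambda]$. For (i) I would verify Slater's condition: since $\varepsilon>0$, the nominal law itself is strictly feasible ($D_{\mathrm{KL}}(\mathbb{P}^\circ\|\mathbb{P}^\circ)=0<\varepsilon$), so there is no gap and the dual infimum is attained at some $\lambda^\star$; alternatively one exhibits the matching primal iterate $\mathbb{P}^\star_{\lambda^\star}$, checks $D_{\mathrm{KL}}(\mathbb{P}^\star_{\lambda^\star}\|\mathbb{P}^\circ)=\varepsilon$ by stationarity of $g$, and notes $\mathbb{E}_{\mathbb{P}^\star_{\lambda^\star}}[\ell]=g(\lambda^\star)$. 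For (ii), using $0\le\ell\le L$: $g$ is convex on $(0,\infty)$ (a perspective of the cumulant functional), $g(\lambda)\to\esssup\ell$ as $\lambda\downarrow0$ and $g(\lambda)=\lambda\varepsilon+\mathbb{E}_{\mathbb{P}^\circ}[\ell]+O(1/\lambda)\to\infty$ as $\lambda\uparrow\infty$, so $\lambda^\star$ is finite and interior; the upper bound $\ell\le L$ together with Assumption~\ref{Assumption:KL-DRO-Assumption} ($\sup_\theta\lambda_\theta<\bar\lambda$) caps $\lambda^\star\le\bar\lambda$, and a matching one-sided estimate on $g'$ near $0$ gives $\lambda^\star\ge\underline\lambda$; hence $\inf_{\lambda\ge0}g(\lambda)=\inf_{\lambda\in[\underline\lambda,\bar\lambda]}g(\lambda)$. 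The main obstacle I anticipate is making the strong-duality step rigorous over the infinite-dimensional set of probability measures and pinning down the localization $\lambda^\star\in[\underline\lambda,\bar\lambda]$ precisely; both are handled by the constraint qualification (Slater, available because $\varepsilon>0$) plus the two-sided bound $0\le\ell\le L$ from Lemma~\ref{appendix:uniform-bound-l} and Assumption~\ref{Assumption:KL-DRO-Assumption}, and the full argument is carried out in \citet{xu2025distributionallyrobustdirectpreference}, Appendix~C.
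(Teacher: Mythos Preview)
Your proposal is correct and in fact goes well beyond what the paper itself does: the paper provides no proof at all for this lemma and simply defers to \citet{xu2025distributionallyrobustdirectpreference}, Appendix~C, which is exactly the reference you cite at the end. Your sketch via the Gibbs variational principle (Lemma~\ref{appendix:gibbs-variational}) or equivalently the $f$-divergence duality of Lemma~\ref{appendix:duchi-strong-duality-KL} with $f(t)=t\log t$ is the standard route and matches the argument in that cited appendix, so there is nothing to add.
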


\subsection{Proof of Strong  Convexity of KL-DRO-REBEL}

We will now establish the strong convexity of $\mathcal{L}_{t}^{\mathrm{KL}}\left(\theta; \varepsilon\right) = \sup_{\mathbb{P} \in \mathcal{B}_{\varepsilon} \left( P_{t}^{\circ} ; \mathrm{KL} \right)} \mathbb{E}_{z \sim \mathbb{P}} \left[ \ell_{t}(z;\theta) \right]$. This proof is the same as Lemma \ref{appendix:strong-convexity-of-wasserstein-loss}.

\vspace{1em}

\begin{lemma}[Strong convexity of $\mathcal{L}_{t}^{\mathrm{KL}}$]
\label{appendix:strong-convexity-of-KL-loss}
Let $\ell_{t}(z;\theta)$ be the REBEL loss function. Then $\mathcal{L}_{t}^{\mathrm{KL}}\left(\theta; \varepsilon\right) = \sup_{\mathbb{P} \in \mathcal{B}_{\varepsilon} \left( P_{t}^{\circ} ; \mathrm{KL} \right)} \mathbb{E}_{z \sim \mathbb{P}} \left[ \ell_{t}(z;\theta) \right]$ is $2\kappa / \eta^{2}$-strongly convex with respect to Euclidean norm $||\cdot||_{2}$ where $\lambda$ is the regularity parameter from Assumption \ref{assumption:data-coverage}.
\end{lemma}
\begin{proof}[Proof of Lemma \ref{appendix:strong-convexity-of-KL-loss}]
In Lemma \ref{appendix:strong-convexity-of-h}, we proved strong convexity of $h_t$. By Lemma \ref{appendix:beck-strong-convexity}, for $\theta, \theta^{\prime} \in \Theta$ and $\alpha \in [0, 1]$, this is equivalent to
\[
    h_t(\alpha\theta + (1-\alpha)\theta^{\prime};\mathbb{P}) \leq \alpha h_t(\theta;\mathbb{P}) + (1-\alpha)h_t(\theta^{\prime};\mathbb{P}) - \frac{\mu}{2} \alpha(1-\alpha)\|\theta - \theta^{\prime}\|_{\Sigma_{\mathbb{P},t}}^{2}.
\]
Taking the supremum over $\mathbb{P}$ preserves the convex combination and the negative quadratic term so we get 
\begin{align*}
    \mathcal{L}_{t}^{\mathrm{KL}}\left(\alpha\theta + (1-\alpha)\theta^{\prime}; \varepsilon\right) 
    &= \sup_{\mathbb{P} \in \mathcal{B}_{\mathrm{KL}} \left( P_{t}^{\circ} ; \varepsilon \right)} h_t(\alpha\theta + (1-\alpha)\theta^{\prime};\mathbb{P}) \\
    &\leq \sup_{\mathbb{P} \in \mathcal{B}_{\mathrm{KL}} \left( P_{t}^{\circ} ; \varepsilon \right)} \left[  \alpha h_t(\theta;\mathbb{P}) + (1-\alpha)h_t(\theta^{\prime};\mathbb{P}) - \frac{\mu}{2} \alpha(1-\alpha)\|\theta - \theta^{\prime}\|_{\Sigma_{\mathbb{P},t}}^{2} \right] \\
    &\leq \alpha \mathcal{L}_{t}^{\mathrm{KL}} \left( \theta; \varepsilon \right) + (1-\alpha) \mathcal{L}_{t}^{\mathrm{KL}} \left( \theta^{\prime}; \varepsilon \right) - \frac{\mu}{2} \alpha(1-\alpha) \inf_{\mathbb{P} \in \mathcal{B}_{\mathrm{KL}} \left( P_{t}^{\circ} ; \varepsilon \right)}\|\theta - \theta^{\prime}\|_{\Sigma_{\mathbb{P},t}}^{2} \\
    &\leq \alpha \mathcal{L}_{t}^{\mathrm{KL}} \left( \theta; \varepsilon \right) + (1-\alpha) \mathcal{L}_{t}^{\mathrm{KL}} \left( \theta^{\prime}; \varepsilon \right) - \frac{\mu}{2} \alpha(1-\alpha) \inf_{\mathbb{P} \in \mathcal{B}_{\mathrm{KL}} \left( P_{t}^{\circ} ; \varepsilon \right)} \lambda_{\mathrm{min}} \left( \Sigma_{\mathbb{P},t} \right) \|\theta - \theta^{\prime}\|_{2}^{2} \\
    &\leq \alpha \mathcal{L}_{t}^{\mathrm{KL}} \left( \theta; \varepsilon \right) + (1-\alpha) \mathcal{L}_{t}^{\mathrm{KL}} \left( \theta^{\prime}; \varepsilon \right) - \frac{\mu \kappa}{2} \alpha(1-\alpha)  \|\theta - \theta^{\prime}\|_{2}^{2}.
\end{align*}
where the second inequality holds from $\sup_{x} \left( f(x) + g(x) \right) \leq \sup_{x} f(x) + \sup_{x}g(x)$, the third inequality holds by the fact that $\Sigma_{\mathbb{P},t} \succeq \lambda_{\mathrm{min}}\left( \Sigma_{\mathbb{P},t} \right)I$, and the last inequality holds from Assumption \ref{assumption:data-coverage}. Thus we conclude that $\mathcal{L}_{t}^{\mathrm{KL}}$ is $\mu\kappa$-strongly convex in the $\|\cdot\|_{2}$ norm.
\end{proof}

\subsection{Proof of Slow Parameter Estimation Rate of KL-DRO-REBEL}

We are now ready to prove the "slow rate" estimation error of KL-DRO-REBEL.

\begin{proof}[Proof of Theorem \ref{theorem:KL-DRO-REBEL}]
\label{appendix:proof-slow-rate-KL}
Let $t \in \left\{ 0, \dots, T-1 \right\}$ and for each $t$, we collect a dataset $\mathcal{D}_{t} = \left\{ (x_{t, i}, y_{t, i}, y_{t, i}^{\prime}) \right\}_{i=1}^{n_{t}}$ with $x_{t, i} \sim \rho$, $y_{t, i}, y_{t, i}^{\prime} \stackrel{\mathrm{i.i.d}}{\sim} \pi_{\theta_{t-1}}(\cdot \mid x_{t, i})$. Define $\mathcal{F}_{t} = \sigma(\theta_{0}, \mathcal{D}_{0}, \dots, \theta_{t-1}, \mathcal{D}_{t-1})$ be the sigma-field containing everything revealed up to the start of iteration $t$ and $\overline{\mathcal{F}}_{t} = \sigma(\mathcal{F}_{t}, \mathcal{D}_{t})$. In particular, $\theta_{t}$ is $\mathcal{F}_{t}$ measurable. By the strong duality result for KL-DRO [\ref{appendix:strong-duality-KL}], for fixed $\theta$ we have
\[
\mathcal{L}_{t}^{\mathrm{KL}}(\theta;\varepsilon)
=
\inf_{\lambda\in[\underline{\lambda},\bar{\lambda}]}
\Big\{
\lambda\varepsilon + \lambda\log\big(\mathbb{E}_{P_t^\circ}[\varphi_t(Z,\lambda;\theta)]\big)
\Big\},
\;\;
\varphi_t(z,\lambda;\theta):=\exp\Big(\frac{\ell_t(z;\theta)}{\lambda}\Big).
\]
Consider the difference between the population and empirical KL DRO losses: 
\begin{align*}
\big|\mathcal{L}_{t}^{\mathrm{KL}}(\theta;\varepsilon)-\mathcal{L}^{\mathrm{KL}}_{n,t}(\theta;\varepsilon)\big|
&=
\Big|
\inf_{\lambda} \{\lambda\varepsilon + \lambda\log(\mathbb{E}_{P_t^\circ}[\varphi_t])\}
-
\inf_{\lambda} \{\lambda\varepsilon + \lambda\log(\mathbb{E}_{\mathbb{P}_{n,t}^\circ}[\varphi_t])\}
\Big|\\
&\le
\sup_{\lambda\in[\underline{\lambda},\bar{\lambda}]}
\lambda\Big|
\log\big(\mathbb{E}_{\mathbb{P}_{n,t}^\circ}[\varphi_t(Z,\lambda;\theta)]\big)
-
\log\big(\mathbb{E}_{P_{t}^\circ}[\varphi_t(Z,\lambda;\theta)]\big)
\Big|.
\end{align*}
Next, since $\ell_t\ge 0$, we have $\varphi_t(z,\lambda;\theta)\ge 1$ for all $z,\lambda,\theta$ and hence
\[
\mathbb{E}_{P_{t}^\circ}[\varphi_t(Z,\lambda;\theta)]\ge 1,
\;\;
\mathbb{E}_{\mathbb{P}_{n,t}^\circ}[\varphi_t(Z,\lambda;\theta)]\ge 1.
\]
Because $|\frac{d}{dx}\log x| = 1/x \le 1$ on $[1,\infty)$, the map $\log(\cdot)$ is $1$-Lipschitz on $[1,\infty)$.
Therefore, for every $\lambda\in[\underline{\lambda},\bar{\lambda}]$,
\[
\Big|
\log(\mathbb{E}_{\mathbb{P}_{n,t}^\circ}[\varphi_t])-\log(\mathbb{E}_{P_{t}^\circ}[\varphi_t])
\Big|
\le
\Big|
\mathbb{E}_{\mathbb{P}_{n,t}^\circ}[\varphi_t]-\mathbb{E}_{P_{t}^\circ}[\varphi_t]
\Big|.
\]
Thus
\begin{equation}
\label{eq:KL-reduction}
\big|\mathcal{L}_{t}^{\mathrm{KL}}(\theta;\varepsilon)-\mathcal{L}^{\mathrm{KL}}_{n,t}(\theta;\varepsilon)\big|
\le
\bar{\lambda}\cdot
\sup_{\lambda\in[\underline{\lambda},\bar{\lambda}]}
\Big|
\mathbb{E}_{\mathbb{P}_{n,t}^\circ}[\varphi_t(Z,\lambda;\theta)]-\mathbb{E}_{P_{t}^\circ}[\varphi_t(Z,\lambda;\theta)]
\Big|.
\end{equation}
Define
\[
F(\lambda)
:=
\mathbb{E}_{\mathbb{P}_{n,t}^\circ}\!\big[\varphi_t(Z,\lambda;\theta)\big]
-
\mathbb{E}_{P_{t}^\circ}\!\big[\varphi_t(Z,\lambda;\theta)\big],
\;\;
\lambda\in[\underline{\lambda},\bar{\lambda}].
\]
Since $0\le \ell_t(z;\theta)\le K_\ell$ and $\lambda\ge \underline{\lambda}$,
\[
1 \le \varphi_t(z,\lambda;\theta) \le \exp\Big(\frac{K_\ell}{\underline{\lambda}}\Big)=:\mathcal{J}_{\max}.
\]
Hence $\varphi_t(\cdot,\lambda;\theta)\in[1,\mathcal{J}_{\max}]$ and the range satisfies
$\mathcal{J}_{\max}-1 \le \mathcal{J}_{\max}$. For fixed $(z,\theta)$, the map $\lambda\mapsto \varphi_t(z,\lambda;\theta)=\exp(\ell_t(z;\theta)/\lambda)$ is differentiable on
$[\underline{\lambda},\bar{\lambda}]$ with
\[
\Big|\nabla_{\lambda} \varphi_t(z,\lambda;\theta)\Big|
=
\exp\Big(\frac{\ell_t(z;\theta)}{\lambda}\Big)\cdot\frac{\ell_t(z;\theta)}{\lambda^2}
\le
\exp\Big(\frac{K_\ell}{\underline{\lambda}}\Big)\cdot\frac{K_\ell}{\underline{\lambda}^2}
=:L_{\lambda}.
\]
Thus, for all $\lambda,\lambda'\in[\underline{\lambda},\bar{\lambda}]$ and all $z$,
\[
|\varphi_t(z,\lambda;\theta)-\varphi_t(z,\lambda';\theta)|\le L_{\lambda}\,|\lambda-\lambda'|.
\]
Consequently,
\begin{align}
\label{eq:F-Lipschitz}
|F(\lambda)-F(\lambda')|
\le
\mathbb{E}_{\mathbb{P}_{n,t}^\circ}\!\big[|\varphi_t(Z,\lambda;\theta)-\varphi_t(Z,\lambda';\theta)|\big]
+
\mathbb{E}_{P_{t}^\circ}\!\big[|\varphi_t(Z,\lambda;\theta)-\varphi_t(Z,\lambda';\theta)|\big]\notag
\le 2L_{\lambda}\,|\lambda-\lambda'|.
\end{align}
Again note that in this case, $\lambda^{*}$, the optimal dual variable for the dual problem, is a function of the data drawn from $P_{t}^{\circ}$. Therefore it is not $\mathcal{F}_{t}$ measurable and we must use a covering argument. Let $\mathcal{N}^{\lambda}_{\alpha}$ be an $\alpha$-net of $[0,\bar{\lambda}]$ in absolute value.
Its cardinality satisfies
\[
|\mathcal{N}^{\lambda}_{\alpha}|
\le
1+\frac{\bar{\lambda}}{\alpha}.
\]
For any $\lambda\in[0,\bar{\lambda}]$ pick $\lambda^\sharp\in\mathcal{N}^{\lambda}_{\alpha}$ such that
$|\lambda-\lambda^\sharp|\le \alpha$. Then we have
\[
|F(\lambda)|\le |F(\lambda^\sharp)| + |F(\lambda)-F(\lambda^\sharp)|
\le
\max_{\lambda^\sharp\in\mathcal{N}^{\lambda}_{\alpha}}|F(\lambda^\sharp)|
+ 2L_{\lambda}\alpha,
\]
and therefore
\begin{equation}
\label{eq:supF-net}
\sup_{\lambda\in[0,\bar{\lambda}]}|F(\lambda)|
\le
\max_{\lambda^\sharp\in\mathcal{N}^{\lambda}_{\alpha}}|F(\lambda^\sharp)|
+ 2L_{\lambda}\alpha.
\end{equation}

Fix any $\lambda^\sharp\in\mathcal{N}^{\lambda}_{\alpha}$. Conditional on $\overline{\mathcal{F}}_t$, the variables $\varphi_t(Z_{t,i},\lambda^\sharp;\theta)$ are i.i.d. in $[1,\mathcal{J}_{\max}]$, hence conditional Hoeffding gives
for any $\epsilon>0$:
\[
\Pr\Big(|F(\lambda^\sharp)|\ge \epsilon \mid \overline{\mathcal{F}}_t \Big)
\le
2\exp\Big(-\frac{2n_t\epsilon^2}{(\mathcal{J}_{\max}-1)^2}\Big)
\le
2\exp\Big(-\frac{2n_t\epsilon^2}{\mathcal{J}_{\max}^2}\Big).
\]
Union bound over $\lambda^\sharp\in\mathcal{N}^{\lambda}_{\alpha}$ yields
\[
\mathbb{P}\Big(\max_{\lambda^\sharp\in\mathcal{N}^{\lambda}_{\alpha}}|F(\lambda^\sharp)|\ge \epsilon \mid \mathcal{F}_t\Big)
\le
2|\mathcal{N}^{\lambda}_{\alpha}|\exp\Big(-\frac{2n_t\epsilon^2}{\mathcal{J}_{\max}^2}\Big).
\]
Setting the right-hand side to $\delta_t$ and solving for $\epsilon$ gives that with conditional probability at least $1-\delta_t$,
\begin{equation}
\label{eq:maxF-eps}
\max_{\lambda^\sharp\in\mathcal{N}^{\lambda}_{\alpha}}|F(\lambda^\sharp)|
\le
\mathcal{J}_{\max}\sqrt{\frac{\log\big(2|\mathcal{N}^{\lambda}_{\alpha}|/\delta_t\big)}{2n_t}}.
\end{equation}
With conditional probability at least $1-\delta_t$,
\begin{equation}
\label{eq:supF-final}
\sup_{\lambda\in[\underline{\lambda},\bar{\lambda}]}|F(\lambda)|
\le
\mathcal{J}_{\max}\sqrt{\frac{\log\big(2|\mathcal{N}^{\lambda}_{\alpha}|/\delta_t\big)}{2n_t}}
+2L_{\lambda}\alpha.
\end{equation}
Choose $\alpha := \mathcal{J}_{\max} / (2L_{\lambda}\sqrt{n_t})$
so that $2L_{\lambda}\alpha = \mathcal{J}_{\max}/\sqrt{n_t}$ and
\[
\log|\mathcal{N}^{\lambda}_{\alpha}|
\le
\log\!\Big(1+(\bar{\lambda}-\underline{\lambda})\cdot\frac{2L_{\lambda}\sqrt{n_t}}{\mathcal{J}_{\max}}\Big)
\lesssim \log n_t + \log(\bar{\lambda}),
\]
where the implicit constant depends only on $K_\ell,\underline{\lambda}$ through $L_{\lambda}$ and $\mathcal{J}_{\max}$. Plugging \eqref{eq:supF-final} into \eqref{eq:KL-reduction}, we conclude that with conditional probability at least $1-\delta_t$,
for every fixed $\theta\in\Theta$,
\[
\big|\mathcal{L}_{t}^{\mathrm{KL}}(\theta;\varepsilon)-\mathcal{L}^{\mathrm{KL}}_{n,t}(\theta;\varepsilon)\big|
\le
\bar{\lambda}\Big[
\mathcal{J}_{\max}\sqrt{\frac{\log\big(2|\mathcal{N}^{\lambda}_{\alpha}|/\delta_t\big)}{2n_t}}
+2L_{\lambda}\alpha
\Big]
\;\lesssim\;
\bar{\lambda}\mathcal{J}_{\max}\sqrt{\frac{\log n_t + \log(\bar{\lambda})+\log(1/\delta_t)}{n_t}}.
\]

We now establish that both the population and empirical KL-DRO losses are Lipschitz in $\theta$. Fix any $\lambda\in[\underline{\lambda},\bar{\lambda}]$ and define
\[
\phi_t(\theta,\lambda)
:=
\lambda\varepsilon + \lambda\log\!\Big(\mathbb{E}_{Z\sim\mathbb{P}^\circ_t}\big[\varphi_t(Z,\lambda;\theta)\big]\Big),
\;\;
\phi_{n,t}(\theta,\lambda)
:=
\lambda\varepsilon + \lambda\log\!\Big(\mathbb{E}_{Z\sim\mathbb{P}^\circ_{n,t}}\big[\varphi_t(Z,\lambda;\theta)\big]\Big).
\]
Then
\[
\mathcal{L}^{\mathrm{KL}}_{t}(\theta;\varepsilon)=\inf_{\lambda\in[\underline{\lambda},\bar{\lambda}]}\phi_t(\theta,\lambda),
\;\;
\mathcal{L}^{\mathrm{KL}}_{n,t}(\theta;\varepsilon)=\inf_{\lambda\in[\underline{\lambda},\bar{\lambda}]}\phi_{n,t}(\theta,\lambda).
\]

Let $\theta,\theta'\in\Theta$. Using $|\inf_x f(x)-\inf_x g(x)|\le \sup_x|f(x)-g(x)|$, we have
\begin{align*}
\big|\mathcal{L}^{\mathrm{KL}}_{t}(\theta;\varepsilon)-\mathcal{L}^{\mathrm{KL}}_{t}(\theta';\varepsilon)\big|
&\le
\sup_{\lambda\in[\underline{\lambda},\bar{\lambda}]}\big|\phi_t(\theta,\lambda)-\phi_t(\theta',\lambda)\big|=
\sup_{\lambda\in[\underline{\lambda},\bar{\lambda}]}
\lambda\left|
\log\!\Big(\mathbb{E}_{\mathbb{P}^\circ_t}[\varphi_t(Z,\lambda;\theta)]\Big)
-
\log\!\Big(\mathbb{E}_{\mathbb{P}^\circ_t}[\varphi_t(Z,\lambda;\theta')]\Big)
\right|.
\end{align*}
Since $\ell_t\ge 0$, we have $\varphi_t(\cdot,\lambda;\cdot)\ge 1$ and hence both expectations are at least $1$.
Thus $\log(\cdot)$ is $1$-Lipschitz on $[1,\infty)$, and so
\[
\left|
\log\!\Big(\mathbb{E}_{\mathbb{P}^\circ_t}[\varphi_t(\theta)]\Big)
-
\log\!\Big(\mathbb{E}_{\mathbb{P}^\circ_t}[\varphi_t(\theta')]\Big)
\right|
\le
\left|
\mathbb{E}_{\mathbb{P}^\circ_t}[\varphi_t(Z,\lambda;\theta)]
-
\mathbb{E}_{\mathbb{P}^\circ_t}[\varphi_t(Z,\lambda;\theta')]
\right|.
\]
Moreover, for each $z$ and $\lambda$,
\[
\big|\varphi_t(z,\lambda;\theta)-\varphi_t(z,\lambda;\theta')\big|
=
\left|\exp\Big(\frac{\ell_t(z;\theta)}{\lambda}\Big)-\exp\Big(\frac{\ell_t(z;\theta')}{\lambda}\Big)\right|
\le
\exp\Big(\frac{K_\ell}{\underline{\lambda}}\Big)\cdot \frac{1}{\underline{\lambda}}\,
\big|\ell_t(z;\theta)-\ell_t(z;\theta')\big|,
\]
where we used the mean value theorem and $\lambda\ge\underline{\lambda}$.
By Lemma \ref{appendix:lipschitz-bound-l}, $\ell_t(z;\theta)$ is $L_{K_g,\eta}$-Lipschitz in $\theta$, hence
\[
\big|\varphi_t(z,\lambda;\theta)-\varphi_t(z,\lambda;\theta')\big|
\le
\frac{\mathcal{J}_{\max}}{\underline{\lambda}}\,L_{K_g,\eta}\,\|\theta-\theta'\|_2,
\;\;
\mathcal{J}_{\max}:=\exp\Big(\frac{K_\ell}{\underline{\lambda}}\Big).
\]
Taking expectations and combining yields
\[
\big|\mathcal{L}^{\mathrm{KL}}_{t}(\theta;\varepsilon)-\mathcal{L}^{\mathrm{KL}}_{t}(\theta';\varepsilon)\big|
\le
\sup_{\lambda\in[\underline{\lambda},\bar{\lambda}]}
\lambda\cdot \frac{\mathcal{J}_{\max}}{\underline{\lambda}}\,L_{K_g,\eta}\,\|\theta-\theta'\|_2
\le
L_{\mathrm{KL},\theta}\,\|\theta-\theta'\|_2,
\]
where we define the (population) Lipschitz constant
\[
L_{\mathrm{KL},\theta}
:=
\frac{\bar{\lambda}\mathcal{J}_{\max}}{\underline{\lambda}}\,L_{K_g,\eta}.
\]
The same argument (replacing $P_t^\circ$ by $\mathbb{P}_{n,t}^\circ$) shows that
$\mathcal{L}^{\mathrm{KL}}_{n,t}(\cdot;\varepsilon)$ is also $L_{\mathrm{KL},\theta}$-Lipschitz on $\Theta$. Let $\mathcal{N}^{\theta}_{\alpha}$ be an $\alpha$-net of $\Theta$ under $\|\cdot\|_2$. Then $|\mathcal{N}^{\theta}_{\alpha}|\le \left(3B / \alpha \right)^d$.  For each $\theta^\sharp\in\mathcal{N}^{\theta}_{\alpha}$, apply the fixed-$\theta$ deviation bound obtained in Step 3
with failure probability $\delta_t/|\mathcal{N}^{\theta}_{\alpha}|$, and union bound over $\theta^\sharp$.
Thus, with conditional probability at least $1-\delta_t$,
\begin{equation}
\label{eq:KL-dev-net-theta}
\sup_{\theta^\sharp\in\mathcal{N}^{\theta}_{\alpha}}
\big|\mathcal{L}^{\mathrm{KL}}_{t}(\theta^\sharp;\varepsilon)-\mathcal{L}^{\mathrm{KL}}_{n,t}(\theta^\sharp;\varepsilon)\big|
\;\lesssim\;
\bar{\lambda}\mathcal{J}_{\max}\sqrt{\frac{\log n_t + \log(\bar{\lambda})
+\log\!\Big(\frac{|\mathcal{N}^{\theta}_{\alpha}|}{\delta_t}\Big)}{n_t}}.
\end{equation}
Now fix any $\theta\in\Theta$ and choose $\theta^\sharp\in\mathcal{N}^{\theta}_{\alpha}$ with $\|\theta-\theta^\sharp\|_2\le \alpha$.
Then using the Lipschitzness of both $\mathcal{L}_t^{\mathrm{KL}}$ and $\mathcal{L}^{\mathrm{KL}}_{n,t}$,
\begin{align*}
\big|\mathcal{L}^{\mathrm{KL}}_{t}(\theta;\varepsilon)-\mathcal{L}^{\mathrm{KL}}_{n,t}(\theta;\varepsilon)\big|
&\le
\big|\mathcal{L}^{\mathrm{KL}}_{t}(\theta;\varepsilon)-\mathcal{L}^{\mathrm{KL}}_{t}(\theta^\sharp;\varepsilon)\big|
+
\big|\mathcal{L}^{\mathrm{KL}}_{t}(\theta^\sharp;\varepsilon)-\mathcal{L}^{\mathrm{KL}}_{n,t}(\theta^\sharp;\varepsilon)\big|
+
\big|\mathcal{L}^{\mathrm{KL}}_{n,t}(\theta^\sharp;\varepsilon)-\mathcal{L}^{\mathrm{KL}}_{n,t}(\theta;\varepsilon)\big|\\
&\le
\sup_{\theta^\sharp\in\mathcal{N}^{\theta}_{\alpha}}
\big|\mathcal{L}^{\mathrm{KL}}_{t}(\theta^\sharp;\varepsilon)-\mathcal{L}^{\mathrm{KL}}_{n,t}(\theta^\sharp;\varepsilon)\big|
+2L_{\mathrm{KL},\theta}\alpha.
\end{align*}
Taking $\sup_{\theta\in\Theta}$ and combining with \eqref{eq:KL-dev-net-theta}, we obtain: with conditional probability at least $1-\delta_t$,
\begin{equation}
\label{eq:KL-dev-all-theta}
\sup_{\theta\in\Theta}
\big|\mathcal{L}^{\mathrm{KL}}_{t}(\theta;\varepsilon)-\mathcal{L}^{\mathrm{KL}}_{n,t}(\theta;\varepsilon)\big|
\;\lesssim\;
\bar{\lambda}\mathcal{J}_{\max}\sqrt{\frac{\log n_t + \log(\bar{\lambda})
+\log\!\Big(\frac{|\mathcal{N}^{\theta}_{\alpha}|}{\delta_t}\Big)}{n_t}}
\;+\;2L_{\mathrm{KL},\theta}\alpha.
\end{equation}

Choose
\[
\alpha := \frac{1}{\sqrt{n_t}}
\;\;\text{so that}\;\;
\log|\mathcal{N}^{\theta}_{\alpha}|
\le d\log(3B\sqrt{n_t})
\lesssim d\log n_t,
\;\; 
2L_{\mathrm{KL},\theta}\alpha \lesssim \frac{L_{\mathrm{KL},\theta}}{\sqrt{n_t}}.
\]
Plugging this into \eqref{eq:KL-dev-all-theta} yields with conditional probability at least $1-\delta_t$,
\begin{equation}
\label{eq:KL-dev-summary}
\sup_{\theta\in\Theta}
\big|\mathcal{L}^{\mathrm{KL}}_{t}(\theta;\varepsilon)-\mathcal{L}^{\mathrm{KL}}_{n,t}(\theta;\varepsilon)\big|
\;\lesssim\;
\bar{\lambda}\mathcal{J}_{\max}\sqrt{\frac{d\log n_t + \log (\overline{\lambda}) + \log(1/\delta_t)}{n_t}},
\end{equation}
where we absorbed constants depending only on $(B,K_\ell,\underline{\lambda},L_{K_g,\eta})$. Setting $\delta_t := \delta/T$ and apply a union bound over $t\in\{0,\dots,T-1\}$ to conclude that, with probability at least $1-\delta$,
for all $t\in[T]$ simultaneously,
\begin{equation}
\label{eq:KL-dev-all-t}
\sup_{\theta\in\Theta}
\big|\mathcal{L}^{\mathrm{KL}}_{t}(\theta;\varepsilon)-\mathcal{L}^{\mathrm{KL}}_{n,t}(\theta;\varepsilon)\big|
\;\lesssim\;
\bar{\lambda}\mathcal{J}_{\max}\sqrt{\frac{d\log n_t + \log (\overline{\lambda}) + \log(T/\delta)}{n_t}}.
\end{equation}
On the event in \eqref{eq:KL-dev-all-t}, for each fixed $t$ we have the standard decomposition
\begin{align*}
\mathcal{L}^{\mathrm{KL}}_{t}(\hat{\theta}^{\mathrm{KL}}_{n,t};\varepsilon)-\mathcal{L}^{\mathrm{KL}}_{t}(\theta_t^{\mathrm{KL}};\varepsilon)
&=
\underbrace{\mathcal{L}^{\mathrm{KL}}_{t}(\hat{\theta}^{\mathrm{KL}}_{n,t};\varepsilon)-\mathcal{L}^{\mathrm{KL}}_{n,t}(\hat{\theta}^{\mathrm{KL}}_{n,t};\varepsilon)}_{\le \sup_{\theta}|\mathcal{L}_t-\mathcal{L}_{n,t}|}
+
\underbrace{\mathcal{L}^{\mathrm{KL}}_{n,t}(\hat{\theta}^{\mathrm{KL}}_{n,t};\varepsilon)-\mathcal{L}^{\mathrm{KL}}_{n,t}(\theta_t^{\mathrm{KL}};\varepsilon)}_{\le 0}
+
\underbrace{\mathcal{L}^{\mathrm{KL}}_{n,t}(\theta_t^{\mathrm{KL}};\varepsilon)-\mathcal{L}^{\mathrm{KL}}_{t}(\theta_t^{\mathrm{KL}};\varepsilon)}_{\le \sup_{\theta}|\mathcal{L}_t-\mathcal{L}_{n,t}|}\\
&\le
2\sup_{\theta\in\Theta}\big|\mathcal{L}^{\mathrm{KL}}_{t}(\theta;\varepsilon)-\mathcal{L}^{\mathrm{KL}}_{n,t}(\theta;\varepsilon)\big|.
\end{align*}

Invoking Lemma \ref{appendix:strong-convexity-of-KL-loss}, we have for all $\theta\in\Theta$,
\[
\mathcal{L}^{\mathrm{KL}}_{t}(\theta;\varepsilon)-\mathcal{L}^{\mathrm{KL}}_{t}(\theta_t^{\mathrm{KL}};\varepsilon)
\ge
\frac{\lambda}{\eta^2}\|\theta-\theta_t^{\mathrm{KL}}\|_2^2.
\]
Conditional on $\mathcal{F}_{t}$, applying this inequality at $\theta=\hat{\theta}^{\mathrm{KL}}_{n,t}$ yields, with probability at least $1-\delta$, for all $t\in[T]$,
\[
\|\hat{\theta}^{\mathrm{KL}}_{n,t}-\theta_t^{\mathrm{KL}}\|_2^2
\le
\frac{\eta^2}{\lambda}
\Big(\mathcal{L}^{\mathrm{KL}}_{t}(\hat{\theta}^{\mathrm{KL}}_{n,t};\varepsilon)-\mathcal{L}^{\mathrm{KL}}_{t}(\theta_t^{\mathrm{KL}};\varepsilon)\Big)
\;\lesssim\;
\frac{\eta^2\bar{\lambda}\mathcal{J}_{\max}}{\kappa}\sqrt{\frac{d\log n_t + \log (\overline{\lambda}) + \log(T/\delta)}{n_t}}.
\]
This completes the proof.
\end{proof}

\section{Proof of "Fast Rate" KL-DRO-REBEL}
\label{appendix:proofs-fast-kl}
For proving the fast rate for REBEL with KL ambiguity sets, we will take a different approach. Recognizing from the proof of the slow rate that we simply have a single dual variable to deal with, we will not take a uniform gradient concentration via measurable selection route as it is unnecessary. Rather we will prove that the excess robust risk rate scales as $\widetilde{\mathcal{O}}(d/n)$ which will allow us to conclude the parameter error scales as $\widetilde{\mathcal{O}}(\sqrt{d/n})$ in parameter estimation error via strong convexity. 

\subsection{Preliminary technical results for Local Rademacher Complexity analysis}

We will prove this with a more refined analysis typically referred to as Local Rademacher Complexity \cite{Bartlett_2005} which allows us to prove faster concentration rates via Talagrand-Bousquet's inequality.

\vspace{1em}

\begin{theorem}[\cite{BOUSQUET2002495}]\label{thm:bousquet-2002-633}
Consider i.i.d.\ random variables $(Z_1,\dots,Z_n)\sim D^n$. Let $\zeta$ be a real-valued
function of $(Z_1,\dots,Z_n)$. Moreover, for each $k\in[n]$, let $\zeta_k$ be a real-valued
function of $(Z_1,\dots,Z_{k-1},Z_{k+1},\dots,Z_n)$ such that
\[
\sum_{k=1}^n \bigl[\zeta-\zeta_k\bigr]\le \zeta.
\]
Assume that for each $k$, there exists a function $\zeta_k'$ of $(Z_1,\dots,Z_n)$ such that
\[
\zeta_k' \le \zeta-\zeta_k \le M, 
\;\; 
\mathbb{E}_{Z_k}\,\zeta_k' \ge 0,
\;\; 
\zeta_k' \le uM.
\]
Then for all $t\ge 0$,
\[
\Pr\!\left[
\zeta \ge \mathbb{E}\zeta
+\sqrt{\,2\Bigl((1+u)M\,\mathbb{E}\zeta + n\sigma^2\Bigr)t\,}
+\frac{tM}{3}
\right]\le e^{-t},
\]
where
\[
\sigma^2 \;\ge\; \frac{1}{n}\sum_{k=1}^n \mathbb{E}_{Z_k}\bigl(\zeta_k'\bigr)^2.
\]
\end{theorem}

A simple corollary of this yields a (two-sided) uniform convergence result:

\vspace{1em}

\begin{corollary}[\cite{Zhang_2023}, Corollary 6.34]\label{cor:bousquet-634}
Consider a real-valued function class $\mathcal F=\{f:\mathcal Z\to\mathbb R\}$ and let $D$ be a
distribution on $\mathcal Z$. Assume that there exist constants $M>0$ and $\sigma>0$ such that,
for all $f\in\mathcal F$,
\[
\sigma^2 \;\ge\; \mathrm{Var}_{Z\sim D}\!\bigl[f(Z)\bigr],
\;\;\text{and}\;\;
\sup_{z'\in\mathcal Z}\Bigl[\mathbb E_{Z\sim D} f(Z) - f(z')\Bigr] \;\le\; M.
\]
Let $S_n=\{Z_1,\dots,Z_n\}$ be $n$ independent random variables from $D$. Then with probability
at least $1-\delta$ over $S_n$, for all $f\in\mathcal F$,
\begin{align*}
\mathbb E_{Z\sim D} f(Z) - \frac{1}{n}\sum_{i=1}^n f(Z_i)
&\le \epsilon_n(\mathcal F,D)
+ \sqrt{\frac{\bigl(4M\,\epsilon_n(\mathcal F,D)+2\sigma^2\bigr)\ln(1/\delta)}{n}}
+ \frac{M\ln(1/\delta)}{3n} \\
&\le 2\,\epsilon_n(\mathcal F,D)
+ \sqrt{\frac{2\sigma^2\ln(1/\delta)}{n}}
+ \frac{4M\ln(1/\delta)}{3n},
\end{align*}
where $\epsilon_n(\mathcal F,D)$ is the expected supremum of the corresponding empirical process. More explicitly, given an empirical process $\{\phi(w,S_n): w\in\Omega\}$ with $S_n\sim D^n$,
we define $\epsilon_n(\mathcal G,D)$ as 
\[
\epsilon_n(\mathcal G,D)
\;:=\;
\mathbb E_{S_n}\,\sup_{w\in\Omega}\Bigl[\phi(w,D)-\phi(w,S_n)\Bigr].
\]
\end{corollary}
 We will also need to introduce the following definition of a rate function:

\vspace{1em}

\begin{definition}\label{def:rate-function-localization-635}
Given a distribution $D$ and a function class $\mathcal F$, consider a localization function
$h:\mathcal F\to\mathbb R$ such that
\[
b_0 := \inf_{f\in\mathcal F} h(f) \;>\; -\infty.
\]
Define the localized function class, for all $b>b_0$, by
\[
\mathcal F_h(b) := \{\, f\in\mathcal F : h(f)\le b \,\}.
\]
For any $\alpha>0$, the rate function with respect to the localization $h$ is defined as
\[
\overline r^{\,h}_n(\alpha,\mathcal F,D)
\;:=\;
\sup\left\{
r:\;
r \le \inf_{\,r'>\max(r,\alpha b_0)}
\epsilon_n\!\left(\mathcal F_h\!\left(r' / \alpha \right),\, D\right)
\right\}.
\]
We note that the requirement $r'>\alpha b_0$ is only to ensure that
$\mathcal F_h(r'/\alpha)$ is always non-empty, and thus
$\epsilon_n(\mathcal F_h(r'/\alpha),D)$ is well-defined.
\end{definition}

We should note that the rate function is always non-negative as $\epsilon_n(\mathcal{F}_{h}(r^{\prime} / \lambda),D)$ is always non-negative. The usefulness of the rate function is based on the following result, which shows that $\epsilon_{n}$ can be upper bounded by this quantity:

\vspace{1em}

\begin{proposition}[\cite{Zhang_2023}, Proposition 6.38]\label{prop:localized-eps-bound-638}
For all $\alpha>0$ and all $b>\inf_{f\in\mathcal F} h(f)$, we have
\[
\epsilon_n \bigl(\mathcal F_h(b), D\bigr)
\;\le\;
\max\!\left(\overline r^{\,h}_n(\alpha,\mathcal F,D),\; \alpha b\right).
\]
\end{proposition}

By using Corollary \ref{cor:bousquet-634} and Proposition \ref{prop:localized-eps-bound-638} with a localization/peeling argument, one can deduce the following result provided the localization function $h(f)$ satisfies a Bernstein variance condition:

\vspace{1em}

\begin{theorem}[\cite{Zhang_2023}, Theorem 6.41]\label{thm:localized-fast-rate-641}
Consider a function class $\mathcal F$, a distribution $D$, and i.i.d.\ samples
$S_n=\{Z_1,\dots,Z_n\}\sim D^n$. For $f\in\mathcal F$, define
\[
f(D) := \mathbb E_{Z\sim D}\bigl[f(Z)\bigr],
\;\;
f(S_n) := \frac{1}{n}\sum_{i=1}^n f(Z_i).
\]
Assume that for all $f\in\mathcal F$,
\[
\mathrm{Var}_{D}\!\bigl[f(Z)\bigr] \;\le\; c_0^2 + c_1 h(f)
\]
for some constants $c_0,c_1\ge 0$ and some function $h(\cdot)\ge 0$. Assume also that
$\mathcal F$ is bounded in the sense that for all $f\in\mathcal F$,
\[
\sup_{z'\in\mathcal Z}\bigl[f(D) - f(z')\bigr] \;\le\; M.
\]
Then with probability at least $1-\delta$ over $S_n$, for all $f\in\mathcal F$ and all $\alpha>0$,
\[
f(D)
\;\le\;
f(S_n) + 5\alpha\,h(f) + 5r_0
+ \sqrt{\frac{2c_0^2\ln(1/\delta)}{n}}
+ \frac{(3c_1+4\alpha M)\ln(1/\delta)}{3\alpha n},
\]
where
\[
r_0
\;=\;
\overline r^{\,h}_n(\alpha,\mathcal F,D)
+ \alpha \inf_{f\in\mathcal F} h(f)
+ \sqrt{\frac{2c_0^2}{n}}
+ \frac{3c_1+4\alpha M}{3\alpha n}.
\]
\end{theorem}

Lastly we will require a result that shows that the rate function can be obtained from a uniform upper bound of the Rademacher complexity:

\vspace{1em}

\begin{proposition}[\cite{Zhang_2023}, Proposition 6.46]\label{prop:local-rad-fixed-point-646}
Consider a function class $\mathcal G := \{\phi(w,z): w\in\Omega\}$, 
with localization $h(w):=\phi(w,D)$ and $\inf_{w\in\Omega} h(w)=0$. Assume that
$|\phi(\cdot)|\le M$ and that the variance condition~(3.14) holds with $c_0=0$.
Assume further that for any $b>0$,
\[
\sup_{S_n}\,
\mathfrak R\!\left(
\Bigl\{\phi(w,\cdot): \frac{1}{n}\sum_{i=1}^n \phi(w,Z_i)^2 \le b\Bigr\},
\, S_n
\right)
\;\le\; r_n(b),
\]
where $r_n(b)$ is a continuous concave function of $b$. Let $\alpha \le 0.5\,c_1/M$ and define
\[
b_0 := \sup\Bigl\{ b>0:\; b \le \frac{4c_1}{\alpha}\,r_n(b)\Bigr\}.
\]
Then $\overline r^{\,h}_n(\alpha,\mathcal G,D)
\;\le\;
0.5\,\alpha b_0 / c_1$.
\end{proposition}

\subsection{Proof of Fast Rate}
Throughout this proof, we will have Assumption \ref{Assumption:KL-DRO-Assumption} hold. We will use these arguments to obtain a fast excess robust risk rate of $\widetilde{\mathcal{O}}(d/n)$. We first will recall the setup again for convenience. Let $t \in \left\{ 0, \dots, T-1 \right\}$ and for each $t$, we collect a dataset $\mathcal{D}_{t} = \left\{ (x_{t, i}, y_{t, i}, y_{t, i}^{\prime}) \right\}_{i=1}^{n_{t}}$ with $x_{t, i} \sim \rho$, $y_{t, i}, y_{t, i}^{\prime} \stackrel{\mathrm{i.i.d}}{\sim} \pi_{\theta_{t-1}}(\cdot \mid x_{t, i})$. Define $\mathcal{F}_{t} = \sigma(\theta_{0}, \mathcal{D}_{0}, \dots, \theta_{t-1}, \mathcal{D}_{t-1})$ be the sigma-field containing everything revealed up to the start of iteration $t$ and $\overline{\mathcal{F}}_{t} = \sigma(\mathcal{F}_{t}, \mathcal{D}_{t})$. In particular, $\theta_{t}$ is $\mathcal{F}_{t}$ measurable. 

\subsubsection{Part I: Establishing a basic inequality.}
First we will start by defining some notation. Let $\Lambda := [\underline{\lambda}, \overline{\lambda}]$ and define, for each $(\theta, \lambda)$,
\[
    \Psi_{t}(\theta, \lambda) := \lambda \varepsilon + \lambda \log \left( \mathbb{E}_{P_{t}^\circ}[\varphi_t(Z,\lambda;\theta)] \right), \quad \Psi_{n, t}(\theta, \lambda) := \lambda \varepsilon + \lambda \log \left( \mathbb{E}_{P_{n, t}^\circ}[\varphi_t(Z,\lambda;\theta)] \right)
\]
where $\varphi_t(z,\lambda;\theta):=\exp\Big(\ell_t(z;\theta) / \lambda \Big)$. Thus we have 
\[
    \mathcal{L}_{t}^{\mathrm{KL}}(\theta; \varepsilon) = \inf_{\lambda \in \Lambda} \Psi_{t}(\theta, \lambda), \quad \mathcal{L}_{n, t}^{\mathrm{KL}}(\theta; \varepsilon) = \inf_{\lambda \in \Lambda} \Psi_{n, t}(\theta, \lambda)
\]
by the strong duality result for KL-DRO [\ref{appendix:strong-duality-KL}]. 

First let $(\theta_{t}^{\mathrm{KL}}, \lambda^{\star}) \in \argmin_{(\theta, \lambda) \in \Theta \times \Lambda} \Psi_{t}(\theta, \lambda)$ and $(\widehat{\theta}_{n, t}^{\mathrm{KL}}, \widehat{\lambda}) \in \argmin_{(\theta, \lambda) \in \Theta \times \Lambda} \Psi_{n, t}(\theta, \lambda)$. Also define for each $\lambda \in \Lambda$, $\theta_{\lambda}^{\star} \in \argmin_{\theta \in \Theta} \Psi_{t}(\theta, \lambda)$ or equivalently $\theta_{\lambda}^{\star} \in \argmin_{\theta \in \Theta} P_{t}^{\circ} \varphi_{t}(z, \lambda; \theta)$. Notice $\theta_{\lambda}^{\star}$ is well-defined since by compactness of $\Theta$, $\theta \mapsto \ell_{t}(z; \theta)$ is continuous, and $\ell_{t}$ is uniformly bounded (Lemma \ref{appendix:uniform-bound-l}), then by Dominated Convergence Theorem and Weierstrass's Extreme Value Theorem (Theorem \ref{appendix:extreme-value-theorem}), we conclude $\theta_{\lambda}^{\star}$ exists for every $\lambda$. Then we have the following decomposition:
\begin{align*}
    \mathcal{L}_{t}^{\mathrm{KL}}(\widehat{\theta}_{n, t}^{\mathrm{KL}} ; \varepsilon) - \mathcal{L}_{t} (\theta_{t}^{\mathrm{KL}} ; \varepsilon) = \underbrace{\left[ \Psi_{t}(\widehat{\theta}_{n, t}^{\mathrm{KL}}, \widehat{\lambda}) - \Psi_{t}(\theta_{\widehat{\lambda}}^{\star}, \widehat{\lambda}) \right]}_{\textbf{$\theta$-suboptimality at random $\widehat{\lambda}$}} + \underbrace{\left[ \Psi_{t}(\theta_{\widehat{\lambda}}^{\star}, \widehat{\lambda}) - \Psi_{t}(\theta_{t}^{\mathrm{KL}}, \lambda^{\star}) \right]}_{ \textbf{$\lambda$-mismatch}}
\end{align*}

\subsubsection{Part II: Bounding $\theta$-suboptimality.}
We will first bound $\Psi_{t}(\widehat{\theta}_{n, t}^{\mathrm{KL}}, \widehat{\lambda}) - \Psi_{t}(\theta_{\widehat{\lambda}}^{\star}, \widehat{\lambda})$. First notice that since $\ell_t\ge 0$, we have $\varphi_t(z,\lambda;\theta)\ge 1$ for all $z,\lambda,\theta$ and hence
\[
\mathbb{E}_{P_{t}^\circ}[\varphi_t(Z,\lambda;\theta)]\ge 1,
\;\;
\mathbb{E}_{\mathbb{P}_{n,t}^\circ}[\varphi_t(Z,\lambda;\theta)]\ge 1.
\]
Since $|\frac{d}{dx}\log x| = 1/x \le 1$ on $[1,\infty)$, the map $\log(\cdot)$ is $1$-Lipschitz on $[1,\infty)$.
Therefore, for every $\lambda\in[\underline{\lambda},\bar{\lambda}]$,
\[
\abs{\log(\mathbb{E}_{\mathbb{P}_{n,t}^\circ}[\varphi_t])-\log(\mathbb{E}_{P_{t}^\circ}[\varphi_t])}
\le
\abs{\mathbb{E}_{\mathbb{P}_{n,t}^\circ}[\varphi_t]-\mathbb{E}_{P_{t}^\circ}[\varphi_t]}
\]
Using this we find and the fact that $\lambda^{*}, \widehat{\lambda} < \overline{\lambda}$
\begin{align*}
    \Psi_{t}(\widehat{\theta}_{n, t}^{\mathrm{KL}}, \widehat{\lambda}) - \Psi_{t}(\theta_{\widehat{\lambda}}^{\star}, \widehat{\lambda}) &= \widehat{\lambda} \left[ \log \left( \mathbb{E}_{P_{t}^\circ}[\varphi_t(Z,\widehat{\lambda};\widehat{\theta}_{n, t}^{\mathrm{KL}})] \right) - \log \left( \mathbb{E}_{P_{t}^\circ}[\varphi_t(Z,\widehat{\lambda};\theta_{\widehat{\lambda}}^{\star})] \right) \right] \\
    &\leq \widehat{\lambda} \abs{\log \left( \mathbb{E}_{P_{t}^\circ}[\varphi_t(Z,\widehat{\lambda};\widehat{\theta}_{n, t}^{\mathrm{KL}})] \right) - \log \left( \mathbb{E}_{P_{t}^\circ}[\varphi_t(Z,\widehat{\lambda};\theta_{\widehat{\lambda}}^{\star})] \right)} \\
    &\leq \overline{\lambda} \abs{P_{t}^{\circ} (\varphi_t(Z,\widehat{\lambda};\widehat{\theta}_{n, t}^{\mathrm{KL}}) -\varphi_t(Z,\widehat{\lambda};\theta_{\widehat{\lambda}}^{\star}) )}
\end{align*}
Thus it suffices to bound the empirical process. We will use a local Rademacher complexity argument here. First we will note that since $\ell_{t}(z; \theta) \in [0, K_{\ell}]$ by Lemma \ref{appendix:uniform-bound-l}, we have $\varphi_t(z,\lambda;\theta) \in [1, \mathcal{J}_{\mathrm{max}}]$ where $\mathcal{J}_{\mathrm{max}} := \exp\Big( K_\ell / \underline{\lambda}\Big)$. Define the shifted-and-scaled version of $\varphi_{t}$ as follows:
\[
    \widetilde{\varphi}_{t}(z, \lambda; \theta) = \frac{\varphi_t(z,\lambda;\theta) - 1}{\mathcal{J}_{\mathrm{max}} - 1} \in [0, 1]
\]
Define the centered "shifted-and-scaled" function class $$\Phi_{t} := \left\{ \widetilde{\varphi}_{t}(\cdot, \lambda; \theta): (\theta, \lambda) \in \Theta \times \Lambda  \right\}.$$ Let $\gamma_{t}(\cdot, \lambda; \theta) = \widetilde{\varphi}_{t}(\cdot, \lambda; \theta) - \widetilde{\varphi}_{t}(\cdot, \lambda; \theta_{\lambda}^{*})$. Then we can define the centered class as: $$\Gamma_{t} = \left\{ \gamma_{t}(\cdot, \lambda; \theta) : (\theta, \lambda) \in \Theta \times \Lambda \right\}.$$
Note that $\gamma_{t} \in [-1, 1]$ and by definition we have $P_{t}^{\circ}\gamma_{t}(\cdot, \lambda; \theta) \geq 0$. To apply Corollary \ref{cor:bousquet-634}, we must choose a localization function. We will take $h(\theta, \lambda) = P_{t}^{\circ} \gamma(\cdot, \lambda; \theta)$. With this choice, we clearly have $h(\theta_{\lambda}^{\star}, \lambda) = 0$ and thus $\inf_{(\theta, \lambda) \in \Theta \times \Lambda} h(\theta, \lambda) = 0$. Additionally notice that since $\gamma_{t} \in [-1, 1]$, $\abs{\gamma_t} \leq 1$ and so we can take $M=1$. What remains is to show our choice of localization function satisfies a Bernstein variance condition. We will need the following:

\vspace{1em}

\begin{claim}
\label{claim:strong-convexity-of-tilted-loss}
 The population map $\theta \mapsto P_{t}^{\circ} \widetilde{\varphi}_{t}(\cdot, \cdot; \theta)$ is $\mu$-strongly convex w.r.t $\norm{\cdot}_{\Sigma_{P, t}}$ with
 \[
    \mu = \frac{2}{\eta^2 \overline{\lambda} (\mathcal{J}_{\max} - 1)}
 \]
\end{claim}

\begin{proof}[Proof of Claim \ref{claim:strong-convexity-of-tilted-loss}]
    For each $z$, notice that
    \begin{align*}
        \nabla_{\theta}^2 \varphi_{t}(z, \lambda; \theta) &= \varphi_{t}(z, \lambda; \theta) \left[ \frac{1}{\lambda} \nabla_{\theta}^2 \ell_{t}(z; \theta) + \frac{1}{\lambda^2} \nabla_{\theta} \ell_{t}(z; \theta) \nabla_{\theta} \ell_{t}(z; \theta)^{\top} \right] \\
        &\succeq \frac{\varphi_{t}(z, \lambda; \theta)}{\lambda} \nabla_{\theta}^2 \ell_{t}(z; \theta) \\
        &\succeq \frac{1}{\overline{\lambda}}\nabla_{\theta}^2 \ell_{t}(z; \theta),
    \end{align*}
    where the first inequality holds since $\nabla_{\theta} \ell_{t}(z; \theta) \nabla_{\theta} \ell_{t}(z; \theta)^{\top} \succeq 0$ and the second inequality holds from $\varphi_{t} \geq 1$ and Assumption \ref{Assumption:KL-DRO-Assumption}. From  Lemma \ref{appendix:strong-convexity-of-h}, we find that 
    \begin{align*}
        \nabla_{\theta}^{2} \ell(z;\theta) &= \frac{2}{\eta^{2}} \left( \psi(x, a) - \psi(x, a^{\prime}) \right) \left( \psi(x, a) - \psi(x, a^{\prime}) \right)^{\top} = \frac{2}{\eta^{2}}\Sigma_z.
    \end{align*}
    Thus taking expectations with respect to the nominal data-generating distribution, using Assumption \ref{assumption:data-coverage} and scaling by $(\mathcal{J}_{\max} - 1)^{-1}$, we can conclude. 
\end{proof}

Now we will show $\widetilde{\varphi}_{t}(z, \lambda; \theta)$ is Lipschitz in $\theta$:

\begin{align*}
\|\widetilde{\varphi}_{t}(\cdot,\lambda;\theta)-\widetilde{\varphi}_{t}(\cdot,\lambda;\theta')\|_{n,t}
&\le \|\widetilde{\varphi}_{t}(\cdot,\lambda;\theta)-\widetilde{\varphi}_{t}(\cdot,\lambda;\theta')\|_{\infty,t}\\
&= \max_{i\in[n_t]}\big|\widetilde{\varphi}_{t}(Z_i,\lambda;\theta)-\widetilde{\varphi}_{t}(Z_i,\lambda;\theta')\big|\\
&\le \frac{\mathcal{J}_{\max}}{\mathcal{J}_{\max}-1}\cdot\frac{1}{\underline{\lambda}}
\max_{i\in[n_t]}|\ell_t(Z_i;\theta)-\ell_t(Z_i;\theta')|\\
&\le \frac{\mathcal{J}_{\max}}{\mathcal{J}_{\max}-1}\cdot\frac{1}{\underline{\lambda}}\cdot\frac{4K_g}{\eta}\,\|\theta-\theta'\|_2,
\end{align*}

where the first inequality holds by Lemma \ref{appendix:uniform-bound-l}, mean value theorem applied to $f(u) = e^{u}$, and $\lambda\ge\underline{\lambda}$. The last inequality holds from Lemma \ref{appendix:lipschitz-bound-l}. Furthermore note the identity using the Rayleigh quotient bound:
\[
    \norm{\theta - \theta^{\prime}}_{2}^2 \leq \frac{1}{\lambda_{\min}(\Sigma)} \norm{\theta - \theta^{\prime}}_{\Sigma}^2
\]

Using Lipschitzness of $\widetilde{\varphi}_{t}$ in $\theta$ and this identity coupled with Claim \ref{claim:strong-convexity-of-tilted-loss}, we find
\begin{align*}
    \mathrm{Var}\!\left(\gamma_{t}(\cdot,\lambda;\theta)\right)
    &\leq P_{t}^{\circ}\gamma_{t}^{2}(\cdot,\lambda;\theta) \\
    &\leq \frac{16K_{g}^{2}\,\mathcal{J}_{\max}^{2}}%
               {\eta^{2}\,\underline{\lambda}^{2}\,(\mathcal{J}_{\max}-1)^{2}}
          \,\|\theta-\theta_{\lambda}^{*}\|_{2}^{2} \\
    &\leq \frac{16K_{g}^{2}\,\mathcal{J}_{\max}^{2}}%
               {\eta^{2}\,\underline{\lambda}^{2}\,(\mathcal{J}_{\max}-1)^{2}}
          \cdot\frac{2}{\mu\kappa}
          \Bigl(
            P_{t}^{\circ}\widetilde{\varphi}_{t}(\cdot,\lambda;\theta)
            - P_{t}^{\circ}\widetilde{\varphi}_{t}(\cdot,\lambda;\theta_{\lambda}^{*})
          \Bigr)\\
    &= \frac{16K_{g}^{2}\,\overline{\lambda}\,\mathcal{J}_{\max}^{2}}%
            {\underline{\lambda}^{2}\,\kappa^{2}\,(\mathcal{J}_{\max}-1)}
       \,P_{t}^{\circ}\gamma_{t}(\cdot,\lambda;\theta) \\
    &=: V_{\!\mathcal{J}_{\max},\overline{\lambda},\kappa,K_{g}}\,
        P_{t}^{\circ}\gamma_{t}(\cdot,\lambda;\theta).
\end{align*}
Thus $\gamma_t$ satisfies the Bernstein condition with $c_{0} = 0, c_{1} = V_{\mathcal{J}_{\max}, \underline{\lambda}, \kappa, K_{g}}$. By Corollary \ref{cor:bousquet-634}, taking $c_0 = 0, c_{1} = V_{\mathcal{J}_{\max}, \underline{\lambda}, \kappa, K_{g}}, M=2$, with probability atleast $1-\delta$, $\forall \gamma_{t} \in \Gamma_{t}$ and $\forall \alpha > 0$:
\begin{align*}
    (1-5\alpha)P_{t}^{\circ} \gamma_{t}(z, \lambda; \theta) \leq P_{n, t}^{\circ} \gamma_{t}(z, \lambda; \theta) + 5 \overline{r}_{n}^{h}(\alpha, \Gamma_{t}, \mathcal{D}_t) + \frac{5(3V_{\mathcal{J}_{\max}, \underline{\lambda}, \kappa, K_{g}} + 8\alpha) \log(1/\delta)}{3\alpha n}
\end{align*}
It suffices to compute the rate $\overline{r}_{n}^{h}(\alpha, \Gamma_{t}, \mathcal{D}_t)$. First, we will show $\widetilde{\varphi}_{t}(z, \lambda; \theta)$ is Lipschitz in $(\theta, \lambda)$. We already showed it is Lipschitz in $\theta$ in Claim \ref{claim:strong-convexity-of-tilted-loss}. We now show this in $\lambda$. For fixed $(z,\theta)$, the map $\lambda\mapsto \varphi_t(z,\lambda;\theta)=\exp(\ell_t(z;\theta)/\lambda)$ is differentiable on
$[\underline{\lambda},\bar{\lambda}]$ with
\begin{align*}
    \Big|\nabla_{\lambda} \varphi_t(z,\lambda;\theta)\Big|
    &=
\exp\Big(\frac{\ell_t(z;\theta)}{\lambda}\Big)\cdot\frac{\ell_t(z;\theta)}{\lambda^2} \le
\frac{K_\ell \mathcal{J}_{\max}}{\underline{\lambda}^2}.
\end{align*}
Thus, for all $\lambda,\lambda'\in[\underline{\lambda},\bar{\lambda}]$ and all $z$, we can conclude with
\begin{align*}
     \norm{\widetilde{\varphi}_{t}(z, \lambda; \theta) - \widetilde{\varphi}_{t}(z, \lambda^{\prime}; \theta)}_{n,t} &= \frac{1}{\mathcal{J}_{\max} - 1}\norm{\varphi_{t}(z, \lambda; \theta) - \varphi_{t}(z, \lambda^{\prime}; \theta)}_{n,t} \leq \frac{K_\ell \mathcal{J}_{\max}}{\mathcal{J}_{\max} - 1} \cdot \frac{1}{\underline{\lambda}^2} |\lambda-\lambda'|.
\end{align*}
Thus we have
\begin{align*}
    \norm{\widetilde{\varphi}_{t}(z, \lambda; \theta) - \widetilde{\varphi}_{t}(z, \lambda^{\prime}; \theta^{\prime})}_{n,t} &\leq \frac{\mathcal{J}_{\max}}{\mathcal{J}_{\max} - 1} \cdot \frac{1}{\underline{\lambda}} \cdot \frac{4K_{g}}{\eta} \norm{\theta - \theta^{\prime}}_{2} + \frac{K_\ell \mathcal{J}_{\max}}{\mathcal{J}_{\max} - 1} \cdot \frac{1}{\underline{\lambda}^2} |\lambda-\lambda'| \\
    &:= L_{\mathcal{J}_{\max}, \underline{\lambda}, K_{g}}\norm{\theta - \theta^{\prime}}_{2} + L_{\mathcal{J}_{\max}, \underline{\lambda}, K_{\ell}}|\lambda-\lambda'|.
\end{align*}

Fix a sample $\mathcal{S}_{n}$. Consider the localized function class $$\Gamma_{t}^{b}(\mathcal{S}_{n}) = \left\{ \gamma_t(\cdot, \lambda; \theta) : P_{n, t}^{\circ} \gamma_t^2(\cdot, \lambda; \theta) \leq b \right\} \subseteq \Phi_{t}^{b}(\mathcal{S}_{n}) - \Phi_{t}^{b}(\mathcal{S}_{n}).$$ Let $\{ \theta_{j} \}_{j=1}^{\mathcal{N}_{\theta}}$ be a $\rho/2L_{\mathcal{J}_{\max}, \underline{\lambda}, K_{g}}$-cover of $(\Theta, \norm{\cdot}_{2})$ and $\{ \lambda_{k} \}_{k=1}^{\mathcal{N}_{\lambda}}$ be a $\rho / 2L_{\mathcal{J}_{\max}, \underline{\lambda}, K_{\ell}}$-cover of $(\Lambda, \abs{\cdot})$. Consider the product set $\mathcal{N}_{\theta, \lambda} = \left\{ u_{\theta_{j}, \lambda_{k}} : 1 \leq j \leq \mathcal{N}_{\theta}, \; 1 \leq k \leq \mathcal{N}_{\lambda} \right\}$. Then for any $(\theta, \lambda)$, choose $j, k$ such that $\norm{\theta - \theta_j}_{2} \leq \rho/2L_{\mathcal{J}_{\max}, \underline{\lambda}, K_{g}}$ and $\abs{\lambda - \lambda_{k}} \leq \rho / 2L_{\mathcal{J}_{\max}, \underline{\lambda}, K_{\ell}}$. Then by showing that $\widetilde{\varphi}_{t}(z, \lambda; \theta)$ is Lipschitz in $(\theta, \lambda)$, we have
\[
    \norm{u_{\theta, \lambda} - u_{\theta_j, \lambda_k}}_{n,t} \leq \rho
\]

Thus we have that $\mathcal{N}_{\theta, \lambda}$ is a $\rho$-cover of $\Phi_{t}^{b}(\mathcal{S}_{n})$ in $L_{2}(P_{n, t}^{\circ})$. Hence
\[
    \mathcal{N}_{\rho}(\Phi_{t}^{b}(\mathcal{S}_{n}) ;L_{2}(P_{n, t}^{\circ})) \leq \mathcal{N}_{\rho/2L_{\mathcal{J}_{\max}, \underline{\lambda}, K_{g}}} \left( \Theta ; \norm{\cdot}_{2} \right) \cdot \mathcal{N}_{\rho / 2L_{\mathcal{J}_{\max}, \underline{\lambda}, K_{\ell}}} \left( \Lambda ; \abs{\cdot}\right)
\]
Since $\Gamma_{t}^{b} \subseteq \Phi_{t}^b - \Phi_{t}^b$, we have that $\log \mathcal{N}_{\rho}(\Gamma_{t}^{b}(\mathcal{S}_{n}); L_{2}(P_{n, t}^{\circ})) \leq 2 \log \mathcal{N}_{\rho/2} (\Phi_{t}^{b}(\mathcal{S}_{n}); L_{2}(P_{n, t}^{\circ}))$. Standard volumetric bounds give us 
\begin{align*}
    \log \mathcal{N}_{\rho}(\Gamma_{t}^{b}(\mathcal{S}_{n}) ;L_{2}(P_{n, t}^{\circ})) &\leq 2\log \mathcal{N}_{\rho/4L_{\mathcal{J}_{\max}, \underline{\lambda}, K_{g}}} \left( \Theta ; \norm{\cdot}_{2} \right) + 2\log \mathcal{N}_{\rho / 4L_{\mathcal{J}_{\max}, \underline{\lambda}, K_{\ell}}} \left( [\underline{\lambda}, \overline{\lambda}] ; \abs{\cdot}\right) \\
    &\le 2d \log \left( \frac{4 L_{\mathcal{J}_{\max}, \underline{\lambda}, K_{g}} B}{\rho}\right) + 2\log \left( \frac{4L_{\mathcal{J}_{\max}, \underline{\lambda}, K_{\ell}}\overline{\lambda}}{\rho}\right) \\
    &\leq 2d \log \left( \frac{4L_{\mathcal{J}_{\max}, \underline{\lambda}, K_{g}} B + 4L_{\mathcal{J}_{\max}, \underline{\lambda}, K_{\ell}}\overline{\lambda}}{\rho}  \right).
\end{align*}
Then by Corollary \ref{cor:dudley}, we can conclude that the Rademacher complexity of the localized function class is as follows:
\begin{align*}
    \mathfrak{R}_{\mathcal{S}_{n}} \left( \Phi_{t}^{b}(\mathcal{S}_{n}) \right) &\lesssim \frac{1}{\sqrt{n}}  \int_{0}^{\mathrm{diam}(\Phi_{t}^{b}(\mathcal{S}_{n}))} \sqrt{\log \mathcal{N}_{\rho}(\mathcal{F}_{b}(\mathcal{S}_{n}) ;L_{2}(P_{n, t}^{\circ}))} d\rho \\
    &\leq \frac{1}{\sqrt{n}}  \int_{0}^{2\sqrt{b}} \sqrt{\log \mathcal{N}_{\rho}(\mathcal{F}_{b}(\mathcal{S}_{n}) ;L_{2}(P_{n, t}^{\circ}))} d\rho \\
    &\lesssim \sqrt{\frac{bd \log((4L_{\mathcal{J}_{\max}, \underline{\lambda}, K_{g}} B + 4L_{\mathcal{J}_{\max}, \underline{\lambda}, K_{\ell}}\overline{\lambda})^2 /b)}{n}} \\
    &:= r_{n}(b),
\end{align*}
where the second inequality holds since by definition $\norm{\widetilde{\varphi}_{t}(\cdot, \lambda ; \theta)}_{L_{2}(P_{n, t}^{\circ})} \leq \sqrt{b}$ so $\mathrm{diam}( \Phi_{t}^{b}(\mathcal{S}_{n})) \leq 2\sqrt{b}$. 

Clearly $r_{n}(b)$ is a continuous concave function of $b$. Consider the function $g(b) := b \log (A^2 / b) = 2b \log A - b\log b$, Then $g^{\prime \prime}(b) = -1/b < 0$ for all $b > 0$. Thus we have that $g$ is concave on $(0, \infty)$ and $g(b) \geq 0$ on $(0, A^2]$. Now since $\sqrt{\cdot}$ is concave and increasing on $[0, \infty)$, the composition $b \mapsto \sqrt{g(b)}$ is concave on $(0, A^{2}]$ so $r_{n}(b)$. It is worth noting that in \cite{Bartlett_2005}, most of their localized fixed-point lemmas rely on sub-root functions $\psi$ which are non-decreasing and $\psi(b) / \sqrt{b}$ is non-increasing. 

Define the truncation level
$B_0:=A^2 / e, \quad
\psi_n(b):=r_n\!\bigl(b\wedge B_0\bigr), \quad \forall b\ge 0$.
We claim that $\psi_n$ is a sub-root function, i.e.,
(i) $\psi_n$ is nonnegative and nondecreasing on $[0,\infty)$, and
(ii) the map $b\mapsto \psi_n(b)/\sqrt b$ is nonincreasing on $(0,\infty)$. Consider
\[
f(b)\;:=\;b\log\!\Big(\frac{A^2}{b}\Big)\;=\;b(\log A^2-\log b),\;\; b\in(0,A^2].
\]
Then
\[
f'(b)\;=\;\log\!\Big(\frac{A^2}{b}\Big)-1.
\]
Hence $f'(b)\ge 0$ iff $\log(A^2/b)\ge 1$, equivalently $b\le A^2/e=B_0$.
Therefore $f$ is nondecreasing on $(0,B_0]$, and since $r_n(b)=C\sqrt{f(b)/n}$ with
$\sqrt{\cdot}$ increasing, it follows that $r_n$ is nondecreasing on $(0,B_0]$.
Because $b\mapsto b\wedge B_0$ is nondecreasing and $r_n$ is nondecreasing on $[0,B_0]$
(and $\psi_n$ is constant for $b\ge B_0$), we conclude that $\psi_n$ is nondecreasing on $[0,\infty)$. To show $b\mapsto \psi_n(b)/\sqrt b$ is nonincreasing, we split into two regimes.

\smallskip
\noindent\emph{(a) $0<b\le B_0$.}
Here $\psi_n(b)=r_n(b)$, so
\[
\frac{\psi_n(b)}{\sqrt b}
\;=\;
\frac{C}{\sqrt n}\,\sqrt{\log\!\Big(\frac{A^2}{b}\Big)}.
\]
Let $g(b):=\sqrt{\log(A^2/b)}$. For $b\in(0,B_0]$, we have $\log(A^2/b)\ge 1$ and
\[
g'(b)
=\frac{1}{2\sqrt{\log(A^2/b)}}\cdot\frac{d}{db}\log\!\Big(\frac{A^2}{b}\Big)
=\frac{1}{2\sqrt{\log(A^2/b)}}\cdot\Bigl(-\frac{1}{b}\Bigr)
<0.
\]
Thus $g$ is strictly decreasing on $(0,B_0]$, and therefore
$b\mapsto \psi_n(b)/\sqrt b$ is decreasing on $(0,B_0]$.

\smallskip
\noindent\emph{(b) $b\ge B_0$.}
Here $\psi_n(b)=r_n(B_0)$ is constant, so
\[
\frac{\psi_n(b)}{\sqrt b}
=\frac{r_n(B_0)}{\sqrt b},
\]
which is decreasing in $b$. Moreover, there is no upward jump at $B_0$ since for $b\ge B_0$,
\[
\frac{\psi_n(b)}{\sqrt b}
=\frac{r_n(B_0)}{\sqrt b}
\le
\frac{r_n(B_0)}{\sqrt{B_0}}
=\lim_{u\uparrow B_0}\frac{r_n(u)}{\sqrt u}
=\lim_{u\uparrow B_0}\frac{\psi_n(u)}{\sqrt u}.
\]
Combining (a)--(b), the ratio $b\mapsto \psi_n(b)/\sqrt b$ is nonincreasing on $(0,\infty)$.

Now it suffices to solve the fixed point equation $b \leq 4r_{n}(b) / \alpha$. Reorganizing terms and letting $\beta = c_{1}^2d / \alpha^2n$, we must solve $b \leq \beta \log(A^2 / b)$. This does not admit a closed form solution with elementary functions, so we appeal to the Lambert W function. This yields a closed form solution $b = \beta W(A^2 / \beta)$. Using the fact that $W(x) \asymp \log x$ (up to $\log \log$ terms), we find that 
\[
    b_{0} \asymp \frac{d}{\alpha^2 n} \log \left( \frac{(4L_{\mathcal{J}_{\max}, \underline{\lambda}, K_{g}} B + 4L_{\mathcal{J}_{\max}, \underline{\lambda}, K_{\ell}}\overline{\lambda})^2 \alpha^2 n}{d} \right) \implies \overline{r}_{n}^{h}(\alpha, \Phi_{t}, \mathcal{D}_{t}) \asymp \frac{d}{\alpha n} \log \left( \frac{(4L_{\mathcal{J}_{\max}, \underline{\lambda}, K_{g}} B + 4L_{\mathcal{J}_{\max}, \underline{\lambda}, K_{\ell}}\overline{\lambda})^2 \alpha^2 n}{d} \right).
\]

Thus we conclude that probability atleast $1-\delta$, $\forall \gamma_{t} \in \Phi_{t}$ and $\forall \alpha > 0$:
\begin{align*}
    (1-5\alpha)P_{t}^{\circ} \gamma_{t}(z, \lambda; \theta) \leq P_{n, t}^{\circ} \gamma_{t}(z, \lambda; \theta) + \frac{5d}{\alpha n} \log \left( \frac{(2L_{\mathcal{J}_{\max}, \underline{\lambda}, K_{g}} B + 2L_{\mathcal{J}_{\max}, \underline{\lambda}, K_{\ell}}\overline{\lambda})^2 \alpha^2 n}{d} \right) + \frac{5(3V_{\mathcal{J}_{\max}, \underline{\lambda}, \kappa, K_{g}} + 8\alpha) \log(1/\delta)}{3\alpha n}.
\end{align*}
Since this bound is uniform over $(\theta, \lambda) \in \Theta \times \Lambda$, we can note that $\widehat{\theta}_{n, t}^{\mathrm{KL}} \in \argmin_{\theta \in \Theta} \Psi_{n, t}(\theta, \widehat{\lambda}_{n, t})$. Since $\theta \mapsto \lambda \log(\cdot)$ is monotone in $P_{n, t}^{\circ}e^{\ell / \lambda}$, we have that $P_{n, t}^{\circ} \widetilde{\varphi}_{t}(\cdot, \hat{\lambda}; \widehat{\theta}_{n, t}^{\mathrm{KL}}) \leq P_{n, t}^{\circ} \widetilde{\varphi}_{t}(\cdot, \hat{\lambda}; \theta_{t}^{\star})$ so this immediately implies $P_{n, t}^{\circ} \gamma_{t}(;\cdot, \hat{\lambda}; \widehat{\theta}_{n, t}^{\mathrm{KL}}) \leq 0$. Applying this to the inequality above, we have with probability atleast $1- \delta$, 
\begin{align*}
    P_{t}^{\circ} \gamma_{t}(z, \hat{\lambda}; \widehat{\theta}_{n, t}^{\mathrm{KL}}) \leq \frac{1}{1-5\alpha} \left\{ \frac{5d}{\alpha n} \log \left( \frac{(4L_{\mathcal{J}_{\max}, \underline{\lambda}, K_{g}} B + 4L_{\mathcal{J}_{\max}, \underline{\lambda}, K_{\ell}}\overline{\lambda})^2 \alpha^2 n}{d} \right) + \frac{5(3V_{\mathcal{J}_{\max}, \underline{\lambda}, \kappa, K_{g}} + 4\alpha) \log(1/\delta)}{3\alpha n} \right\}.
\end{align*}

Now we can scale and shift back to our original empirical process since
\[
P_{n, t}^{\circ} \varphi_t(Z, \lambda; \theta) - P_{t}^{\circ} \varphi_t(Z, \lambda; \theta) \leq (\mathcal{J}_{\max} - 1) [P_{t}^{\circ} \widetilde{\varphi}_{t}(z, \lambda; \theta) - P_{n, t}^{\circ}\widetilde{\varphi}_{t}(z, \lambda; \theta)].
\]
Since this bound holds uniformly, we find that
\begin{align*}
    \Psi_{t}(\widehat{\theta}_{n, t}^{\mathrm{KL}}, \widehat{\lambda}) - \Psi_{t}(\theta_{\widehat{\lambda}}^{\star}, \widehat{\lambda})
&\leq
\overline{\lambda} \abs{P_{t}^{\circ} (\varphi_t(Z,\widehat{\lambda};\widehat{\theta}_{n, t}^{\mathrm{KL}}) -\varphi_t(Z,\widehat{\lambda};\theta_{\widehat{\lambda}}^{\star}) )} \\
&\lesssim \frac{\overline{\lambda} (\mathcal{J}_{\max} - 1)}{1-\alpha} \left\{ \frac{d}{\alpha n} \log \left( \frac{(L_{\mathcal{J}_{\max}, \underline{\lambda}, K_{g}} B + L_{\mathcal{J}_{\max}, \underline{\lambda}, K_{\ell}}\overline{\lambda})^2 \alpha^2 n}{d} \right) + \frac{(V_{\mathcal{J}_{\max}, \underline{\lambda}, \kappa, K_{g}} + \alpha) \log(1/\delta)}{\alpha n} \right\}.
\end{align*}

\subsubsection{Part III: Bounding $\lambda$-mismatch.} 
We must now bound $\Psi_{t}(\theta_{\widehat{\lambda}}^{\star}, \widehat{\lambda}) - \Psi_{t}(\theta_{t}^{\mathrm{KL}}, \lambda^{\star})$. For this, we will need to make one mild assumption.

\vspace{1em}

\begin{assumption}[Noisy reward function]
\label{assumption:noisy-reward-function}
    Let $U = (x, a^{1}, a^{2})$ from $P_{t}^{\circ}$ and assume we obtain noisy differences of rewards:
    \[
        \Delta r = \langle \omega, \Delta \phi \rangle + \xi, \quad \abs{\xi} \leq \overline{\xi}, \quad \mathbb{E}[\xi \mid U] = 0, \quad \mathbb{E}[\xi^{3} \mid U] = 0, \quad \mathrm{Var}_{P_{t}^{\circ}}(\xi^{2} \mid U) \geq \tau_{\xi}^2 > 0 \; \text{a.s.}
    \]
\end{assumption}

\begin{remark}[Generality of the noise model]
    We treat $\Delta r$ as the observable surrogate feedback (e.g., reward model scores) rather than an unobserved ground truth. Consequently, the decomposition $\Delta r = \langle \omega, \Delta \phi \rangle + \xi$ is general, as we may define the linear term to capture the conditional mean such that $\mathbb{E}[\xi \mid U]=0$ holds by construction.
\end{remark}

Using Assumption \ref{assumption:noisy-reward-function}, we can prove the following:

\vspace{1em}

\begin{lemma}[Uniform variance lower bound under exponential tilt]
\label{lemma:uniform-variance-lower-bound-exponential-tilt}
    For every $(\theta, \lambda) \in \Theta \times \Lambda$, we have 
    \[
        \mathrm{Var}_{Q_{\theta, \lambda}}(\ell_{t}(Z; \theta)) \geq \frac{\tau_{\xi}^2}{\mathcal{J}_{\max}},
    \]
    where $Q_{\theta, \lambda}$ has density
    \[
        q_{\theta, \lambda}(z) := \frac{\exp \left( \ell_{t}(z; \theta) / \lambda \right)}{\mathbb{E}_{P_{t}^{\circ}}[\exp \left(\ell_{t}(z; \theta) / \lambda \right)]} p_{t}^{\circ}(z) = w_{\theta, \lambda}(z)p_{t}^{\circ}(z).
    \]
\end{lemma}

\begin{proof}[Proof of Claim \ref{lemma:uniform-variance-lower-bound-exponential-tilt}]
    By the definition of the REBEL loss, we have that $\ell_{t} = (m_{\theta, \Delta r} - \xi)^2$ where $m_{\theta, \Delta \psi, \Delta r}$ takes the form
    \[
        m_{\theta, \Delta \psi, \Delta r} = \frac{1}{\eta} \langle \theta - \theta_t, \Delta \psi \rangle - \langle \omega, \Delta \phi \rangle.
    \]
    Note that we suppress dependence on $\eta$ and $\omega$ as these are known quantities and are not important in our proof of this lemma. Let $U = (x, a^{1}, a^{2})$. Conditional on $U$, we have that
    \begin{align*}
        \mathrm{Var}_{P_{t}^{\circ}}( \ell_{t}(Z; \theta) \mid U) &= \mathrm{Var}_{P_{t}^{\circ}} ( \xi^{2} \mid U) + 4m^2 \mathrm{Var}_{P_{t}^{\circ}} (\xi \mid U) - 4m \mathrm{Var}_{P_{t}^{\circ}}(\xi^2, \xi \mid U) \\
        &\geq \mathrm{Var}_{P_{t}^{\circ}} ( \xi^{2} \mid U)  - 4m \mathrm{Var}_{P_{t}^{\circ}}(\xi^2, \xi \mid U),
    \end{align*}
    where the inequality holds since $\mathrm{Var}_{P_{t}^{\circ}} (\xi \mid U) \geq 0$ and $m^2 \geq 0$. Now under Assumption \ref{assumption:noisy-reward-function}, we have 
    \begin{align*}
        \mathrm{Var}_{P_{t}^{\circ}}(\xi^2, \xi \mid U) &= \mathbb{E}_{P_{t}^{\circ}}[\xi^{3} \mid U] - \mathbb{E}_{P_{t}^{\circ}}[\xi^{2} \mid U]\mathbb{E}_{P_{t}^{\circ}}[\xi \mid U] = 0.
    \end{align*}
    Thus we conclude $\mathrm{Var}_{P_{t}^{\circ}}( \ell_{t}(Z; \theta) \mid U) \geq \mathrm{Var}_{P_{t}^{\circ}} ( \xi^{2} \mid U) \geq \tau_{\xi}^2$. Then by the total law of variance, we conclude
    \[
        \mathrm{Var}_{P_{t}^{\circ}}( \ell_{t}(Z; \theta)) \geq \mathbb{E}[\mathrm{Var}_{P_{t}^{\circ}} ( \xi^{2} \mid U)] \geq \tau_{\xi}^2.
    \]
    By a change of measure, we have
    \begin{align*}
        \mathrm{Var}_{Q_{\theta, \lambda}}(\ell_{t}(Z; \theta)) &= \mathbb{E}_{Q_{\theta, \lambda}} [ (\ell_{t}(Z;\theta) - Q_{\theta, \lambda} \ell_{t}(Z;\theta))^2] \\
        &= \mathbb{E}_{P_{t}^{\circ}} [ w_{\theta, \lambda}(\ell_{t}(Z;\theta) - Q_{\theta, \lambda} \ell_{t}(Z;\theta))^2 ] \\
        &\geq \frac{1}{\mathcal{J}_{\max}} \mathrm{Var}_{P_{t}^{\circ}}( \ell_{t}(Z; \theta)) \\
        &\geq \frac{\tau_{\xi}^2}{\mathcal{J}_{\max}},
    \end{align*}
    where the first inequality holds since $\exp \left( \ell_{t}(z; \theta) / \lambda \right) \geq 1$ and $\exp \left( \ell_{t}(z; \theta) / \lambda \right) \leq \mathcal{J}_{\max}$ and $$\mathbb{E}_{P_{t}^{\circ}} [ (\ell_{t}(Z;\theta) - Q_{\theta, \lambda} \ell_{t}(Z;\theta))^2 ] = \mathrm{Var}_{P_{t}^{\circ}}( \ell_{t}(Z; \theta)) + \left(\mathbb{E}_{P_{t}^{\circ}}( \ell_{t}(Z; \theta)) - Q_{\theta, \lambda} \ell_{t}(Z;\theta))^2 \right) \geq 0.$$
    This concludes the proof.
\end{proof}

Note that Lemma \ref{lemma:uniform-variance-lower-bound-exponential-tilt}, we get the map $\lambda \mapsto \Psi_{t}(\theta, \lambda)$ is $\tau_{\xi}^2 / \bar{\lambda}^3 \mathcal{J}_{\max}$ strongly convex since
\begin{align*}
    \nabla_{\lambda}^2 \Psi_{t}(\theta, \lambda) &= \frac{1}{\lambda^3} \mathrm{Var}_{Q_{\theta, \lambda}}(\ell_{t}(Z; \theta)) \geq \frac{\tau_{\xi}^2}{\bar{\lambda}^3 \mathcal{J}_{\max}}.
\end{align*}

We will need the following inequality to recover $\widetilde{\mathcal{O}}_{\mathbb{P}}(n^{-1/2})$ rates.

\vspace{1em}

\begin{lemma}[Reduction to gradient concentration]
    \label{appendix:reduction-to-gradient-concentration-KL}
    If $\hat{\mathcal{L}}_{t}$ is $\hat{\mu}_{t}$-strongly convex in $\norm{\cdot}_{2}$ and $\widehat{\theta}_{t} \in \argmin_{\theta \in \Theta} \hat{\mathcal{L}}_{t}(\theta)$ and $\theta^{*}_{t} \in \argmin_{\theta \in \Theta} \mathcal{L}_{t}(\theta)$, then
    \[
        \norm{\widehat{\theta}_{t} - \theta_{t}^{*}}_{2} \leq \frac{2}{\widehat{\mu}_{t}} \norm{\nabla_{\theta}\widehat{\mathcal{L}}_{t}(\theta_{t}^{*})}_{2}.
    \]
\end{lemma}

\begin{proof}[Proof of Lemma \ref{appendix:reduction-to-gradient-concentration-KL}]
    Conditional on $\mathcal{E}_{t}$, we have by Lemma \ref{appendix:strong-convexity-of-empirical-risk} that 
    \begin{align*}
        \widehat{\mathcal{L}}_{t}(\widehat{\theta}_{t}) &\geq \widehat{\mathcal{L}}_{t}(\theta_{t}^{*}) + \langle \nabla_{\theta}\widehat{\mathcal{L}}_{t}(\theta_{t}^{*}),  \widehat{\theta}_{t} - \theta_{t}^{*} \rangle + \frac{\widehat{\mu}_{t}}{2} \norm{\widehat{\theta}_{t} - \theta_{t}^{*}}_{2}^2 \\
        &= \widehat{\mathcal{L}}_{t}(\theta_{t}^{*}) - \widehat{\mathcal{L}}_{t}(\widehat{\theta}_{t}) + \widehat{\mathcal{L}}_{t}(\widehat{\theta}_{t}) + \langle \nabla_{\theta}\widehat{\mathcal{L}}_{t}(\theta_{t}^{*}),  \widehat{\theta}_{t} - \theta_{t}^{*} \rangle + \frac{\widehat{\mu}_{t}}{2} \norm{\widehat{\theta}_{t} - \theta_{t}^{*}}_{2}^2 \\
        &\geq \widehat{\mathcal{L}}_{t}(\widehat{\theta}_{t}) + \langle \nabla_{\theta}\widehat{\mathcal{L}}_{t}(\theta_{t}^{*}),  \widehat{\theta}_{t} - \theta_{t}^{*} \rangle + \frac{\widehat{\mu}_{t}}{2} \norm{\widehat{\theta}_{t} - \theta_{t}^{*}}_{2}^2,
    \end{align*}
    where the second inequality holds by definition that $\widehat{\mathcal{L}}_{t}(\widehat{\theta}_{t}) \leq \widehat{\mathcal{L}}_{t}(\theta_{t}^{*})$. Rearranging terms and using Cauchy-Schwarz on the inner-product, we conclude.
\end{proof}

By strong convexity of $\lambda \mapsto \Psi_{t}(\theta, \lambda)$ and Lemma \ref{appendix:reduction-to-gradient-concentration-KL}, we have 
\[
\Psi_{t}(\theta_{\widehat{\lambda}}^{\star}, \widehat{\lambda}) - \Psi_{t}(\theta_{t}^{\mathrm{KL}}, \lambda^{\star}) \leq \frac{\bar{\lambda}^3 \mathcal{J}_{\max}}{2\tau_{\xi}^2} \abs{\nabla_{\lambda} \Psi_{t}(\theta_{\widehat{\lambda}}^{\star}, \widehat{\lambda})}^2.
\]

It remains to bound $\abs{\nabla_{\lambda} \Psi_{t}(\theta_{\widehat{\lambda}}^{\star}, \widehat{\lambda})}$. We will first write $\nabla_{\lambda} \Psi_{t}(\theta_{\widehat{\lambda}}^{\star}, \widehat{\lambda})$ as a smooth function of two bounded means. By differentiation and interchange of gradients and expectation (by Dominated Convergence Theorem), we have 
\begin{align*}
    \nabla_{\lambda} \Psi_{t}(\theta, \lambda) = \varepsilon + \log (P_{t}^{\circ} \varphi_{t}(z, \lambda; \theta)) - \frac{1}{\lambda} \frac{P_{t}^{\circ}(\ell_{t}(z; \theta) \varphi_{t}(z, \lambda; \theta))}{P_{t}^{\circ} \varphi_{t}(z, \lambda; \theta)}.
\end{align*}
A similar quantity holds for the gradient of the empirical dual argument $\nabla_{\lambda} \Psi_{n, t}(\theta, \lambda)$. Since $(\widehat{\theta}_{n, t}^{\mathrm{KL}}, \widehat{\lambda}) \in \argmin_{(\theta, \lambda) \in \Theta \times \Lambda} \Psi_{n, t}(\theta, \lambda)$, we have $\nabla_{\lambda} \Psi_{n, t}(\widehat{\theta}_{n, t}^{\mathrm{KL}}, \widehat{\lambda}) = 0$. This gives us the following decomposition:
\begin{align*}
    \abs{\nabla_{\lambda} \Psi_{t}(\theta_{\widehat{\lambda}}^{\star}, \widehat{\lambda})} &= \abs{\nabla_{\lambda} \Psi_{t}(\theta_{\widehat{\lambda}}^{\star}, \widehat{\lambda}) - \nabla_{\lambda} \Psi_{n, t}(\theta_{\widehat{\lambda}}^{\star}, \widehat{\lambda}) + \nabla_{\lambda} \Psi_{n, t}(\theta_{\widehat{\lambda}}^{\star}, \widehat{\lambda})} \\
    &\leq \underbrace{\abs{\nabla_{\lambda} \Psi_{t}(\theta_{\widehat{\lambda}}^{\star}, \widehat{\lambda}) - \nabla_{\lambda} \Psi_{n, t}(\theta_{\widehat{\lambda}}^{\star}, \widehat{\lambda})}}_{(a) \; \textbf{Gradient empirical process}} + \underbrace{\abs{\nabla_{\lambda} \Psi_{n, t}(\theta_{\widehat{\lambda}}^{\star}, \widehat{\lambda}) - \nabla_{\lambda} \Psi_{n, t}(\widehat{\theta}_{n, t}^{\mathrm{KL}}, \widehat{\lambda})}}_{(b) \; \textbf{Gradient suboptimality in $\theta$ at random $\widehat{\lambda}$ }}.
\end{align*}

\subsubsection{Part III-(a): Bounding the gradient empirical process.}
We will first bound the gradient empirical process. Since we have a closed form formula for $\nabla_{\lambda} \Psi_{t}, \nabla_{\lambda} \Psi_{n, t}$, we have 

\begin{align*}
    \abs{\nabla_{\lambda} \Psi_{t}(\theta_{\widehat{\lambda}}^{\star}, \widehat{\lambda}) - \nabla_{\lambda} \Psi_{n, t}(\theta_{\widehat{\lambda}}^{\star}, \widehat{\lambda})} &\leq \abs{\log (P_{t}^{\circ} \varphi_{t}(z, \theta_{\widehat{\lambda}}^{\star}; \widehat{\lambda})) - \log (P_{n, t}^{\circ} \varphi_{t}(z, \theta_{\widehat{\lambda}}^{\star}; \widehat{\lambda}))} \\
    &+ \frac{1}{\lambda} \abs{\frac{P_{t}^{\circ}(\ell_{t}(z; \theta) \varphi_{t}(z, \theta_{\widehat{\lambda}}^{\star}; \widehat{\lambda}))}{P_{t}^{\circ} \varphi_{t}(z, \theta_{\widehat{\lambda}}^{\star}; \widehat{\lambda})} - \frac{P_{n, t}^{\circ}(\ell_{t}(z; \theta) \varphi_{t}(z, \theta_{\widehat{\lambda}}^{\star}; \widehat{\lambda}))}{P_{n, t}^{\circ} \varphi_{t}(z, \theta_{\widehat{\lambda}}^{\star}; \widehat{\lambda})}} \\
    &\leq \sup_{(\theta, \lambda)  \in \Theta \times \Lambda} \abs{(P_{t}^{\circ} - P_{n, t}^{\circ}) \varphi_{t}(z, \lambda; \theta)} + \frac{1}{\underline{\lambda}}\sup_{(\theta, \lambda)  \in \Theta \times \Lambda} \abs{(P_{t}^{\circ} - P_{n, t}^{\circ}) \ell_{t}(z; \theta) \varphi_{t}(z, \lambda; \theta)},
\end{align*}
where the second inequality holds from triangle inequality, Lipschitzness of $\log$ for $x \in [1, \infty)$, and boundedness of $\ell_t \in [0, K_{\ell}]$. We will bound the first and second quantity using standard symmetrization and Hoeffding. Notice that this is sufficient since we will square this quantity, maintaining fast rates. Let $\mathcal{F} = \left\{ \varphi_{t}(z, \lambda; \theta) : (\theta, \lambda) \in \Theta \times \Lambda \right\}$. We already know $\varphi_{t}(z, \lambda; \theta)$ is Lipschitz in $(\theta, \lambda)$. Let $$\widetilde{L}_{\mathcal{J}_{\max}, \underline{\lambda}, K_{g}} = \frac{\mathcal{J}_{\max}}{\underline{\lambda}} \cdot \frac{4K_{g}}{\eta} \norm{\theta - \theta^{\prime}}_{2},$$ and $$\widetilde{L}_{\mathcal{J}_{\max}, \underline{\lambda}, K_{\ell}} = \frac{K_\ell \mathcal{J}_{\max}}{\underline{\lambda}^2}.$$
Then we have that 
\begin{align*}
    \log \mathcal{N}_{\rho}(\mathcal{F} ;L_{2}(P_{n, t}^{\circ}))
    &\leq d \log \left( \frac{2\widetilde{L}_{\mathcal{J}_{\max}, \underline{\lambda}, K_{g}} B + 2\widetilde{L}_{\mathcal{J}_{\max}, \underline{\lambda}, K_{\ell}}\overline{\lambda}}{\rho}  \right).
\end{align*}

By Corollary \ref{cor:dudley}, we can conclude that the Rademacher complexity of the function class $\mathcal{F}$ is as follows:
\begin{align*}
    \mathfrak{R}_{\mathcal{S}_{n}} \left( \mathcal{F} \right) &\lesssim \frac{1}{\sqrt{n}}  \int_{0}^{\mathrm{diam}(\mathcal{F})} \sqrt{\log \mathcal{N}_{\rho}(\mathcal{F} ;L_{2}(P_{n, t}^{\circ}))} d\rho \\
    &\leq \frac{1}{\sqrt{n}}  \int_{0}^{\mathcal{J}_{\max}} \sqrt{\log \mathcal{N}_{\rho}(\mathcal{F} ;L_{2}(P_{n, t}^{\circ}))} d\rho \\
    &\lesssim \sqrt{\frac{d \log((\widetilde{L}_{\mathcal{J}_{\max}, \underline{\lambda}, K_{g}} B + \widetilde{L}_{\mathcal{J}_{\max}, \underline{\lambda}, K_{\ell}}\overline{\lambda})^2 /\mathcal{J}_{\max})}{n}} ,
\end{align*}
where the $\mathrm{diam}(\mathcal{F}) \leq \mathcal{J}_{\max}$ holds from $\varphi_{t} \leq \mathcal{J}_{\max}$. Now by Theorem \ref{theorem:uniform-rademacher-deviation}, we have with probability atleast $1-\delta$,
\begin{align*}
    \sup_{\varphi \in \mathcal{F}} \abs{(P_{t}^{\circ} - P_{n, t}^{\circ}) \varphi_{t}(z, \lambda; \theta)}\lesssim \mathcal{J}_{\max} \sqrt{\frac{d \log((\widetilde{L}_{\mathcal{J}_{\max}, \underline{\lambda}, K_{g}} B + \widetilde{L}_{\mathcal{J}_{\max}, \underline{\lambda}, K_{\ell}}\overline{\lambda})^2 /\mathcal{J}_{\max}) + \log(1/\delta)}{n}}.
\end{align*}

We now bound the second quantity. Let $\mathcal{G} = \left\{ \ell_{t}(z;\theta)\varphi_{t}(z, \lambda; \theta) : (\theta, \lambda) \in \Theta \times \Lambda \right\}$. Using boundedness of $\ell_{t}(z; \theta)$ and subsequently $\varphi_{t}(z, \lambda; \theta)$, one can show
\begin{align*}
    \norm{\ell_{t}(z; \theta)\varphi_{t}(z, \lambda; \theta) - \ell_{t}(z; \theta)\varphi_{t}(z, \lambda^{\prime}; \theta^{\prime})}_{n,t} &\leq \frac{4K_{g}\mathcal{J}_{\max}}{\eta} \left( 1 + \frac{K_{\ell}}{\underline{\lambda}} \right) \norm{\theta - \theta^{\prime}}_{2} + \frac{K_{\ell}^2 \mathcal{J}_{\max}}{\underline{\lambda}^2} \abs{\lambda - \lambda^{\prime}} \\
    &:=G_{K_{g}, K_{\ell}, \mathcal{J}_{\max}, \eta, \underline{\lambda}} \norm{\theta - \theta^{\prime}}_{2} + G_{K_{\ell}, \mathcal{J}_{\max}, \underline{\lambda}}\abs{\lambda - \lambda^{\prime}}.
\end{align*}
Using the fact that $\mathrm{diam}(\mathcal{G}) \leq K_{\ell}\mathcal{J}_{\max}$, we can also conclude with probability atleast $1-\delta$,
\begin{align*}
    \sup_{\phi \in \mathcal{G}} \abs{(P_{t}^{\circ} - P_{n, t}^{\circ}) \ell_{t}(z; \theta) \varphi_{t}(z, \lambda; \theta)} \lesssim K_{\ell}\mathcal{J}_{\max} \sqrt{\frac{d \log((G_{K_{g}, K_{\ell}, \mathcal{J}_{\max}, \eta, \underline{\lambda}} B + G_{K_{\ell}, \mathcal{J}_{\max}, \underline{\lambda}} \overline{\lambda})^2 /K_{\ell}\mathcal{J}_{\max}) + \log(1/\delta)}{n}}.
\end{align*}

\subsubsection{Part III-(b): Bounding the gradient suboptimality in $\theta$.}
We first bound the gradient suboptimality in $\theta$. We will use Lipschitzness of the map $\theta \mapsto \nabla_{\lambda} \Psi_{n, t}(\theta, \lambda)$ in $\theta$. We establish this result below:

\vspace{1em}

\begin{lemma}[Sharp $\theta$-Lipschitzness of $\nabla_\lambda \Psi_{n,t}$]
\label{lem:theta_lip_partial_lambda_Psi}
Fix $t$ and condition on $\mathcal F_t$ so that $\{Z_i\}_{i=1}^{n_t}$ are i.i.d.\ from the nominal
distribution $P_t^\circ$. Let $n:=n_t$ and define, for $(\theta,\lambda)\in\Theta\times[\underline\lambda,\overline\lambda]$,
\[
\Psi_{n,t}(\theta,\lambda)
:= \lambda\varepsilon
+\lambda \log\!\left(\frac1n\sum_{i=1}^n \exp\!\left(\frac{\ell_t(Z_i;\theta)}{\lambda}\right)\right).
\]
Then for every fixed $\lambda\in[\underline\lambda,\overline\lambda]$, the map
$\theta\mapsto \nabla_\lambda \Psi_{n,t}(\theta,\lambda)$ is Lipschitz, and for all $\theta,\theta'\in\Theta$,
\[
\big|\nabla_\lambda\Psi_{n,t}(\theta,\lambda)-\nabla_\lambda\Psi_{n,t}(\theta',\lambda)\big|
\le
\frac{4K_g}{\eta}\Big(\frac{2}{\lambda}+\frac{K_\ell}{2\lambda^2}\Big)\,\|\theta-\theta'\|_2.
\]
In particular, for any $\hat\lambda\in[\underline\lambda,\overline\lambda]$,
\begin{align*}
\big|\nabla_\lambda\Psi_{n,t}(\theta,\hat\lambda)-\nabla_\lambda\Psi_{n,t}(\theta',\hat\lambda)\big|
&\leq
\frac{4K_g}{\eta}\Big(\frac{2}{\underline\lambda}+\frac{K_\ell}{2\underline\lambda^2}\Big)\,\|\theta-\theta'\|_2 := L_{K_{g}, K_{\ell} \eta, \underline{\lambda}}^{\partial \lambda} \|\theta-\theta'\|_2.
\end{align*}
\end{lemma}

\begin{proof}[Proof of Lemma \ref{lem:theta_lip_partial_lambda_Psi}]
Fix $\lambda\in[\underline\lambda,\overline\lambda]$ and write $\ell_i(\theta):=\ell_t(Z_i;\theta)$.
Define
\[
A(\theta):=\frac1n\sum_{i=1}^n \exp\!\left(\frac{\ell_i(\theta)}{\lambda}\right),
\;\;
w_i(\theta):=\frac{\exp(\ell_i(\theta)/\lambda)}{\sum_{j=1}^n \exp(\ell_j(\theta)/\lambda)},
\;\;
\mu(\theta):=\sum_{i=1}^n w_i(\theta)\,\ell_i(\theta).
\]
A direct differentiation yields the closed form
\begin{equation}
\label{eq:closed_form_partial_lambda_Psi}
\nabla_\lambda \Psi_{n,t}(\theta,\lambda)
=
\varepsilon+\log A(\theta)-\frac{1}{\lambda}\mu(\theta).
\end{equation}
Therefore, for any $\theta,\theta'\in\Theta$,
\begin{equation}
\label{eq:split_partial_lambda_Psi}
\big|\nabla_\lambda\Psi_{n,t}(\theta,\lambda)-\nabla_\lambda\Psi_{n,t}(\theta',\lambda)\big|
\le
\big|\log A(\theta)-\log A(\theta')\big|
+\frac{1}{\lambda}\,|\mu(\theta)-\mu(\theta')|.
\end{equation}
We bound the two terms on the right-hand side separately. Define $u(\theta)\in\mathbb R^n$ by $u_i(\theta):=\ell_i(\theta)/\lambda$. Then
$\log A(\theta)=\log\left(\frac1n\sum_{i=1}^n e^{u_i(\theta)}\right)$.
We claim that for any $u,v\in\mathbb R^n$,
\begin{equation}
\label{eq:lse_lipschitz}
\left|
\log\left(\frac1n\sum_{i=1}^n e^{u_i}\right)
-
\log\left(\frac1n\sum_{i=1}^n e^{v_i}\right)
\right|
\le \|u-v\|_\infty.
\end{equation}
To see this, let $\Delta:=\|u-v\|_\infty$. Then $u_i\le v_i+\Delta$ for all $i$, hence
$\sum_i e^{u_i}\le e^\Delta \sum_i e^{v_i}$, which implies
$\log\left(\frac1n\sum_i e^{u_i}\right)\le \log\left(\frac1n\sum_i e^{v_i}\right)+\Delta$.
By symmetry (swap $u$ and $v$), \eqref{eq:lse_lipschitz} follows. Applying \eqref{eq:lse_lipschitz} with $u=u(\theta)$ and $v=u(\theta')$ gives
\[
|\log A(\theta)-\log A(\theta')|
\le
\|u(\theta)-u(\theta')\|_\infty
=
\frac{1}{\lambda}\max_{i\in[n]} |\ell_i(\theta)-\ell_i(\theta')|.
\]
Using (A2) pointwise, $\max_i |\ell_i(\theta)-\ell_i(\theta')|\le L_\ell \|\theta-\theta'\|_2$, hence
\begin{equation}
\label{eq:logA_bound}
|\log A(\theta)-\log A(\theta')|
\le \frac{L_\ell}{\lambda}\,\|\theta-\theta'\|_2.
\end{equation}

We decompose
\[
\mu(\theta)-\mu(\theta')
=
\sum_{i=1}^n w_i(\theta)\big(\ell_i(\theta)-\ell_i(\theta')\big)
+
\sum_{i=1}^n\big(w_i(\theta)-w_i(\theta')\big)\ell_i(\theta').
\]
For the first term, since $w(\theta)$ is a probability vector,
\[
\left|\sum_{i=1}^n w_i(\theta)\big(\ell_i(\theta)-\ell_i(\theta')\big)\right|
\le
\max_{i\in[n]}|\ell_i(\theta)-\ell_i(\theta')|
\le
L_\ell\|\theta-\theta'\|_2.
\]
For the second term, by (A1), $\ell_i(\theta')\in[0,K_\ell]$, so
\[
\left|\sum_{i=1}^n\big(w_i(\theta)-w_i(\theta')\big)\ell_i(\theta')\right|
\le
K_\ell \|w(\theta)-w(\theta')\|_1.
\]
It remains to bound $\|w(\theta)-w(\theta')\|_1$. Consider the softmax map
$\sigma:\mathbb R^n\to\Delta^{n-1}$ given by
\[
\sigma(v)_i=\frac{e^{v_i}}{\sum_{j=1}^n e^{v_j}}.
\]
A direct Jacobian calculation yields
\[
\frac{\partial \sigma_i}{\partial v_j}(v)=\sigma_i(v)\big(\mathbf 1\{i=j\}-\sigma_j(v)\big).
\]
Fix $j$. The $\ell_1$-column sum satisfies
\[
\sum_{i=1}^n\left|\frac{\partial \sigma_i}{\partial v_j}(v)\right|
=
\sigma_j(v)\big(1-\sigma_j(v)\big)+\sum_{i\neq j}\sigma_i(v)\sigma_j(v)
=
2\sigma_j(v)\big(1-\sigma_j(v)\big)
\le \frac12,
\]
since $x(1-x)\le 1/4$ for $x\in[0,1]$. Hence the operator norm obeys
$\|\nabla\sigma(v)\|_{\infty\to 1}\le 1/2$ for all $v$, and the mean value theorem implies
\begin{equation}
\label{eq:softmax_lip}
\|\sigma(u)-\sigma(v)\|_1\le \frac12\|u-v\|_\infty,
\;\; \forall u,v\in\mathbb R^n.
\end{equation}
Now observe $w(\theta)=\sigma(u(\theta))$ with $u_i(\theta)=\ell_i(\theta)/\lambda$. Therefore,
\[
\|w(\theta)-w(\theta')\|_1
\le
\frac12\|u(\theta)-u(\theta')\|_\infty
=
\frac{1}{2\lambda}\max_{i\in[n]}|\ell_i(\theta)-\ell_i(\theta')|
\le
\frac{2 K_{g}}{\lambda \eta}\|\theta-\theta'\|_2.
\]
Combining the pieces yields
\begin{equation}
\label{eq:mu_bound}
|\mu(\theta)-\mu(\theta')|
\le
\frac{4K_{g}}{\eta}\|\theta-\theta'\|_2
+
K_\ell\cdot \frac{2K_{g}}{\lambda \eta}\|\theta-\theta'\|_2
=
\frac{4K_{g}}{\eta}\Big(1+\frac{K_\ell}{2\lambda}\Big)\|\theta-\theta'\|_2.
\end{equation}

Substituting \eqref{eq:logA_bound} and \eqref{eq:mu_bound} into \eqref{eq:split_partial_lambda_Psi},
\[
\big|\nabla_\lambda\Psi_{n,t}(\theta,\lambda)-\nabla_\lambda\Psi_{n,t}(\theta',\lambda)\big|
\le
\frac{4K_{g}}{\lambda \eta}\|\theta-\theta'\|_2
+\frac{1}{\lambda}\cdot
\frac{4K_{g}}{\eta}\Big(1+\frac{K_\ell}{2\lambda}\Big)\|\theta-\theta'\|_2
=
\frac{4K_{g}}{\eta}\Big(\frac{2}{\lambda}+\frac{K_\ell}{2\lambda^2}\Big)\|\theta-\theta'\|_2,
\]
which proves the first inequality. The second follows by using $\lambda\ge\underline\lambda$.
\end{proof}

Using Lemma \ref{lem:theta_lip_partial_lambda_Psi}, we find
\begin{align*}
    \abs{\nabla_{\lambda} \Psi_{n, t}(\theta_{\widehat{\lambda}}^{\star}, \widehat{\lambda}) - \nabla_{\lambda} \Psi_{n, t}(\widehat{\theta}_{n, t}^{\mathrm{KL}}, \widehat{\lambda})}^2 &\leq L_{K_{g}, K_{\ell} \eta, \underline{\lambda}}^{2, \partial \lambda}\|\theta_{\widehat{\lambda}}^{\star}-\widehat{\theta}_{n, t}^{\mathrm{KL}}\|_2 \leq \frac{L_{K_{g}, K_{\ell} \eta, \underline{\lambda}}^{2, \partial \lambda}\eta^2 \underline{\lambda} (\mathcal{J}_{\max} - 1)}{\kappa} P_{t}^{\circ}\gamma_{t}(\cdot, \widehat{\lambda}; \widehat{\theta}_{n, t}^{\mathrm{KL}}).
\end{align*}

\subsubsection{Part IV: Concluding the fast-rate.}

Combining everything, we find
\begin{align*}
    \Psi_{t}(\theta_{\widehat{\lambda}}^{\star}, \widehat{\lambda}) - \Psi_{t}(\theta_{t}^{\mathrm{KL}}, \lambda^{\star}) &\lesssim  \frac{\mathcal{J}_{\max}^2 \; d \log((\widetilde{L}_{\mathcal{J}_{\max}, \underline{\lambda}, K_{g}} B + \widetilde{L}_{\mathcal{J}_{\max}, \underline{\lambda}, K_{\ell}}\overline{\lambda})^2 /\mathcal{J}_{\max}) + \log(1/\delta)}{n} \\
    &+  \frac{K_{\ell}^2\mathcal{J}_{\max}^2\;d \log((G_{K_{g}, K_{\ell}, \mathcal{J}_{\max}, \eta, \underline{\lambda}} B + G_{K_{\ell}, \mathcal{J}_{\max}, \underline{\lambda}} \overline{\lambda})^2 /K_{\ell}\mathcal{J}_{\max}) + \log(1/\delta)}{\underline{\lambda} n} \\
    &+ \frac{L_{K_{g}, K_{\ell} \eta, \underline{\lambda}}^{2, \partial \lambda}\eta^2 \underline{\lambda} (\mathcal{J}_{\max} - 1)}{\kappa} P_{t}^{\circ}\gamma_{t}(\cdot, \widehat{\lambda}; \widehat{\theta}_{n, t}^{\mathrm{KL}}).
\end{align*}
Thus we can conclude with probability atleast $1-\delta$,
\begin{align*}
    \mathcal{L}_{t}^{\mathrm{KL}}(\widehat{\theta}_{n, t}^{\mathrm{KL}} ; \varepsilon) - \mathcal{L}_{t} (\theta_{t}^{\mathrm{KL}} ; \varepsilon) &\lesssim \frac{L_{K_{g}, K_{\ell} \eta, \underline{\lambda}}^{\partial \lambda}\eta^2 \underline{\lambda} (\mathcal{J}_{\max} - 1)}{\kappa(1-\alpha)} \left\{ \frac{d}{\alpha n} \log \left( \frac{(L_{\mathcal{J}_{\max}, \underline{\lambda}, K_{g}} B + L_{\mathcal{J}_{\max}, \underline{\lambda}, K_{\ell}}\overline{\lambda})^2 \alpha^2 n}{d} \right) + \frac{(V_{\mathcal{J}_{\max}, \underline{\lambda}, \kappa, K_{g}} + \alpha) \log(1/\delta)}{\alpha n} \right\} \\
    &+ \frac{\mathcal{J}_{\max}^2 \; d \log((\widetilde{L}_{\mathcal{J}_{\max}, \underline{\lambda}, K_{g}} B + \widetilde{L}_{\mathcal{J}_{\max}, \underline{\lambda}, K_{\ell}}\overline{\lambda})^2 /\mathcal{J}_{\max}) + \log(1/\delta)}{n} \\
    &+  \frac{K_{\ell}^2\mathcal{J}_{\max}^2\;d \log((G_{K_{g}, K_{\ell}, \mathcal{J}_{\max}, \eta, \underline{\lambda}} B + G_{K_{\ell}, \mathcal{J}_{\max}, \underline{\lambda}} \overline{\lambda})^2 /K_{\ell}\mathcal{J}_{\max}) + \log(1/\delta)}{\underline{\lambda} n}. \\
\end{align*}
Invoking Lemma \ref{appendix:strong-convexity-of-KL-loss}, we have for all $\theta\in\Theta$,
\[
\mathcal{L}^{\mathrm{KL}}_{t}(\theta;\varepsilon)-\mathcal{L}^{\mathrm{KL}}_{t}(\theta_t^{\mathrm{KL}};\varepsilon)
\ge
\frac{\kappa}{\eta^2}\|\theta-\theta_t^{\mathrm{KL}}\|_2^2.
\]
Conditional on $\mathcal{F}_{t}$, applying this inequality at $\theta=\hat{\theta}^{\mathrm{KL}}_{n,t}$ yields, with probability at least $1-\delta$, for all $t\in[T]$,
\begin{align*}
    \|\hat{\theta}^{\mathrm{KL}}_{n,t}-\theta_t^{\mathrm{KL}}\|_2^2
&\lesssim \frac{L_{K_{g}, K_{\ell} \eta, \underline{\lambda}}^{\partial \lambda}\eta^{4} \underline{\lambda} (\mathcal{J}_{\max} - 1)}{\kappa^{2}(1-\alpha)} \left\{ \frac{d}{\alpha n_{t}} \log \left( \frac{(L_{\mathcal{J}_{\max}, \underline{\lambda}, K_{g}} B + L_{\mathcal{J}_{\max}, \underline{\lambda}, K_{\ell}}\overline{\lambda})^2 \alpha^2 n_{t}}{d} \right) + \frac{(V_{\mathcal{J}_{\max}, \underline{\lambda}, \kappa, K_{g}} + \alpha) \log(T/\delta)}{\alpha n_{t}} \right\} \\
    &+ \frac{\eta^{2}\mathcal{J}_{\max}^2 \; d \log((\widetilde{L}_{\mathcal{J}_{\max}, \underline{\lambda}, K_{g}} B + \widetilde{L}_{\mathcal{J}_{\max}, \underline{\lambda}, K_{\ell}}\overline{\lambda})^2 /\mathcal{J}_{\max}) + \log(T/\delta)}{\kappa n_{t}} \\
    &+  \frac{\eta^{2} K_{\ell}^2\mathcal{J}_{\max}^2\;d \log((G_{K_{g}, K_{\ell}, \mathcal{J}_{\max}, \eta, \underline{\lambda}} B + G_{K_{\ell}, \mathcal{J}_{\max}, \underline{\lambda}} \overline{\lambda})^2 /K_{\ell}\mathcal{J}_{\max}) + \log(T/\delta)}{\underline{\lambda} \kappa n_{t}}. \\
\end{align*}

\section{Proof of "Slow Rate" $\chi^{2}$-DRO-REBEL}
\label{appendix:Slow-chisquared-DRO-REBEL}
We first state the following dual reformulation for $\chi^{2}$-DRO.

\vspace{1em}

\begin{lemma}[Mean--variance form of $\chi^{2}$-DRO-REBEL]
\label{appendix:strong-duality-chi2}
Fix $\theta\in\Theta$ and $\varepsilon>0$. Define the Pearson $\chi^2$-divergence
\[
D_{\chi^2}(\mathbb{P}\,\|\,\mathbb{P}^{\circ})
:=\mathbb{E}_{z\sim\mathbb{P}^{\circ}}\!\left[\left(\frac{d\mathbb{P}}{d\mathbb{P}^{\circ}}(z)-1\right)^{2}\right],
\;\; 
\mathcal{B}_{\varepsilon}\bigl(\mathbb{P}^{\circ};\chi^2\bigr)
:=\Bigl\{\mathbb{P}\ll \mathbb{P}^{\circ}:\;
D_{\chi^2}(\mathbb{P}\,\|\,\mathbb{P}^{\circ})\le \varepsilon\Bigr\}.
\]
Let $\ell(\cdot;\theta)$ be measurable and satisfy $\ell(\cdot;\theta)\in L^2(\mathbb{P}^{\circ})$.
Define
\[
\mu_\theta := \mathbb{E}_{\mathbb{P}^{\circ}}\!\bigl[\ell(Z;\theta)\bigr],
\;\;
\sigma_\theta^2 := \mathrm{Var}_{\mathbb{P}^{\circ}}\!\bigl(\ell(Z;\theta)\bigr),
\;\;
\sigma_\theta := \sqrt{\sigma_\theta^2}.
\]
Consider the $\chi^2$-DRO objective
\[
\mathcal{L}^{\chi^2}\bigl(\theta;\varepsilon\bigr)
:=
\sup_{\mathbb{P}\in\mathcal{B}_{\varepsilon}(\mathbb{P}^{\circ};\chi^2)}
\mathbb{E}_{Z\sim\mathbb{P}}\bigl[\ell(Z;\theta)\bigr].
\]
Assume the following \emph{nonnegativity / inactive-truncation condition} holds:
\begin{equation}
\label{eq:chi2-inactive-truncation}
\ell(Z;\theta)\;\ge\;\mu_\theta-\frac{\sigma_\theta}{\sqrt{\varepsilon}}
\quad\text{for $\mathbb{P}^{\circ}$-a.e.\ }Z.
\end{equation}
Then the objective admits the mean--variance dual form
\begin{align*}
\mathcal{L}^{\chi^2}\bigl(\theta;\varepsilon\bigr)
&=
\mu_\theta \;+\;
\inf_{\lambda>0}\Bigl\{\varepsilon\,\lambda + \frac{\sigma_\theta^{2}}{4\lambda}\Bigr\}
\;=\;
\mu_\theta \;+\;\sqrt{\varepsilon}\,\sigma_\theta.
\end{align*}
Moreover, if $\sigma_\theta>0$ the infimum is uniquely attained at
\[
\lambda^\star=\frac{\sigma_\theta}{2\sqrt{\varepsilon}}.
\]
(If $\sigma_\theta=0$, then $\inf_{\lambda>0}\{\varepsilon\lambda+\sigma_\theta^2/(4\lambda)\}=0$ and
$\mathcal{L}^{\chi^2}(\theta;\varepsilon)=\mu_\theta$.)
In particular, when $\sigma_\theta>0$, one may equivalently restrict the infimum to any interval
$[\underline{\lambda},\bar{\lambda}]$ with $0<\underline{\lambda}\le \lambda^\star\le \bar{\lambda}<\infty$.
\end{lemma}

\subsection{Proof of Strong Convexity of $\chi^{2}$-DRO-REBEL}

We will again demonstrate strong convexity for $\chi^{2}$.

\vspace{1em}

\begin{lemma}[Strong convexity of $\mathcal{L}^{\chi^{2}}$]
\label{appendix:strong-convexity-of-chi-loss}
Let $l(z;\theta)$ be the REBEL loss function. Then $\mathcal{L}^{\chi^{2}}\left(\theta; \varepsilon\right) = \sup_{\mathbb{P} \in \mathcal{B}_{\varepsilon} \left( \mathbb{P}^{\circ} ; \chi^{2} \right)} \mathbb{E}_{z \sim \mathbb{P}} \left[ \ell(z;\theta) \right]$ is $2\kappa / \eta$-strongly convex with respect to Euclidean norm $||\cdot||_{2}$ where $\kappa$ is the regularity parameter from Assumption \ref{assumption:data-coverage}.
\end{lemma}

\begin{proof}[Proof of Lemma \ref{appendix:strong-convexity-of-chi-loss}]
In Lemma \ref{appendix:strong-convexity-of-h}, we proved the strong convexity of $h$. By Lemma \ref{appendix:beck-strong-convexity}, for $\theta, \theta^{\prime} \in \Theta$ and $\alpha \in [0, 1]$, this is equivalent to

\[
    h(\alpha\theta + (1-\alpha)\theta^{\prime};\mathbb{P}) \leq \alpha h(\theta;\mathbb{P}) + (1-\alpha)h(\theta^{\prime};\mathbb{P}) - \frac{\mu}{2} \alpha(1-\alpha)||\theta - \theta^{\prime}||_{\Sigma_{\mathbb{P}}}^{2}.
\]

Taking the supremum over $\mathbb{P}$ preserves the convex combination and the negative quadratic term so we get 

\begin{align*}
    \mathcal{L}^{\chi^{2}}\left(\alpha\theta + (1-\alpha)\theta^{\prime}; \varepsilon\right) &= \sup_{\mathbb{P} \in \mathcal{B}_{\varepsilon} \left( \mathbb{P}^{\circ} ; \chi^{2} \right)} h(\alpha\theta + (1-\alpha)\theta^{\prime};\mathbb{P}) \\
    &\leq \sup_{\mathbb{P} \in \mathcal{B}_{\varepsilon} \left( \mathbb{P}^{\circ} ; \chi^{2} \right)} \left[  \alpha h(\theta;\mathbb{P}) + (1-\alpha)h(\theta^{\prime};\mathbb{P}) - \frac{\mu}{2} \alpha(1-\alpha)||\theta - \theta^{\prime}||_{\Sigma_{\mathbb{P}}}^{2} \right] \\
    &\leq \alpha \mathcal{L}^{\chi^{2}} \left( \theta; \varepsilon \right) + (1-\alpha) \mathcal{L}^{\chi^{2}} \left( \theta^{\prime}; \varepsilon \right) - \frac{\mu}{2} \alpha(1-\alpha) \inf_{\mathbb{P} \in \mathcal{B}_{\varepsilon} \left( \mathbb{P}^{\circ} ; \chi^{2} \right)}||\theta - \theta^{\prime}||_{\Sigma_{\mathbb{P}}}^{2} \\
    &\leq \alpha \mathcal{L}^{\chi^{2}} \left( \theta; \varepsilon \right) + (1-\alpha) \mathcal{L}^{\chi^{2}} \left( \theta^{\prime}; \varepsilon \right) - \frac{\mu}{2} \alpha(1-\alpha) \inf_{\mathbb{P} \in \mathcal{B}_{\varepsilon} \left( \mathbb{P}^{\circ} ; \chi^{2} \right)} \lambda_{\mathrm{min}} \left( \Sigma_{\mathbb{P}} \right) ||\theta - \theta^{\prime}||_{2}^{2} \\
    &\leq \alpha \mathcal{L}^{\chi^{2}} \left( \theta; \varepsilon \right) + (1-\alpha) \mathcal{L}^{\chi^{2}} \left( \theta^{\prime}; \varepsilon \right) - \frac{\mu \kappa}{2} \alpha(1-\alpha)  ||\theta - \theta^{\prime}||_{2}^{2},
\end{align*}

where the second inequality holds from $\sup_{x} \left( f(x) + g(x) \right) \leq \sup_{x} f(x) + \sup_{x}g(x)$, the third inequality holds by the fact that $\Sigma_{\mathbb{P}} \succeq \lambda_{\mathrm{min}}\left( \Sigma_{\mathbb{P}} \right)I$, and the last inequality holds from Assumption \ref{assumption:data-coverage}. Thus we conclude that $\mathcal{L}^{\chi^{2}}$ is $\mu\kappa$-strongly convex in the $||\cdot||_{2}$ norm.
\end{proof}

\subsection{Proof of Slow Parameter Estimation Rate of $\chi^{2}$-DRO-REBEL}

We now prove the "slow rate" estimation error of $\chi^{2}$-DRO-REBEL.

\begin{proof}[Proof of Theorem \ref{theorem:chi-squared-DRO-REBEL}]
\label{appendix:proof-slow-rate-chi2}
Let $t \in \left\{ 0, \dots, T-1 \right\}$ and for each $t$, we collect a dataset $\mathcal{D}_{t} = \left\{ (x_{t, i}, y_{t, i}, y_{t, i}^{\prime}) \right\}_{i=1}^{n_{t}}$ with $x_{t, i} \sim \rho$, $y_{t, i}, y_{t, i}^{\prime} \stackrel{\mathrm{i.i.d}}{\sim} \pi_{\theta_{t-1}}(\cdot \mid x_{t, i})$. Define $\mathcal{F}_{t} = \sigma(\theta_{0}, \mathcal{D}_{0}, \dots, \theta_{t-1}, \mathcal{D}_{t-1})$ be the sigma-field containing everything revealed up to the start of iteration $t$ and $\overline{\mathcal{F}}_{t} = \sigma(\mathcal{F}_{t}, \mathcal{D}_{t})$. In particular, $\theta_{t}$ is $\mathcal{F}_{t}$ measurable. We assume:
(i) $\ell_t(\cdot;\theta)$ satisfies the nonnegativity / inactive-truncation condition
\eqref{eq:chi2-inactive-truncation} (so Lemma \ref{appendix:strong-duality-chi2} applies),
and (ii) there exist constants $0<\underline{\lambda}\le \bar{\lambda}<\infty$ such that for every
$\theta\in\Theta$ with $\sigma_t(\theta)>0$, the optimizer
$\lambda^\star_t(\theta)=\sigma_t(\theta)/(2\sqrt{\varepsilon})$ lies in $[\underline{\lambda},\bar{\lambda}]$.
Equivalently $\sigma_t(\theta)\in[2\underline{\lambda}\sqrt{\varepsilon},\,2\bar{\lambda}\sqrt{\varepsilon}]
\quad \text{uniformly over $\theta$ on the set where $\sigma_t(\theta)>0$}$. Define the population mean/variance under $P_t^\circ$:
\[
\mu_t(\theta):=\mathbb{E}_{P_t^\circ}[\ell_t(Z;\theta)],
\;\;
\sigma_t^2(\theta):=\mathrm{Var}_{P_t^\circ}(\ell_t(Z;\theta)),
\;\;
\sigma_t(\theta):=\sqrt{\sigma_t^2(\theta)}.
\]
Define their empirical counterparts under $\mathbb{P}^\circ_{n,t}$:
\[
\mu_{n,t}(\theta):=\mathbb{E}_{\mathbb{P}_{n,t}^\circ}[\ell_t(Z;\theta)],
\;\;
\sigma_{n,t}^2(\theta):=\mathrm{Var}_{\mathbb{P}_{n,t}^\circ}(\ell_t(Z;\theta)),
\;\;
\sigma_{n,t}(\theta):=\sqrt{\sigma_{n,t}^2(\theta)}.
\]

By Lemma \ref{appendix:strong-duality-chi2}, for fixed $\theta\in\Theta$,
\[
\mathcal{L}^{\chi^2}_{t}(\theta;\varepsilon)
=
\mu_t(\theta)+\sqrt{\varepsilon}\,\sigma_t(\theta),
\;\;
\mathcal{L}^{\chi^2}_{n,t}(\theta;\varepsilon)
=
\mu_{n,t}(\theta)+\sqrt{\varepsilon}\,\sigma_{n,t}(\theta).
\]
Hence
\begin{equation}
\label{eq:chi2-fixed-theta-reduction-new}
\big|\mathcal{L}^{\chi^2}_{t}(\theta;\varepsilon)-\mathcal{L}^{\chi^2}_{n,t}(\theta;\varepsilon)\big|
\le
|\mu_t(\theta)-\mu_{n,t}(\theta)|
+
\sqrt{\varepsilon}\,|\sigma_t(\theta)-\sigma_{n,t}(\theta)|.
\end{equation}
If $\sigma_t(\theta)=0$, then $\ell_t(Z;\theta)$ is $P_t^\circ$-a.s.\ constant, and thus
$\mu_{n,t}(\theta)=\mu_t(\theta)$ and $\sigma_{n,t}(\theta)=0$ $\mathbb{P}(\cdot\mid\mathcal F_t)$-a.s.;
therefore the left-hand side of \eqref{eq:chi2-fixed-theta-reduction-new} is $0$.
Henceforth assume $\sigma_t(\theta)>0$.

Using $|\sigma-\sigma'|=\frac{|\sigma^2-(\sigma')^2|}{\sigma+\sigma'}$ and $\sigma+\sigma'\ge \sigma$, we have
\[
\sqrt{\varepsilon}\,|\sigma_t(\theta)-\sigma_{n,t}(\theta)|
=
\sqrt{\varepsilon}\,\frac{|\sigma_t^2(\theta)-\sigma_{n,t}^2(\theta)|}{\sigma_t(\theta)+\sigma_{n,t}(\theta)}
\le
\sqrt{\varepsilon}\,\frac{|\sigma_t^2(\theta)-\sigma_{n,t}^2(\theta)|}{\sigma_t(\theta)}.
\]
By assumption (iii), $\sigma_t(\theta)\ge 2\underline{\lambda}\sqrt{\varepsilon}$ on the set where $\sigma_t(\theta)>0$.
Therefore,
\[
\sqrt{\varepsilon}\,|\sigma_t(\theta)-\sigma_{n,t}(\theta)|
\le
\frac{|\sigma_t^2(\theta)-\sigma_{n,t}^2(\theta)|}{2\underline{\lambda}}.
\]
Plugging into \eqref{eq:chi2-fixed-theta-reduction-new} yields the fixed-$\theta$ reduction
\begin{equation}
\label{eq:chi2-fixed-theta-reduction-variance}
\big|\mathcal{L}^{\chi^2}_{t}(\theta;\varepsilon)-\mathcal{L}^{\chi^2}_{n,t}(\theta;\varepsilon)\big|
\le
|\mu_t(\theta)-\mu_{n,t}(\theta)|
+\frac{|\sigma_t^2(\theta)-\sigma_{n,t}^2(\theta)|}{2\underline{\lambda}}.
\end{equation}

Fix $\theta\in\Theta$ and define the second moments
\[
m_{2,t}(\theta):=\mathbb{E}_{P_t^\circ}[\ell_t(Z;\theta)^2],
\;\;
m_{2,n,t}(\theta):=\mathbb{E}_{P_{n,t}^\circ}[\ell_t(Z;\theta)^2].
\]
Then $\sigma_t^2(\theta)=m_{2,t}(\theta)-\mu_t(\theta)^2$ and
$\sigma_{n,t}^2(\theta)=m_{2,n,t}(\theta)-\mu_{n,t}(\theta)^2$, so
\begin{align}
|\sigma_t^2(\theta)-\sigma_{n,t}^2(\theta)|
&\le
|m_{2,t}(\theta)-m_{2,n,t}(\theta)|
+
|\mu_t(\theta)^2-\mu_{n,t}(\theta)^2|\notag\\
&\le
|m_{2,t}(\theta)-m_{2,n,t}(\theta)|
+
(\,|\mu_t(\theta)|+|\mu_{n,t}(\theta)|\,)\,|\mu_t(\theta)-\mu_{n,t}(\theta)|\notag\\
&\le
|m_{2,t}(\theta)-m_{2,n,t}(\theta)|
+2K_\ell\,|\mu_t(\theta)-\mu_{n,t}(\theta)|,
\label{eq:var-decomp}
\end{align}
where we used $0\le \mu_t(\theta),\mu_{n,t}(\theta)\le K_\ell$. Now apply conditional Hoeffding (Lemma \ref{appendix:hoeffding}) given $\mathcal{F}_{t}$.
Conditioned on $\mathcal F_t$, the samples in $\mathcal D_t$ are i.i.d.\ from $\mathbb P_t^\circ$.
Since $\ell_t(\cdot;\theta)\in[0,K_\ell]$,
\[
\Pr\Big(|\mu_t(\theta)-\mu_{n,t}(\theta)|\ge \epsilon_1\mid \mathcal{F}_{t}\Big)
\le
2\exp\Big(-\frac{2n_t\epsilon_1^2}{K_\ell^2}\Big).
\]
Also $\ell_t(\cdot;\theta)^2\in[0,K_\ell^2]$, hence
\[
\Pr\Big(|m_{2,t}(\theta)-m_{2,n,t}(\theta)|\ge \epsilon_2\mid \mathcal{F}_{t}\Big)
\le
2\exp\Big(-\frac{2n_t\epsilon_2^2}{K_\ell^4}\Big).
\]
By a union bound, with conditional probability at least $1-\delta_t$ (given $\mathcal F_t$) we have simultaneously
\[
|\mu_t(\theta)-\mu_{n,t}(\theta)|
\le
K_\ell\sqrt{\frac{\log(4/\delta_t)}{2n_t}},
\;\;
|m_{2,t}(\theta)-m_{2,n,t}(\theta)|
\le
K_\ell^2\sqrt{\frac{\log(4/\delta_t)}{2n_t}}.
\]
Plugging these into \eqref{eq:var-decomp} yields that on the same event,
\[
|\sigma_t^2(\theta)-\sigma_{n,t}^2(\theta)|
\le
K_\ell^2\sqrt{\frac{\log(4/\delta_t)}{2n_t}}
+
2K_\ell\cdot K_\ell\sqrt{\frac{\log(4/\delta_t)}{2n_t}}
=
3K_\ell^2\sqrt{\frac{\log(4/\delta_t)}{2n_t}}.
\]
Finally, substituting into \eqref{eq:chi2-fixed-theta-reduction-variance}, we obtain: with conditional probability at least $1-\delta_t$ (given $\mathcal F_t$),
\begin{equation}
\label{eq:chi2-fixed-theta-bound}
\big|\mathcal{L}^{\chi^2}_{t}(\theta;\varepsilon)-\mathcal{L}^{\chi^2}_{n,t}(\theta;\varepsilon)\big|
\le
K_\ell\sqrt{\frac{\log(4/\delta_t)}{2n_t}}
+
\frac{3K_\ell^2}{2\underline{\lambda}}\sqrt{\frac{\log(4/\delta_t)}{2n_t}}.
\end{equation}

\emph{Note that we do not require any covering argument over the scalar dual variable $\lambda$: under the inactive-truncation condition, Lemma \ref{appendix:strong-duality-chi2} reduces the robust objective to a mean--standard-deviation functional, and the deviation analysis reduces to concentration of the first two moments.}

Let $\mathcal{N}^{\theta}_{\alpha}$ be an $\alpha$-net of $\Theta$ under $\|\cdot\|_2$ with
$|\mathcal{N}^{\theta}_{\alpha}|\le (3B/\alpha)^d$.
Apply \eqref{eq:chi2-fixed-theta-bound} to each $\theta^\sharp\in\mathcal{N}^{\theta}_{\alpha}$ with failure probability
$\delta_t/|\mathcal{N}^{\theta}_{\alpha}|$ and union bound over $\theta^\sharp$, yielding that with conditional probability at least $1-\delta_t$,
\begin{align}
\label{eq:chi2-net-bound}
\sup_{\theta^\sharp\in\mathcal{N}^{\theta}_{\alpha}}
\big|\mathcal{L}^{\chi^2}_{t}(\theta^\sharp;\varepsilon)-\mathcal{L}^{\chi^2}_{n,t}(\theta^\sharp;\varepsilon)\big|
\le
\Big(K_\ell+\frac{3K_\ell^2}{2\underline{\lambda}}\Big)
\sqrt{\frac{\log\!\big(4|\mathcal{N}^{\theta}_{\alpha}|/\delta_t\big)}{2n_t}}.
\end{align}

Let $\theta,\theta'\in\Theta$. Using Lemma \ref{appendix:strong-duality-chi2} in the infimum form and the inequality
$|\inf_x f(x)-\inf_x g(x)|\le \sup_x|f(x)-g(x)|$, we have for any fixed $\lambda_0>0$,
\begin{align*}
\big|\mathcal{L}^{\chi^2}_{t}(\theta;\varepsilon)-\mathcal{L}^{\chi^2}_{t}(\theta';\varepsilon)\big|
&\le
|\mu_t(\theta)-\mu_t(\theta')|
+
\frac{|\sigma_t^2(\theta)-\sigma_t^2(\theta')|}{4\lambda_0},\\
\big|\mathcal{L}^{\chi^2}_{n,t}(\theta;\varepsilon)-\mathcal{L}^{\chi^2}_{n,t}(\theta';\varepsilon)\big|
&\le
|\mu_{n,t}(\theta)-\mu_{n,t}(\theta')|
+
\frac{|\sigma_{n,t}^2(\theta)-\sigma_{n,t}^2(\theta')|}{4\lambda_0}.
\end{align*}
Taking $\lambda_0=\underline{\lambda}$ and using that $\ell_t(z;\theta)$ is $L_{K_g,\eta}$-Lipschitz in $\theta$ uniformly over $z$
(Lemma \ref{appendix:lipschitz-bound-l}), we obtain
\[
|\mu_t(\theta)-\mu_t(\theta')|
\le
L_{K_g,\eta}\|\theta-\theta'\|_2,
\;\;
|\mu_{n,t}(\theta)-\mu_{n,t}(\theta')|
\le
L_{K_g,\eta}\|\theta-\theta'\|_2.
\]
For the variance, write $\sigma^2=m_2-\mu^2$ and use
$|\ell^2-\ell'^2|\le |\ell-\ell'|\cdot(|\ell|+|\ell'|)\le 2K_\ell|\ell-\ell'|$ to get
\[
|m_{2,t}(\theta)-m_{2,t}(\theta')|
\le
2K_\ell L_{K_g,\eta}\|\theta-\theta'\|_2,
\;\;
|m_{2,n,t}(\theta)-m_{2,n,t}(\theta')|
\le
2K_\ell L_{K_g,\eta}\|\theta-\theta'\|_2.
\]
Also,
\[
|\mu_t(\theta)^2-\mu_t(\theta')^2|\le 2K_\ell|\mu_t(\theta)-\mu_t(\theta')|
\le 2K_\ell L_{K_g,\eta}\|\theta-\theta'\|_2,
\]
and similarly for $\mu_{n,t}$. Therefore,
\[
|\sigma_t^2(\theta)-\sigma_t^2(\theta')|
\le 4K_\ell L_{K_g,\eta}\|\theta-\theta'\|_2,
\;\;
|\sigma_{n,t}^2(\theta)-\sigma_{n,t}^2(\theta')|
\le 4K_\ell L_{K_g,\eta}\|\theta-\theta'\|_2.
\]
Hence both $\mathcal{L}^{\chi^2}_{t}(\cdot;\varepsilon)$ and $\mathcal{L}^{\chi^2}_{n,t}(\cdot;\varepsilon)$ are Lipschitz on $\Theta$ with constant
\[
L_{\chi^2,\theta}
:=
L_{K_g,\eta}+\frac{4K_\ell L_{K_g,\eta}}{4\underline{\lambda}}
=
L_{K_g,\eta}\Big(1+\frac{K_\ell}{\underline{\lambda}}\Big).
\]

Now fix any $\theta\in\Theta$ and choose $\theta^\sharp\in\mathcal{N}^{\theta}_{\alpha}$ such that $\|\theta-\theta^\sharp\|_2\le \alpha$.
Then
\begin{align*}
\big|\mathcal{L}^{\chi^2}_{t}(\theta;\varepsilon)-\mathcal{L}^{\chi^2}_{n,t}(\theta;\varepsilon)\big|
&\le
\big|\mathcal{L}^{\chi^2}_{t}(\theta;\varepsilon)-\mathcal{L}^{\chi^2}_{t}(\theta^\sharp;\varepsilon)\big|
+
\big|\mathcal{L}^{\chi^2}_{t}(\theta^\sharp;\varepsilon)-\mathcal{L}^{\chi^2}_{n,t}(\theta^\sharp;\varepsilon)\big|
+
\big|\mathcal{L}^{\chi^2}_{n,t}(\theta^\sharp;\varepsilon)-\mathcal{L}^{\chi^2}_{n,t}(\theta;\varepsilon)\big|\\
&\le
\sup_{\theta^\sharp\in\mathcal{N}^{\theta}_{\alpha}}
\big|\mathcal{L}^{\chi^2}_{t}(\theta^\sharp;\varepsilon)-\mathcal{L}^{\chi^2}_{n,t}(\theta^\sharp;\varepsilon)\big|
+2L_{\chi^2,\theta}\alpha.
\end{align*}
Taking the supremum over $\Theta$ and using \eqref{eq:chi2-net-bound} gives that with conditional probability at least $1-\delta_t$,
\begin{align}
\label{eq:chi2-uniform-theta}
\sup_{\theta\in\Theta}
\big|\mathcal{L}^{\chi^2}_{t}(\theta;\varepsilon)-\mathcal{L}^{\chi^2}_{n,t}(\theta;\varepsilon)\big|
\le
\Big(K_\ell+\frac{3K_\ell^2}{2\underline{\lambda}}\Big)
\sqrt{\frac{\log\!\big(4|\mathcal{N}^{\theta}_{\alpha}|/\delta_t\big)}{2n_t}}
+2L_{\chi^2,\theta}\alpha.
\end{align}
Choose $\alpha:=1/\sqrt{n_t}$ so that $\log|\mathcal{N}^{\theta}_{\alpha}|\le d\log(3B\sqrt{n_t})\lesssim d\log n_t$ and
$2L_{\chi^2,\theta}\alpha\lesssim L_{\chi^2,\theta}/\sqrt{n_t}$.
Absorbing constants, we obtain: with conditional probability at least $1-\delta_t$,
\begin{equation}
\label{eq:chi2-uniform-theta-summary}
\sup_{\theta\in\Theta}
\big|\mathcal{L}^{\chi^2}_{t}(\theta;\varepsilon)-\mathcal{L}^{\chi^2}_{n,t}(\theta;\varepsilon)\big|
\;\lesssim\;
\Big(K_\ell+\frac{K_\ell^2}{\underline{\lambda}}\Big)
\sqrt{\frac{d\log n_t+\log(1/\delta_t)}{n_t}}.
\end{equation}

Set $\delta_t:=\delta/T$ and apply a union bound over $t\in\{0,\dots,T-1\}$.
Then with probability at least $1-\delta$, for all $t\in[T]$ simultaneously,
\begin{equation}
\label{eq:chi2-uniform-all-t}
\sup_{\theta\in\Theta}
\big|\mathcal{L}^{\chi^2}_{t}(\theta;\varepsilon)-\mathcal{L}^{\chi^2}_{n,t}(\theta;\varepsilon)\big|
\;\lesssim\;
\Big(K_\ell+\frac{K_\ell^2}{\underline{\lambda}}\Big)
\sqrt{\frac{d\log n_t+\log(T/\delta)}{n_t}}.
\end{equation}
On the event \eqref{eq:chi2-uniform-all-t}, for each fixed $t$ we have the standard three-term decomposition:
\begin{align*}
\mathcal{L}^{\chi^2}_{t}(\hat{\theta}_{n,t}^{\chi^2};\varepsilon)-\mathcal{L}^{\chi^2}_{t}(\theta_t^{\chi^2};\varepsilon)
&\le
2\sup_{\theta\in\Theta}\big|\mathcal{L}^{\chi^2}_{t}(\theta;\varepsilon)-\mathcal{L}^{\chi^2}_{n,t}(\theta;\varepsilon)\big|.
\end{align*}
Combining with \eqref{eq:chi2-uniform-all-t}, we obtain with probability at least $1-\delta$ for all $t\in[T]$,
\begin{equation}
\label{eq:chi2-excess-risk}
\mathcal{L}^{\chi^2}_{t}(\hat{\theta}_{n,t}^{\chi^2};\varepsilon)-\mathcal{L}^{\chi^2}_{t}(\theta_t^{\chi^2};\varepsilon)
\;\lesssim\;
\Big(K_\ell+\frac{K_\ell^2}{\underline{\lambda}}\Big)
\sqrt{\frac{d\log n_t+\log(T/\delta)}{n_t}}.
\end{equation}

Finally, by the strong convexity of $\mathcal{L}^{\chi^2}_{t}(\cdot;\varepsilon)$ in $\|\cdot\|_2$
(Lemma \ref{appendix:strong-convexity-of-chi-loss}), we have
\[
\mathcal{L}^{\chi^2}_{t}(\theta;\varepsilon)-\mathcal{L}^{\chi^2}_{t}(\theta_t^{\chi^2};\varepsilon)
\ge
\frac{\kappa}{\eta^2}\|\theta-\theta_t^{\chi^2}\|_2^2
\;\; \forall\theta\in\Theta.
\]
Conditional on $\mathcal{F}_{t}$, applying this at $\theta=\hat{\theta}_{n,t}^{\chi^2}$ and combining with
\eqref{eq:chi2-excess-risk} yields, with probability at least $1-\delta$, for all $t\in[T]$,
\[
\|\hat{\theta}_{n,t}^{\chi^2}-\theta_t^{\chi^2}\|_2^2
\lesssim
\frac{\eta^2}{\kappa}
\Big(K_\ell+\frac{K_\ell^2}{\underline{\lambda}}\Big)
\sqrt{\frac{d\log n_t+\log(T/\delta)}{n_t}}.
\]
This completes the proof.
\end{proof}

\section{Proof of Tractable $\chi^2$-DRO-REBEL}
\label{appendix:tractable-chi-squared-algo}
\begin{proof}[Proof of \ref{prop:chi2-worst-case}]
The proof that follows is standard in the analysis of f-divergences and follows from \cite{namkoong2017variance}. We include it for completeness. Let $P_{n, t}$ be the empirical distribution. The robust optimization problem is given by
\[
\mathcal{L}^{\chi^2}_n(\theta;\rho) = \sup_{P} \mathbb{E}_{P}[\ell(z;\theta)] \quad \text{s.t.} \quad D_{\chi^2}(P\|P_{n, t}) \le \rho, \quad \mathbb{P} \ge 0, \quad \mathbb{E}_\mathbb{P}[1]=1.
\]
The $\chi^2$-divergence is defined by $f(t) = \frac{1}{2}(t-1)^2$. The Fenchel conjugate $f^*(s) = \sup_{t \ge 0} \{st - f(t)\}$.
For $s \in \mathbb{R}$, $f'(t) = t-1$. Setting $s = t-1$, we get $t=s+1$.
Substituting this into the definition of $f^*(s)$,
$f^*(s) = s(s+1) - \frac{1}{2}((s+1)-1)^2 = s^2+s - \frac{1}{2}s^2 = \frac{1}{2}s^2+s$.
This derivation holds for $t \ge 0$, which implies $s+1 \ge 0 \implies s \ge -1$. If $s < -1$, the optimal $t$ would be negative, violating $t \ge 0$. In this case, $f^*(s)$ becomes $\infty$ due to the constraint $t \ge 0$. According to Lemma \ref{appendix:duchi-strong-duality-KL} \cite{duchi2020learningmodelsuniformperformance}, the dual form of the $f$-divergence based DRO problem is:
\[
\sup_{\mathbb{P}\colon D_f(\mathbb{P}\Vert P_{n, t})\le\rho}
\mathbb E_\mathbb{P}[\ell(z;\theta)]
\;=\;
\inf_{\substack{\lambda\ge0\\\eta\in\mathbb R}}
\Bigl\{
\lambda\,\E_{P_{n}}\left[f^*\!\left(\frac{\ell(z;\theta)-\eta}{\lambda}\right)\right]
\;+\;\lambda\,\rho\;+\;\eta
\Bigr\}.
\]
Substituting $f^*(s) = \frac{1}{2}s^2+s$ into this dual formulation, with $s = \frac{\ell_i-\eta}{\lambda}$:
\[
\mathcal{L}^{\chi^2}_n(\theta;\rho) = \inf_{\substack{\lambda\ge0\\\eta\in\mathbb R}} \left\{ \lambda\,\rho\;+\;\eta\;+\;\E_{P_{n}}\left[\lambda\left(\frac{1}{2}\left(\frac{\ell_i-\eta}{\lambda}\right)^2 + \frac{\ell_i-\eta}{\lambda}\right)\right] \right\}.
\]
This simplifies to
\[
\mathcal{L}^{\chi^2}_n(\theta;\rho) = \inf_{\substack{\lambda\ge0\\\eta\in\mathbb R}} \left\{ \lambda\,\rho\;+\;\eta\;+\;\E_{P_{n}}\left[\frac{(\ell_i-\eta)^2}{2\lambda} + (\ell_i-\eta)\right] \right\}.
\]
Let $X_i = \ell_i - \eta$. The objective becomes
\[
\inf_{\substack{\lambda\ge0\\\eta\in\mathbb R}} \left\{ \lambda\,\rho\;+\;\eta\;+\;\frac{1}{n}\sum_{i=1}^n \left[\frac{X_i^2}{2\lambda} + X_i\right] \right\}.
\]
The non-negativity constraint $\mathbb{P}(z_i) \ge 0$ in the primal problem implies $1 + \frac{\ell_i-\eta}{\lambda} \ge 0$. This is equivalent to $\frac{\ell_i-\eta}{\lambda} \ge -1$. This constraint is handled by a special form of $f^*$ or by considering the dual's objective piecewise. When $1 + \frac{\ell_i-\eta}{\lambda} < 0$, this instance $z_i$ is excluded from the worst-case distribution. This leads to the presence of the positive part $(\cdot)_+$ in the objective. 

Specifically, for $\chi^2$-divergence, it is a known result in robust optimization that the problem is equivalent to:
\[
\mathcal{L}^{\chi^2}_n(\theta;\rho) = \inf_{\eta\in\mathbb{R}} \left\{ \eta + \inf_{\lambda>0} \left\{ \lambda\rho + \frac{1}{n}\sum_{i=1}^n \frac{(\ell_i-\eta)_+^2}{2\lambda} \right\} \right\}.
\]
Now, we solve the inner minimization with respect to $\lambda$ for a fixed $\eta$. Let $Y_i = (\ell_i-\eta)_+$. The inner objective is:
\[
G(\lambda) = \lambda\rho + \frac{1}{n}\sum_{i=1}^n \frac{Y_i^2}{2\lambda}.
\]
To find the optimal $\lambda^*$, we differentiate $G(\lambda)$ with respect to $\lambda$ and set it to zero:
\[
\frac{dG(\lambda)}{d\lambda} = \rho - \frac{1}{n}\sum_{i=1}^n \frac{Y_i^2}{2\lambda^2} = 0.
\]
Solving for $\lambda^2$:
\[
\lambda^2 = \frac{\sum_{i=1}^n Y_i^2}{2n\rho} = \frac{\E_{P_{n}}[(\ell_i-\eta)_+^2]}{2\rho}.
\]
Since $\lambda > 0$ and $\rho > 0$, we take the positive square root
\[
\lambda^* = \sqrt{\frac{\E_{P_{n}}[(\ell_i-\eta)_+^2]}{2\rho}}.
\]
Substitute $\lambda^*$ back into the inner objective $G(\lambda)$
\begin{align*}
G(\lambda^*) &= \sqrt{\frac{\E_{P_{n}}[(\ell_i-\eta)_+^2]}{2\rho}}\cdot\rho + \frac{1}{n}\sum_{i=1}^n \frac{(\ell_i-\eta)_+^2}{2\sqrt{\frac{\E_{P_{n}}[(\ell_i-\eta)_+^2]}{2\rho}}} \\
&= \rho\sqrt{\frac{\E_{P_{n}}[(\ell_i-\eta)_+^2]}{2\rho}} + \frac{1}{2}\E_{P_{n}}[(\ell_i-\eta)_+^2]\sqrt{\frac{2\rho}{\E_{P_{n}}[(\ell_i-\eta)_+^2]}} \\
&= \sqrt{\frac{\rho^2 \E_{P_{n}}[(\ell_i-\eta)_+^2]}{2\rho}} + \frac{1}{2}\sqrt{2\rho \E_{P_{n}}[(\ell_i-\eta)_+^2]} \\
&= \sqrt{\frac{\rho \E_{P_{n}}[(\ell_i-\eta)_+^2]}{2}} + \frac{1}{2}\sqrt{2\rho \E_{P_{n}}[(\ell_i-\eta)_+^2]} \\
&= \sqrt{\frac{2\rho \E_{P_{n}}[(\ell_i-\eta)_+^2]}{4}} + \sqrt{\frac{2\rho \E_{P_{n}}[(\ell_i-\eta)_+^2]}{4}} \\
&= 2 \sqrt{\frac{2\rho \E_{P_{n}}[(\ell_i-\eta)_+^2]}{4}} \\
&= \sqrt{2\rho \E_{P_{n}}[(\ell_i-\eta)_+^2]}.
\end{align*}
Therefore, the robust objective simplifies to:
\[
\mathcal{L}^{\chi^2}_n(\theta;\rho)
=\inf_{\eta\in\mathbb{R}}
\Bigl\{\,
\eta \;+\;\sqrt{\frac{2\rho}{n}\sum_{i=1}^n(\ell_i-\eta)_+^2}
\Bigr\},
\]
which matches the claim of the proposition. We now show that this problem can be solved efficiently by
establishing the convexity of the objective in $\eta$ and bounding the search space for its minimizer. Define
\[
f(\eta) \;:=\; \eta \;+\; \sqrt{\frac{2\rho}{n}\sum_{i=1}^n(\ell_i-\eta)_+^2}\,.
\]
Write $f(\eta)=g(\eta)+h(\eta)$ with $g(\eta)=\eta$ and
$$h(\eta)=\sqrt{(2\rho/n)\sum_{i=1}^n v_i(\eta)^2},$$ where $v_i(\eta)=(\ell_i-\eta)_+$. The term $g$ is
linear. Each $v_i$ is convex and non-negative, so the mapping
$\eta\mapsto v(\eta)=(v_1(\eta),\dots,v_n(\eta))^\top$ has convex, non-negative components; composing with the
scaled $\ell_2$-norm $\phi(v)=\sqrt{2\rho/n}\,\|v\|_2$, which is convex and non-decreasing on
$\mathbb{R}_{\ge 0}^n$, shows that $h$ is convex. Hence $f$ is convex as the sum of two convex functions.

At any point $\eta\notin\{\ell_1,\dots,\ell_n\}$ the function $f$ is differentiable with derivative
\[
f'(\eta) \;=\; 1 \;-\; \sqrt{\frac{2\rho}{n}} \cdot
\frac{\sum_{i:\,\ell_i>\eta}(\ell_i-\eta)}
     {\sqrt{\sum_{i:\,\ell_i>\eta}(\ell_i-\eta)^2}}\,.
\]
Because $f$ is convex, its subdifferential $\partial f$ is monotonically non-decreasing, and a point
$\eta^\star$ is a minimizer if and only if $0\in\partial f(\eta^\star)$. For any
$\eta>\max_{i}\ell_i$ we have $(\ell_i-\eta)_+=0$ for all $i$, so $f(\eta)=\eta$ and $f'(\eta)=1>0$.
Likewise, $f(\eta)\to\infty$ as $\eta\to-\infty$. The minimizer therefore satisfies
$\eta^\star\le\max_i\ell_i$, and the search can be restricted to a finite interval $[L,\,\max_i\ell_i]$ for
any sufficiently small $L$.

Monotonicity of the subdifferential permits a binary search for $\eta^\star$:
\begin{enumerate}
    \item Initialize the search interval $[L, U]$ with $U = \max_{i}\ell_i$ and $L$ a sufficiently small
          lower bound.
    \item Set the candidate $\eta_c = (L+U)/2$ and compute a subgradient $g_c \in \partial f(\eta_c)$.
    \item If $g_c > 0$, update $U \leftarrow \eta_c$; if $g_c < 0$, update $L \leftarrow \eta_c$.
    \item Repeat until $U - L < \epsilon$.
\end{enumerate}
Each subgradient evaluation requires a pass over the $n$ loss values, so the total cost to reach precision
$\epsilon$ is $\mathcal{O}(n\log((U-L)/\epsilon))$. In Algorithm~\ref{alg:chi2-rebel} the losses
$\ell_1,\dots,\ell_n$ are first sorted in $\mathcal{O}(n\log n)$ time; given the sorted order, each
subgradient can be evaluated in $\mathcal{O}(\log n)$ time via binary search and prefix sums, reducing the
overall runtime to $\mathcal{O}(n\log n)$.
\end{proof}

\section{Additional Experimental Results}
\label{appendix:additional-experimental-results}
Below you can find results for both convex and geometric reward mixtures on each REBEL variant discussed in Figure \ref{fig:W-REBEL-emotion-alignment}, \ref{fig:KL-REBEL-emotion-alignment}, and \ref{fig:chi-REBEL-emotion-alignment}. The takeaways are largely the same. Utilizing the DRO framework in a sample efficient algorithm like REBEL allows us to maintain generalization and prevent overoptimization by adapting to test-time distribution shifts.

\begin{figure}[htbp!]
    \centering
    \includegraphics[width=0.49\linewidth]{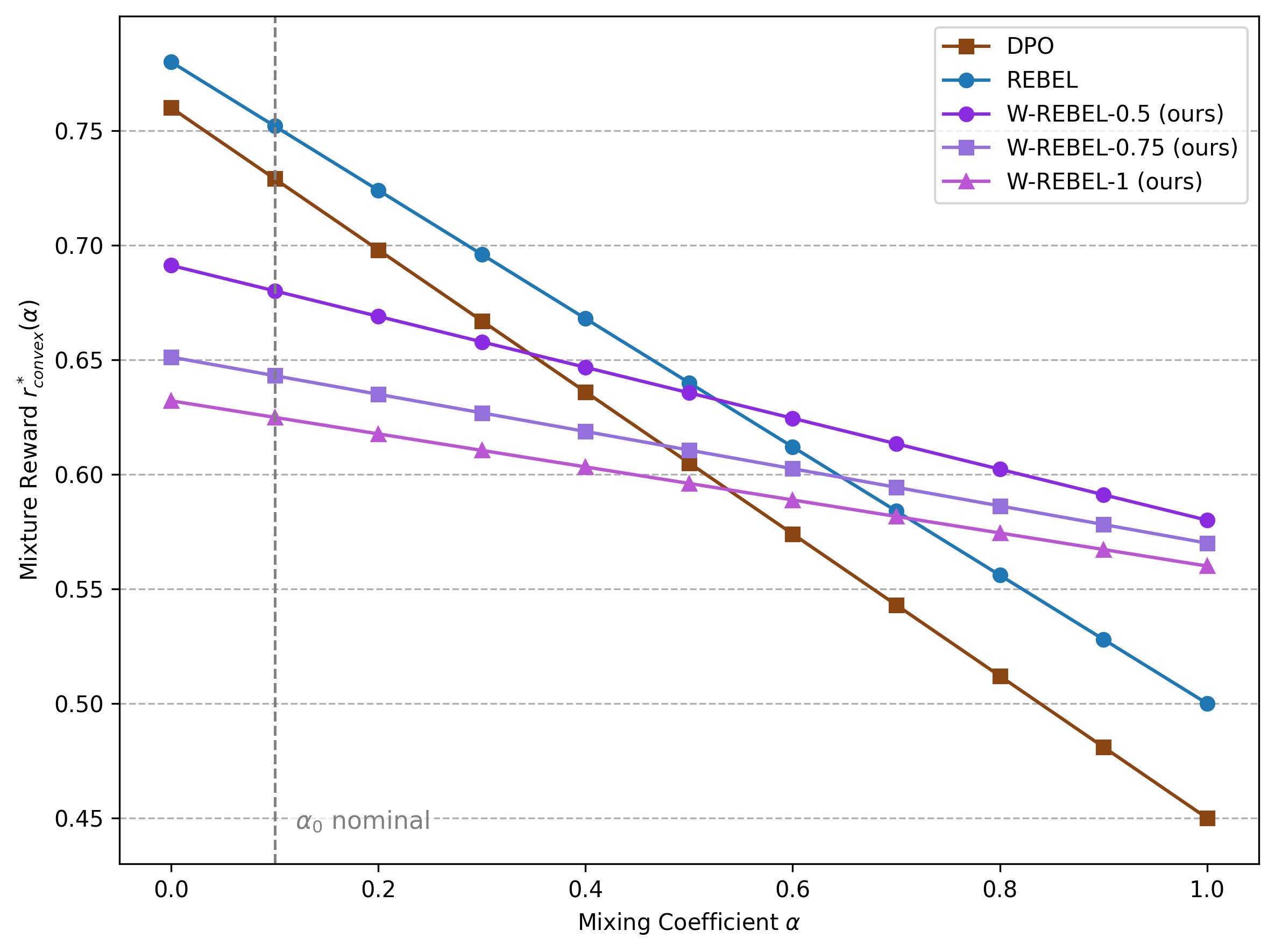}
    \includegraphics[width=0.49\linewidth]{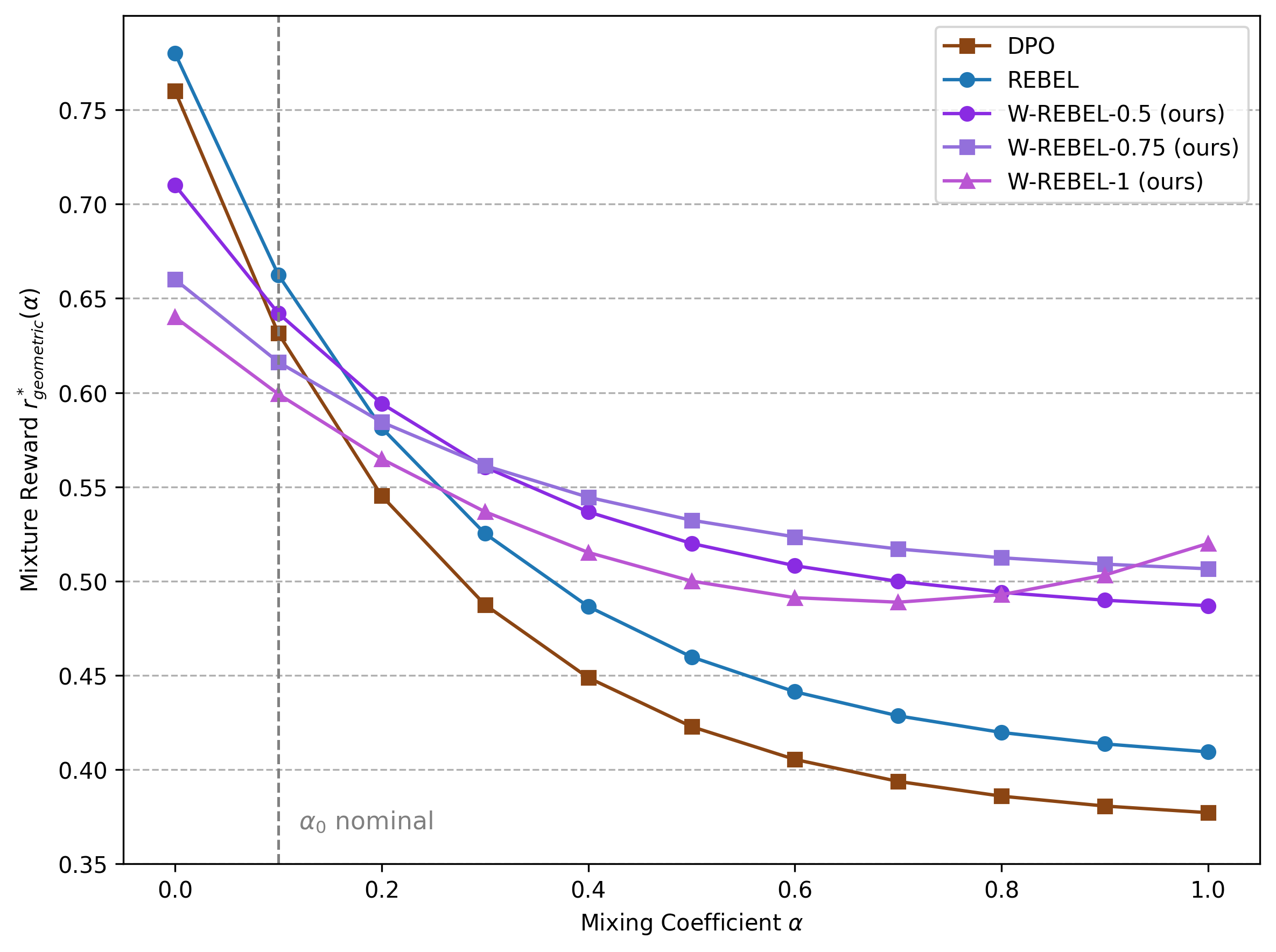}
    \caption{Emotion alignment performance for W-REBEL under convex (left) and geometric (right) reward mixing.}
    \label{fig:W-REBEL-emotion-alignment}
\end{figure}
\begin{figure}[htbp!]
    \centering
    \includegraphics[width=0.49\linewidth]{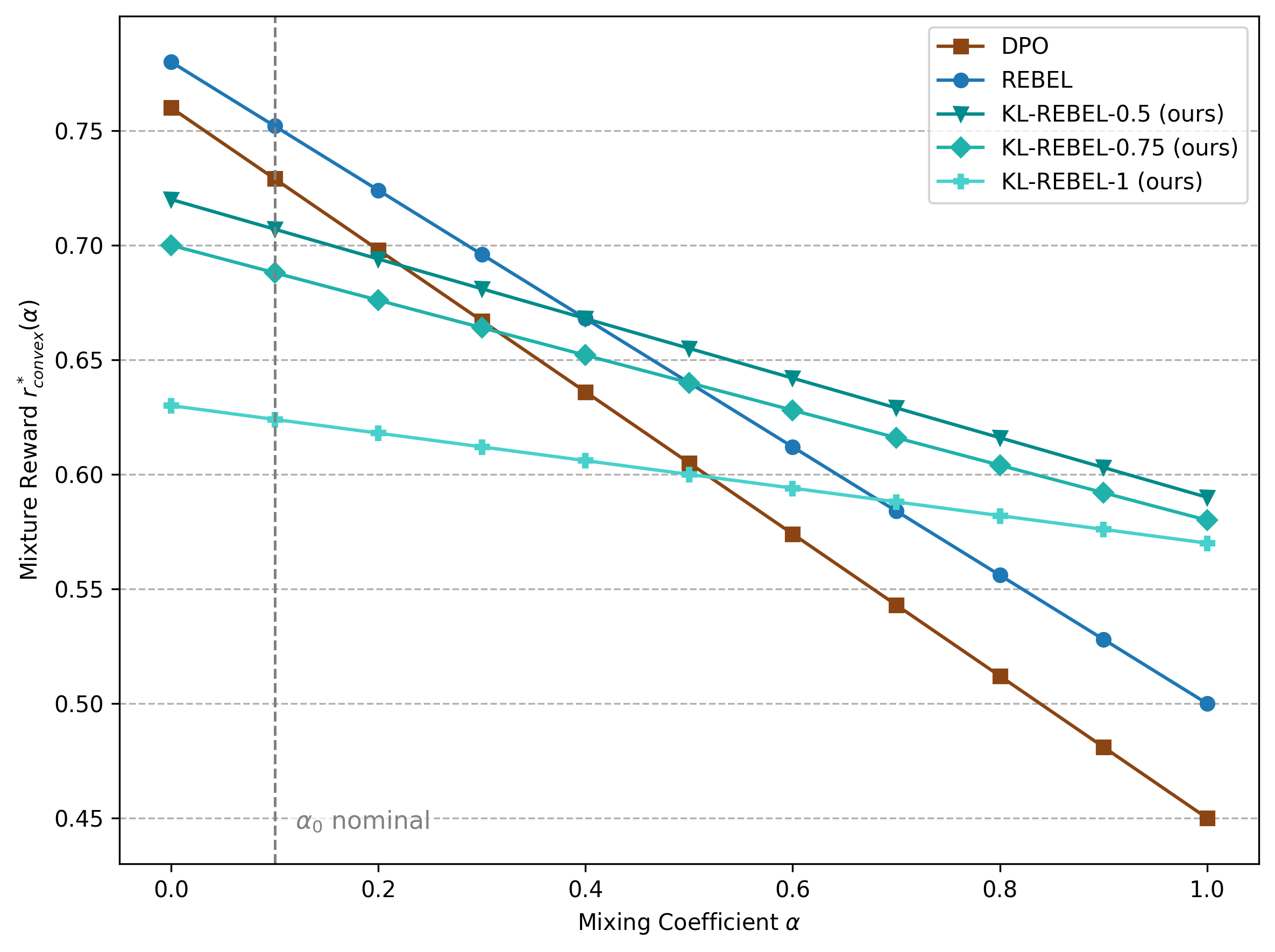}
    \includegraphics[width=0.49\linewidth]{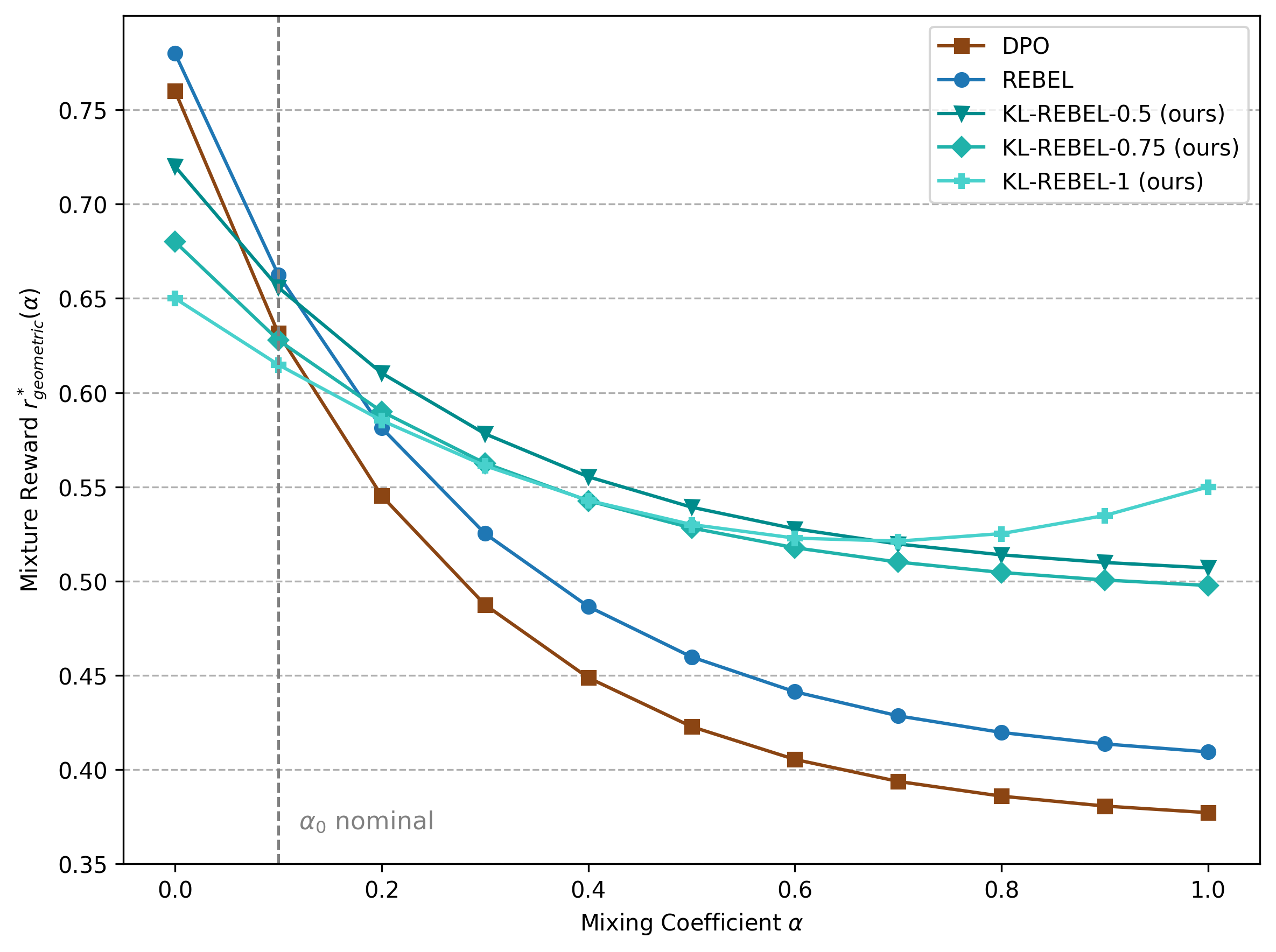}
    \caption{Emotion alignment performance for KL-REBEL under convex (left) and geometric (right) reward mixing.}
    \label{fig:KL-REBEL-emotion-alignment}
\end{figure}
\begin{figure}[htbp!]
    \centering
    \includegraphics[width=0.49\linewidth]{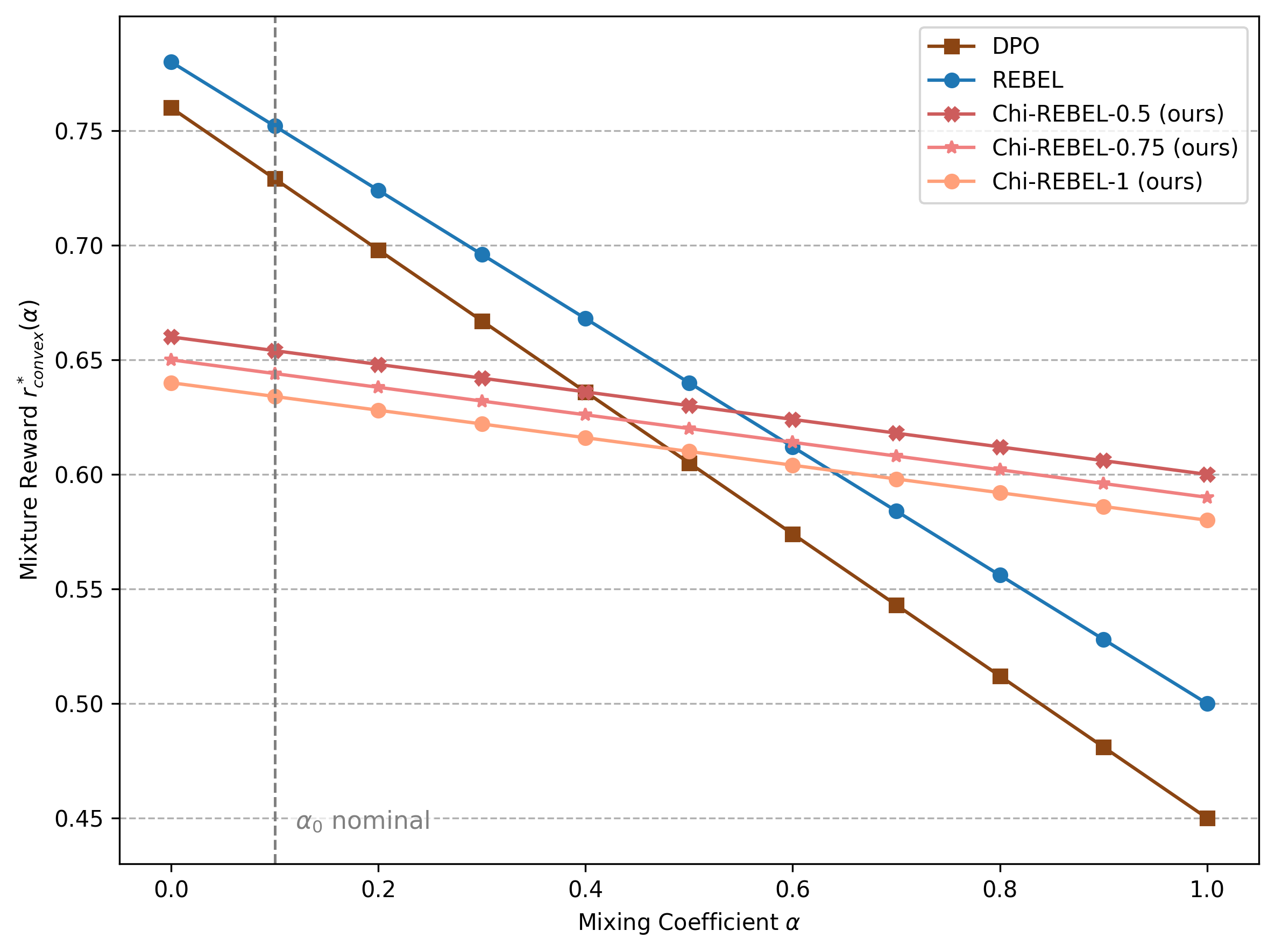}
    \includegraphics[width=0.49\linewidth]{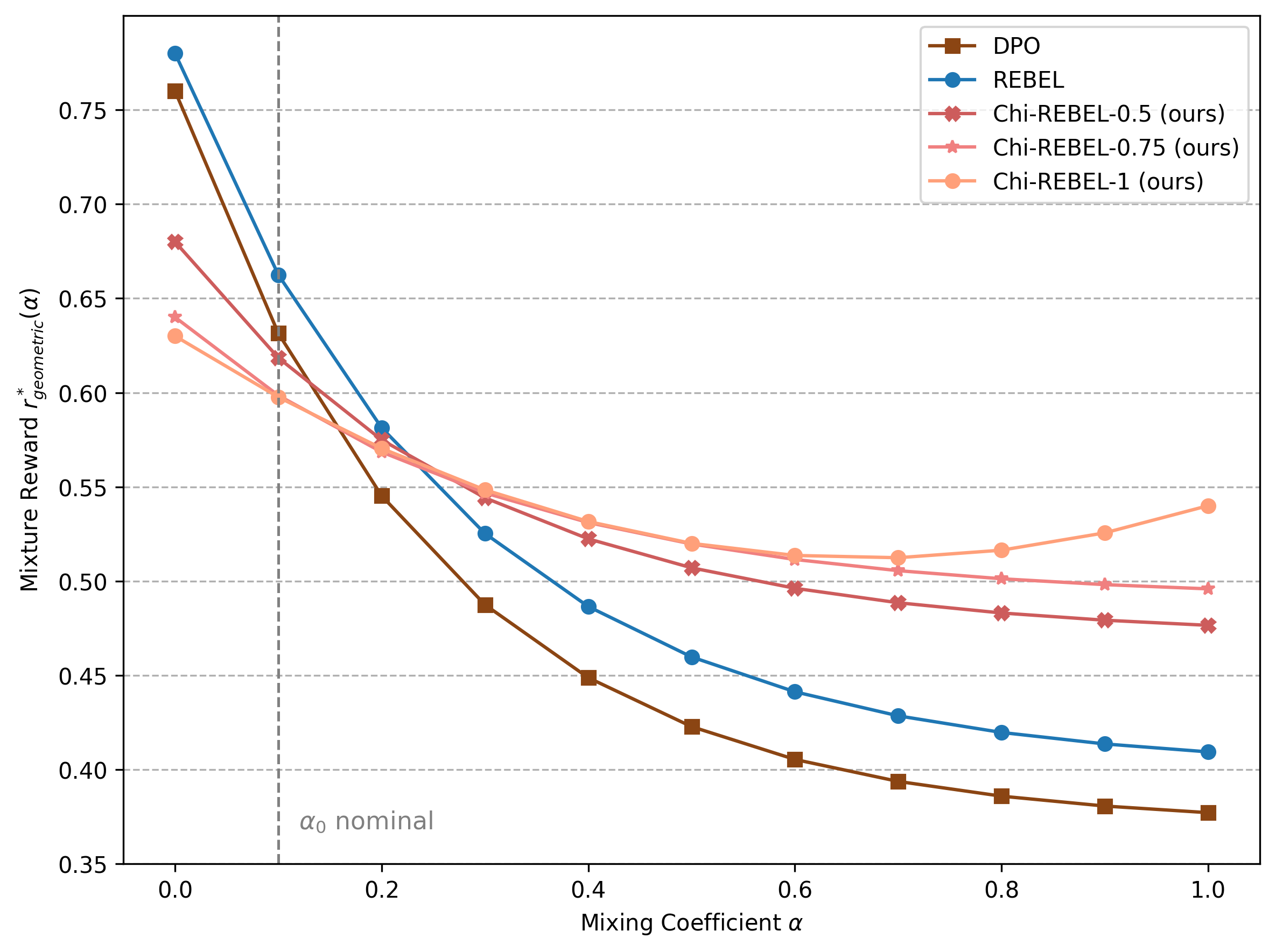}
    \caption{Emotion alignment performance for $\chi^2$-REBEL under convex (left) and geometric (right) reward mixing.}
    \label{fig:chi-REBEL-emotion-alignment}
\end{figure}

\section{Experiment Training Details}
\label{appendix:experiment-training-details}
Our empirical evaluation consists of four complementary components: (i) Emotion Alignment, which serves as a controlled preference-shift benchmark, (ii) a simulated ArmoRM multi-objective alignment setting, which captures heterogeneous and potentially competing objectives, and (iii) a robustness evaluation on HH-RLHF, which assesses generalization on a standard real-world alignment benchmark. This section details the methodology for each experiment, including the data construction, model architectures, training procedures, hyperparameter choices, evaluation metrics, and implementation details.

\subsection{Emotion Alignment Setup}

\textbf{Reward Model Training.}
The reward model, which serves to quantify emotion-specific preferences, was trained on the "emotion" dataset \cite{saravia-etal-2018-carer}. This dataset comprises text samples annotated with single-class labels across six distinct emotion categories: joy, sadness, love, anger, fear, and surprise. The raw text data was preprocessed by tokenization, with sequence lengths capped at a maximum as defined in shared training configurations (e.g., 68 tokens). The original single-label emotion classifications were used directly as targets for the reward model.

For the reward model architecture, we employed a standard GPT-2 model (\code{GPT2LMHeadModel}) fine-tuned for sequence classification by attaching an \code{AutoModelForSequenceClassification} head. This head processes the last token's representation to output logits corresponding to the emotion classes. The model was trained using a standard multi-class classification loss, which is implicitly Cross-Entropy Loss when using \code{AutoModelForSequenceClassification} with multiple labels. Training was conducted over 8 epochs. Optimization was performed with the AdamW optimizer using a learning rate of $5.0 \times 10^{-5}$. Common training arguments, including a \code{per\_device\_train\_batch\_size} (e.g., 64, with potential gradient accumulation to reach an effective batch size), \code{gradient\_accumulation\_steps}, \code{fp16} precision, \code{warmup\_steps}, \code{logging\_steps}, and \code{evaluation\_strategy}, were configured via a shared dictionary (\code{TRAINER\_ARGS\_COMMON}). 

The performance of the model was monitored by \code{eval\_f1\_score} (weighted average), which was set as the metric to select the best model. The trained reward model achieved a test accuracy of 87\% and a test ROC-AUC score of 0.99. The class-wise probability scores predicted by this model were subsequently utilized as our scalar rewards ($r_{\mathrm{emotion}}$) for the preference alignment process.

\textbf{Supervised Fine-Tuning (SFT).}
    We selected a GPT-2 model (\code{GPT2LMHeadModel}) as our base language model. It was trained to predict the next token given preceding context from the emotion dataset. Text samples were tokenized and truncated to a maximum sequence length, typically 68 tokens, as defined by \code{MAX\_SEQ\_LENGTH} in our training configuration. The SFT model was trained for 10 epochs using the AdamW optimizer with a learning rate of $5.0 \times 10^{-7}$. The training schedule included 12 warmup steps, where the learning rate gradually increased to its peak. To ensure training stability and prevent exploding gradients, a maximum gradient norm of 10 was applied during optimization. This SFT-trained model served as both the initial policy ($\pi_0$) and the fixed reference policy ($\pi_{\mathrm{ref}}$) for all subsequent training runs of the DPO and REBEL variants.

\textbf{Data Generation for Alignment.}
A preference dataset for Emotion Alignment was dynamically constructed during the training iterations of each alignment algorithm. Each data point consisted of a prompt and two generated completions, paired with a preference label. The detailed data generation process was as follows:
\begin{itemize}
    \item \textbf{Prompts:} Prompts were directly sampled from the `text` field of the emotion dataset's training split, ensuring they were drawn from the same domain as the SFT model's training data.
    \item \textbf{Completion Generation:} For each prompt, two distinct completions ($a_1$ and $a_2$) were generated by the current policy model ($\pi_{\theta}$). Text generation employed sampling-based decoding with specific parameters: \code{do\_sample=True}, \code{top\_k=50}, \code{top\_p=0.95}, and a \code{temperature=0.7}. Each completion was constrained to a maximum length corresponding to the \code{max\_seq\_length} used during SFT (e.g., 68 tokens), ensuring consistency.
    \item \textbf{Reward Calculation:} The generated completions ($a_1$ and $a_2$) were then evaluated by the pre-trained emotion reward model. This yielded emotion-specific scores for each completion. These scores were combined into a single scalar reward ($r_{a_1}$, $r_{a_2}$) using a configurable mixing function, either "convex" or "geometric", parameterized by a specific $\alpha_{0}$ value. This allowed for emphasis on particular emotions or combinations thereof.
    \item \textbf{Preference Labeling:} Instead of deterministic selection, a binary preference label (`preference`, typically 0 or 1) was assigned to the pair ($a_1$, $a_2$). This was not a deterministic selection based on the mixed reward, but rather a stochastic process following a Bradley-Terry model. Specifically, a random number was drawn, and if it was less than $p = \frac{\exp(r_{a_1})}{\exp(r_{a_1}) + \exp(r_{a_2})}$, then $a_1$ was marked as preferred (preference = 1); otherwise, $a_2$ was preferred (preference = 0).
\end{itemize}

\textbf{REBEL and DPO Variant Training.}
We conducted comprehensive experiments comparing seven distinct preference alignment algorithms: Direct Preference Optimization (DPO), Wasserstein Distributionally Robust DPO (WDPO), KL Distributionally Robust DPO (KL-DPO), Reinforcement Learning via Regressing Relative Rewards (REBEL), Wasserstein Distributionally Robust REBEL (W-REBEL), KL Distributionally Robust REBEL (KL-REBEL), and Chi-squared Distributionally Robust REBEL ($\chi^2$-REBEL).

Each variant was trained for 40 iterations (epochs). In each iteration, a fresh batch of 64 new data points (prompt-completion pairs with preferences/rewards) was collected using the dynamic data generation process described above. The policy model parameters were optimized using the AdamW optimizer with a fixed learning rate of $5.0 \times 10^{-7}$. A DPO $\beta$ parameter of 0.1 was consistently applied across all DPO and its DRO variants. Algorithm-specific robustness hyperparameters, including REBEL's $\eta$ (set to 0.01), WDPO/W-REBEL's $\rho_{0}$, KL-DPO/KL-REBEL's $\tau$, and $\chi^2$-REBEL's $\rho$, were configured through shared experiment settings. All Emotion Alignment experiments were executed on a single NVIDIA A100 GPU with 40 GB VRAM. To accommodate the chosen batch size and model requirements, gradient accumulation was performed over two steps per optimization update.

\subsection{ArmoRM Multi-objective Alignment Setup}
For ArmoRM Multi-objective Alignment, our experimental design focused on scenarios where pre-trained models are aligned to multiple, potentially conflicting, objectives, leveraging reward signals derived from a specialized ArmoRM reward model.

\textbf{Reward Model and SFT.}
Distinct from the Emotion Alignment setup, the ArmoRM configurations did not involve separate training of a reward model or explicit supervised fine-tuning (SFT) of a base language model for the specific alignment task since the use of a foundational model like Meta LLaMA-3.2-1B-Instruct has already undergone extensive pre-training on vast text corpora, followed by multiple rounds of SFT and preliminary alignment on broad human preference datasets. These pre-aligned models are intrinsically capable of generating responses reflecting general human preferences and providing granular, multiobjective reward scores across various axes such as helpfulness, harmlessness, truthfulness, and conciseness. 

\textbf{Data Generation for Alignment.}
The preference dataset for ArmoRM alignment was constructed by sampling prompt-completion pairs from large, diverse datasets designed for evaluating instruction-following and safety, specifically a subset of the publicly available HelpSteer2 dataset \cite{wang2024helpsteer2opensourcedatasettraining}. These prompts typically consisted of user queries, instructions, and open-ended questions designed to elicit varied and complex responses. The generation process for candidate completions for alignment was configured as follows:
\begin{itemize}
    \item \textbf{Completions:} For each sampled prompt, two distinct candidate completions were generated by the current policy model. Text generation employed sampling-based decoding to encourage diversity and creativity, utilizing specific parameters: a \code{temperature} of 0.7 (to balance creativity with coherence), a \code{top\_p} of 1.0 (to allow for maximal diversity in token sampling), and a maximum generation length of up to 1024 new tokens, enabling the generation of comprehensive and elaborate responses. Prompts themselves were also truncated to a maximum of 1024 tokens before being fed to the model.
    \item \textbf{Multi-objective Rewards:} These generated prompt-completion pairs were then input into the \emph{first stage} of a pre-existing ArmoRM model \cite{wang2024helpsteer2opensourcedatasettraining}. This "first stage" is a multi-headed reward architecture designed to output a comprehensive vector of scores, quantifying a completion's performance across several predefined objectives (e.g., helpfulness, harmlessness, creativity, factuality). The chosen and rejected completions within each pair were determined based on a composite mixed metric derived from these multi-objective reward vectors.
\end{itemize}

\textbf{REBEL and DPO Variant Training.}
Given the substantial size of the models and datasets involved, these experiments were performed on a high-performance distributed computing setup comprising 8xH100 GPUs. Training leveraged the DeepSpeed framework for efficient memory management and optimized distributed training. Specifically, we primarily utilized DeepSpeed ZeRO-2 (Zero Redundancy Optimizer Stage 2) \cite{10.5555/3433701.3433727} for parameter, gradient, and optimizer state partitioning across GPUs. This included features like \code{overlap\_comm} for overlapping computation and communication, and \code{contiguous\_gradients} for memory efficiency. The training process was configured for automatic mixed precision, with \code{bf16} enabled for bfloat16 training and \code{fp16} also available (with a \code{loss\_scale} of 512). DeepSpeed automatically managed the optimizer parameters (learning rate, betas, epsilon, weight decay) and the \code{WarmupDecayLR} scheduler, as well as \code{gradient\_accumulation\_steps} and \code{gradient\_clipping}.

\subsection{HH\textendash RLHF Pairwise Alignment Setup}
We evaluate robustness on the HH\textendash RLHF preference dataset \cite{bai2022traininghelpfulharmlessassistant} with Llama-1B and Llama-8B policies, comparing non\mbox{-}robust baselines (DPO, REBEL) to robust DPO variants (WDPO, KL\mbox{-}DPO) and our robust REBEL variants (W\mbox{-}REBEL, KL\mbox{-}REBEL, $\chi^2$-REBEL) \cite{rafailov2023direct,gao2024rebel,xu2025distributionallyrobustdirectpreference}. The runner is model\mbox{-}agnostic.

\textbf{Tokenizer \& models.}  We load the policy and create a frozen reference copy (placed in \code{eval()} and \code{requires\_grad=False} for all params). \code{bf16/fp16} is enabled via \code{torch.autocast}. 

\textbf{Data pairing.} We load data \(\{x, y^{\text{chosen}}, y^{\text{rejected}}\}\). A small held\mbox{-}out test set (by default \(2\%\), min 256 and capped at 1{,}024 pairs) is reserved for evaluation. The remainder forms the training set.  At each step we uniformly sample indices and map each example to the optimizer’s pairwise format:
\[
\{\text{prompt}=x,\ \text{response\_a1}=y^{\text{chosen}},\ \text{response\_a2}=y^{\text{rejected}},\ \text{preference}=1\}.
\]
Inputs are truncated to \texttt{max\_seq\_length}=1024 tokens. Teacher\mbox{-}forced log\mbox{-}probabilities are always computed on \emph{response tokens only}, conditioned on the prompt.  

\textbf{Evaluation.} Every 200 steps of upstream training (on non\mbox{-}HH sources), we evaluate on the fixed HH\textendash RLHF set in chunks of 64 examples. We report \emph{Win} rate, the fraction of pairs where the policy increases the chosen–rejected margin relative to the reference

$$
\Big[\log \mathbb{P}_{\theta} \big(y^{\text{chosen}}\!\mid x\big)-\log \mathbb{P}_{\theta}\big(y^{\text{rejected}}\!\mid x\big)\Big]
\;-\;
\Big[\log \mathbb{P}_{\text{ref}}\big(y^{\text{chosen}}\mid x\big)-\log \mathbb{P}_{\text{ref}}\big(y^{\text{rejected}}\!\mid x\big)\Big]
> 0,
$$

and \emph{Lose} rate, the fraction with a negative margin (ties excluded).

\subsection{Wasserstein Variants (WDPO, W-REBEL) Implementation Details}
Recall that regularization term in Algorithm \ref{alg:wd-rebel} is defined as $R(\pi_{\theta}; D) = \rho_0 (\mathbb{E}_{z \sim D} \|\nabla_z \ell(z; \theta)\|_2^2)^{1/2}$, where $\ell(z; \theta)$ is the pointwise loss. In a distributed LLM training setting, computing the exact expectation over the entire data distribution $D$ for this regularizer, or even accurately averaging gradient norms over small, local micro-batches, presents a key implementation challenge. A naive approach of averaging gradient norms over local micro-batches can lead to a highly noisy and unstable gradient penalty due to the typically small number of samples per GPU. To mitigate this instability and ensure tractability, we used the trick utilized by \cite{xu2025distributionallyrobustdirectpreference} which exploits the inequality $\sqrt{x} \leq x$ for $x \geq 1$. This allows us to upper bound the regularizer. This leads to a tractable approximation of the pointwise WDPO loss:
$$\ell_W (z_i, \rho_0) = \ell(z_i; \theta) + \rho_0 \|\nabla_z \ell(z_i; \theta)\|_2^2,$$
where $\ell(z_i; \theta)$ denotes the standard DPO or REBEL loss for sample $z_i$.

For computing $\|\nabla_z \ell(z_i; \theta)\|_2^2$, gradient tracking was enabled on the input embeddings of the policy model using \code{requires\_grad=True} in \code{get\_log\_probs\_and\_input\_embeddings}. This is because since we cannot directly compute $\nabla_{z} \ell(z;\theta)$ since our input is tokenized as integers. The \code{torch.autograd.grad} function was then used to calculate the gradient of the pointwise loss $\ell(z_i; \theta)$ with respect to these differentiable input representations. The sum of squared norms of these gradients was calculated for each sample. This term, scaled by $\rho_0$, was directly incorporated as a penalty into the total loss for each sample, effectively regularizing the policy towards smoother loss landscapes.

\subsection{KL Variants (KL-DPO, KL-REBEL) Implementation Details}
Recall that the re-weighting factor for each sample $i$, $P(i)$, was calculated proportional to $$\exp\left(\frac{1}{\tau_{\mathrm{eff}}} (\ell(z_i; \theta) - \text{mean}(\ell(z_j; \theta)))\right),$$ where $\ell(z_i; \theta)$ is the pointwise loss for sample $i$ in Algorithm \ref{alg:kl-rebel}. Critically, $\text{mean}(\ell(z_j; \theta))$ represents the average pointwise loss computed over the global batch across all participating GPUs. To achieve this global consistency, each GPU first computes the pointwise losses for its local mini-batch. Then, a synchronization step involving a \code{torch.distributed.all\_gather} operation is performed. This operation collects all individual losses from all workers onto every GPU, allowing each GPU to compute the exact global mean of $\ell(z_j; \theta)$ across the entire distributed batch. This ensures that the re-weighting factors $P(i)$ are consistent and correctly reflect the global worst-case distribution. $\tau_{\mathrm{eff}} = \max(\tau, 1e-6)$ ensures numerical stability. The total loss for these methods was then computed as a weighted sum of the individual losses $\sum P(i) \cdot \ell(z_i; \theta)$.

\subsection{$\chi^2$ Variant ($\chi^2$-REBEL) Implementation Details}
This method seeks to find a robust policy by optimizing against an ambiguity set defined by $\chi^2$-divergence. The optimization involves determining an optimal dual variable, $\eta^*$, at each training step. This is achieved by searching over a set of candidate $\eta$ values, including unique individual loss values and boundary points, to identify the one that minimizes the expression $$\eta + \sqrt{\frac{2\rho}{n} \sum (\text{loss}_i - \eta)^2_+},$$ where $(\cdot)_+ = \max(\cdot, 0)$. Implementing this in a distributed setting presented a significant engineering challenge, particularly in ensuring global consistency for the $\eta^*$ search. At each step, each GPU first computes its \code{individual\_ell\_losses} for its local mini-batch. To enable the global search for $\eta^*$, these local loss tensors are then gathered from all GPUs onto every GPU using a \code{torch.distributed.all\_gather} operation, creating a \code{global\_ell\_losses} tensor on each rank. The \code{\_find\_eta\_star} method then executes on this \code{global\_ell\_losses} tensor independently on each GPU; since all GPUs possess identical global loss data, they deterministically identify the same $\eta^*$ value. This process of efficiently finding the optimal dual variable across distributed data, without excessive communication, was one of the most difficult parts of the implementation. The term $$\sum (\text{loss}_i - \eta^*)^2_+,$$ in the expression for $\eta^*$ and for deriving $\lambda^*$ requires a sum over all samples across all GPUs, which is achieved using a \code{torch.distributed.all\_reduce} operation on the locally computed sums. Once $\eta^*$ is found, a corresponding $\lambda^*$ is derived. Finally, the gradients for the policy model parameters are computed as a weighted sum of the gradients of individual losses, where the weights $w_i = (\text{loss}_i - \eta^*)_+ / (n \cdot \lambda^*)$ emphasize samples contributing most to the robust objective. These weighted gradients are directly applied to the model's parameters, with DeepSpeed's ZeRO-2 optimizer handling the implicit aggregation across all GPUs during its optimization step.

\end{document}